\def\eqref#1{equation~\ref{#1}}
\def\floor#1{\lfloor #1 \rfloor}
\def\1{\bm{1}}
\DeclareMathAlphabet{\mathsfit}{\encodingdefault}{\sfdefault}{m}{sl}
\SetMathAlphabet{\mathsfit}{bold}{\encodingdefault}{\sfdefault}{bx}{n}
\def\A{{\mathcal{A}}}
\newcommand{\E}{\mathbb{E}}
\newcommand{\R}{\mathbb{R}}
\newcommand{\btheta}{{\boldsymbol{\theta}}}
\newcommand{\thetafull}{{\btheta_{\mathrm{o}}}}
\newcommand{\thetaunl}{{\btheta_{\mathrm{u}}}}
\newcommand{\Fr}{\mathcal{F_{\text{R}}}}
\newcommand{\Dr}{\mathcal{D_{\text{R}}}}
\newcommand{\Df}{\mathcal{D_{\text{F}}}}
\newcommand{\Da}{\mathcal{D_{\text{A}}}}
\newcommand{\Dt}{\mathcal{D_{\text{T}}}}
\newcommand{\D}{\mathcal{D}}
\theoremstyle{plain}
\newtheorem{theorem}{Theorem}[section]
\newtheorem{proposition}[theorem]{Proposition}
\newtheorem{lemma}[theorem]{Lemma}
\newtheorem{corollary}[theorem]{Corollary}
\theoremstyle{definition}
\newtheorem{definition}[theorem]{Definition}
\theoremstyle{remark}
\newtheorem{remark}[theorem]{Remark}
\def\R{\mathbb{R}}
\def\E{\mathbb{E}}
\def\X{\mathcal{X}}
\def\Y{\mathcal{Y}}
\def\ddefloop#1{\ifx\ddefloop#1\else\ddef{#1}\expandafter\ddefloop\fi}
\def\ddef#1{\expandafter\def\csname bb#1\endcsname{\ensuremath{\mathbb{#1}}}}
\def\ddef#1{\expandafter\def\csname c#1\endcsname{\ensuremath{\mathcal{#1}}}}
\def\ddef#1{\expandafter\def\csname v#1\endcsname{\ensuremath{\boldsymbol{#1}}}}
\def\E{\mathbb{E}}
\def\R{\mathbb{R}}
\def\X{\mathcal{X}}
\def\Y{\mathcal{Y}}
\def\F{\mathcal{F}}
\def\G{\mathcal{G}}
\newcommand{\amun}{\textsc{AMUN}\xspace}
\def\1{\mathds{1}}
\newif\iffeedback
\newcommand{\varun}[1]{{\color{red}(Varun: #1)}}
\newcommand{\hari}[1]{{\color{olive}(Hari: #1)}}
\newcommand{\ali}[1]{{\color{green}(Ali: #1)}}
\newcommand{\aliq}[1]{{\color{blue}(Ali Q: #1)}}
\newcommand{\varun}[1]{}
\newcommand{\hari}[1]{}
\newcommand{\ali}[1]{}
\newcommand{\aliq}[1]{}
\newcommand{\PP}{\mathbb{P}}
\newcommand{\doop}{\operatorname{do}}
\newcommand{\PS}{\operatorname{PS}}
\newcommand{\PN}{\operatorname{PN}}
\newcounter{counterforappendices}
\begin{document}

\title{Toward Reliable Machine Unlearning: \\Theory, Algorithms, and Evaluation}
\author{Ali Ebrahimpour-Boroojeny}
\department{Computer Science}
\concentration{Artificial Intelligence}
\phdthesis
\degreeyear{2025}
\committee{
    Professor Hari Sundaram, Chair\\
    Assistant Professor Varun Chandrasekaran\\
    Professor Gang Wang\\
    Assistant Professor Florian Tramèr, ETH Zurich}
\maketitle

\frontmatter

\begin{abstract}

Removing the influence of specific data or concepts from trained models—\emph{machine unlearning}—is increasingly necessary for legal, ethical, and safety reasons. The canonical formulation is as follows: given a trained model \(F\) on dataset \(D\) and a forget set \(D_F \subset D\), the goal is to update the model parameters to obtain one that behaves as if it were never trained on \(D_F\), while preserving utility on \(D_R = D \setminus D_F\). \emph{Class unlearning} is a special case in which \(D_F\) contains all samples from one class. This thesis develops a unified framework that defines unlearning for classification models via the observed behavior of models retrained on \(D_R\). We propose new methodologies for both unlearning random set of samples and class unlearning and show that they outperform existing methods.
  
Evaluation methods for machine unlearning in classification models rely on the differences of prediction in the unlearned model with those of the prediction models. Therefore, the main driver of our unlearning methods is the \emph{similarity of predictions to a retrained model} on both the forget and remain samples. Exploiting this observation, we introduce \emph{Adversarial Machine UNlearning} (\amun{}), which surpasses prior state-of-the-art methods for image classification based on SOTA MIA scores. \amun{} lowers the model’s confidence on forget samples by fine-tuning on their corresponding adversarial examples. Because adversarial examples lie on the model-induced input distribution, fine-tuning on those nearest to each forget sample (a) localizes changes to the decision boundary and (b) avoids drastic shifts in global behavior, thereby preserving test accuracy. Using \amun{} to unlearn a random $10\%$ subset of CIFAR-10, we observe that even strong membership-inference attacks perform no better than random guessing.
  
Through theoretical analysis, we identify two factors that govern \amun{}'s performance: (1) the model’s \emph{Lipschitz constant} and (2) the \emph{transferability of adversarial examples} from the original to the retrained model. To facilitate training of smooth models with a controlled Lipschitz constant, we propose \emph{FastClip}, a scalable method that performs layer-wise spectral-norm clipping of affine layers. In a separate study, we show that increased smoothness naturally improves adversarial example transfer, thereby supporting the second factor above. We show the adverse effect of this increased transferability rate in the setting of ensembles and propose Layer-wise Orthogonalization for Training rObust enSembles (LOTOS), that alleviates this effect. 
  
Following the same principles for class unlearning, we show that existing methods fail in \emph{replicating a retrained model's behavior} by focusing only the probability assigned to the forgotten class and not leaving out the probabilities assigned to the other classes. We introduce a nearest-neighbor membership inference attack (MIA-NN) that uses the probabilities assigned to neighboring classes to detect unlearned samples and demonstrate the vulnerability of such methods. We then propose a fine-tuning objective that mitigates this leakage by approximating, for forget-class inputs, the distribution over remaining classes that a model retrained from scratch would produce. To construct this approximation, we estimate inter-class similarity and \emph{tilt} the target model’s distribution accordingly. The resulting \emph{Tilted ReWeighting} (TRW) distribution serves as the desired target during fine-tuning. Across multiple benchmarks, TRW matches or surpasses existing unlearning methods on prior metrics; on CIFAR-10, it reduces the gap to retrained models by $19\%$ (U-LiRA) and $46\%$ (MIA-NN) relative to the state-of-the-art baseline.

\end{abstract}

\begin{dedication}
To my parents, Fariba and Kohzad; my dear sister, Zahra; and my lovely fiancée, Faezeh.
\end{dedication}

\begin{acknowledgments}
This project would not have been possible without the support of many people. I am deeply grateful to my advisor, Hari Sundaram, for his valuable support and feedback throughout my research, and for helping me become a critical thinker and develop a concrete research plan. I extend special thanks to Varun Chandrasekaran, who worked closely with me on most of my projects and helped me improve my work and research skills. I thank Matus Telgarsky for inspiring me to strive for quality in research. I also thank my committee members, Gang Wang and Florian Tramèr, for their guidance and support. Thanks to the Siebel School of Computing and Data Science for awarding me the Gene Golub Fellowship and providing the resources that enabled me to complete this thesis. I would also like to acknowledge my colleagues at Crowd Dynamics Lab for their constructive research discussions, feedback, and collaborations.

Finally, my deepest thanks go to my beloved fiancée, Faezeh, for making this long journey easier at every step; to my parents, Kohzad and Fariba, and my sister, Zahra, for their constant, unconditional love; and to my dear friends, who made these years lighter and more joyful.
\end{acknowledgments}

{
    \hypersetup{linkcolor=black}  % disable link coloring locally
    \tableofcontents
    % the Graduate College doesn't recommend including lot or lof
    % \listoftables
    % \listoffigures
}

\chapter{List of Abbreviations}

\begin{abbrevlist}
\item[AMUN] Adversarial Machine UNlearning.
\item[MIA] Membership Inference Attack.
\item[MIA-NN] Membership Inference Attack via Nearest-Neighbor.
\item[LLM] Large Language Model.
\item[DPO] Direct Preference Optimization.
\item[LOTOS] (Method name as cited in the text).
\item[TRW] Tilted Reweighting.
\item[TRW-2R] Tilted Reweighting (two-round variant).
\item[FT] Fine-Tuning (baseline).
\item[RL] Reported baseline label in tables.
\item[GA] Gradient Ascent (baseline).
\item[$\ell_1$-ul] $\ell_1$-regularized unlearning (baseline).
\item[$\ell_2$-ul] $\ell_2$-regularized unlearning (baseline).
\item[BU] Reported baseline label in tables.
\item[SalUn] Saliency-based Unlearning.
\item[SVD] Singular Value Decomposition.
\item[SCRUB] Labeled unlearning baseline.
\item[SCAR] Labeled unlearning baseline.
\end{abbrevlist}

\chapter{List of Symbols}

\begin{symbollist}[0.7in]
\item[$P_X$] Input distribution over domain $X$.
\item[$Y=\{1,\dots,m\}$] Label set for $m$-class classification.
\item[$F:X\to Y$] (Multi-class) classifier mapping inputs to labels.
\item[$f(x)$] Prediction/probability function (e.g., softmax outputs).
\item[$\ell_F:X\times Y\to\mathbb{R}^+$] Loss function (e.g., cross-entropy).
\item[$D=\{(x_i,y_i)\}_{i=1}^N$] Labeled training set with $x_i\sim P_X$, $y_i\in Y$.
\item[$E_D(\ell_F(x,y))$] Empirical risk minimized during training.
\item[$D_T$] Test set sampled from $P_X$.
\item[$L$-Lipschitz] Function regularity: $\|g(x)-g(x')\|_2 \le L\|x-x'\|_2$.
\item[$A_F(x,\epsilon)$] Attack algorithm producing $x+\delta_x$ with $\|\delta_x\|_2\le\epsilon$ (untargeted, $y'\neq y$).
\item[$DF$] Forget set (subset of the training data $D$).
\item[$M_{D,DF}:\Theta\to\Theta$] Machine-unlearning map; given $DF\subset D$, outputs updated parameters $\theta_u$.
\item[$D_R=D-DF$] Remain set after removing $DF$ from $D$.
\item[$F_R$] Model retrained from scratch on $D_R$.
\item[$ACC_r$] Retained-class accuracy (higher is better, $\uparrow$).
\item[$ACC_f$] Forgotten-class accuracy (lower is better, $\downarrow$).
\item[$\uparrow$] “Higher is better” marker in tables.
\item[$\downarrow$] “Lower is better” marker in tables.
\item[$\tau$] Time taken to drink one cup of coffee.
\item[$\mu$g] Micrograms (of caffeine, generally).
\end{symbollist}

\mainmatter

\chapter{Introduction}
\label{sec:intro}
%\section{Introduction}

In this project we elaborate on our recent findings on important factors influencing the robustness and safety of machine unlearning methods. The goal of \textit{machine unlearning} is to remove the influence of a subset of the training dataset for a model that has been trained on that dataset~\cite{vatter2023evolution}. Since its proposal~\cite{cao2015towards} it has become increasingly important and the necessity for these methods has been determined by privacy regulations such as the European Union’s General Data Protection Act and the California Consumer Privacy Act. Machine unlearning originated in the classification setting. Early work by \cite{ginart2019making} formalized the notion of quantifiable unlearning for K-means, providing the first provable guarantee that a model’s behaviour converges to that of an oracle retrained from scratch after each deletion. \cite{bourtoule2021machine} then introduced SISA training—Sharding, Isolation, Slicing, and Aggregation—which strategically partitions the training data so that only a small subset of shards must be retrained when a datapoint is withdrawn; empirical results on deep networks showed an order of magnitude in speed-up over full retraining while achieving exact deletion. \cite{guo2019certified} proposed the notion of certified data removal, designing algorithms that supply verifiable statistical certificates that the post-unlearning classifier is indistinguishable from one trained without the erased data; their method, however, was only applicable to linear and logistic regression. Collectively, these seminal papers established the core trade-off space—exactness, computational cost, and verifiability—that still frames contemporary work on unlearning for conventional supervised models.
Despite early efforts on proposing ``exact'' solutions to this problem, the community has favored ``approximate'' solutions due to their ability to preserve the original model's accuracy while being more computationally efficient~\cite{chen2023boundary,liu2024model,fan2023salun}. 

We have designed an approximate unlearning method that utilizes the adversarial examples for unlearning the forget set (section~\ref{sec:amun-chapter}). We propose Adversarial Machine UNlearning (\amun{})~\cite{ebrahimpournot}. \amun{} is a method that, when applied to a trained model, replicates that (desired) behavior of the models retrained on the remaining samples after a few iterations. The success of \amun{} relies on an intriguing observation: fine-tuning a trained model on the adversarial examples of the training data does not lead to a catastrophic forgetting and instead has limited effect on the deterioration of model's test accuracy. One of the clear advantages of our method is its ability to perform effective unlearning in the absence of retained data. Through various experiments, we have shown that it outperforms all prior methods.

Through theoretical analysis on the effectiveness of our proposed unlearning method, we characterize influencing factors, including (a) Lipschitz constant of the model and (b) transferability of adversarial examples from the original model to the retrained one. In this thesis we also introduce a set of efficient methods to control the spectrum of the affine layers in a model, which leads to an upper-bound for the Lipschitz constant of the model. This helps to train models that satisfy the influencing factor (a). We also study their effectiveness in increasing the robustness of individual models against adversarial examples~\cite{boroojeny2024spectrum}. 

In another study we show that lower Lipschitz constant of the models leads to an increased rate of transferability of adversarial example, which shows that the influencing factor (b) is automatically satisfied on a model that has a controlled Lipschitz constant using our developed methods. We then focus on how this Lipschitz continuity can act as a double-edged sword in the setting of ensembles by increasing the transferability rate of adversarial examples among the models within the ensemble. We show that although a lower Lipschitz constant increases the robustness of a single model, it is not as beneficial in training robust ensembles as it increases the transferability rate of adversarial examples across models in the ensemble. Therefore, we introduce LOTOS, a new training paradigm for ensembles which counteracts this adverse effect. It does so by promoting orthogonality among the top-sub-spaces of the transformations of the corresponding affine layers of any pair of models in the ensemble~\cite{ebrahimpour2024lotos}. A byproduct of these prior works is the precise tools they provide for controlling the spectrum of the transformations of each layer of neural network and how they can be orthogonalized with respect to arbitrary subspaces. %Therefore, we plan to utilize them to extend our current approaches for machine unlearning in image~\cite{ebrahimpour2025amun}, which have now become SOTA unlearning methods for image classification models, to generative models, including diffusion models and Large Language Models (LLMs). 

We then turn our attention to a variant of machine unlearning in which we want to forget all the samples that belong to a certain class and derive a model that acts as if it has never been trained on the forgotten class.
The setting of class unlearning is different from unlearning random training samples because the unlearned model has to be depleted of any sign of the unlearned class. Failing to consider this difference, we show that all the existing methods are susceptible to our newly designed variant of membership inference attacks (MIAs), which is based on the \emph{nearest neighbor} of the target class (MIA-NN)~\cite{ebrahimpour2025necessity}. This vulnerability has remained unnoticed because previously used evaluations and metrics are mainly designed or inspired by prior work on unlearning random subsets of data, and without paying attention to the properties of a model that is retrained from scratch on all the data other than the target class.

We then propose a new training objective that enhances the robustness to MIA-NN, without computational overhead over finetuning-based unlearning methods. When a class is marked for removal, this method first reassigns its predicted probability mass proportionally to the remaining classes, and then tilts this distribution according to inter-class similarities to better approximate the distribution of the models retrained from scratch on the remaining classes. Our proposed method not only exhibits strong robustness against MIA-NN, it also does not fail the evaluations using prior evaluation metrics.

\section{Outline of the Presented Works}

This thesis can be split into three thrusts. The first one establishes efficient methods for controlling the Lipschitz constant of affine layers and orthogonalizing the spectrum of the learned parameters. The second presents an efficient machine unlearning method for unlearning a random set of training samples from classification models. The third thrust proposes an efficent method for unlearning a class in classification models, as well as introducing a new membership inference attack based on neighboring classes.

\begin{itemize}
    \item \textbf{Shaping the Spectrum of Affine Layers.} In this thrust of work, we designed state-of-the-art method for clipping the spectral norm of affine layers in an efficient and scalable way~\cite{boroojeny2024spectrum}. This also leads to strict upper-bounds on the Lipschitz constant of neural network architectures that consists of these layers. We also designed a method that effectively orthogonalizes the transformation of different affine layers with respect to each-other~\cite{ebrahimpourtraining}. Our method, which is highly efficient for convolutional layers, was designed to prevent an increase in the transferability rate of adversarial examples among multiple models with bounded Lipschitz constant that co-exist in an ensemble of models. Although this approach was mainly used to increase the robustness of ensemble of models, the same techniques could be used to control the spectrum of affine layers and orthogonalize their spectrum with respect to arbitrary transformations.

    \item \textbf{Machine Unlearning for Classification Models.} Approximate unlearning methods have fallen short of reaching the effectiveness of exact unlearning: models produced fail to obtain comparable accuracy and prediction confidence on both the forget and test (i.e., unseen) dataset. Exploiting this observation, we propose a new unlearning method, Adversarial Machine UNlearning (\amun{}), that outperforms prior state-of-the-art (SOTA) methods for image classification~\cite{ebrahimpour2025amun}. \amun{} lowers the confidence of the model on the forget samples by fine-tuning the model on their corresponding adversarial examples. Adversarial examples naturally belong to the distribution imposed by the model on the input space; fine-tuning the model on the adversarial examples closest to the corresponding forget samples (a) localizes the changes to the decision boundary of the model around each forget sample and (b) avoids drastic changes to the global behavior of the model, thereby preserving the model's accuracy on test samples. Using \amun{} for unlearning a random $10\%$ of CIFAR-10 samples, we observe that even SOTA membership inference attacks cannot do better than random guessing. Although our approach was designed for unlearning in classification models, the same ideas and approaches can be extended to unlearning in image generation models.

    \item \textbf{Machine Unlearning for LLMs.} In this thrust of work we reveal a significant shortcoming in class unlearning evaluations: overlooking the underlying class geometry can cause privacy leakage. We further propose a simple yet effective solution to mitigate this issue. We introduce a membership-inference attack via nearest neighbors (MIA-NN) that uses the probabilities the model assigns to neighboring classes to detect unlearned samples. Our experiments show that existing unlearning methods are vulnerable to MIA-NN across multiple datasets. We then propose a new fine-tuning objective that mitigates this privacy leakage by approximating, for forget-class inputs, the distribution over the remaining classes that a retrained-from-scratch model would produce. To construct this approximation, we estimate inter-class similarity and tilt the target model’s distribution accordingly. The resulting Tilted ReWeighting (TRW) distribution serves as the desired distribution during fine-tuning. We also show that across multiple benchmarks, TRW matches or surpasses existing unlearning methods on prior unlearning metrics. More specifically, on CIFAR-10, it reduces the gap with retrained models by $19\%$ and $46\%$ for U-LiRA and MIA-NN scores, accordingly, compared to the SOTA method for each category.
    
\end{itemize}

\newpage
\chapter{Related Work}

We first elaborate on some of the prior work in computing and controlling the spectral norm. For our approach on orthogonalizing the spectrum of affine layers with respect to arbitrary transformations, we will provide the related works on ensemble robustness because that was the area in which our method was show-cased. 

\section{Controlling the Spectral Norm}

\cite{miyato2018spectral} approximate the original transformation by reshaping the kernel to a $n_{in}k^2 \times n_{out}$ matrix and perform the power-iteration they designed for the dense layers to compute the spectral norm of the convolutional layers. To clip the spectral norm, they simply scale all the parameters to get the desired value for the largest singular value.~\cite{farnia2018generalizable} and~\cite{gouk2021regularisation} use the transposed convolution layer to perform a similar power iteration method to the one introduced by~\cite{miyato2018spectral} for dense layers, and then perform the same scaling of the whole spectrum.~\cite{virmaux2018lipschitz} perform the power method implicitly using the automatic differentiation that is similar to our implicit approach; however, they do not provide any algorithm for clipping the spectral norms and leave that for future work. Also, for extracting multiple singular values, they use successive runs of their algorithm followed by deflation, which is much less efficient than our extraction method.
Some other works consider additional constraints.~\cite{cisse2017parseval} constrained the weight matrices to be orthonormal and performed the optimization on the manifold of orthogonal matrices.~\cite{sedghi2018singular} gave a solution that works only for convolutional layers with circular padding and no stride. For other types of convolutional layers, they approximate the spectral norm by considering the circular variants; however, the approximation error can be large for these methods.~\cite{senderovich2022towards} extends the method of~\cite{sedghi2018singular} to support convolutional layers with strides. It also provides an optional increase in speed and memory efficiency, albeit with the cost of expressiveness of convolutions, through specific compressions.~\cite{delattre2023efficient} use Gram iteration to derive an upper-bound on the spectral norm of the circular approximation in a more efficient way. However, they do not provide a solution for strides other than $1$. Therefore, in the best case, they will have the same approximation error as~\cite{sedghi2018singular}. A parallel line of research on controlling the spectrum of linear layers seeks to satisfy an additional property, gradient norm preserving, in addition to making each of the dense and convolutional layers $1$-Lipschitz~\cite{singla2021skew,yu2021constructing,trockman2021orthogonalizing,singla2021improved,xu2022lot,achour2022existence}. Some state-of-the-art methods in this line of work are also based on the clipping method introduced by~\cite{sedghi2018singular}~\cite{yu2021constructing,xu2022lot}. Therefore, those approaches are even slower and less efficient.

\subsection{Ensemble Robustness}
To increase the diversity of the models in the ensemble,~\cite{kariyappa2019improving} proposed misalignment of the gradient vectors with respect to the inputs. The intuition behind this idea is how several earlier works generate adversarial examples~\cite{goodfellow2014explaining,kurakin2018adversarial,papernot2016limitations}: these methods use the gradient direction with respect to a given input to find the direction that increases the loss function the most. By moving a small amount toward that direction, examples that are similar to the original example but are misclassified by the model are found. In their paper,~\cite{kariyappa2019improving} hypothesize that when this gradient direction is the same for various models of the ensemble, the common subspace of their adversarial examples will be larger because the loss function behaves similarly around the original data. They use cosine similarity to capture this similarity in the direction; by incorporating that for pairs of models in the loss function, they diversify the models in the ensemble to make it more robust.

\cite{pang2019improving} propose a regularizer to increase the diversity in an ensemble by increasing the entropy in non-maximal predictions.~\cite{yang2020dverge} use an adversarial training objective to increase the diversity of the ensemble by making the non-robust features more diverse.~\cite{yang2021trs} suggest that not only does the misalignment of the gradient vectors of the loss with respect to the inputs matters, but also the Lipschitz constant of the gradients (not parameters) of the loss with respect to the inputs has to decrease and propose a heuristic method to achieve this, which outperforms the prior methods.~\cite{zhang2022building} propose a method based on margin-boosting to diversify the models of an ensemble which did not achieve better results than the prior state-of-the-art methods~\cite{yang2020dverge,yang2021trs}. More recent works try to enhance the robustness of the existing methods by incorporating adversarial training against a variety of publicly available models~\cite{sitawarin2023defending} or enhance their time complexity by using faster (but weaker) attacks for data augmentation and enforcing diversity in the latent space~\cite{huang2023fasten}.  

% Using some restrictive assumptions on neural networks,~\cite{yang2021trs} justify that not only does the misalignment of the gradient vectors of the losses with respect to the inputs matter, but also the Lipschitz constant of the gradients (not parameters) of the loss with respect to the inputs has to decrease and propose a heuristic method to achieve this. 

% Using a theoretical framework that relies on the second order Taylor approximation of the loss with respect to the inputs and assumptions on the Hessian, which do not necessarily to apply to the deep learning models used in practice,~\cite{yang2021trs} propose that in addition to the low cosine similarity among the gradients, the gradients have to Lipschitz continuous and propose a heuristic method to achieve this. 

There is a different line of work on the orthogonality of the affine layers of the deep learning models which focuses on making the transformation of a single layer orthogonal (i.e., its rows and columns become orthonormal vectors). This helps to preserve the gradient norm of the layer and has been shown to improve the stability and robustness of the models~\cite{trockman2021orthogonalizing,singla2021skew,xu2022lot,singla2021improved,prach2022almost,hu2023recipe}. This notion of orthogonality is different from what we consider in this work; we wish to make the transformation of ``corresponding layers'' from different models {\em orthogonal with respect to each other}.

\subsection{Machine Unlearning}

Early works in machine unlearning focused on exact solutions~\cite{cao2015towards,bourtoule2021machine}; those ideas were adapted to unlearning in other domains such as graph neural networks~\cite{chen2022graph} and recommendation systems~\cite{chen2022recommendation}. The extensive computational cost and utility loss resulted in the design of approximate methods. An example is the work of~\cite{ginart2019making}, who provide a definition of unlearning based on differential privacy. Works that followed sought solutions to satisfy those probabilistic guarantees~\cite{ginart2019making,gupta2021adaptive,neel2021descent,ullah2021machine,sekhari2021remember}. However, the methods that satisfy these guarantees were only applied to simple models, such as $k$-means~\cite{ginart2019making} , linear and logistic regression~\cite{guo2019certified,izzo2021approximate}, convex-optimization problems~\cite{neel2021descent}, or graph neural networks with no non-linearities~\cite{chien2022efficient}. 
Additional research was carried out to design more scalable approximate methods, those %Due to this limited application of the unlearning methods with probabilistic guarantees, a separate line of work has focused on approximate machine unlearning methods 
that can be applied to the models that are used in practice, including large neural networks~\cite{golatkar2020eternal,warnecke2021machine,izzo2021approximate,thudi2022unrolling,chen2023boundary,liu2024model,fan2023salun}. However, these approximate methods do not come with theoretical guarantees; their effectiveness are evaluated using membership inference attacks (MIAs). 
MIAs aim to determine whether a specific data sample was used in the training set of a trained model~\cite{shokri2017membership,yeom2018privacy,song2019privacy,hu2022membership,carlini2022membership,zarifzadeh2024low}, and is a common evaluation metric~\cite{liu2024model,fan2023salun}.

To the best of our knowledge \amun{} is the first work that considers fine-tuning of a model on the adversarial examples with their wrong labels as a method for unlearning a subset of the samples. However, upon reviewing the prior works in unlearning literature, there are several works that their titles might suggest otherwise. Therefore, here we mention a few of these methods and how they differ from our work.

To improve upon fine-tuning on samples in $\Df$ with randomly chosen wrong labels,~\cite{chen2023boundary} use the labels derived from one step of the FGSM attack to choose the new labels for the samples in $\Df$. This method which was presented as \texttt{BS} in our experiments (\S~\ref{sec:amun_results}), does not use the adversarial examples and only uses their labels as the new labels for samples of $\Df$. This corresponds to the dataset $\D_{AdvL}$ in \S~\ref{sec:ablation-finetune}. As our results in Figures~\ref{fig:fine_tune_10} and~\ref{fig:fine_tune_50} show, fine-tuning the trained model on this dataset leads to catastrophic forgetting even when $\Dr$ is available. This is simply due to the fact that the samples in $\D_{AdvL}$ contradict the distribution that the trained models have already learned.

The work by~\cite{setlur2022adversarial} is not an unlearning method, despite what the name suggest. They propose a regularization method that tries to maximize the loss on the adversarial examples of the training samples that are relatively at a higher distance to lower the confidence of the model on those examples. The work by~\cite{zhang2024defensive} proposed a defense method similar to adversarial training for making to unlearned LLMs more robust to jailbreak attacks on the topics that they have unlearned. \cite{lucki2024adversarial} also study the careful application of jailbreak attacks against unlearned models. The work by~\cite{jung2024attack} investigate computing adversarial noise to mask the model parameters. Many of the works with similar titles, use ``adversarial" to refer to minimax optimization~\cite{zeng2021adversarial} or considering a Stackelberg game setting between the source model and the adversary that is trying to extract information~\cite{di2024adversarial}.

\subsection{Class Unlearning}
In this section, we review prior work on machine and class unlearning, highlighting key approaches such as retraining, fine-tuning, pruning, and representation manipulation.

\paragraph{Machine Unlearning.} Early work on machine unlearning focused on completely retraining models from scratch on the retained data. While accurate, this approach is often impractical for deep networks~\cite{bourtoule2021machine}. Researchers have thus developed approximate methods without full retraining. 
One line of work uses influence functions to estimate parameter changes from removing specific data points~\cite{warnecke2021machine}, enabling efficient unlearning on entire features or class labels with theoretical guarantees.
%One line of work uses influence functions to estimate how model parameters would change if a data sample were absent~\cite{warnecke2021machine}. \cite{warnecke2021machine} leverage influence functions to derive closed-form parameter updates that selectively erase the contribution of certain training features or labels, enabling the unlearning of whole groups of features or an entire class label in one step. Their method provides provable unlearning guarantees under convex loss assumptions and showed orders-of-magnitude speedups over naive retraining.
Another direction is to approximate data deletion with minimal cost. \cite{izzo2021approximate} propose a projection-based update for linear and logistic regression models that achieves data deletion in $O(d)$ time. \cite{thudi2022unrolling} unroll the SGD training trajectory to quantify unlearning. 
%They introduce metrics such as weight difference and identify key training-time factors that affect unlearning performance. 
And \cite{jia2023model} propose to prune a model before unlearning, which significantly closes the gap between approximate and exact unlearning. 
Another category of techniques scrubs specific knowledge from model weights via targeted fine-tuning. \cite{golatkar2020eternal} propose to inject noise guided by the fisher information matrix to remove information about the forget set. 
More recently, \cite{fan2023salun} proposed Saliency Unlearning (SalUn), which computes a weight saliency map to identify parameters influential in the forgetting set and adjusts only those weights. 
~\cite{foster2024fast} propose selective synaptic damping, which uses fish information to identify and dampen parameters important to the target data.
~\cite{cha2024learning} performs instance-wise unlearing through intentionally misclassifying forget samples while preserving utility via adversarial regularization and weight-importance constraints.
~\cite{bonato2024retain} proposes a  model-agnostic, retain-set–free method that moves forget features toward a nearest wrong class and preserves utility via OOD distillation.
~\cite{kurmanji2023towards} frames unlearning as selective teacher–student training—aligning on retain data and repelling on forget data that improves scalability and MIA robustness.

% \subsection{Class Unlearning} 
Recent works tackle the problem of removing an entire class's influence while retaining accuracy on other classes. One strategy is to directly adjust the model's decision boundaries. \cite{chen2023boundary} propose to shift decision boundaries via fine-tuning on relabeled or pseudo-class data. 
\cite{chang2024zero} propose a Neuron Attribution method using layer-wise relevance propagation to find and perturb class-specific activation paths. 
And \cite{shen2024camu} construct counterfactual samples to unlearn the class by aligning it with random noise representations.
Other strategies involve teacher-student learning and representation subtraction. 
\cite{chundawat2023can} train a student model through a competent teacher (full knowledge) and an incompetent one (forgetting class). 
\cite{tarun2023fast} apply a two-step impair-repair fine-tuning method to rapidly forget the class.
%with minimal impact on retained data. 
And \cite{kodge2024deep} identify forget and retain spaces via singular value decomposition and subtract shared components to remove class-specific features. 
%\cite{poppi2024multi} propose to use memory-linked weights to enable one-shot forgetting of specific classes.

% In contrast, our approach offers a simple and efficient alternative: directly reweighting the model's output distribution to eliminate the target class. This intervention cleanly removes the forgotten class while preserving performance on the remaining data.

\paragraph{Membership Inference Attacks.}
A membership inference attack (MIA) tests whether a specific example was part of a model’s training set by exploiting the fact that overfitted models tend to behave differently on seen (members) vs.\ unseen points (non-members). The original MIA has a shadow-model attack that 
queries a target to collect confidence vectors and trains an attack classifier to decide membership~\cite{shokri2017membership}. 
Reframing evaluation toward the low-FPR regime,~\cite{carlini2022membership} builds a per-example likelihood ratio that improves TPR at small FPRs. And~\cite{kodge2024deep} proposes a low-cost, high-power MIA through a Support Vector Machine. More recently,~\cite{zarifzadeh2024low} designs a robust, low-cost statistical test by composing pairwise likelihood ratios against population draws, outperforming prior methods even with very few reference models. Also by adapting LiRA,~\cite{hayes2024inexact} introduces per-example unlearning MIAs, showing that stronger, tailored attacks reveal overestimated privacy in prior evaluations and can even degrade retain-set privacy, urging more rigorous U-MIA testing

% FastClip and LOTOS
%\input{contents/0_abstract}
\newpage
\chapter{Shaping the Spectrum of Affine Layers}
\label{sec:shaping-spectrum}

Deep learning models are very brittle to input changes:~\cite{szegedy2013intriguing} were the first to carefully craft ``adversarial'' perturbations that result in incorrect classification outcomes. Subsequent research showed that these adversarial examples ``transfer'' to models with different hyper-parameters, and even different hypothesis classes~\cite{papernot2016transferability, liu2016delving}. This transferability property was used to design black-box attacks against models for which only query access is available~\cite{papernot2017practical}. In these attacks, the adversary trains a local model that is ``similar'' to the victim (black-box) model and uses that to find transferable adversarial examples~\cite{xiao2018generating}. To increase the rate of transfer success, a common strategy is to choose those inputs that are adversarial to an ``ensemble'' of models~\cite{liu2016delving,chen2023rethinking}: fooling an ensemble may result in fooling a potentially unseen model. 

Research has also been carried out on understanding and improving model resilience to such attacks~\cite{dong2019evading}. Adversarial robustness is the innate ability of the model to correctly classify adversarial examples~\cite{madry2017towards}. One way of increasing this robustness is utilizing a diverse\footnote{In terms of the parameters and decision boundaries.} ensemble of models~\cite{yang2020dverge,yang2021trs}. This has also been considered as a mitigation to the transferability problem~\cite{pang2019improving,kariyappa2019improving,yang2020dverge,yang2021trs,sitawarin2023defending}; by increasing the diversity among models of the ensemble, the subspace of the adversarial examples that are effective against most of the models within the ensemble shrinks. While empirical evidence is promising, most of these ensemble robustness methods are either computationally expensive or come at a considerable cost to the accuracy of the models and the overall ensemble. In another vein of research, Lipschitz continuity was also shown to be important for robustness~\cite{szegedy2013intriguing,farnia2018generalizable,boroojeny2024spectrum,ebrahimpour2025amun}. Since the model's Lipschitz continuity controls how the predictions change for small changes in the input, it is intuitive that bounding it will improve robustness. It is important to note that neural networks are a composition of multiple layers and therefore the existing works in this area obtain an upper-bound on the overall Lipschitz constant of the network by bounding each layer independently~\cite{szegedy2013intriguing,sedghi2018singular,senderovich2022towards,delattre2023efficient,boroojeny2024spectrum}.

%For classification problems, it is crucial to ensure the models not only perform well on the training data, but also generalize well to the unseen data. Prior work demonstrated better generalization in an ensemble of neural networks (compared to a single one)~\cite{sharkey1996combining}. Increasing diversity among the models in the ensemble was soon heralded as a key factor~\cite{liu1999ensemble}. 

%For linear and implicitly linear layers (e.g., convolutional layers), this bound can be provided with the Spectral norm of the weight matrix~\cite{szegedy2013intriguing}. In recent work, Boorojeny {\em et al.}~\cite{boroojeny2024spectrum} introduce an inexpensive operation based on the power method to clip the Spectral norm of linear layers, and study its implications on generalization and robustness.

In our work, {\em we investigate the effect of Lipschitz continuity on the transferability of adversarial examples}. We observe that while decreasing the Lipschitz constant makes each model of the ensemble {\em individually more robust}, it makes them less diverse and consequently {\em increases the transferability rate} among them which in turn {\em hinders the overall ensemble robustness} (\S~\ref{sec:lotos_motivation},~\Cref{fig:vanilla_compare}). To resolve this adverse effect, we introduce our novel training paradigm, \textbf{Layer-wise Orthogonalization for Training rObust enSembles (\texttt{LOTOS})}~\cite{ebrahimpourtraining}, which orthogonalizes the corresponding affine layers of the models with respect to one another. This increases the diversity of the trained models. \texttt{LOTOS} can be combined with any prior method of training diverse ensembles to further boost their robustness.

Through extensive experiments and ablation studies, we show that \texttt{LOTOS} effectively decreases the transferability rate among the models of an ensemble, which leads to a higher robustness against black-box attacks and non-adversarial noise. As we will show, \texttt{LOTOS} is highly efficient for convolutional layers and is very fast compared to prior methods. We also show that it is an effective method for training robust ensembles of heterogeneous architectures, where other state-of-the-art (SOTA) methods are not applicable. Finally, we investigate how \texttt{LOTOS} is able to improve the results of the prior SOTA methods when they are combined. % In short, the main contributions of this work are:

\section{Preliminaries}

After introducing the notations, we define the notion of transferability rate (\S~\ref{sec:defs}) and continue with introducing our conjecture about the effect of Lipschitz continuity on transferability rate (\S~\ref{sec:conjecture}).

%We start with introducing the notations. Then, we define the notion of transferability rate (\S~\ref{sec:defs}) and continue with introducing our conjecture about the effect of smoothness (via Lipschitzness) on transferability rate (\S~\ref{sec:conjecture}).

\subsection{Notation}

An implicitly linear layer can be written as an affine function $f(x) = M_W x + b$, where $x \in \R^n$, $M_W \in \R^{m\times n}$, and $b \in \R^m$. $M_W$ is not necessarily the explicit form of the parameters of the layer, in which case the explicit form is shown with $W$ (e.g., the kernel of convolutional layers); otherwise, $M_W$ is the same as $W$. The spectral norm of $M_W$, which is the largest singular value of $M_W$, is shown with $\|M_W\|_2$. $\sigma_i(W)$ is used to represent the $i$-th largest singular value of matrix $M_W$. The largest singular value might be referred to as either $\|W\|_2$ or $\|M_W\|_2$. We use $\omega=\mathrm{exp}(2\pi i/n)$ (the basic $n$-th roots of unity), where $n$ is the rank of the linear transformation. $\Re(.)$ returns the real part of its input, and we also define the symmetric matrix $A_W$ as $M_W^\top M_W$. Hereinafter $[k]$ will be used to show the set $\{0,1,2,\dots, k\}$. When studying the composition of affine functions, we use the word “concatenation” to refer to the architecture of the network (i.e., succession of the layers), and “composition” to refer to the mathematical form of this concatenation. Given the domain of inputs $\mathcal{X}$ and $m$ classes $\mathcal{Y} = \{1,2,\dots,m\}$, we consider a multi-class classifier $\mathcal{F}:\mathcal{X} \rightarrow \mathcal{Y}$ and its corresponding prediction function $f(x)$ which outputs the probabilities corresponding to each class (e.g., the outputs of the softmax layer in a neural network). The loss function for model $\mathcal{F}$ is denoted $\ell_\mathcal{F}: \mathcal{X} \times \mathcal{Y} \rightarrow \mathbb{R_+}$; it uses the predicted scores from $f(x)$ to compute the loss given the true label $y$ (e.g., cross-entropy loss). The population loss for model $\mathcal{F}$, which we may also refer to as risk, is defined as $R_\F(x,y) = \E_x [\ell_\mathcal{F}(x,y)$]. When the models are deep neural networks, we refer to a specific layer using superscripts (e.g., $f^{(i)}$ for the $i$-th layer of deep network). A funtion $f(x)$ is $L$-Lipschitz if $\| f(x) - f(x^\prime)\|_2 \leq L \| x-x^\prime \|_2, \forall x,x^\prime \in \mathcal{X}$. For a matrix $A$, the spectral norm is can be equivalently defined as $\|A\|_2 = \mathrm{sup}_{x\neq 0} \frac{\|Ax\|_2}{\|x\|_2}$, meaning that the transformation matrix $A$ is $\|A\|_2$-Lipschitz. When we say layer $i$ (with transformation matrix $A$) has been clipped to a value $C$, this means we ensure $\|A\|_2 \simeq C$.

% \textbf{Datasets and Attacks.}

%\subsection{Prior Work: Accuracy vs. Transferability}
\subsection{Definitions}
\label{sec:defs}

We begin with the definition of an adversarial attack and then formally define the \textit{transferability rate ($T_{rate}$)} of adversarial examples.

%, rather than being affected by the prediction accuracy of the models. \varun{would compress and say that it is inpired by paper and leave it at that?}

\begin{definition}[Attack Algorithm]
    For a given input/output pair $(x,y) \in \mathcal{X} \times \mathcal{Y}$, a model $\mathcal{F}$, and a positive value $\epsilon$, a targetted attack algorithm $\A_\mathcal{F}^{(t)}(x) = x+\delta_x$ minimizes $\ell_\F(x+\delta_x,y_t)$ such that $\|\delta_x\|_2 \leq \epsilon$. An untargeted attack $\A(x)$ maximizes $\ell_\F(x+\delta_x,y)$. 
\end{definition}

\begin{definition}[Transferability Rate]
\label{def:trans}
For an untargeted adversarial algorithm $\A_\mathcal{F}$ and input space $\mathcal{X}$, we define the transferability rate ($T_{rate}$) of $\A_\mathcal{F}(x)$ from $\mathcal{F}$ to another classifier $\mathcal{G}$, as the following conditional probability:
\begin{equation}
T_{rate}(\A_\mathcal{F}, \mathcal{F}, \mathcal{G}) = \mathbb{P}_{(x,y)\in \mathcal{X} \times \mathcal{Y}} \left[\mathcal{G}(\A_\mathcal{F}(x)) \neq y \mid \; \mathcal{F}(x) = \mathcal{G}(x) = y \land \mathcal{F}(\A_\mathcal{F}(x)) \neq y \right].
\end{equation}

For the transferability of a targeted attack algorithm $\A_\mathcal{F}^{(t)}$, and target class $y_t$ this definition is:

\begin{equation}
T_{rate}(\A_\mathcal{F}^{(t)}, \mathcal{F}, \mathcal{G}) = \mathbb{P}_{(x,y)\in \mathcal{X} \times \mathcal{Y}} \left[\mathcal{G}(\A_\mathcal{F}^{(t)}(x)) = y_t \mid \; \mathcal{F}(x) = \mathcal{G}(x) = y \land \mathcal{F}(\A_\mathcal{F}^{(t)}(x)) = y_t \right].
\end{equation}
\end{definition}

Note that~\cite{yang2021trs} have similar definitions of transferability except they use joint probabilities instead of conditional probabilities. Using the conditional probability, we can better isolate the ``transferability'' property we are interested in. By using the joint probability, the $T_{rate}$ will depend on the accuracy of the two models and also the performance of the attack algorithm on the source model $\mathcal{F}$. This will not allow us to have an accurate comparison of the $T_{rate}$ between different settings.

% \newpage
\section{Motivation}
\label{sec:lotos_motivation}

%Note that, if we replace these conditional probabilities with joint probabilities, similar to~\cite{yang2021trs}, the probability will depend on the accuracy of the two models and the performance of the attack algorithm on the source model $\mathcal{F}$. 

%To isolate the $T_{rate}$ and study that separately from these concepts \varun{previous sentence is incomplete?}. 

%This also allows us to design methods that directly target the problem of diversifying the models and decreasing the $T_{rate}$. 

%Moreover, for computing the results, as evident by their publicly available code, instead of the joint probability given in the definitions,~\cite{yang2021trs} only evaluate $\mathbb{P}_{(x,y)\in \mathcal{X} \times \mathcal{Y}} \left[ \mathcal{G}(A_{\mathcal{F}}(x)) \neq y \right]$, which is not an accurate measure of transferability. \varun{why take a shot at what they do? do we need that?} 

%\varun{can scrap this too?} As an example, if $\mathcal{G}$ is a random classifier it achieves a high $T_{rate}$ of $90\%$ on CIFAR-10 with $10$ classes, or in another example, if the attack algorithm $A_{\mathcal{F}}$ is just the identity function, then the $T_{rate}$ of $\mathcal{G}$ will be simply $1- Accuracy(\mathcal{G})$.

% \varun{i think you need to first define definitions 2.1, 2.2, and then bring up this paragraph on what prior work does}

Prior work has shown that there is a trade-off between the accuracy of two models and the $T_{rate}$ of adversarial examples between them~\cite{yang2021trs}; as the two models become more accurate, their decision boundaries become more similar; this increases the probability that an adversarial example generated for one of the models transfers to the other model because of the similarity of their margins around the source sample. In the following section, by making a parallel to this trade-off, we introduce our conjecture on the trade-off between the Lipschitzness of the models and their $T_{rate}$.

\subsection{Our Conjecture: Lipschitz Continuity Influences Transferability}
\label{sec:conjecture}

Prior works~\cite{szegedy2013intriguing,farnia2018generalizable,boroojeny2024spectrum} highlight the importance of Lipschitzness on the robustness of a model against adversarial examples. To enforce the Lipschitzness of the deep neural networks, these works enforce the Lipschitzness on each individual layer: since the Lipschitz constant of the composition of all the components of the model (layers, activation functions, etc.) is upper bounded by the multiplication of their Lipschitz constants, this leads to a bound on the Lipschitzness of the whole model. In this work, we follow the same procedure to control the upper bound on the Lipschitz constant of a model. For this, we use FastClip~\cite{boroojeny2024spectrum} which is the current SOTA method for controlling the spectral norm of dense layers and convolutional layers. We use the chosen spectral norm for each individual layer to represent these models in the figures and tables. For example, $C=1$ shows that the spectral norm of all the dense layers and convolutional layers have been clipped to $1$, which makes each of them $1$-Lipschitz. For many architectures such as ResNet-18~\cite{he2016deep}, and DLA~\cite{yu2018deep} this effectively controls the Lipschitz constant of the model because the other constituent components (e.g., ReLU activation, max-pooling, softmax) are $1$-Lipschitz~\cite{goodfellow2014explaining}. % For details on controlling the effect of batch norm layers, see~\Cref{sec:bn}.

The prior works have shown that by decreasing the value of the Lipschitz constant $L$ for a model, adversarial attacks achieve a lower success rate. This confirms that decreasing the Lipschitz constant for a model makes it more robust to input perturbations. However, the situation is complicated in an ensemble. While a lower Lipschitz constant promotes robustness for each of the models (of an ensemble), we conjecture that it will increase the $T_{rate}$ by making the classification boundaries of the models more similar. We formalize this intuition below:

%This confirms the intuition we got from the provided upper bound in Proposition~\ref{prop:trans} \varun{is this reference correct?}.

\begin{figure}[t!]
\centering
    \includegraphics[width=.99\linewidth]{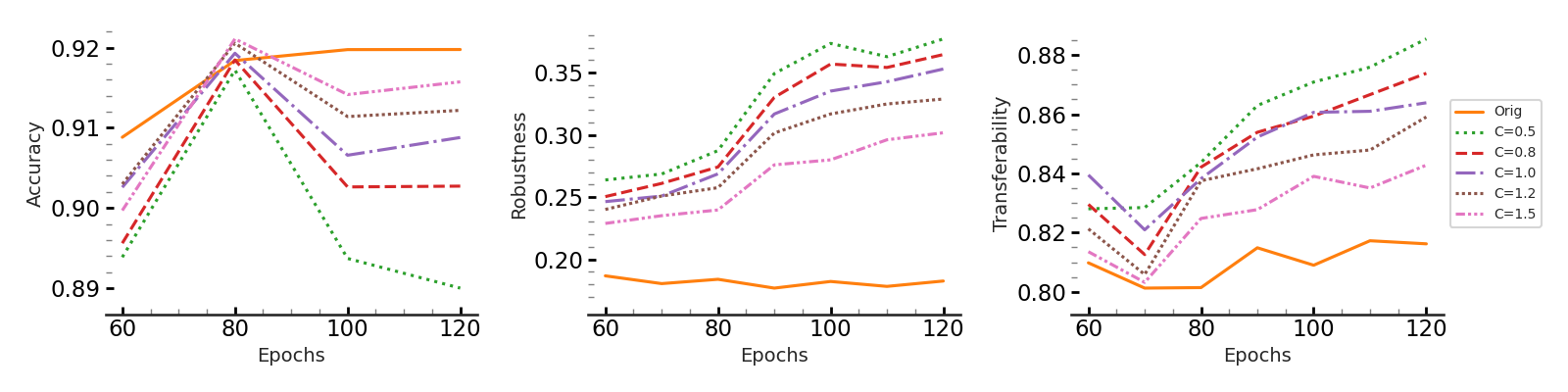} 
\caption{\footnotesize {\bf Accuracy vs. Robust Accuracy vs. Transferability:} Changes in the average accuracy and robust accuracy of \textit{individual} ResNet-18 models, along with the {\em average transferability rate between any pair of the models in each ensemble} as the layer-wise clipping value (spectral norm) changes. As the plots show, although the robustness of \textit{individual} models increases with decreasing the clipping value, the transferability rate among the models increases, which might forfeit the benefits of the clipping in the robustness of the whole ensemble.}
\vspace{-4mm}
\label{fig:vanilla_compare}
\end{figure}

\begin{proposition} 
\label{prop:trans}
    
    Assume $\X = [0,1]^d$ and $\|\delta_x\| \leq \epsilon$. For two models $\F$ and $\G$, if the loss function on both for any $y \in \Y$ is $L$-Lipschitz with respect to the inputs, we have the following inequality:

%\label{prep-trans}
\begin{align*}
     | R_\F(\A_\mathcal{F}(x),y) - R_\G(\A_\mathcal{F}(x),y) | \leq 2L\epsilon + |R_\F(x,y) - R_\G(x,y)|.
    %&\mathrm{b)} \,\,\,\, | R_\F(A(x),y) - R_\G(A(x),y) | \leq |\ell_\mathcal{F}(x^\prime,y) - \ell_\mathcal{G}(x^\prime,y)| + 2L(\sqrt{d} + r),
\end{align*}

\end{proposition}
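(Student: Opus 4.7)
The plan is to reduce the proposition to a direct application of the triangle inequality together with the assumed $L$-Lipschitzness of the loss. The key observation is that both sides of the desired inequality involve evaluations at the same perturbed point $\A_\F(x)$, so the natural move is to insert $R_\F(x,y)$ and $R_\G(x,y)$ into the left-hand side and shift the perturbation burden onto each loss separately.

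Concretely, first I would write
\begin{align*}
|R_\F(\A_\F(x),y) - R_\G(\A_\F(x),y)|
&\leq |R_\F(\A_\F(x),y) - R_\F(x,y)| \\
&\quad + |R_\F(x,y) - R_\G(x,y)| \\
&\quad + |R_\G(x,y) - R_\G(\A_\F(x),y)|,
\end{align*}
which is just the triangle inequality applied twice. The middle term already matches the right-hand side of the target inequality, so it only remains to control the two outer terms.

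Next, I would invoke the $L$-Lipschitz assumption on the loss of each model with respect to its input. Since $\A_\F(x) = x+\delta_x$ with $\|\delta_x\|\leq \epsilon$, this gives
\[
|R_\F(\A_\F(x),y) - R_\F(x,y)| \leq L\|\delta_x\| \leq L\epsilon,
\]
and analogously for $\G$. Substituting these two bounds back yields the $2L\epsilon$ contribution and the additive $|R_\F(x,y)-R_\G(x,y)|$ term, completing the argument.

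The only minor subtlety is interpreting $R_\F$: per the preliminaries it is defined via an expectation, but since the inequality is stated pointwise in $x$ and the Lipschitz hypothesis is on the loss in its input, one can either apply the argument pointwise to $\ell_\F$ and then take expectations (using that Lipschitzness is preserved under $\E$ when the perturbation is deterministic in $x$), or treat the statement as applying directly to the per-sample loss. Either reading yields the same two-line proof, so I do not anticipate any real obstacle; the result is essentially a Lipschitz-stability lemma for perturbed risks.
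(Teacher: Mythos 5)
Your argument is correct: two applications of the triangle inequality to insert $R_\F(x,y)$ and $R_\G(x,y)$, followed by the $L$-Lipschitz bound $|R_\cdot(\A_\F(x),y)-R_\cdot(x,y)|\le L\|\delta_x\|\le L\epsilon$ on each of the two outer terms, gives exactly $2L\epsilon + |R_\F(x,y)-R_\G(x,y)|$. This is the natural (and essentially unique) route for such a Lipschitz-stability bound, and matches the intended argument; your remark about applying the Lipschitz step pointwise to $\ell$ and passing through the expectation by Jensen correctly handles the paper's definition of $R$ as an expected loss.
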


The proof for the proposition can be found in our paper. In Proposition~\ref{prop:trans}, we study the $T_{rate}$ of an adversarial example generated by $A_\mathcal{F}$ to model $\G$ by using the difference in the population loss of the two models on these adversarial examples as a proxy; the lower this difference is, the more likely that the two models will perform similarly on these adversarial examples. To verify this conjecture empirically, in Figure~\ref{fig:vanilla_compare} we show how $T_{rate}$ changes among three ResNet-18~\cite{he2016deep} models without batch norm layers as the layer-wise Lipschitz constant (which governs the upper-bound on the Lipschitz constant of the model as mentioned earlier) changes. For more details see \S~\ref{sec:tradeoff}. %For further discussions on the evidence of this phenomenon in prior theoretical works, see Appendix~\ref{apx_prior}. 

\begin{tcolorbox}
\noindent{\bf Main Takeaway:} According to Proposition~\ref{prop:trans}, decreasing $L$ (Lipschitz constant), might be an indicator of higher similarity in the risk of the two models on the adversarial examples, which might imply a higher $T_{rate}$. Therefore, although a lower Lipschitz constant could contribute to the robustness of a single model, it might increase the $T_{rate}$ among the models of an ensemble which might hinder the expected benefits of the Lipschitzness. 
\end{tcolorbox}

\section{Methods}

In this section, we start with explaining our efficient algorithms for extracting and controlling the spectral norm of affine layers. Then, building on these results present our efficient approach for orthogonalizing the spectrum of these layers with respect to arbitrary transformations.

\subsection{Spectrum Extraction}
\label{sec-extraction}

%\cite{virmaux2018lipschitz} used the power method to compute the exact spectral norm of the linear and convolutional layers. For an implicitly linear layer, we have $\nabla_x \frac{1}{2} \|g(x)\|^2 = M^\top Mx$. Therefore, by iterating over this computation, we get the power method for computing the eigenvalues of $A = M^\top M$ (which are squared of the singular values of $M$) by only using the automatic differentiation to compute the left term. 

% We introduce our \textit{PowerQR} algorithm, which performs an implicit version of shifted subspace iteration algorithm (see chapter 5 of~\cite{saad2011numerical} for more details) using automatic differentiation, to compute the singular values of $M$. This is a common method for extracting the spectrum of linear layers, but its direct application requires the explicit matrix form of the transformation. In~\Cref{alg:powerqr}, we show how auto-differentiation can be used to perform this algorithm without computing the explicit matrix form. The use of shift parameter can make this algorithm more stable and faster than regular power method.  Also, our algorithm is much more efficient when multiple singular values and vectors are of interest, compared to iterative calls to the regular power method followed by deflation as in~\cite{virmaux2018lipschitz}. Regarding the complexity of~\Cref{alg:powerqr}, other than the $O(n^2 k)$ complexity of QR decomposition in line $6$, it has an additional cost of computing the gradients of the layer at line $4$ for a batch of $k$ data points, which is dominated by the former cost. 

We introduce our \textit{PowerQR} algorithm, which performs an implicit version of the shifted subspace iteration algorithm on any implicitly linear layer. Shifted subspace iteration is a common method for extracting the spectrum of linear transformations; the shift parameter makes the algorithm more stable and faster than the regular power method. Also, it is much more efficient when multiple singular values and vectors are of interest, compared to iterative calls to the regular power method followed by deflation (see chapter 5 of~\cite{saad2011numerical} for more details of this algorithm). However, the direct application of this algorithm requires the explicit matrix form of the transformation. In~\Cref{alg:powerqr}, we show how auto-differentiation can be used to perform this algorithm in an implicit way, without requiring the explicit matrix form. Regarding the complexity of~\Cref{alg:powerqr}, other than the $O(n^2 k)$ complexity of QR decomposition in line $6$, it has an additional cost of computing the gradients of the layer at line $4$ for a batch of $k$ data points, which is dominated by the former cost.

% The use of the power method to compute the eigenvalues can be slow, especially if multiple singular values and vector are needed. For an implicitly linear layer, we have $\nabla_x \frac{1}{2} \|f(x) - f(0)\|^2 = M^\top Mx$. By iterating over this computation,~\cite{virmaux2018lipschitz} perform the power method for computing the eigenvalues of $A = M^\top M$ (whose eigenvalues are square of the singular values of $M$) by only using the automatic differentiation to compute the left term. This way, $\|M\|_2$ can be computed without knowing $M$. To compute multiple singular values,~\cite{virmaux2018lipschitz} repeat this procedure followed by deflation, which leads to a slow iterative procedure that cannot utilize the parallelization of deep learning libraries. 

%time-consuming using this method, especially if the ratio $\frac{\sigma_{i+1}^2}{\sigma_{i}^2}$ is close to $1$ (which occurs for most ), as the convergence rate of the power method depends on it (see~\Cref{theorem_convrate}). Also, the computation of each singular value depends on the accurate computation of the larger ones. Considering these shortcomings, we use our \textit{PowerQR} algorithm, which performs an implicit version of shifted subspace iteration using automatic differentiation, to compute the singular values of $M$. Subspace iteration is known to be more stable and much faster than the power method.

\begin{algorithm}[t]
\caption{PowerQR ($f, X, N, \mu=1$)}\label{alg:powerqr}
\begin{algorithmic}[1]
\STATE {\bfseries Input:} Affine function $f$, Initial matrix $X \in \R^{n\times k}$, Number of iterations $N$, Shift value $\mu$ %($=1$ by default)
\STATE {\bfseries Output:}  Top $k$ singular values and corresponding right singular vectors
    \FOR{$i=1$ {\bfseries to} $N$}
        \STATE $X^\prime \gets \nabla_X \frac{1}{2} \|f(X) - f(0)\|^2$ \\ \COMMENT{$\nabla_x \frac{1}{2} \|f(x) - f(0)\|^2 = M^\top Mx$}
        \STATE $X \gets \mu X + X^\prime$
        \STATE $(Q, R) \gets \mathrm{QR}(X)$ \COMMENT{$\mathrm{QR}$ decomposition of $X$}
        \STATE $X \gets Q$
    \ENDFOR

\STATE $S \gets \sqrt{\mathrm{diag}(R - \mu I)}$ \COMMENT{Singular values}
\STATE $V \gets X$  \COMMENT{Right singular vectors}
\STATE {\bfseries Return} $S,V$

\end{algorithmic}
\end{algorithm}
%\vspace{-1pt}

%\vspace{-7pt}
%\begin{proof}
 %Notice that $\nabla_X \frac{1}{2} \|g(X)\|^2 = M^\top M X = AX$. Let define $A^\prime = A + \mu I$. Putting steps 4 and 5 of the algorithm together, we have $(Q, R) = \mathrm{QR}(AX + \mu X) = \mathrm{QR}(A^\prime X)$. Therefore, the algorithm above can be seen as the subspace iteration for matrix $A^\prime$. Therefore the diagonal values of $R$ will converge to the eigenvalues of $A^\prime$, and columns of $Q$ converge to the corresponding eigenvectors. If $\lambda$ is an eigenvalue of $A^\prime = A + \mu I$, then $\lambda - \mu$ is an eigenvalue of $A$. Therefore by subtracting $\mu$ from the diagonal elements of $R$, we get the eigenvalues of $A$, which are squared of singular values of $M$. The eigenvectors of $A^\prime$ are the same as the eigenvectors of $A$ and, therefore, the same as the right singular vectors of $M$. 
%\end{proof}

%\begin{remark}
%    To get only the top-$k$ singular values and vectors, it is enough to choose $X_0$ to have $k$ columns only.
%\end{remark}

\begin{proposition}
\label{lemma-shifted-subspace}
%Let $f(x) = Mx + b$, and $g(x) = f(x) - f(0)$. Then~\Cref{alg:powerqr} correctly performs the shifted subspace iteration algorithm with $\mu$ as the shift value.   
Let $f(x) = Mx + b$. Then~\Cref{alg:powerqr} correctly performs the shifted subspace iteration algorithm on $M$, with $\mu$ as the shift value.   
\end{proposition}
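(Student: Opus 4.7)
The plan is to reduce each iteration of the loop in Algorithm~\ref{alg:powerqr} to one textbook step of shifted subspace iteration applied to the symmetric positive semi-definite matrix $A := M^\top M$ with shift $\mu$, and then invoke standard convergence facts to justify the recovery of singular values in lines 9--10. There are three things to verify in order: (i) the implicit gradient in line 4 really equals $M^\top M X$, (ii) the combined update in lines 5--7 is precisely subspace iteration on $A + \mu I$ followed by orthonormalization, and (iii) the diagonal of $R$ carries the shifted eigenvalues so that line 9 returns the $\sigma_i(M)$.

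For step (i), I would exploit that $f$ is affine, so that the bias cancels and $f(X) - f(0) = MX$ (with the bias broadcast over the columns of $X$). Consequently
\[
\tfrac{1}{2}\|f(X) - f(0)\|_F^2 \;=\; \tfrac{1}{2}\,\trace\!\bigl(X^\top M^\top M X\bigr),
\]
whose gradient with respect to $X$ is $M^\top M X$ by standard matrix calculus (the factor of $2$ from the symmetric quadratic cancels the $\tfrac{1}{2}$). Because the forward map is affine and the objective quadratic, reverse-mode automatic differentiation evaluates this gradient exactly and --- crucially --- without ever forming $M$ explicitly, which is the ``implicit'' feature exploited by the paper.

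For step (ii), line 5 then sets $X \leftarrow \mu X + M^\top M X = (A + \mu I)X$, and lines 6--7 replace $X$ by the $Q$-factor of its QR decomposition. Composing these, one pass of the loop realizes $X \leftarrow \mathrm{orth}\bigl((A+\mu I)X\bigr)$, which is exactly a step of shifted subspace iteration on $A$ with shift $\mu$ (Chapter~5 of~\cite{saad2011numerical}). Since $A + \mu I$ shares its eigenvectors with $A$, the usual gap and generic-starting-subspace hypotheses guarantee that $X$ converges to an orthonormal basis for the top-$k$ invariant subspace of $A$, which is the span of the top-$k$ right singular vectors of $M$ --- matching the output $V$ in line 10.

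For step (iii), at convergence the QR factor satisfies $R \approx X^\top (A + \mu I) X \approx \mathrm{diag}\bigl(\sigma_i(M)^2 + \mu\bigr)$, because the restriction of $A + \mu I$ to its dominant invariant subspace is diagonal in the eigenbasis with entries $\sigma_i(M)^2 + \mu$. Subtracting $\mu$ from the diagonal and taking an entrywise square root therefore yields the $\sigma_i(M)$, exactly as in line 9. The only delicate point in the whole argument is step (i): one has to be confident that the auto-differentiation shortcut returns the algebraic expression $M^\top M X$ rather than a mere linearization; once this is nailed down, steps (ii) and (iii) are immediate consequences of the classical analysis of shifted subspace iteration.
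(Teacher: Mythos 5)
Your proof is correct and follows essentially the same route as the paper's: both reduce each loop iteration to one step of shifted subspace iteration on $A = M^\top M$ by observing that $f(X) - f(0) = MX$ makes the implicit gradient equal $M^\top M X$, and both recover the singular values and right singular vectors from the shifted eigenpairs of $A + \mu I$. You provide slightly more detail on the convergence justification and the form of $R$ at convergence, but the underlying argument is the same.
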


 Note that by subtracting the value of $f$ at $0$ in line $4$, we remove the bias term from the affine function, which does not affect the singular values and vectors. This technique works even for the composition of multiple affine functions (e.g., concatenation of implicitly linear layers).
 The convergence rate in the subspace iteration algorithm depends on the initial matrix $X$ (line $1$), and if there are vectors $s_i$ in the column space of $X$ for which $\|s_i - v_i\|_2$ is small, then it takes fewer steps for the $i$-th eigenvector to converge. Meanwhile, we know that SGD makes small updates to $W$ in each training step. Therefore, by reusing the matrix $V$ in line $10$ (top-$k$ right singular vectors) computed for $W_{i-1}$ as initial $X$ for computing the spectrum of $W_i$, we can get much faster convergence. Our experiments show that by using this technique, it is enough to perform only one iteration of PowerQR per SGD step to converge to the spectrum of $W$ after a few steps. Using a warm start (performing more iterations of PowerQR for the initial $W$) helps to correctly follow the top-$k$ singular values during the training. The prior work on estimating and clipping the singular values has exploited these small SGD updates for the parameters in a similar manner~\cite{farnia2018generalizable,gouk2021regularisation,senderovich2022towards}.

\subsection{Clipping the Spectral Norm}
%\label{sec-clipping-algos}
\label{sec-clipping}

In this section, we introduce our algorithm for clipping the spectral norm of an implicitly linear layer to a specific target value. To project a linear operator $M = USV^\top$ to the space of operators with a bounded spectral norm of $c$, it is enough to construct $A^\prime = US^\prime V^\top$, where $S^\prime_{i,i} = \min (S_{i,i}, c)$~\cite{lefkimmiatis2013hessian}. Therefore, for this projection, we will not need to extract and modify the whole spectrum of $A$ (as in~\cite{sedghi2018singular,senderovich2022towards}), and only clipping the singular values that are larger than $c$ to be exactly $c$ would be enough. Note that after computing the largest spectral norm (e.g., using the PowerQR method) by simply dividing the parameters of the affine model by the largest singular value, the desired bound on the spectral norm would be achieved, and that is the basis of some prior work~\cite{miyato2018spectral,farnia2018generalizable,gouk2021regularisation}; however, this procedure scales the whole spectrum rather than projecting it to a norm ball. Therefore, it might result in suboptimal optimization of the network due to replacing the layer with one in the norm ball that behaves very differently.
%not the most similar to it in the norm ball. 
This adverse effect can be seen in the results of our paper~\cite{boroojeny2024spectrum} as well. Thus, to resolve both aforementioned issues, our algorithm iteratively shrinks the largest singular value by substituting the corresponding rank $1$ subspace in an implicit manner. %After explaining the algorithm and why it works, we will explain how it can become a much faster clipping method during the training. 
%It is possible to use the method introduced in~\Cref{sec-modification} to set our desired $S^\prime$ as the new singular values; however, this method is very slow and memory intensive because to derive a full-rank projection, it requires the full SVD of $A$. 
%Thus, our algorithm iteratively clips the largest singular value. After explaining the algorithm and why it works, we will explain how it can become a much faster clipping method during the training. 

%\subsubsection{Clipping algorithm}

\begin{algorithm}[t]
\caption{Clip ($f_W, c,  N, P$)}\label{alg:clip}
\begin{algorithmic}[1]
\STATE {\bfseries Input:} Affine function $f_W= M_W x + b$, Clip value $c$, Number of iterations $N$, Learning rate $\lambda$, Number of iterations of PowerQR $P$
\STATE {\bfseries Output:}  Affine function $f_{W^\prime}$ with singular values clipped to $c$
\STATE $\sigma_1, v_1 \gets \mathrm{PowerQR}(f_{W}, v, P)$ \small{($v$: Random vector)}
    \WHILE{$\sigma_1 > c$}
    \STATE $W^\prime \gets W$ \COMMENT{$W^\prime = \sum_{i=1}^{n} u_i \sigma_i v_i^\top$}
        \FOR{$i=1$ {\bfseries to} $N$}
            \STATE $W^\prime_\delta \gets \nabla_{W^\prime} \frac{1}{2} \|f_{W^\prime}(v_1) - f_{W}(c \sigma_1^{-1} v_1)\|^2$ \\ \COMMENT{ $W^\prime_\delta = u_1 (\sigma_1 - c)v_1^\top$}
            \STATE $W^\prime \gets W^\prime - \lambda W^\prime_\delta$ 
        \ENDFOR
        \STATE $W \gets W^\prime$
        \STATE $\sigma_1, v_1 \gets \mathrm{PowerQR}(f_{W}, v, P)$ \small{($v$: Random vector)} \COMMENT{Update $\sigma_1$ and $v_1$}
    \ENDWHILE

%\textbf{Return} 
\STATE {\bfseries Return} $f_{W^\prime}, \sigma_1, v_1$

\end{algorithmic}
\end{algorithm}

\Cref{alg:clip} shows our stand-alone clipping method. The outer \texttt{while} loop clips the singular values of the linear operator one by one. After clipping each singular value, the PowerQR method in line \texttt{11} computes the new largest singular value and vector, and the next iteration of the loop performs the clipping on them. The clipping of a singular value is done by the \texttt{for} loop. Considering that in the \texttt{for} loop $M:= M_{W} = M_{W^\prime} = \sum_{i=1}^{n} u_i \sigma_i v_i^\top$, since $v_i$s are orthogonal and $v_i^\top v_i = 1$, for line \texttt{7} we have:
\vspace{-1 pt}
\begin{align*}
    W^\prime_\delta &= \nabla_{W^\prime} \frac{1}{2} \|f_{W^\prime}(v_1) - f_{W}(c \sigma_1^{-1} v_1) \|^2 \\
    &=  \left(f_{W^\prime}(v_1) - f_{W}(c \sigma_1^{-1} v_1)\right) \nabla_{W^\prime} f_{W^\prime}(v_1) \\
    &= \left(Mv_1 + b - c\sigma_1^{-1}Mv_1 - b \right) v_1^\top \\
    &= \left(u_1 \sigma_1 - u_1 c \right) v_1^\top = u_1 \sigma_1 v_1^\top - u_1 c v_1^\top,
\end{align*}
%\vspace{-1pt}
\noindent where $\nabla_{W^\prime} f_{W^\prime}(v_1) = v_1^\top$ because it is as if we are computing the gradient of the linear operator with respect to its transformation matrix. Therefore, in line \texttt{7} if $\lambda = 1$ we compute the new transformation matrix as $\sum_{i=1}^{n} u_i \sigma_i v_i^\top - u_1 \sigma_1 v_1^\top + u_1 c v_1^\top = u_1 c v_1^\top + \sum_{i=2}^{n} u_i \sigma_i v_i^\top$. Notice that $W^\prime_\delta$ is in the same format as the parameters of the linear operator (e.g., convolutional filter in convolutional layers). 

If we set $\lambda=1$, the largest singular value will be clipped to the desired value $c$, without requiring the \texttt{for} loop. That is indeed the case for dense layers. The reason that we let $\lambda$ be a parameter and use the \texttt{for} loop is that for convolutional layers, we noticed that a slightly lower value of $\lambda$ is required for the algorithm to work stably. The \texttt{for} loop allows this convergence to singular value $c$.

 % We intertwine this clipping method, the PowerQR method, and the outer SGD optimization, to develop a fast clipping algorithm we refer to as FastClip. FastClip performs one iteration of PowerQR at each SGD iteration to follow the exact top singular value and vector and performs a few iterations of the \texttt{while} loop in line \texttt{4} of~\Cref{alg:clip} every 100 iterations of SGD. The details of this algorithm are explained in~\Cref{sec-fast-clipping}. Through experiments, we will show this algorithm can clip the spectral norm correctly and efficiently.

%Notice that in Algorithm~\ref{alg:clip}, we can use top-$k$ singular values and their corresponding right singular vectors and do the clipping for them simultaneously. This helps the algorithm to complete faster, however, as we increase $k$, we might face more of the issues we had with the modification method in section~\ref{sec-modification}. So this value can be considered as one of the hyper-parameters. By intertwining this clipping method with the outer SGD optimization, we designed a fast clipping algorithm that was used in our experiments. This algorithm is explained in section~\ref{sec-fast-clipping}.

%\subsection{Fast Clipping}
%\label{sec-fast-clipping}

To derive our fast and precise clipping method, \textit{FastClip}, we intertwine~\Cref{alg:powerqr} and~\Cref{alg:clip} with the outer SGD iterations used for training the model, as shown in~\Cref{alg:fastclip}.
As we mentioned in~\Cref{sec-extraction}, the PowerQR method with warm start is able to track the largest singular values and corresponding vectors by running as few as one iteration per SGD step (lines $4$ and $8$ in~\Cref{alg:fastclip}). Whenever the clipping method is called, we use this value and its corresponding vector as additional inputs to~\Cref{alg:clip} (rather than line $3$ in~\Cref{alg:clip}). By performing this clipping every few iterations of SGD, since the number of calls to this method becomes large and the changes to the weight matrix are slow, we do not need to run its \texttt{while} loop many times. Our experiments showed performing the clipping method every $100$ steps and using only $1$ iteration of \texttt{while} and \texttt{for} loops is enough for clipping the trained models (lines $9$ and $10$). Because after the clipping, the corresponding singular value of $v$ has shrunk, we need to perform a few iterations of PowerQR on a new randomly chosen vector (since $v$ is orthogonal to the other right singular vectors) to find the new largest singular value and corresponding right singular vector, and line $11$ of~\Cref{alg:clip} takes care of this task. Obviously, the constants used in our algorithms are hyperparameters, but we did not tune them to find the best ones and the ones shown in this algorithm are what we used in all of our experiments. Also, any optimizer can be used for the SGD updates in line $7$ (e.g., Adam~\cite{kingma2014adam}).

\begin{algorithm}[tb]
\caption{FastClip ($f_W, X, c, N, \eta$)}\label{alg:fastclip}
\begin{algorithmic}[1]
\STATE {\bfseries Input:} Affine function $f_W$, Dataset $X$, Clip value $c$, Number of SGD iterations $N$, Learning rate $\eta$
\STATE {\bfseries Output:}  Trained affine function $f_{W^\prime}$ with 
singular values clipped to $c$
\STATE $v \gets \mathrm{Random\,input\,vector} $
\STATE $\sigma_1, v_1 \gets \mathrm{PowerQR} (f_W, v_1, 10)$
    \FOR{$i=1$ {\bfseries to} $N$}
        \STATE $X_b \gets \mathrm{SampleFrom} (X) $ \COMMENT{Sample a batch}
        \STATE $W = W - \eta \nabla_W \ell(f_W(X_b))$ \COMMENT{SGD step}
        \STATE $\sigma_1, v_1 \gets \mathrm{PowerQR} (f_W, v_1, 1)$
        \IF{$ i \,\,\mathrm{isDivisibleBy} \,\,100$}
            \STATE $f_W, \sigma_1, v_1 \gets \mathrm{Clip} (f_W, \sigma_1, v_1, c, 1, 10)$
            %\State $v \gets \mathrm{Random\,input\,vector} $
            %\State $\sigma, v \gets \mathrm{PowerQR} (f_W, v, 10)$
        \ENDIF
    \ENDFOR

%\textbf{Return} 
\STATE {\bfseries Return} $f_W$

\end{algorithmic}
\end{algorithm}

\subsection{Batch Normalization Layers}
\label{sec-bn}

Batch Normalization~\cite{ioffe2015batch} has proved to successfully stabilize and accelerate the training of deep neural networks and is thus by now standard in many architectures that contain convolutional layers. However, the adverse effect of these layers on the adversarial robustness of models has been noted in previous research~\cite{xie2019intriguing,benz2021revisiting,galloway2019batch}. As we will show in our experiments, not controlling the spectral norm of the batch normalization layers might forfeit the benefits of merely controlling the spectral norm of convolutional layers. We also point out an interesting compensation behavior that the batch normalization layer exhibits when the spectral norm of the convolutional layer is clipped; As the clipping value gets smaller, the spectral norm of the batch normalization layers increases (see Figure~\ref{fig:resnet18-spectral-norm}a).

The clipping method introduced by~\cite{gouk2021regularisation} (explained in~\Cref{apx-alg-bn}), which is also used by the follow-up works~\cite{senderovich2022towards,delattre2023efficient}, performs the clipping of the batch norm layer separately from the preceding convolutional layer, and therefore upper-bounds the Lipschitz constant of their concatenation by the multiplication of their individual spectral norms. This upper-bound, although correct, is not tight, and the actual Lipschitz constant of the concatenation might be much smaller, which might lead to unwanted constraining of the concatenation such that it hurts the optimization of the model. Also, the purpose of the batch normalization layer is to control the behavior of its preceding convolutional layer, and therefore, clipping it separately does not seem to be the best option. As explained in~\Cref{sec-clipping}, our clipping method, can be applied directly to the concatenation of the convolutional layer and the batch normalization layer. This way, we can control the spectral norm of the concatenation without tweaking the batch normalization layer separately. We will show in our experiments that this method can be effective in increasing the robustness of the model, without compromising its accuracy.

\subsection{Layer-wise Orthogonalization for Training Robust Ensembles}
\label{sec:lotos}

%\subsection{Enforcing Model Difference}

%\varun{add one line maybe re-iterating the setup which involves 2 models?}
%In the theoretical analysis for the similarity of the models in \S~\ref{sec:conjecture}, we assumed the same Lipschitz constant for both of the models, and always assumed the worst-case scenario when deriving the inequality in Proposition~\ref{prop:trans}. Note that in our experiments, as mentioned in~\Cref{sec:conjecture}, we enforce various upper bounds for the Lipschitz constant of the model by clipping the spectral norm of each layer, as commonly done in prior work~\cite{gouk2021regularisation,delattre2023efficient,senderovich2022towards,boroojeny2024spectrum}.

%, we clip the layer-wise spectral norm to derive a desired upper-bounds.

%As we decrease the spectral norms, the capacity decreases and this similarity becomes more probable. 

%\subsection{\bf How we clip matters!} 

When clipping the spectral norm of the layers, we are reducing the capacity of the parameters that can be used during the optimization~\cite{neyshabur2017exploring,bartlett2017spectrally}; it is more likely that the parameters of the two models become ``more similar'' when optimized over these constrained spaces. Therefore, to control the Lipschitz constant of each model to make them more ``individually'' robust, {\em and} avoid sacrificing the ``ensemble robustness'', we need to utilize other modifications to enforce the diversity of their decision boundaries on the adversarially perturbed samples.
%of the original samples among them as much as possible to prevent higher transferability rates. 

\begin{tcolorbox}
\noindent{\bf Our Intuition:} The method that we introduce to enforce the diversity %of the decision boundaries on perturbed samples 
is based on {\em promoting the orthogonality of the sub-spaces of the corresponding layers of the models that correspond to their top singular vectors}. Since the top singular vectors govern the major part of the transformation by each layer, this orthogonality promotes the difference in the outputs of the corresponding layers from different models. 

\end{tcolorbox}
Affine layers transform the input space such that the sub-space spanned by the top singular vectors will have the most amount of change in the output space for a perturbed input. When the adversary is choosing a perturbation to add to the input, a natural choice would be to choose the direction along the top singular vectors: with the same amount of perturbation, the adversary will get the highest amount of change in the output space for a perturbed input along this direction. Based on this analogy, we consider any two corresponding affine layers $f^{(j)},g^{(j)}$ from a pair of the models $\mathcal{F}$ and $\mathcal{G}$ in the ensemble, whose linear transformations are represented by matrices $A$ and $B$, with the singular value decompositions $A = \sum_{i=1}^d \sigma_i u_i v_i^T$ and $B = \sum_{i=1}^d \sigma_i^\prime u_i^\prime v_i^{\prime T}$, respectively. We define a notion of similarity based on the top-$k$ sub-spaces:
\begin{align}
    \label{equ:sk}
    S_k^{(j)}(f^{(j)},g^{(j)}, \texttt{mal}) := \sum_{i=1}^k w_i ( \mathrm{ReLU} (\|f^{(j)}(v_i^\prime)\|_2 - \texttt{mal}) + \mathrm{ReLU} (\|g^{(j)}(v_i)\|_2 - \texttt{mal})),
\end{align}
\noindent where (a) $w_i$'s are arbitrary weights which are non-increasing with $i$ to emphasize ``more importance'' for the singular vectors corresponding to top singular values, and (b) \texttt{mal} refers to the \textit{maximum allowed length} of the output of each layer when it is given the singular vectors of the other layer as the input. Observe that when \texttt{mal} is set to $0$, the value of $S_k(f,g)$ is $0$ if the transformations of $f$ and $g$ are orthogonal in their top-$k$ sub-space (i.e., $\|f^{(j)}(v_i^\prime)\|_2 = \|A v_i^\prime\|_2 = \|\sum_{l=1}^d \sigma_{l} u_{l} v_{l}^T v_i^\prime\|_2 = 0$).

Utilizing this insight, we introduce our technique, \textbf{Layer-wise Orthogonalization for Training Robust Ensembles (\texttt{LOTOS})}. \texttt{LOTOS} promotes the orthogonality among these sub-spaces which leads to different behaviors when perturbing the clean samples along a specific direction. We add this similarity for each pair of corresponding affine layers (dense and convolutional layers) in each pair of models within the ensemble and add them to the cross-entropy loss. More specifically, given an ensemble of $N$ models $\cF_i, \,\, i=\{1,\dots,N\}$ with $M$ layers that would be incorporated in the orthogonalization process, the new loss becomes:
\begin{align}
\mathcal{L}_{\text{train}} = \frac{1}{N}\sum_{i=1}^{N}\mathcal{L}_{\text{CE}}(\mathcal{F}_i(x), y) + \frac{\lambda}{M\,N\,(N-1)}\sum_{z=1}^{N-1}\sum_{j=z+1}^{N} \sum_{l=1}^{M} S_k^{(l)}(f_z^{(l)},f_j^{(l)}, \texttt{mal})
\label{equ:lotos-loss}
\end{align}

where $\mathcal{L}_{\text{CE}}(\mathcal{F}_i(x), y)$ is the cross-entropy loss of $\mathcal{F}_i(x)$ given its output on $x$ and the ground-truth label $y$. $\lambda$ controls the effect of the orthogonalization loss and could be adjusted. 
%The detailed hyper-parameter setting and training criterion are discussed in \S~\ref{sec:setup}.

\section{Theoretical Results}
\label{sec:efficiency_lotos}

In this section we first present a theorem that shows the specific structure of the singular values in convolutional layers. Building on this result, we show why LOTOS is highly efficient for convolutional layers.

\subsection{Limitations of Convolutional Layers}
\label{sec-limitations}
%\label{sec-modification}

In this section, we shed light on the formerly overlooked limitation of convolutional layers to represent any arbitrary spectrum. In some prior work, the proposed method for clipping computes the whole spectrum, clips the spectral norm, and then tries to form a new convolutional layer with the new spectrum~\cite{sedghi2018singular,senderovich2022towards}. We also introduce a new simple optimization method that uses our PowerQR algorithm and adheres to this procedure in~\Cref{sec-modification}. In the following, we start with a simple example that shows an issue with this procedure, and then, present our theoretical results that show a more general and fundamental limitation for a family of convolutional layers.

Consider a 2d convolutional layer whose kernel is $1\times 1$ with a value of $c$. Applying this kernel to any input of size $n\times n$ scales the values of the input by $c$. The equivalent matrix form of this linear transformation is an $n\times n$ identity matrix scaled by $c$. We know that this matrix has a rank of $n$, and all the singular values are equal to $1$. Therefore, if the new spectrum, $S^\prime$, does not represent a full-rank transformation, or if its singular values are not all equal, we cannot find a convolutional layer with a $1\times 1$ kernel with $S^\prime$ as its spectrum.

%\subsubsection{Theoretical Results}
%\label{sec-limitations}

In the following theorem, we compute the closed form of the singular values of convolutional layers with circular padding, and in Remark~\ref{remark-duplicate}, we mention one of the general limitations that it entails.

%In this section, we unveil limitations of convolutional layers with circular padding to represent arbitrary spectrums by computing the closed form of their singular values in~\Cref{theorem-limitations}. In the following remarks we elaborate on some of the consequnces of this theorem. These limitations have been overlooked by the prior works.
%We will highlight some of these limitations in the following remarks. 

%In this section, we investigate the question raised in the last section about the possibility of deriving a convolutional layer with an arbitrary spectrum. To answer this question, we derive the closed form of singular values of a sub-class of convolutional layers to emphasize their limitations in representing various linear operators and give some examples that make this representation failure clear.

\begin{theorem}
\label{theorem-limitations}
For a convolutional layer with $1$ input channel and $m$ output channels (the same result holds for convolutions with $1$ output channel and $m$ input channels) and circular padding applied to an input which in its vectorized form has a length of $n$, if the vectorized form of the $l$-th channel of the filter is given by $\textbf{f}^\textbf{(l)} = [f_0^{(l)}, f_1^{(l)}, \dots, f_{k-1}^{(l)}]$, the singular values are:
\vspace{-3 pt}
\begin{align}
    \mathcal{S}(\omega) = \left\{ \sqrt{\sum_{l=1}^m \mathcal{S}_j^{(l)^2}(\omega)}, \, j\in[n-1] \right\},
    \label{equ-eigval_2}
\end{align}
\vspace{-3pt}
\noindent where for $l \in {1,\dots, m}$:
\vspace{-3 pt}
\begin{align*}
    \mathcal{S}^{(l)}(\omega) = \left[\sqrt{c_0^{(l)} + 2\sum_i^{k-1} c_i^{(l)} \Re(\omega^{j\times i})},\, j\in [n-1] \right]^\top
    \label{equ-eigval}
\end{align*}
% \begin{align*}
%     \mathcal{S}^{(j)}(\omega) = \sqrt{c_0^{(j)} + 2\sum_i^{k-1} c_i^{(j)} \Re(\omega^i)},
% \end{align*}
\vspace{-3pt}
\noindent in which $c_i^{(l)}$'s are defined as:
%\vspace{-1pt}
\begin{align*}
    c_0^{(l)} &:= f_0^{(l)^2} + f_1^{(l)^2} + \dots + f_{k-1}^{(l)^2},\\
    c_1^{(l)} &:= f_0^{(l)} f_1^{(l)} + f_1^{(l)} f_2^{(l)} + \dots + f_{k-2}^{(l)} f_{k-1}^{(l)},\\
    &\vdots\\
    c_{k-1}^{(l)} &:= f_0^{(l)} f_{k-1}^{(l)}.
\end{align*}

\label{pro-eigval_2}
\end{theorem}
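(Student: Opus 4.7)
The plan is to exploit the fact that a single-channel convolution with circular padding is exactly a \emph{circulant} linear map, which is simultaneously diagonalized by the discrete Fourier transform (DFT). Once this is in place, the multi-channel case becomes a straightforward block-matrix argument, and the $c_i^{(l)}$ coefficients appear naturally as the autocorrelation of the filter.

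First I would set up the matrix form. For a single output channel $l$, the vectorized circular convolution with filter $\mathbf{f}^{(l)}$ of length $k$ acting on an input of length $n$ is given by a circulant matrix $C_l\in\mathbb{R}^{n\times n}$ whose first row is $(f_0^{(l)}, f_1^{(l)}, \dots, f_{k-1}^{(l)}, 0, \dots, 0)$ (up to the usual indexing convention). For the $1$-input, $m$-output case, the full transformation $M$ is the vertical stack $M = [C_1^\top, C_2^\top, \dots, C_m^\top]^\top \in\mathbb{R}^{mn\times n}$. Its non-zero singular values are the square roots of the eigenvalues of
\begin{equation*}
M^\top M \;=\; \sum_{l=1}^{m} C_l^\top C_l.
\end{equation*}

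Next I would invoke the standard fact that every $n\times n$ circulant matrix is diagonalized by the DFT matrix $F$ with $F_{jt}=\omega^{jt}$, $\omega=\exp(2\pi i/n)$. Consequently each $C_l^\top C_l$ is circulant, and they share the common eigenbasis $F$. This immediately yields
\begin{equation*}
\mathrm{eig}_j\!\left(M^\top M\right) \;=\; \sum_{l=1}^{m} \bigl|\hat f^{(l)}(\omega^j)\bigr|^2, \qquad j\in[n-1],
\end{equation*}
where $\hat f^{(l)}(\omega^j)=\sum_{t=0}^{k-1} f_t^{(l)}\omega^{jt}$. Taking square roots gives the outer formula \eqref{equ-eigval_2}, with $\mathcal{S}_j^{(l)}(\omega) = |\hat f^{(l)}(\omega^j)|$.

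It then remains to show that $|\hat f^{(l)}(\omega^j)|$ has the autocorrelation expansion stated in the theorem. Expanding the square,
\begin{equation*}
\bigl|\hat f^{(l)}(\omega^j)\bigr|^2 \;=\; \sum_{a,b=0}^{k-1} f_a^{(l)} f_b^{(l)}\, \omega^{j(a-b)},
\end{equation*}
and grouping terms by the offset $i=|a-b|$, the diagonal terms sum to $c_0^{(l)}=\sum_t (f_t^{(l)})^2$ while each off-diagonal offset $i\ge 1$ contributes $c_i^{(l)}(\omega^{ji}+\omega^{-ji}) = 2 c_i^{(l)}\Re(\omega^{ji})$, where $c_i^{(l)}=\sum_{t=0}^{k-1-i} f_t^{(l)} f_{t+i}^{(l)}$ matches exactly the coefficients in the theorem statement. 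Finally, for the dual case (1 output channel, $m$ input channels), $M$ is a horizontal stack of circulants, so $M M^\top = \sum_l C_l C_l^\top$ shares the same nonzero eigenvalues, and the same formula holds.

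The only real subtlety I anticipate is bookkeeping: making sure that the first-row/first-column convention for the circulant matches the autocorrelation indexing used in the theorem, and verifying that $(C_l)^\top$ is also circulant (with its filter time-reversed, which leaves $|\hat f^{(l)}|^2$ invariant). Everything else is essentially the DFT diagonalization of circulants applied blockwise.
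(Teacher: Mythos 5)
Your proof is correct and takes essentially the same route as the paper's: both exploit that circular convolution is a circulant map diagonalized by the DFT, both reduce the multi-channel case to $M^\top M = \sum_l C_l^\top C_l$, and both produce the $c_i^{(l)}$ coefficients as the autocorrelation of the filter. The paper organizes it as a single-channel lemma first (explicitly identifying the first row of $A^\top A$ and then applying the circulant eigenvalue formula), whereas you go directly through $|\hat f^{(l)}(\omega^j)|^2$ and expand; this is an algebraic rearrangement of the same computation, not a different argument.
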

%\vspace{-8pt}
%\begin{proof}
%The convolutional layer with multiple output channels can be represented by a matrix $M = [M_1, M_2, \dots, M_m]^\top$, where each $M_i$ is an $n\times n$ circulant matrix representing the $i-$th channel. Therefore $A = M^\top M$ (or $MM^\top$ when there are multiple input channels) can be written as $\sum_{j=1}^m M_j^\top M_j$. So, for circulant matrix $A$ we have $c_i= \sum_{j=1}^m c_i^{(j)}$, and the proof can be completed by using Lemma~\ref{pro-eigval}. 
%\end{proof}

\begin{remark}
\label{remark-duplicate}
 Note that in the closed form, we only have the real parts of the roots of unity. So, for any non-real root of unity, we get a duplicate singular value because mirroring that root around the real axis (i.e., flipping the sign of the imaginary part) gives another root of unity with the same real component. Only the roots that are real might derive singular values that do not have duplicates. The only such roots with real parts are $1$ and $-1$ ($-1$ is a root only for even $n$). Therefore, except for at most two singular values, other ones always have duplicates. This shows a more general limitation in the representation power of convolutional layers in representing arbitrary spectrums.
\end{remark}

\subsection{Computational Efficiency of LOTOS}

In this section we use the results of Theorem~\ref{theorem-limitations} to prove that LOTOS is highly efficient for convolutional layers.

\noindent{\bf Time Complexity:} Note that the number of summands in the orthogonalization loss is $O(N^2 M)$. The computation of $S_{k}^{(t)}$ uses the computed singular vectors of each layer by FastClip~\cite{boroojeny2024spectrum}, which is fast and accurate in practice, and feeds them to the corresponding layer of the other models (see~\Cref{equ:sk}): therefore, it is as if each model has an extra batch of size $N-1$ to process at each iteration, which is relatively small when $N$ is small. our experiments (\S~\ref{sec:amun_results}) show that performing the orthogonalization for only the first layer would be effective for training robust ensembles (i.e., $M=1$). Therefore, the increase in the training time becomes negligible compared to when the clipping model is used without \texttt{LOTOS}.

\noindent{\bf Highly Efficient for Convolutional Layers:} For the orthogonalization to be effective in~\Cref{equ:lotos-loss}, it is necessary to increase the value of $k$ (dimension of orthogonal sub-spaces) because the layers of the DNNs are transformations between high dimensional representations. Therefore, only orthogonalizing the sub-space corresponding to the few top singular values does not guarantee that there is no strong correlation among the remaining top singular vectors (which might correspond to high singular values in each of the models). However, increasing $k$ decreases the computational efficiency, in both compution of the singular vectors~\cite{boroojeny2024spectrum} and computation of the orthogonalization loss in~\Cref{equ:lotos-loss}. 

Fortunately, specific properties of the convolutional layers, which are the most common affine layers in DNNs, allow an effective orthogonalization even with very small values of $k$. In~\Cref{theorem-ortho}, we prove even $k=1$ can be effective in orthogonalization with respect to the remaining singular vectors for convolutional layers.

%The complete proof is provided in~\Cref{apx:proofs}.

%\label{theorem:concentration}

\begin{theorem}
\label{theorem-ortho}
    Given two convolutional layers, $M_1$ and $M_2$ with a single input and output channel and circular padding for which $\textbf{f}$ is the vectorized form of the filter with a length of $T$, and considering $n$ to be the length of the vectorized input, if $\|A v_1^\prime\|_2 \leq \epsilon $, then: % and $\|A^\prime v_1\|_2 \leq \epsilon $, then:
\begin{align}
%\label{equ:upperbound}
    \|A v_p^\prime\|_2 \leq \sqrt{\epsilon^2 + \pi \|\textbf{f}\|_2^2 \, T^2 \frac{p}{n}},
\end{align}

\noindent where $A$ is the corresponding linear transformation of $M_1$ and $v_p^\prime$ is singular vector of $M_2$ corresponding to its $p$-th largest singular value. 
    
\end{theorem}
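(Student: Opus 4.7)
The plan is to leverage the fact that single-channel convolutions with circular padding are circulant matrices, so $M_1$ and $M_2$ are simultaneously diagonalized by the discrete Fourier basis. Every singular vector of $M_2$, including $v_1'$ and $v_p'$, is then a Fourier basis vector (or one of the real/imaginary pairs produced by the duplicate structure of Remark~\ref{remark-duplicate}), indexed by some frequencies $j_1$ and $j_p$ respectively. Applying $A$ to such a vector simply rescales it by the corresponding frequency response of $M_1$, so Theorem~\ref{theorem-limitations} gives
\[
\|A v_1'\|_2^2 \;=\; \mathcal{S}_{j_1}^2(M_1) \;\le\; \epsilon^2, \qquad \|A v_p'\|_2^2 \;=\; \mathcal{S}_{j_p}^2(M_1).
\]

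Next I would establish Lipschitz continuity of $\mathcal{S}_j^2(M_1)$ viewed as a function $S^2(t)$ of $t = j/n \in [0,1]$. Using the closed form $\mathcal{S}_j^2(M_1) = c_0 + 2\sum_{i=1}^{T-1} c_i \cos(2\pi j i/n)$ from Theorem~\ref{theorem-limitations}, the Cauchy--Schwarz bound $|c_i| \le \|\textbf{f}\|_2^{2}$, and $i \le T$, termwise differentiation gives
\[
\Bigl|\tfrac{d}{dt} S^2(t)\Bigr| \;\le\; 4\pi \sum_{i=1}^{T-1} i\,|c_i| \;\le\; 2\pi\,T^2\,\|\textbf{f}\|_2^{2}.
\]
Combining with the previous display yields
\[
\|A v_p'\|_2^2 \;\le\; \epsilon^2 \;+\; 2\pi\,\|\textbf{f}\|_2^{2}\,T^2\,\frac{|j_p-j_1|}{n}.
\]

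To close the gap to the claimed bound I have to translate the rank gap $p$ into a frequency gap $|j_p - j_1| \le p/2$ (in circular distance on $\mathbb{Z}/n$). My plan is to couple the duplicate pairing of Remark~\ref{remark-duplicate}, which forces each distinct $M_2$ singular value to contribute a conjugate pair $(j, n-j)$, with the same Lipschitz estimate applied to $\mathcal{S}_j^2(M_2)$ around its maximum at $j_1$: together these should force the top $p$ entries of the magnitude-sorted list of $M_2$-singular-values to occupy an arc of circular radius at most $p/2$ around $j_1$. Substituting $|j_p - j_1| \le p/2$ into the preceding display and taking square roots produces the stated inequality.

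The hardest step is precisely this last combinatorial/geometric one: a pointwise Lipschitz bound on $\mathcal{S}_j^2(M_2)$ does not automatically rule out multi-peaked singular-value profiles in which $v_p'$ sits at a secondary peak far from $j_1$. I would attack it by contradiction, arguing that if $|j_p - j_1| > p/2$ then the Lipschitz control on $\mathcal{S}_j^2(M_2)$ near its maximum, together with the conjugate pairing, would produce strictly more than $p-1$ frequencies whose $M_2$-singular values exceed $\mathcal{S}_{j_p}(M_2)$, contradicting the rank-$p$ identification of $j_p$. Making this argument quantitative, and uniform in $n$, $T$, and $\|\textbf{f}\|_2$, is where the real work of the proof lies; everything upstream is a routine Fourier-diagonalization and calculus computation.
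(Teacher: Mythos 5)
You have correctly identified the two routine ingredients: since $M_1$ and $M_2$ are both circulant, they are simultaneously diagonalized by the discrete Fourier basis, so $\|Av_p'\|_2 = \mathcal S_{j_p}(M_1)$ for the Fourier index $j_p$ of $v_p'$; and the closed form in Theorem~\ref{theorem-limitations} makes $\mathcal S_j^2(M_1)$, viewed as a function of $t=j/n$, a trigonometric polynomial whose derivative is bounded by $O(T^2\|\textbf{f}\|_2^2)$. You have also correctly flagged where the difficulty lies. But the step you flag is not merely ``the hardest step'' — as you have set it up, it is false, and the contradiction argument you sketch does not repair it.

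The claim you need is that the Fourier index of the $p$-th singular vector of $M_2$ satisfies $|j_p - j_1| \le p/2$ in circular distance. Nothing in the hypotheses forces the singular-value profile $\mathcal S_j^2(M_2)$ to be unimodal in $|j|$, and for multi-peaked profiles the claim fails badly. Concretely, if $M_2$'s autocorrelation has $c_4>0$ and all other off-diagonal $c_i=0$, then $\mathcal S^2_j(M_2)=c_0+2c_4\cos(8\pi j/n)$ attains its maximum simultaneously at $j\in\{0,\,n/4,\,n/2,\,3n/4\}$, so the top four singular vectors of $M_2$ may be chosen with $v_1'$ at frequency $0$ and $v_4'$ at frequency $n/4$; the circular gap is $n/4$, not $\le 2$. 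Meanwhile one can pick $M_1$ with filter $\textbf{f}=(1,-2,1)$: its spectral response vanishes at $j=0$ (so $\|Av_1'\|_2=0\le\epsilon$) yet equals $4$ at $j=n/4$, so $\|Av_4'\|_2^2=4$ while your bound $\epsilon^2+\pi\|\textbf{f}\|_2^2T^2\cdot 4/n$ tends to $0$ as $n\to\infty$.

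Your proposed fix — argue by contradiction using a Lipschitz lower bound on $\mathcal S^2_j(M_2)$ near its maximum — targets the wrong quantity. A one-sided Lipschitz control near $j_1$ shows that \emph{some} frequencies near $j_1$ have $M_2$-singular value above a threshold; it does not show that \emph{all} frequencies with $M_2$-singular value above $\mathcal S_{j_p}(M_2)$ lie near $j_1$, which is what you need to cap the circular distance. In the two-peak example above, the counting you describe is consistent with $j_p$ sitting at the far peak, so no contradiction arises. To make the theorem go through along your lines, you would need an explicit unimodality or low-pass assumption on $M_2$'s spectrum (e.g., nonnegative filter as in Corollary~\ref{cor_eig}), or a different argument that does not pass through a bound on $|j_p-j_1|$. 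As written, the proof has a gap that cannot be closed by the combinatorial argument you propose.
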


The proof can be found in our paper. As~\Cref{theorem-ortho} shows, by orthogonalization of the linear transformation of the convolutional layer $M_1$ (i.e., $A$) and only the first singular vector of $M_2$ (i.e., $v_1^\prime$), so that $\|A v_1^\prime\|_2 \leq \epsilon$, the size of the output of $M_1$ when applied to the remaining singular vectors of $M_2$ (i.e., $\|A v_p^\prime\|_2$) will be upper bounded. This upper-bound depends on the ratio of the ranking of the corresponding singular value to the input size (i.e., $\frac{p}{n}$), which gets smaller for the top singular vectors that have a higher contribution to the transformations. It also depends on the size of the kernel ($T$) which is usually small in models used in practice (e.g., $3^2$ in 2D convolutional layers). Finally, it also depends on the $\ell_2$ norm of the filter values, which can be controlled simply by using weight decay when optimizing the parameters during training. We verify this efficiency of \texttt{LOTOS} when applied to convolutional layers in our experiments.

%\begin{tcolorbox}
%\noindent{\bf Remark:} \texttt{LOTOS} is computationally efficient when applied to convolutional layers.
%\end{tcolorbox}
%\input{LOTOS/4_setup}
\section{Empirical Results}
\label{sec:lotos_results}

%\varun{brief blurb on "key takeaways" corresponding to each subsection we will have -- if we have 3 subsections, we should have 3 takeaways with forward pointers. look at the opening blurbs of \url{https://arxiv.org/pdf/1912.03817} section 7 for inspiration?}

After showcasing the effectiveness of our clipping method, we first report the results on comparing it to existing methods. Then, we wish to answer the following questions: (1) Does decreasing the Lipschitz constant of the models of an ensemble increase the $T_{rate}$ between them?; (2) does \texttt{LOTOS} decrease the $T_{rate}$ among the models of an ensemble, and does this decrease in the $T_{rate}$ among the models of the ensemble lead to a lower success rate in black-box attacks from other source models?; (3) what are the effects of varying the ensemble size and the number of orthogonalized singular vectors ($k$) on the performance of \texttt{LOTOS}?; (4) is \texttt{LOTOS} still effective when the models of the ensemble are different?; (5) can we combine \texttt{LOTOS} with the prior work on training robust ensembles to provide additional enhancements to robustness?; (6) can \texttt{LOTOS} be combined with common methods used for increasing the robustness of the models, such as adversarial training?; and (7) is \texttt{LOTOS} effective for non-adversarial noise?
%effectiveness of \texttt{LOTOS} in decreasing the TR among the models of an ensemble (\Cref{fig:ortho_basic_compare}) and, therefore, decreasing the success rate of black-box attacks (\Cref{tab:bn-cifar}). 

%\ali{I think the previous paragraph and this one can be combined to make it shorter} 
As a quick summary, our results show that: (1) decreasing the Lipschitz constant of the models of an ensemble, although make them \textit{individually} more robust, increases the $T_{rate}$ among them (\S~\ref{sec:tradeoff}); (2) \texttt{LOTOS} is indeed effective at reducing the $T_{rate}$ between the models of an ensemble which leads to more robust accuracy against black-box attacks (\S~\ref{exp:layer-wise_ortho}); (3) when using \texttt{LOTOS}, increasing the ensemble size leads to much higher improvement in the robust accuracy, and changing the number singular values has negligible impact on the transferability; (4) \texttt{LOTOS} is effective even when the ensemble is heterogeneous; (5) \texttt{LOTOS} in conjunction with \texttt{TRS}~\cite{yang2021trs} or \texttt{DVERGE}~\cite{yang2020dverge}, two of the SOTA methods in training robust ensembles, yields better performance than either in isolation; (6) \texttt{LOTOS} can be used together with adversarial training to boost the robustness of the ensemble; and (7) by effective diversification of the clipped models, \texttt{LOTOS} enhances robustness against non-adversarial noise. The details on the experiments for questions 3 to 7 can be found in our paper.

% \begin{enumerate}
% \item \varun{one line on precise numbers on transferability decrease}
% \item Once the effectiveness of \texttt{LOTOS} is established, we elaborate on our experiments for ablation study in~\Cref{exp:ablation}; we investigate the effect of ensemble size (\Cref{sec:size}), to verify that increasing the size of the ensemble is more effective when \texttt{LOTOS} is used, and explore the effect of increasing $k$ (\Cref{sec:var-k}) to verify our theoretical results on the sufficiency of using small $k$ for convolutional layers in~\Cref{sec:lotos}. 

% We also show the effectiveness of \texttt{LOTOS} when applied to heterogenous ensembles (\Cref{subsec:hetero}) which makes this approach applicable to general ensembles.
% \end{enumerate}

\noindent{\bf Attacks:} We use both black-box attacks and white-box attacks in our experiments. The \textbf{white-box} attack is used to evaluate the $T_{rate}$ of adversarial examples between the models in the ensemble; for each ordered pair of the models in the ensemble, the former is used as the source model to generate the adversarial examples and then the $T_{rate}$ of the generated adversarial examples is evaluated on the latter (target model) using Definition~\ref{def:trans}. The average of this value for all the ordered pairs of the models is considered the $T_{rate}$ of the ensemble. A low $T_{rate}$ between the models of the ensemble does not necessarily imply a more robust ensemble. So to evaluate the robustness of ensembles against adversarial attacks, we also use black-box attacks. In the \textbf{black-box} attacks, an independently trained source (surrogate) model (of the same type as the models in the ensemble) is used to generate the adversarial examples; we then measure the robust accuracy of the ensembles against these adversarial examples. For further details on the setup of experiments, please refer to our paper. %Appendix~\ref{sec:setup}. We also evaluate the robustness against white-box attacks on the whole ensemble (see~\Cref{apx:other-whitebox}).

\subsection{Effective Clipping of Spectral Norm}
\label{sec-experiment-clipping}

As pointed out previously, the methods introduced by~\cite{senderovich2022towards} and~\cite{delattre2023efficient}, compute the exact spectral norm only when circular padding is used for the convolutional layers. For the other types of common paddings for these layers, these methods can result in large errors. To show this, we computed the average absolute value of the difference in their computed values and the correct value for a $100$ randomly generated convolutional filters for different types of padding (see~\Cref{apx-exp-powerqr}). As the results show, the error dramatically increases as the number of channels increases. Also, the method introduced by~\cite{delattre2023efficient} assumes a stride of $1$ and therefore leads to even larger errors when a stride of $2$ is used. 
%We also compared the run-time of the PowerQR to $k$ successive runs of the power method (as suggested in~\cite{virmaux2018lipschitz}) for extracting the top-$k$ singular values and presented the results in~\Cref{apx-exp-powerqr}. The results show our implicit implementation of subspace iteration is much more efficient. We also have utilized the capability of our method to extract the spectral norm of the concatenation of multiple implicitly linear layers for analyzing the spectral norm of the concatenation of convolutional and batch norm layers in~\Cref{fig:resnet18-spectral-norm} (see~\Cref{apx-exp-bn} for more use-cases).
We also show that PowerQR is much more efficient than $k$ successive runs of the power method followed by deflation (as suggested in~\cite{virmaux2018lipschitz}) for extracting the top-$k$ singular values in~\Cref{apx-multi}. We also have utilized the capability of our method to extract the spectral norm of the concatenation of multiple implicitly linear layers for analyzing the spectral norm of the concatenation of convolutional and batch norm layers in~\Cref{fig:resnet18-spectral-norm} (see~\Cref{fig:resnet18-catclip-study}a for more examples).

% \begin{figure*}[t]
% \centering
% \begin{subfigure}%{.7\textwidth}
%   \centering
%   \includegraphics[width=.65\linewidth]{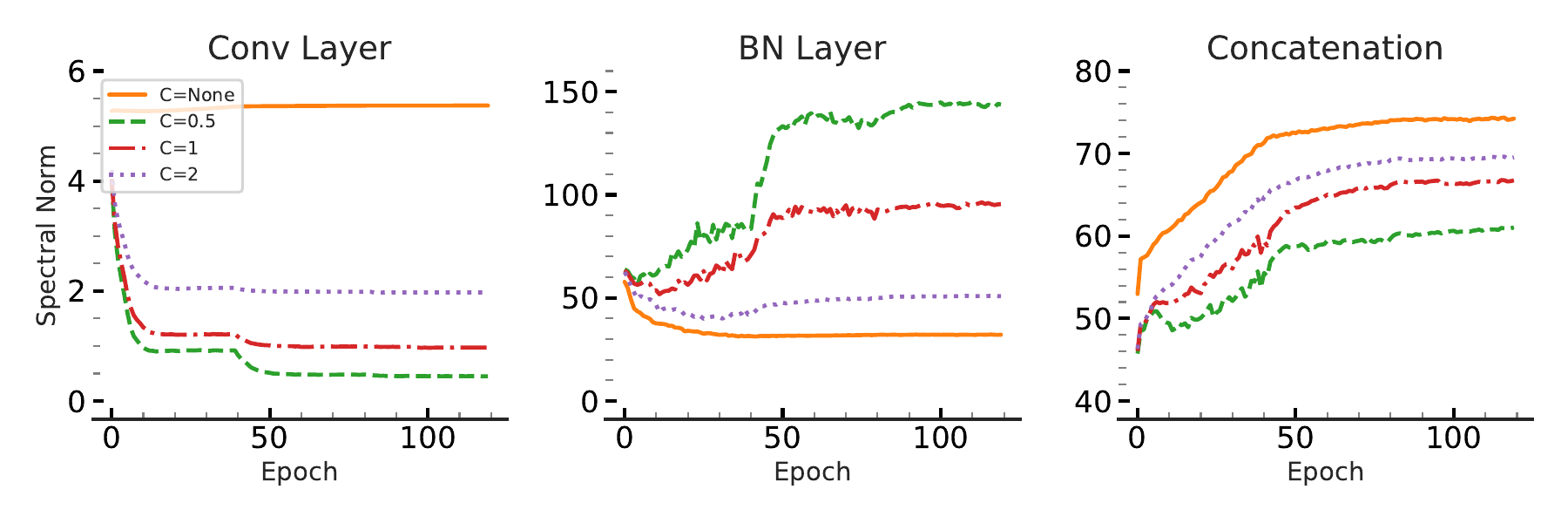}
%   \caption{}
%   \label{fig-compensation}
% \end{subfigure}%
% \hfill
% \begin{subfigure}%{.29\textwidth}
%   \centering
%   \includegraphics[width=.28\linewidth]{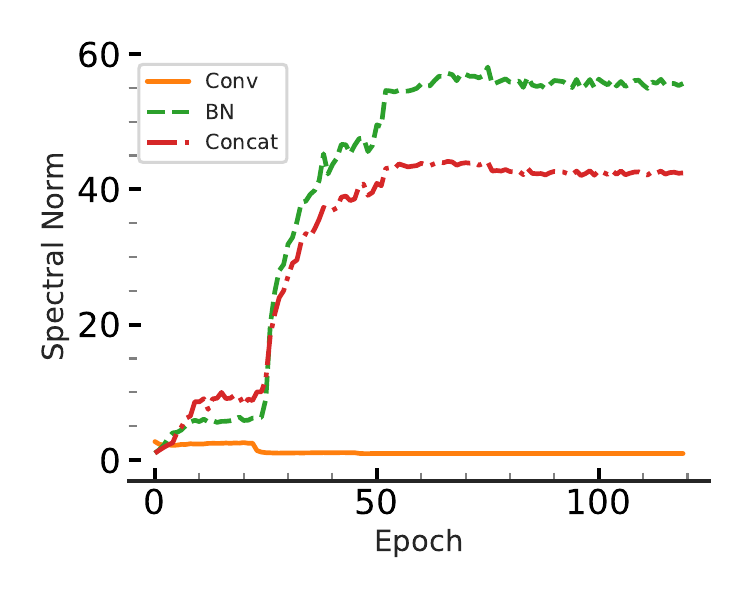}
%   \caption{}
%   \label{fig-resnet-concat}
% \end{subfigure}
\begin{figure*}[t]


\centering
  \includegraphics[width=.65\linewidth]{figures/compensation.pdf}
  \hfill
  \includegraphics[width=.28\linewidth]{figures/resnet_concat.pdf}
  %\label{fig-compensation}
  %\label{fig-resnet-concat}
\caption{\textbf{(a)} The first three plots show the clipping of the convolutional layer in a simple two-layer network to various values on MNIST. As the clipping target gets smaller, the spectral norm of the batch norm layer compensates and becomes larger. Meanwhile, the spectral norm of their concatenation slightly decreases. \textbf{(b)} The right-most plot shows the spectral norm of a convolutional layer, its succeeding batch norm layer, and their concatenation from the clipped ResNet-18 model trained on CIFAR-10. Although the convolutional layer is clipped to $1$, the spectral norm of the concatenation is much larger due to the presence of the batch norm layer.}
\label{fig:resnet18-spectral-norm}
\end{figure*}

% \subsection{Clipping Method}
% \label{sec-experiment-clipping}

For comparing the correctness of the clipping models for different types of convolutional layers. Then, we show the effectiveness of each of these methods on the generalization and robustness of ResNet-18~\cite{he2016deep}, which is the same model used in the experiments of prior work~\cite{gouk2021regularisation,senderovich2022towards,delattre2023efficient} and Deep Layer Aggregation (DLA)~\cite{yu2018deep} model, which is more complex with more layers and parameters. We train the models on the same datasets used in prior works, MNIST~\cite{lecun1998mnist} and CIFAR-10~\cite{krizhevsky2009learning}.

%As a result of this verification, we will be able to verify the correctness of the other proposed algorithms for modifying and clipping the spectrum by examining their singular values derived from the PowerQR method. 

% We provide an easy-to-use wrapper module that can encapsulate a linear or convolutional layer to track its top singular values during the training. This module can enforce the clipping of the corresponding layer by using~\cref{alg:fastclip}. We use this module in all the following experiments.

% \subsubsection{Timing}

\begin{figure}[t]
\centering
\includegraphics[width=1.\linewidth]{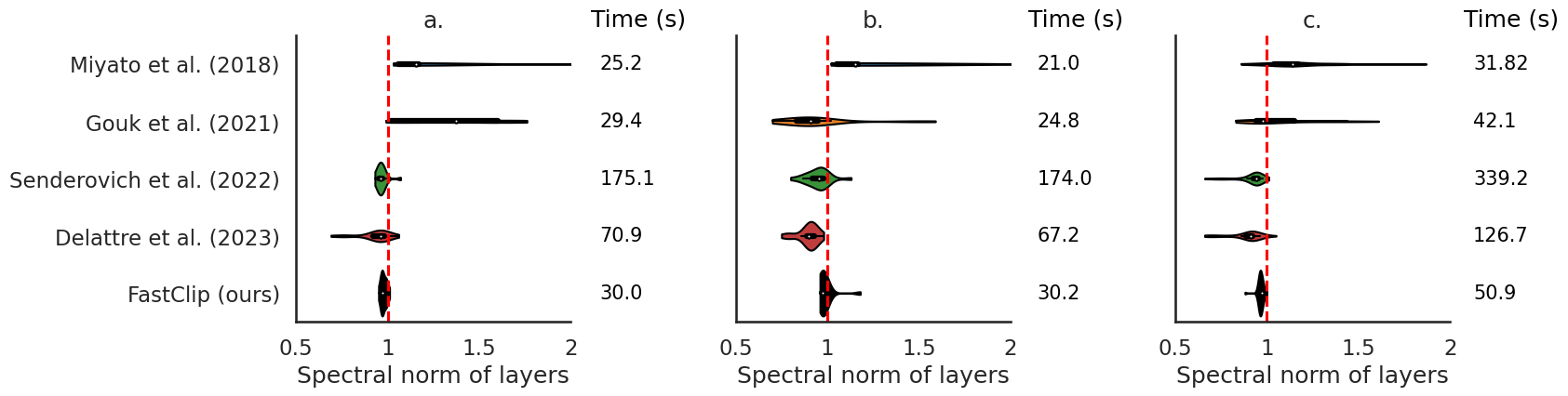}
\caption{The layer-wise spectral norm of a ResNet-18 model trained on CIFAR-10 (\textbf{a}) and MNIST (\textbf{b}) using each of the clipping methods. The time columns shows the training time per epoch for these methods. \textbf{c.} The layer-wise spectral norm of a DLA model trained on CIFAR-10 using each of the clipping methods. The time column shows the training time per epoch for these methods. As all of the plots show, by using our method, all the layers have a spectral norm very close to the \textbf{target value $\pmb{1}$}. Our method is also much faster than the relatively accurate alternatives and shows a slower increase in running time as the model gets larger.} %From left to right the convolutional layers use 1. kernel of size $3$ with zero padding, 2. kernel of size $3$ with reflection padding, 3. kernel of size $3$ and dilation $2$, and 4. kernel of size $5$ with replicate padding and stride of $2$, respectively.}
\label{fig:resnet-clip-compare}
%\vspace{-10pt}
\end{figure}
%\vspace{-10pt}

\begin{table*}[t!]
\caption{The accuracy on the test set and adversarial examples generated with PGD-$(50)$ and CW for ResNet-18 trained on MNIST and CIFAR-10, for the models with their convolutional and dense layers clipped to $1$ (\texttt{With BN}) and clipped models with their batch norm layers removed (\texttt{No BN}). The accuracy of the original model on the test set, PGD-generated examples, and CW-generated examples for MNIST are $\pmb{99.37 \pm 0.02}$, $\pmb{15.30 \pm 9.11}$, and $\pmb{77.46 \pm 5.89}$, respectively. For CIFAR-10, these values are $\pmb{94.77 \pm 0.19}$, $\pmb{22.15 \pm 0.54}$, and $\pmb{13.85 \pm 0.74}$.}
\label{tab:bn-cifar}
\vskip 0.15in
\begin{center}
\begin{small}
\begin{sc}
\begin{tabular}{@{} l  c c  c  c  c c  c @{}}
 \toprule
%\cmidrule(r){1-4}
 
 %\midrule   
 & \multicolumn{2}{@{}c}{\textbf{MNIST}} & \multicolumn{2}{@{}c}{\textbf{CIFAR-10}} \\\addlinespace[0.3em]

  \textbf{Method} &  With BN & No BN
 & With BN & No BN  \\\addlinespace[0.3em]

    \cmidrule(r){2-3}
    \cmidrule(r){4-5}

 & \multicolumn{4}{@{}c}{Accuracy on the test set}   \\ \addlinespace[0.4em]%[0.5ex] 
 
 \cite{miyato2018spectral} &  $99.40 \pm 0.06$ & $98.00 \pm 0.22$ 
 & $94.82 \pm 0.11$ & $88.83 \pm 1.41$  \\\addlinespace[0.3em]

 \cite{gouk2021regularisation} & $99.25 \pm 0.04$ &  $21.94 \pm 6.01$ 
 & $89.98 \pm 0.38$ & $19.80 \pm 5.55$ \\\addlinespace[0.3em]

 \cite{senderovich2022towards} &  $99.40 \pm 0.03$ & $62.63 \pm 24.01$ 
 & $94.19 \pm 0.13$  &  $68.29 \pm 10.63$ \\\addlinespace[0.3em]

\cite{delattre2023efficient} &  $99.29 \pm 0.05$ & $97.27 \pm 0.03$ 
& $93.17 \pm 0.13$ &  $39.35 \pm 9.84$   \\\addlinespace[0.3em]

%PQR+Scale &  $91.29 \pm 0.21$ & $31.03 \pm 1.45$ & $59.28 \pm 3.58$ & $88.91 \pm 0.29$    \\\addlinespace[0.3em]

FastClip~(\Cref{alg:fastclip}) & $\pmb{99.41 \pm 0.04}$ & $\pmb{99.31 \pm 0.02}$      
& $\pmb{95.28 \pm 0.07}$ & $\pmb{92.08 \pm 0.28}$ \\\addlinespace[0.4em]

  %\midrule

    & \multicolumn{4}{@{}c}{Accuracy on samples from PGD attack} 
        & \\\addlinespace[0.4em]
    % \cmidrule(r){2-3} 
    %\cmidrule(r){4-5}
 
  \cite{miyato2018spectral} &  $21.77 \pm 12.98$ &  $32.67 \pm 14.08$  
  & $23.48 \pm 0.11$  & $35.18 \pm 7.72$  \\\addlinespace[0.3em]

  \cite{gouk2021regularisation} & $2.40 \pm 2.94$  & $8.41 \pm 3.03$  
  &  $16.13 \pm 1.28$  & $14.66 \pm 3.99$ \\\addlinespace[0.3em]

   \cite{senderovich2022towards} & $30.99 \pm 9.28$ & $15.97 \pm 4.84$ 
   &  $21.74 \pm 0.72$ & $39.84 \pm 7.87$  \\\addlinespace[0.3em]

\cite{delattre2023efficient} &  $30.87 \pm 4.77$  &  $71.75 \pm 1.49$ 
&  $21.08 \pm 0.84$ & $16.22 \pm 3.17$  \\\addlinespace[0.3em]

%PQR+Scale &   &  &  &  \\\addlinespace[0.3em]

   FastClip~(\Cref{alg:fastclip}) &  $\pmb{47.90 \pm 5.49}$ & $\pmb{78.50 \pm 2.85}$  
    & $\pmb{24.48 \pm 0.32}$ & $\pmb{41.37 \pm 0.95}$ \\\addlinespace[0.4em]

     & \multicolumn{4}{@{}c}{Accuracy on samples from CW attack} 
         & \\\addlinespace[0.4em]
     % \cmidrule(r){2-3} 
     %\cmidrule(r){4-5}
  
   \cite{miyato2018spectral} &  $86.25 \pm 2.18$ &  $73.56 \pm 10.38$  
   & $16.68 \pm 0.95$  & $48.48 \pm 6.40$  \\\addlinespace[0.3em]
 
   \cite{gouk2021regularisation} & $66.59 \pm 21.91$  & $21.94 \pm 6.01$  
   &  $18.79 \pm 2.99$  & $12.63 \pm 4.33$ \\\addlinespace[0.3em]
 
    \cite{senderovich2022towards} & $87.72 \pm 2.75$ & $58.71 \pm 20.67$ 
    &  $20.53 \pm 0.77$ & $43.82 \pm 9.57$  \\\addlinespace[0.3em]
 
 \cite{delattre2023efficient} &  $83.97 \pm 1.79$  &  $\pmb{96.93 \pm 0.06}$ 
 &  $24.05 \pm 1.72$ & $11.92 \pm 5.34$  \\\addlinespace[0.3em]
 
 %PQR+Scale &   &  &  &  \\\addlinespace[0.3em]
 
    FastClip~(\Cref{alg:fastclip}) &  $\pmb{90.21 \pm 1.80}$ & $95.35 \pm 1.06$  
     & $\pmb{24.31 \pm 0.96}$ & $\pmb{56.28 \pm 0.96}$ \\\addlinespace[0.4em]

%\cmidrule(r){1-4}                        
\bottomrule
\end{tabular}
\end{sc}
\end{small}
\end{center}
\vskip -0.1in
\end{table*}

\subsection{Precision and Efficiency}
\label{subsec-comp-others}

We use a simple model with one convolutional layer and one dense layer and use each of the clipping methods on the convolutional layer while the model is being trained on MNIST and the target clipping value is $1$. We compute the true spectral norm after each epoch.~\Cref{fig:simple-clip-compare} shows the results of this experiment for $4$ convolutional layers with different settings (e.g., kernel size and padding type). This figure shows our method is the only one that correctly clips various convolutional layers.

%We also compared the spectral norm of all the layers of the ResNet-18 model trained using each of the clipping methods on CIFAR-10 and MNIST and presented them in~\Cref{fig:resnet-clip-compare}. This figure also shows the average wall-clock time per epoch for each of the methods. We performed the same experiment for DLA, which is a larger model with more convolutional layers and presented the results in~\Cref{fig:resnet-clip-compare}c As the figure shows, our method is the most precise one while being among the fastest ones. 

\begin{figure*}[t]
\centering
\includegraphics[width=1.\linewidth]{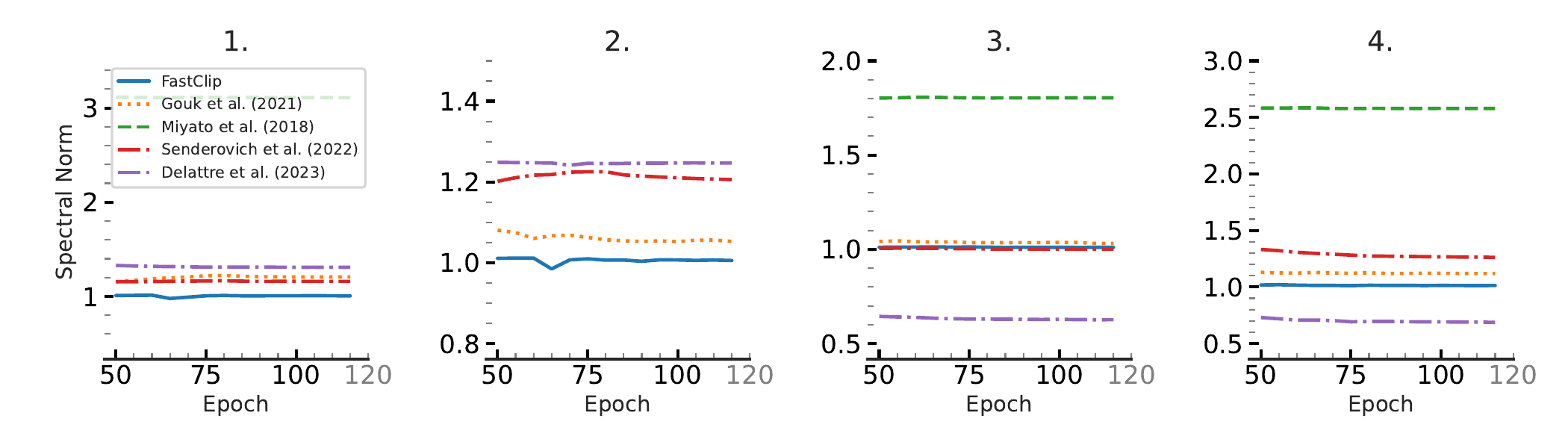}
\caption{Comparison of the clipping methods in a simple network with only one convolutional layer and one dense layer, where \textbf{the target value is $\pmb{1}$}. Our method is the only one that clips this layer correctly for all different settings: 1. Kernel of size $3$ with reflect padding, 2. Kernel of size $3$ with same padding, 3. Kernel of size $3$ and zeros padding with stride of $2$, and 4. Kernel of size $5$ with same replicate padding and stride of $2$.}
\label{fig:simple-clip-compare}
\end{figure*}

%In~\Cref{subsec-comp-others} we presented the results of an experiment that showed our method is the only one that performs a correct clipping for all standard convolutional layers (\Cref{fig:simple-clip-compare}). 

In~\Cref{fig:resnet-clip-compare}a, we evaluated the distribution of the layer-wise spectral norms of the ResNet-18 models trained on CIFAR-10 using each of the clipping methods (for which the performance can also be found in~\Cref{tab:bn-cifar}) and showed our method is the most precise one, while being much more efficient than the relatively accurate alternatives~\cite{senderovich2022towards,delattre2023efficient}. To further evaluate the precision of our clipping method in comparison to the other methods, we also did the same experiment with the ResNet-18 model trained on the MNIST dataset. As~\Cref{fig:resnet-clip-compare}b shows the precision results are similar to the plot we had for CIFAR-10, with the times per epoch slightly smaller for all the methods due to the smaller input size. To check if the same precision results hold for other models and how the running time changes for larger models, we performed the same experiment for the DLA models that are trained using each of the clipping methods (for which the performance on the test set and adversarial examples can be found in~\Cref{tab:dla-results}). As~\Cref{fig:resnet-clip-compare}c shows, our method is still the most precise one among the clipping methods, while still being much faster than the more accurate alternatives. Another interesting point is that the factor by which the running times have increased for the larger model is smaller for our method compared to the methods introduced by~\cite{senderovich2022towards} and~\cite{delattre2023efficient}.

\subsection{Generalization and Robustness}
\label{sec-generalization}

Since the Lipschitz constant of a network may be upper-bounded by multiplying together the spectral norms of its constituent dense and convolutional layers, it is intuitive that regularizing per-layer spectral norms improves model generalization~\cite{bartlett2017spectrally}, and also adversarial robustness~\cite{szegedy2013intriguing}.
%Similar to the setup of experiments in prior works on measuring the robustness of models to adversarial attacks~\cite{} 
%we evaluate the effect of our layer-wise clipping on the robustness of ResNet-18~\cite{he2016deep} models that are trained on MNIST~\cite{lecun1998mnist} and CIFAR-10~\cite{krizhevsky2009learning}. 
Therefore, we tested all the clipping models by using them during the training and computing the accuracy of the corresponding models on the test set and adversarial examples generated by two common adversarial attacks, Projected Gradient Descent (PGD)~\cite{madry2017towards} and Carlini \& Wanger Attack (CW)~\cite{carlini2017towards}.  

For the ResNet-18 model, we used the same variant used in prior work~\cite{gouk2021regularisation,senderovich2022towards,delattre2023efficient}. This variant divides the sum of the residual connection and convolutional output by $2$ before passing that as input to the next layer. This will make the whole layer $1$-Lipschitz if the convolutional layer is $1$-Lipschitz (the residual connection is $1$-Lipschitz). As~\Cref{tab:bn-cifar} shows, our clipping method leads to the best improvement in test accuracy while making the models more robust to adversarial attacks. The reason for the lack of the expected boost in the robustness of the models when clipping their spectral norms is shown in Figure~\ref{fig:resnet18-spectral-norm}b. This figure shows that although the models are clipped, the concatenation of some convolutional layers with batch normalization layers forms linear operators with large spectral norms. As~\Cref{fig:resnet18-spectral-norm}a suggests, clipping the convolutional layer to smaller values will further increase the spectral norm of the batch norm layer. Still, as this figure suggests, there might be an overall decrease in the spectral norm of their concatenation which causes the slight improvement in the robustness of the clipped models. Therefore, we also trained a version of the model with all the batch norm layers removed. As~\Cref{tab:bn-cifar} shows, this leads to a huge improvement in the robustness of the clipped models; however, the clipped models achieve worse accuracy on the test set.

% \begin{figure*}[t]
% \centering
% \begin{subfigure}{.7\textwidth}
%   \centering
%   \includegraphics[width=.99\linewidth]{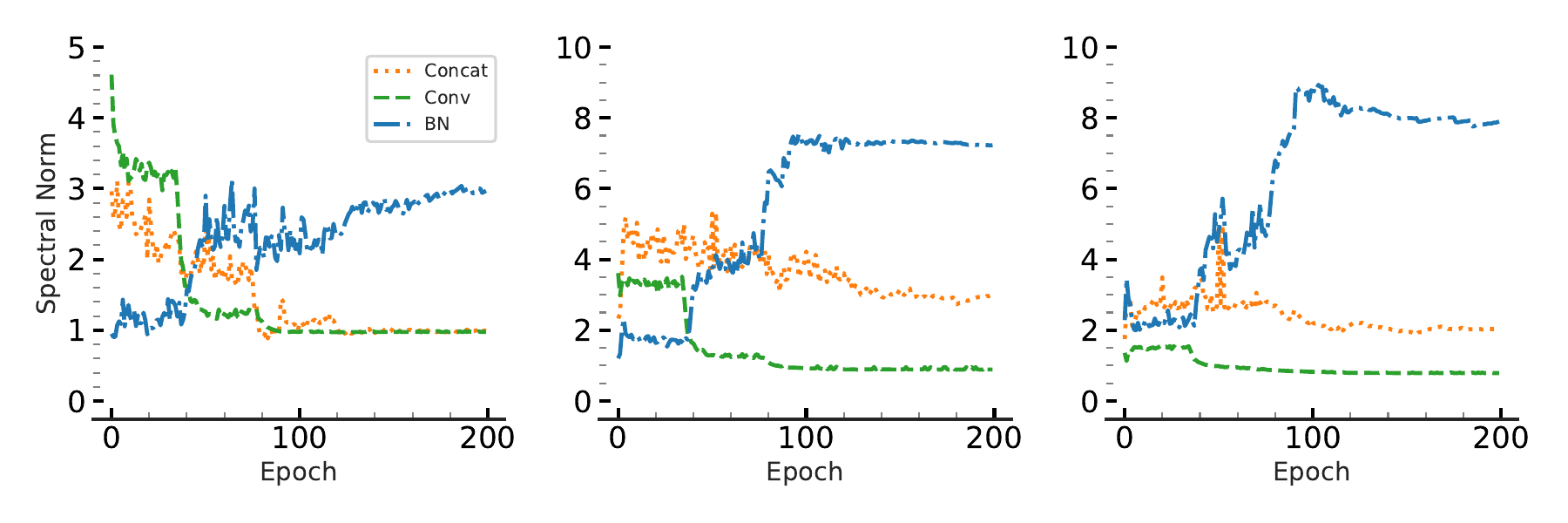}
%   \caption{}
%   \label{fig-resnet-catclip}
% \end{subfigure}%
% \hfill
% \begin{subfigure}{.29\textwidth}
%   \centering
%   \includegraphics[width=.99\linewidth]{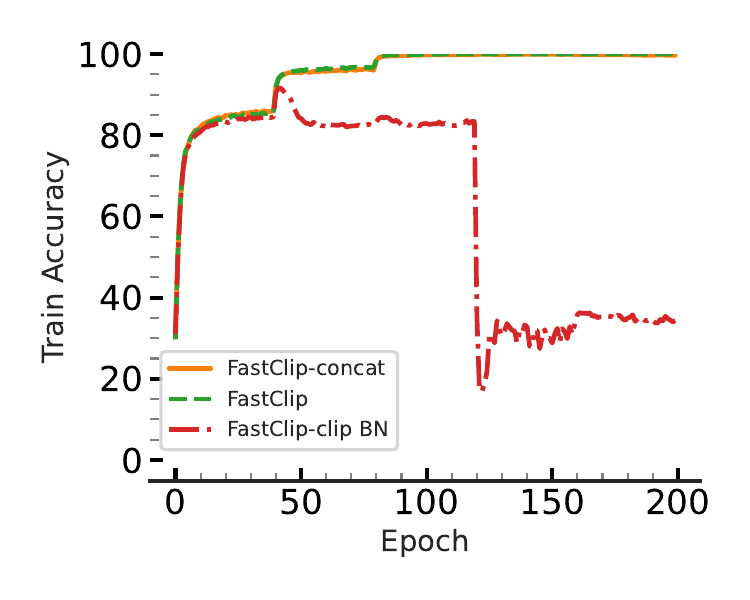}
%   \caption{}
%   \label{fig-resnet-bnclip-train-acc}
% \end{subfigure}
\begin{figure*}[t]


\centering
  \includegraphics[width=.67\linewidth]{figures/output_resnet_catclip.pdf}
\hfill
  \includegraphics[width=.28\linewidth]{figures/train_acc_clipbn.pdf}
  %\label{fig-resnet-catclip}
  %\label{fig-resnet-bnclip-train-acc}
\caption{\textbf{(a)} Each of these three subplots shows the spectral norms of a convolutional layer, its succeeding batch norm layer, and their concatenation in a ResNet-18 model trained on CIFAR-10. The convolutional layers in this model are clipped to $1$. Instead of clipping the batch normalization layer, our method has been applied to the concatenation to control its spectral norm. \textbf{(b)} The rightmost subplot shows the training accuracy for the ResNet-18 model that is trained on CIFAR-10. One curve belongs to the model with the convolutional layers clipped to $1$ using FastClip and the batch norm layers clipped using the direct method used by prior works (FastClip-clip BN). The other two belong to FastClip and FastClip-concat.}
\label{fig:resnet18-catclip-study}
\end{figure*}

% In~\Cref{sec-generalization} we showed that our clipping method helps to improve the generalization and robustness of the ResNet-18 model for both CIFAR-10 and MNIST datasets. As~\Cref{tab:bn-cifar} shows, our clipping method works much better than the other clipping methods in this regard. 

We also performed the same experiment with the DLA model on both MNIST and CIFAR-10. These models are more complex than the ResNet-18 models and have more convolutional layers and parameters. An interesting observation with the DLA model was that they could not be trained at all without batch normalization layers with the default settings, even if the spectral norm is clipped using any of the clipping methods. Therefore, unlike the results for ResNet-18, we do not show the numbers for the \texttt{No BN} case (the test accuracies were almost the same as the random classifier in all cases). As~\Cref{tab:dla-results} shows, our clipping method makes the most improvement in generalization on both datasets and in terms of adversarial robustness, it is either better than the other methods or achieves competitive results. Note that, unlike ResNet-18 models in which according to the experiments by~\cite{gouk2021regularisation} we divided the sum of the two layers in the residual modules to make the whole output $1$-Lipschits at each layer, we used DLA models as they are designed without further modification. Since the DLA model contains modules in which the output of two layers are summed up to generate the inputs of the succeeding layer, making each one of the layers $1$-Lipshitz does not make each module $1$-Lipschitz. This might affect the robustness of models; however, we wanted to use this model as it is to show the benefits of our method applied to an original model without any modification.

\begin{table}[t]
\caption{The accuracy on the test set and adversarial examples generated with PGD-$(50)$ and CW for DLA trained on MNIST and CIFAR-10, for the models with their convolutional and dense layers clipped to $1$. The accuracy of the original model on the test set, PGD-generated examples, and CW-generated examples for MNIST are $\pmb{99.39 \pm 0.08}$, $\pmb{0.35 \pm 0.48}$, and $\pmb{50.78 \pm 6.58}$, respectively. For CIFAR-10, these values are $\pmb{94.82 \pm 0.12}$, $\pmb{21.67 \pm 1.42}$, and $\pmb{11.19 \pm 1.35}$.}
\label{tab:dla-results}
\vskip 0.15in
\begin{center}
\begin{small}
\begin{sc}
\begin{tabular}{@{} l  c c  c  c  c c  @{}}
 \toprule
%\cmidrule(r){1-4}
 
 %\midrule   
% \textbf{Method}  & \multicolumn{2}{@{}c}{\textbf{MNIST}} & \multicolumn{2}{@{}c}{\textbf{CIFAR-10}} \\\addlinespace[0.3em]

\textbf{Method} &  \textbf{Test Set} & \textbf{PGD-(50)} & \textbf{CW}  \\\addlinespace[0.4em]

    %\cmidrule(r){2-3}
    %\cmidrule(r){4-5}

 & \multicolumn{3}{@{}c}{CIFAR-10}   \\ \addlinespace[0.2em]%[0.5ex] 
\cmidrule(r){2-4}                        
 
 \cite{miyato2018spectral} &  $95.06 \pm 0.09$ & $\pmb{23.62 \pm 1.27}$ 
 & $\pmb{17.75 \pm 1.34}$  \\\addlinespace[0.3em]

 \cite{gouk2021regularisation} & $92.44 \pm 0.14$ &  $19.39 \pm 1.34$ 
 & $13.41 \pm 1.08$  \\\addlinespace[0.3em]

 \cite{senderovich2022towards} &  $93.39 \pm 2.30$ & $20.37 \pm 2.67$ 
 & $15.39 \pm 0.87$   \\\addlinespace[0.3em]

\cite{delattre2023efficient} &  $93.66 \pm 0.46$ & $19.90 \pm 2.47$ 
& $14.92 \pm 2.77$    \\\addlinespace[0.3em]

FastClip & $\pmb{95.53 \pm 0.10 }$ & $22.54 \pm 1.02$      
& $17.14 \pm 0.76$  \\\addlinespace[0.5em]

  %\midrule

 & \multicolumn{3}{@{}c}{MNIST}   \\ \addlinespace[0.2em]%[0.5ex] 

\cmidrule(r){2-4}                        

 \cite{miyato2018spectral} &  $99.40 \pm 0.05$ & $3.26 \pm 2.95$ 
 & $78.59 \pm 5.34$  \\\addlinespace[0.3em]

 \cite{gouk2021regularisation} & $99.26 \pm 0.06$ &  $3.13 \pm 2.49$ 
 & $79.16 \pm 7.07$  \\\addlinespace[0.3em]

 \cite{senderovich2022towards} &  $99.43 \pm 0.03$ & $14.03 \pm 5.27$ 
 & $83.92 \pm 2.33$   \\\addlinespace[0.3em]

\cite{delattre2023efficient} &  $99.34 \pm 0.04$ & $4.28 \pm 2.11$ 
& $68.02 \pm 5.37$    \\\addlinespace[0.3em]

FastClip & $\pmb{99.44 \pm 0.04}$ & $\pmb{16.74 \pm 7.09}$      
& $\pmb{84.90 \pm 1.59}$  \\\addlinespace[0.4em]

%\cmidrule(r){1-3}                        
\bottomrule
\end{tabular}
\end{sc}
\end{small}
\end{center}
\vskip -0.1in
\end{table}

\subsection{Clipping Batch Norm}
\label{exp-bn-clip}

 As~\Cref{tab:bn-cifar} shows, the presence of batch normalization can be essential for achieving high test accuracy. In fact, as explained in~\Cref{sec-generalization}, DLA models cannot be trained without batch norm layers. So, instead of removing them, we are interested in a method that allows us to control their adverse effect on the robustness of the model. First, we explored the results for the models with their batch norm layers clipped to strictly less than $1$ by utilizing the method that was used by prior work~\cite{gouk2021regularisation,senderovich2022towards}. As the results show, our method still achieves the best results with this technique; however, this method for clipping the batch norm, although leads to bounded Lipschitz constant for the model, does not lead to a significant improvement in the robustness of the models and leads to low test accuracy, which is due to over-constraining the models and hindering their optimization as discussed in Section~\ref{sec-bn}. In fact, as Figure~\ref{fig:resnet18-catclip-study}b shows, using this method for controlling batch norm layers even hinders the training process (we show this version of our method by \texttt{FastClip-clip BN} in our experiments). 

\begin{table}[t]
\caption{The accuracy on the test set and adversarial examples generated with PGD $(50), \epsilon=0.02$ of ResNet-18 trained on CIFAR-10, for the models with their convolutional and dense layers clipped to $1$ (\texttt{With BN}). For all the models, except \texttt{FastClip-concat}, the batch norm layers are clipped to strictly $1$ using the method by~\cite{gouk2021regularisation}. \texttt{FastClip-concat} uses FastClip method for controlling the batch norm of the concatenation of convolutional and batch norm layers, as described in~\Cref{exp-bn-clip}. As the results show, this method does not impede the optimization of the model and leads to a much better test accuracy while making the models more robust compared to when batch norm layers are not taken into account during the clipping process (see~\Cref{tab:bn-cifar} and \Cref{tab:dla-results}).}
\label{tab:bn-clip}
\vskip 0.15in
\begin{center}
\begin{small}
\begin{sc}
\begin{tabular}{@{} l  c c  c  c  c c  @{}}
 \toprule
%\cmidrule(r){1-4}
 
 %\midrule   
% \textbf{Method}  & \multicolumn{2}{@{}c}{\textbf{MNIST}} & \multicolumn{2}{@{}c}{\textbf{CIFAR-10}} \\\addlinespace[0.3em]

\textbf{Method} &  \textbf{Test Set} & \textbf{PGD-50 $\epsilon=0.02$} & \textbf{CW $c=0.02$}  \\\addlinespace[0.4em]

    %\cmidrule(r){2-3}
    %\cmidrule(r){4-5}

 & \multicolumn{3}{@{}c}{ResNet-18 Model}   \\ \addlinespace[0.2em]%[0.5ex] 
\cmidrule(r){2-4}                        
 
 \cite{miyato2018spectral} &  $85.91 \pm 0.27$ & $17.63 \pm 0.44$ 
 & $\pmb{49.94 \pm 2.25}$  \\\addlinespace[0.3em]

 \cite{gouk2021regularisation} & $27.33 \pm 2.11$ &  $13.89 \pm 0.68$ 
 & $11.23 \pm 3.53$  \\\addlinespace[0.3em]

 \cite{senderovich2022towards} &  $69.27 \pm 4.35$ & $16.22 \pm 1.86$ 
 & $25.27 \pm 3.43$   \\\addlinespace[0.3em]

\cite{delattre2023efficient} &  $30.44 \pm 3.59$ & $12.99 \pm 4.16$ 
& $10.17 \pm 6.63$    \\\addlinespace[0.3em]

FastClip & $90.59 \pm 0.36 $ & $\pmb{25.97 \pm 0.88}$      
& $41.50 \pm 2.02$  \\\addlinespace[0.6em]

FastClip-concat & $\pmb{94.63 \pm 0.08 }$ & $25.02 \pm 1.56$      
& $33.77 \pm 3.59$  \\\addlinespace[0.5em]

  %\midrule

 & \multicolumn{3}{@{}c}{DLA Model}   \\ \addlinespace[0.2em]%[0.5ex] 

\cmidrule(r){2-4}                        

 \cite{miyato2018spectral} &  $80.91 \pm 0.95$ & $16.29 \pm 2.79$ 
 & $\pmb{40.64 \pm 5.34}$  \\\addlinespace[0.3em]

 \cite{gouk2021regularisation} & $29.35 \pm 1.19$ &  $13.07 \pm 3.13$ 
 & $11.95 \pm 3.97$  \\\addlinespace[0.3em]

 \cite{senderovich2022towards} &  $74.94 \pm 1.03$ & $13.92 \pm 1.25$ 
 & $34.14 \pm 3.52$   \\\addlinespace[0.3em]

\cite{delattre2023efficient} &  $32.31 \pm 4.38$ & $12.18 \pm 3.14$ 
& $12.07 \pm 5.45$    \\\addlinespace[0.3em]

FastClip & $89.27 \pm 0.25$ & $23.40 \pm 1.46$      
& $39.16 \pm 2.76$  \\\addlinespace[0.6em]

FastClip-concat & $\pmb{95.02 \pm 0.07 }$ & $\pmb{25.93 \pm 1.31}$      
& $28.16 \pm 0.81$  \\\addlinespace[0.4em]
   
%\cmidrule(r){1-3}                        
\bottomrule
\end{tabular}
\end{sc}
\end{small}
\end{center}
\vskip -0.1in
\end{table}

 The capability of our clipping method to be applied to the concatenation of implicitly linear layers provides an alternative approach to control the spectral norm of the concatenation of convolutional layers and batch norm layers. This still leads to the desired Lipschitz constants for the model, without over-constraining each individual layer. For this purpose, we clip the convolutional layers of the model to the target value, and meanwhile, we pass the modules that represent the composition of batch norm layers and their preceding convolutional layers to our clipping method while leaving the batch norm layers themselves unclipped. We will refer to this version of our clipping method as \texttt{FastClip-concat} in our experiments. In~\Cref{fig:resnet18-catclip-study}a, we show the effect of this approach on $3$ of the layers from the ResNet-18 model trained using \texttt{FastClip-concat}. As the plot shows, the convolutional layers are correctly clipped to $1$, and the spectral norm of the concatenations are approaching the target value $1$, while the spectral norm of the batch norm layers might increase up to an order of magnitude larger than the target clipping value. The convergence of the spectral norm of the concatenation to the target value is slower because we used a much smaller $\lambda$ value (see~\Cref{alg:clip}) to make the clipping method stable without changing the other hyperparameters.  
 
 ~\Cref{fig:resnet18-catclip-study}a shows the trajectory of the training accuracies for \texttt{FastClip}, \texttt{FastClip-concat}, and \texttt{FastClip-clip BN} (which uses \texttt{FastClip} together with the direct batch norm clipping method used in prior works~\cite{gouk2021regularisation}). As the figure shows, direct clipping of the batch norm layers hinders the optimization of the model and hence leads to poor results presented in~\Cref{tab:bn-clip}, while \texttt{FastClip-concat} follows the same trajectory as \texttt{FastClip} in terms of the training accuracy, which shows less interference with the optimization of the model. Next, we elaborate on the results presented in~\Cref{tab:bn-clip}.

%As explained in~\Cref{apx-alg-bn}, it is important to control the spectral norm of batch normalization layers. 
We investigated the performance of both ResNet-18 and DLA models on the test set, as well as their adversarial robustness, when the clipping method for batch norm layers introduced by~\cite{gouk2021regularisation} is used. These results are presented in~\Cref{tab:bn-clip}. Moreover, we investigate the application of our clipping method to the concatenation of convolutional layers and their succeeding batch norm layers, rather than controlling the batch norm layers in isolation. We present the latter results, which is unique to our method, as \texttt{FastClip-concat} in~\Cref{tab:bn-clip}. In this setting, we use a smaller value for $\lambda$ (see~\Cref{alg:clip}), and apply the clipping every $500$ steps. As the results show, with the regular batch norm clipping introduced by~\cite{gouk2021regularisation} our method still achieves the best test accuracy among the models and increases the robustness compared to the original model, however, neither of the clipping methods can achieve generalization or adversarial robustness better than the \texttt{FastClip} model with the batch norm layers removed (see~\Cref{tab:bn-cifar} and~\Cref{tab:dla-results}). This shows the previously suggested clipping method for batch norm layers, although theoretically bounding the Lipschitz constant of the model, does not help us in practice. On the other hand, \texttt{FastClip-concat} helps us improve the robustness compared to the original model and \texttt{FastClip}, but still achieves an accuracy on the test set which is close to \texttt{FastClip} and much better than the models with their batch norm layers removed.

 As these results show, \texttt{FastClip-concat} is more successful in achieving its goal; it bounds the spectral norm of the concatenation and leads to improved robustness without incurring as much loss to the test accuracy compared to removing the batch normalization layers. Further optimization of the hyperparameters (e.g., clipping value of the convolutional layer, clipping value of the concatenation, $\lambda$, number of steps for clipping, etc.) for the convolutional layers and the concatenations, and finding the best combination is left for future work.

\subsection{Robustness vs. Transferability}
\label{sec:tradeoff}

In this section, we evaluate our conjecture from \S~\ref{sec:conjecture} which was motivated by Proposition~\ref{prop:trans}. For this, we compare the ensemble of three ResNet-18 models trained without any modification (represented as \texttt{Orig}) to ensembles of models in which all the layers of each model is $C$-Lipschitz (by controlling the spectral norm). We vary this Lipschitz constant to see how it affects the robustness and transferability. The chosen Lipschitz constant for each ensemble is used to represent that in results: for example, $C=1.0$ shows that the affine layers of the models in the ensemble are all clipped to $1.0$. The clipping of the layers was achieved using FastClip~\cite{boroojeny2024spectrum}. %which uses auto-differentiation for computing and controlling the spectral norms and is more accurate and faster than other existing methods. 

Figure~\ref{fig:vanilla_compare} shows the results when the batch norm layers are removed. The first two subfigures show the changes in the average accuracy and robust accuracy of \textit{individual} ResNet-18 models. The rightmost plot shows the average $T_{rate}$ between any pair of the models in each ensemble as the layer-wise clipping value (spectral norm) changes. As the figure shows, although the robustness of \textit{individual} models increases with decreasing the clipping value, the $T_{rate}$ among the models increases. % Figure~\ref{fig:vanilla_compare_wBN} in~\Cref{sec:additional_tradeoff} shows the same behavior for models with batch norm layers (see~\Cref{sec:bn} for more details on evaluating these two cases separately).

%\begin{tcolorbox}
%Our results verifies the adverse effect of Lipschitz continuity on $T_{rate}$. 
%\end{tcolorbox}

\subsection{Efficacy of LOTOS}
\label{exp:layer-wise_ortho}

%In this section we investigate the effectiveness of \texttt{LOTOS} in decreasing the TR of adversarial examples among the models of ensemble. 

%\noindent{\bf Experimental Setup:} 

We first evaluate the effectiveness of \texttt{LOTOS} in decreasing the $T_{rate}$ among the clipped models using the white-box attack. For this, we first use ensembles of three ResNet-18 models. ~\Cref{fig:ortho_basic_compare} shows the results for $3$ different methods of training ensembles (\texttt{Orig}, $C=1$, and \texttt{LOTOS}). The left-most subfigure shows the average test accuracy of the \textit{individual} models in each ensemble and the middle subfigure shows the average robust accuracy of the \textit{individual} models in the ensemble. The middle plot shows, as expected, that the individual models in both $C=1$ and \texttt{LOTOS} ensembles are much more robust than the ones in the \texttt{Orig} ensembles because of their Lipschitzness property; however, the \textit{individual} models in \texttt{LOTOS} are not as robust as the ones in $C=1$ because in \texttt{LOTOS} we are enforcing orthogonalization in addition to the Lipschitzness property. Because of this trade-off, as the plot shows, by increasing the value of \texttt{mal} (from $0$ to $0.8$) the robustness of the individual models becomes more similar to the ones in $C=1$ ensembles. As the right-most subfigure shows, the $T_{rate}$ between the models in ensembles trained with \texttt{LOTOS} is much lower than $C=1$ and \texttt{Orig}, and as the \texttt{mal} value decreases (the orthogonalization becomes more strict) the $T_{rate}$ decreases. %Figures~\ref{fig:hetero} (\Cref{apx:hetero}) and~\ref{fig:hetero-wBN} show similar effectiveness in reducing the $T_{rate}$ for ensembles that consist of other models (ResNet-18, ResNet-34, and DLA). See \S~\ref{subsec:hetero} for more details on those figures. Also in~\Cref{apx:efficacy} we show how \texttt{LOTOS} effectively performs the orthogonalization of one layer with respect to the corresponding layer in other models of the ensemble while maintaining the target spectral norm.

\begin{figure}[t!]
\centering
    \includegraphics[width=.99\linewidth]{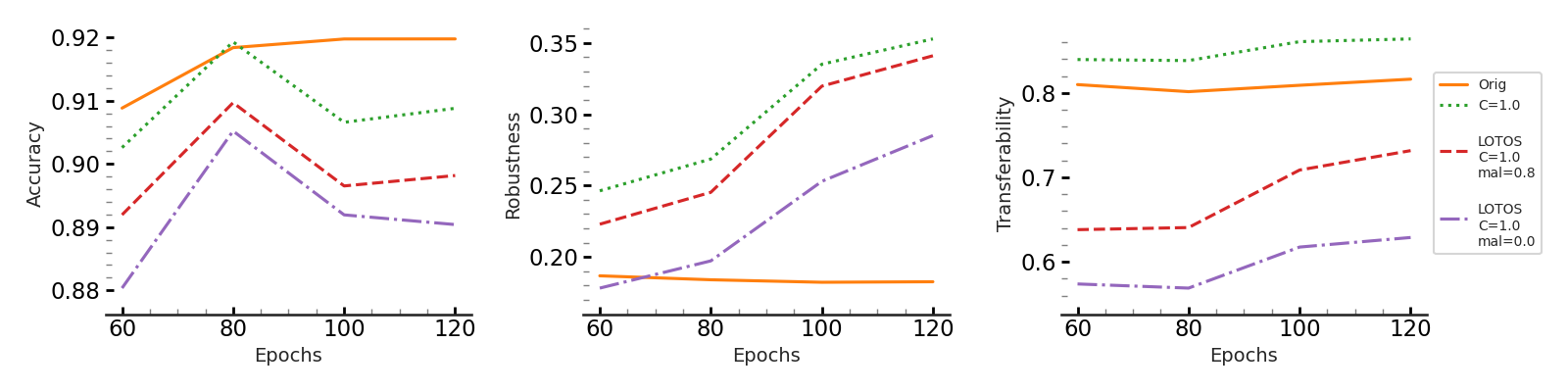}
\caption{\footnotesize  {\bf Reducing transferability while maintaining the benefits of Lipschitzness.} The average test accuracy (left-most plot) and average robust accuracy (middle plot) of the \textit{individual} models in an ensemble of three ResNet-18 models, along with the $T_{rate}$ of adversarial examples between the models of the ensemble. \texttt{LOTOS} keeps the robust accuracy of individual models in the ensemble much higher than those of the \texttt{Orig} ensemble and as \texttt{mal} increases, it becomes more similar to the models in $C=1$. \texttt{LOTOS} leads to a much lower transferability ($\simeq 20\%$) and the difference increases as \texttt{mal} decreases (right-most plot). These benefits come at a slight cost to the average accuracy of the individual models (left-most plot).
%\vspace{-6mm}
}
\label{fig:ortho_basic_compare}
\end{figure}

So far, we observed that \texttt{LOTOS} leads to a noticeable decrease in the transferability at a slight cost in test accuracy and robust accuracy of \textit{individual} models; to make sure that the former overpowers the latter and derives more robust ensembles, we evaluate the robustness of the ensembles against black-box attacks using independently trained surrogate models. We perform this experiment for both ResNet-18 and DLA models and use both CIFAR-10 and CIFAR-100 for the analysis. The results are presented in~\Cref{tab:bn-dla-resnet}. As the table shows, for each choice of the model architecture and dataset, we train ensembles with either of the 3 methods (i.e., \texttt{Orig}, $C=1$, and \texttt{LOTOS}) and compute their test accuracy and robust accuracy. As the table shows, \texttt{LOTOS} achieves higher robust accuracy in all cases with slight cost to the test accuracy in some cases. 

\begin{table*}[ht!]
\begin{center}
\begin{small}
\begin{sc}
\resizebox{\textwidth}{!}{
\begin{tabular}{@{} l  c c  c  c  c c  @{}}
 \toprule
%\cmidrule(r){1-4}
 
 %\midrule   
 & \multicolumn{3}{@{}c}{\textbf{CIFAR-10}} & \multicolumn{3}{@{}c}{\textbf{CIFAR-100}} \\\addlinespace[0.3em]

   &  \texttt{Orig} & $C=1.0$ & \texttt{LOTOS}
 & \texttt{Orig} & $C=1.0$ & \texttt{LOTOS}  \\\addlinespace[0.2em]

     \cmidrule(r){2-4}
    \cmidrule(r){5-7}

 & \multicolumn{6}{@{}c}{Ensembles of ResNet-18 models}   \\ \addlinespace[0.4em]%[0.5ex] 

 Test Acc & $\bf 95.3 \pm 0.06$ & $94.7 \pm 0.24$ & $ 94.6 \pm 0.19$
 & $\bf 77.2 \pm 0.17$ & $76.6 \pm 0.01$ & $76.6 \pm 0.10$ \\\addlinespace[0.3em]

  Robust Acc  & $30.3 \pm 1.63$ & $35.2 \pm 0.72$ & $\bf 36.3 \pm 0.88$
 & $15.2 \pm 0.45$  & $18.9 \pm 0.40$ & $\bf 20.2 \pm 0.47$ \\\addlinespace[0.3em]
 
 %Orig  & $27.0 \pm 1.35$ & $38.7 \pm 0.57$ & $\bf 39.6 \pm 0.66$
      % & $13.6 \pm 0.40$ & $20.7 \pm 0.40$ & $\bf 22.0 \pm 0.47$ \\\addlinespace[0.3em]

 %$C=1.0$ & $\bf 33.6 \pm 1.90$ & $31.6 \pm 0.86$ & $ 33.0 \pm 1.10$  
        % $16.7 \pm 0.50$ & $17.0 \pm 0.40$  & $\bf 18.3 \pm 0.46$ \\\addlinespace[0.4em]

& \multicolumn{6}{@{}c}{Ensembles of DLA models}   \\ \addlinespace[0.5em]%[0.5ex] 

     %\cmidrule(r){2-4}
    %\cmidrule(r){5-7}

 %& \multicolumn{4}{@{}c}{Accuracy on the test set}   \\ \addlinespace[0.4em]%[0.5ex] 

 Test Acc & $\bf 95.4 \pm 0.12$ & $95.2 \pm 0.05$ & $95.05 \pm 0.09$
 & $77.1 \pm 0.09$ & $\bf 78.8 \pm 0.31$ & $78.3 \pm 0.38$ \\\addlinespace[0.3em]

 Robust Acc & $26.7 \pm 0.58$ & $32.8 \pm 1.28$ & $\bf 34.5 \pm 0.63$  
 & $16.5 \pm 0.78$ & $19.4 \pm 0.32$ & $\bf 21.0 \pm 0.39$ \\\addlinespace[0.3em]
 
 % Orig & $26.7 \pm 0.58$ & $32.8 \pm 1.28$ & $\bf 34.5 \pm 0.63$  
 % & $16.5 \pm 0.78$ & $19.4 \pm 0.32$ & $\bf 21.0 \pm 0.39$ \\\addlinespace[0.3em]

 % $C=1.0$ & $\bf 30.3 \pm 1.39$ & $26.7 \pm 1.68$ &  $28.0 \pm 0.66$
 % & $\bf 21.0 \pm 0.53$ & $15.2 \pm 0.34$ & $17.2 \pm 0.39$ \\\addlinespace[0.3em]

%\cmidrule(r){1-4}                        
\bottomrule
\end{tabular}
}
\end{sc}
\end{small}
\end{center}
\vskip -0.1in
\caption{\footnotesize {\bf Robust accuracy against black-box attacks in ensembles of ResNet-18 models and ensembles of DLA models trained on CIFAR-10 and CIFAR-100 }. The surrogate models are a combination of both original models and clipped models trained with multiple random seeds. The target models are ensembles of three models from each architecture choice that are trained using either of the three training methods.}
\label{tab:bn-dla-resnet}
%\vskip -0.15in
\end{table*}

\newpage

\chapter{Machine Unlearning}
\label{sec:amun-chapter}

Given a training set $\D$ and a subset $\Df \subset \D$ of the samples that have to be unlearned from a model trained on $\D$, recent works on unlearning have emphasized the use of evaluation metrics that measure the similarity to the behavior of the models that are retrained from scratch on $\D - \Df$. However, prior unlearning methods do not effectively incorporate this evaluation criterion in the design of their algorithm. In this paper, we first characterize the expected behavior of the retrained-from-scratch models on $\D - \Df$. Using this characterization, we propose Adversarial Machine UNlearning (\amun{}). \amun{} is a method that, when applied to the models trained on $\D$, replicates that (desired) behavior after a few iterations. The success of \amun{} relies on an intriguing observation: fine-tuning a trained model on the adversarial examples of the training data does not lead to a catastrophic forgetting and instead has limited effect on the deterioration of model's test accuracy. 

Upon receiving a request for unlearning a subset $\Df$ of the training set $\D$, \amun{} finds adversarial examples that are {\em as close as possible} to the samples in $\Df$. It then utilizes these adversarial examples (with the wrong labels) during fine-tuning of the model for unlearning the samples in $\Df$. Fine-tuning the model on these adversarial examples, which are naturally mispredicted by the model, decreases the confidence of the predictions on $\Df$. This decreased confidence of model's predictions on $\Df$ is similar to what is observed in the models that are retrained on $\D - \Df$. The distance of these adversarial examples to their corresponding samples in $\Df$ is much smaller than the distance of $\Df$ to other samples in $\D - \Df$; this localizes the effect of fine-tuning to the vicinity of the samples in $\Df$ and prevents significant changes to the decision boundary of the model and hurting the model's overall accuracy (see \S~\ref{sec:amun_motivation}). 

As we will show in \S~\ref{sec:amun_experiments}, \amun{} outperforms prior state-of-the-art (SOTA) unlearning methods~\cite{fan2023salun} in unlearning random subsets of the training data from a trained classification model and closes the gap with the retrained models, even when there is no access to the samples in $\D - \Df$ during the unlearning procedure.

\section{Preliminaries}
\label{sec:amun_prelim}

We begin by introducing the notation we use. We proceed to define various terms in the paper, and conclude by introducing our method.

\subsection{Notation} 
\label{subsec:notation}

Assume a probability distribution $\mathbb{P}_\mathcal{X}$ on the domain of inputs $\mathcal{X}$ and $m$ classes $\mathcal{Y} = \{1,2,\dots,m\}$. We consider a multi-class classifier $\mathcal{F}:\mathcal{X} \rightarrow \mathcal{Y}$ and its corresponding prediction function $f(x)$ which outputs the probabilities corresponding to each class (e.g., the outputs of the softmax layer in a neural network). The loss function for model $\mathcal{F}$ is denoted $\ell_\mathcal{F}: \mathcal{X} \times \mathcal{Y} \rightarrow \mathbb{R_+}$; it uses the predicted scores from $f(x)$ to compute the loss given the true label $y$ (e.g., cross-entropy loss). 

In the supervised setting we consider here, we are given a dataset $\mathcal{D} = \{(x_i, y_i)\}_{i=\{1, \dots, N\}}$ that contains labeled samples $x_i \sim \mathbb{P}_\mathcal{X}$ with $y_i \in \mathcal{Y}$. The model $\F$ is trained on $\D$ using the loss $\ell_\F$ to minimize the empirical risk $\E_{\mathcal{D}} [\ell_\mathcal{F}(x,y)]$ and a set of parameters $\thetafull \sim {\Theta}_\D$ is derived for $\F$; $\Theta_\D$ is the distribution over the set of all possible parameters $\Theta$ when the training procedure is performed on $\D$ due to the potential randomness in the training procedure (e.g., initialization and using mini-batch training). We also assume access to a test set $\Dt$ with samples from the same distribution $\mathbb{P}_\mathcal{X}$. A function $g(x)$ is $L$-Lipschitz if $\| g(x) - g(x^\prime)\|_2 \leq L \| x-x^\prime \|_2, \forall x,x^\prime \in \mathcal{X}$.

\subsection{Definitions}
\label{sec:amun_defs}

\begin{definition}[Attack Algorithm]
\label{def:attack}
    For a given input/output pair $(x,y) \in \mathcal{X} \times \mathcal{Y}$, a model $\mathcal{F}$, and a positive value $\epsilon$, an untargetted attack algorithm $\A_\mathcal{F}(x, \epsilon) = x+\delta_x$ minimizes $\ell_\F(x+\delta_x,y^\prime \neq y)$ such that $\|\delta_x\|_2 \leq \epsilon$, where $y^\prime \in \mathcal{Y}$.
\end{definition}

\begin{definition}[Machine Unlearning]
\label{def:mu}
Given the trained model $\F$, and a subset $\Df \subset \D$ known as the forget set, the corresponding machine unlearning method is a function $\mathcal{M}_{\D,\Df}: \Theta \rightarrow \Theta$ that gets $\thetafull \sim \Theta_\D$ as input and derives a new set of parameters (aka the unlearned model) $\thetaunl \sim \Theta_\Df$, where $\Theta_\Df$ is the distribution over the set of parameters when $\F$ is trained on $\D - \Df$ rather than $\D$. 
\end{definition}

\subsection{Approximate Unlearning}

Using Definition~\ref{def:mu}, it is clear that the most straight-forward, exact unlearning method would be to retrain model $\F$ from scratch on $\D - \Df$; this does not even use $\thetafull$. However, training deep learning models is very costly, and retraining the models upon receiving each unlearning request would be impractical. Thus, approximate unlearning methods are designed to overcome these computational requirements by starting from $\thetafull$ and modifying the parameters to derive $\thetafull^\prime$ s.t.  $\thetafull^\prime \stackrel{\text{d}}{=} \thetaunl$ (i.e., from the same distribution). 
%that is more likely to be sample from $\Theta_\Df$, and therefore to an adversary, ideally,
 
In the rest of the paper, we refer to $\Df$ as the forget or unlearning set interchangeably. Its complement, $\Dr = \D -\Df$ is the remain set. We will use the behavior of the models retrained from scratch on $\Dr$ as the goal of approximate unlearning methods, and will refer to them as $\Fr$ for brevity. 

\subsection{Unlearning Settings}
\label{sec:unlear_setting}

Many of the prior methods on approximate unlearning for classification models require access to $\Dr$. However, in practice, this assumption might be unrealistic. The access to $\Dr$ might be restricted, or might be against privacy regulations. 
Prior works do not make a clear distinction based on this requirement when comparing different approximate methods. Therefore, to make a clear and accurate comparison, we perform our experiments (see \S~\ref{sec:amun_experiments}) in two separate settings: one with access to both $\Dr$ and $\Df$, and the other with access to only $\Df$. We report the results for each setting separately. For comparison with prior methods, we adapt them to both settings whenever possible.

\section{Motivation}
\label{sec:amun_methods}

We start by presenting intuition for our proposed unlearning method in \S~\ref{sec:amun_motivation}, and in \S~\ref{sec:adv-finetune} we proceed with describing our observation about fine-tuning a model on its adversarial examples.

\subsection{A Guiding Observation}
\label{sec:amun_motivation}

Before designing a new unlearning method, we would like to first characterize the changes we expect to see after a successful unlearning. Because the retrained models are the gold standard of unlearning methods, we first assess their behavior on $\Dr$, $\Df$, and $\Dt$. To this end, we evaluate the confidence values of $\Fr$ when predicting labels of $\Dr$, $\Df$, and $\Dt$. Since samples in $\Dt$ are drawn from the same distribution as $\D$, we can conclude that samples in $\Dt$ and $\Df$ are from the same distribution. Therefore, we expect $\Fr$ to have similar accuracy and prediction confidence scores on $\Dt$ (test set) and $\Df$. 

\noindent{\bf Results:} Experiments show that the confidence scores for a ResNet-18~\cite{he2016deep} model that has been retrained on $\D - \Df$, where $\D$ is the training set of CIFAR-10~\cite{alex2009learning} and the size of $\Df$ (randomly chosen from $\D$) is $10\%$ and $50\%$ of the size of $\D$ (the first and second sub-figures, respectively). % for the left and right sub-figures, respectively.  

\begin{tcolorbox}
\noindent{\bf Key Observation 1:} {\em The main difference between the predictions on $\Dt$ (unseen samples) and $\Dr$ (observed samples) is that the model's predictions are {\em much more confident} for the samples that it has observed compared to the unseen samples.}
\end{tcolorbox}

This basic observation has either been overlooked by the prior research on approximate machine unlearning or has been treated incorrectly. To make the unlearned models more similar to $\Fr$, prior methods have focused on degrading the model's performance on $\Df$ directly by either (a) some variation of fine-tuning on $\Dr$~\cite{warnecke2021machine,liu2024model}, (b) choosing wrong labels for samples in $\Df$ and fine-tuning the model~\cite{golatkar2020eternal,chen2023boundary,fan2023salun}, or (c) directly maximizing the loss with respect to the samples in $\Df$~\cite{thudi2022unrolling}. 
Using the wrong labels for the samples of $\Df$ or maximizing the loss on them make these methods very unstable and prone to catastrophic forgetting~\cite{zhang2024negative} because these samples belong to the correct distribution of the data and we cannot force a model to perform wrongly on a portion of the dataset while preserving it's test accuracy.
For these methods, it is important to use a small enough learning rate along with early stopping to prevent compromising the model's performance while seeking worse prediction confidence values on the samples in $\Df$. Also, most of these methods require access to the set of remaining samples to use it for preventing a total loss of the model's performance (e.g., by continuing to optimize the model on $\Dr$)~\cite{golatkar2020eternal,liu2024model}.

\subsection{Fine-tuning on Adversarial Examples}
\label{sec:adv-finetune}

\begin{figure*}[t!]
\centering
\includegraphics[width=.98\linewidth]{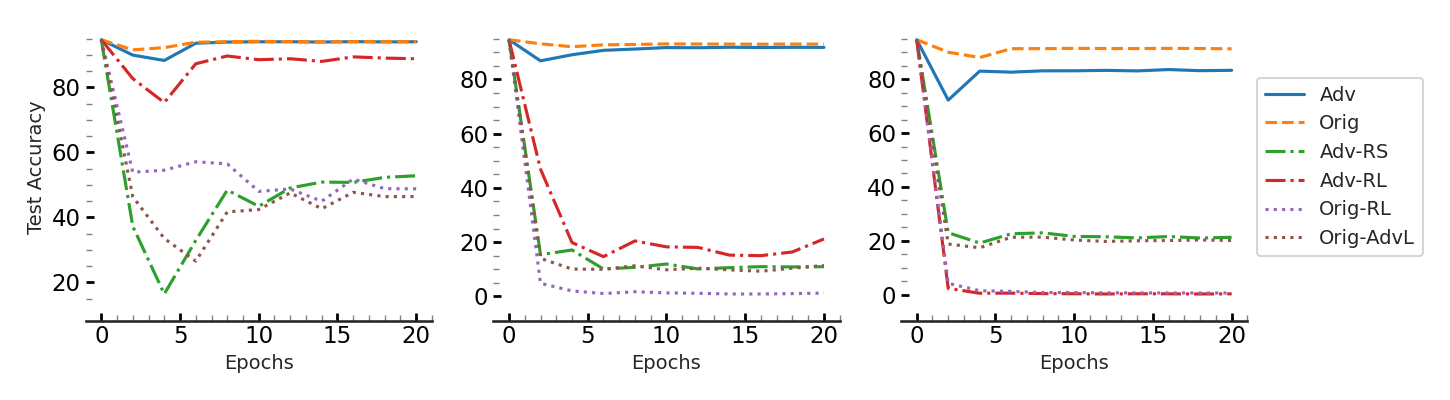}

\caption{{\bf Effect of fine-tuning on adversarial examples.} This figure shows the effect of fine-tuning on test accuracy of a ResNet-18 model that is trained on CIFAR-10, when the dataset for fine-tuning changes. Let $\Df$ contain $10\%$ of the samples in $\D$ and $\Da$ be the set of adversarial examples constructed using Algorithm~\ref{alg:advset}. \texttt{Adv}, from the left sub-figure to right one, shows the results when $\D \cup \Da$, $\Df \cup \Da$, and $\Da$ is used for fine-tuning the model, respectively. \texttt{Orig}, \texttt{Adv-RS}, \texttt{Adv-RL}, \texttt{Orig-RL}, and \texttt{Orig-AdvL} shows the results when $\Da$ for each of these sub-figures is replace by $\Df$, $\Da_{RS}$, $\Da_{RL}$, $\D_{RL}$, and $\D_{AdvL}$, accordingly. As the figure shows, the specific use of adversarial examples with the mis-predicted labels matters in keeping the model's test accuracy because $\Da$, in contrast to the other constructed datasets belong to the natural distribution learned by the trained model.} 
\vspace{-5mm}
\label{fig:fine_tune_10}
\end{figure*}

After training a model $\F$ on $\D$, this model imposes a distribution $f(x)$ (e.g., softmax outputs) for all possible labels $y\in \mathcal{Y}$ given any $x\in \mathcal{X}$. Since the model $\F$ is directly optimized on $\D$, $f(x)$ becomes very skewed toward the correct class for samples in $\D$. For a given sample from $\D$, its adversarial examples (see Definition~\ref{def:attack}) are very close in the input space to the original sample. However, $\F$ makes the wrong prediction on these examples. This wrong prediction is the direct result of the learned parameters $\thetafull$ for the classification model, and these adversarial examples, although predicted incorrectly, belong to the distribution imposed on $\mathcal{X}$ by these learned parameters (i.e., even though that is not the correct distribution, that is what the model has learned).

Now, what happens if we insert one adversarial example $(x_{adv}, y_{adv})$ that corresponds to the sample $(x,y)$ into $\D$ and make an augmented dataset $\D^\prime$ for fine-tuning? Even before fine-tuning starts, the model makes the correct prediction on that (adversarial) example (by predicting the wrong label $y_{adv}$!), but its confidence might not be as high as the samples in $\D$, on which the model has been trained on. Proceeding with
fine-tuning of the model on the augmented dataset increases its confidence on $x_{adv}$ while making the same wrong prediction $y_{adv}$. However, this fine-tuning does not change the model's performance because the newly added sample $(x_{adv}, y_{adv})$ does not contradict the distribution learned by the model. Since $(x,y)\in \D^\prime$, and $x$ and $x_{adv}$ are very close to one another (e.g., very similar images) while having different labels, optimizing the model {\em has to change its decision boundary} in that region of the input space to reach small loss for both of these samples. As a result of this balance, the model tends to decrease its confidence on the original sample compared to the model that was solely trained on $\D$ because there was no opposing components for its optimization on $\D$. Note that $\|x-x_{adv}\| \leq \epsilon$, where $\epsilon$ is often much smaller than the distance of any pairs of samples in $\D$. This helps to localize this change in the decision boundary during fine-tuning, and prevent changes to models' behavior in other regions of the input space~\cite{liang2023towards}. In the following we elaborate on our empirical observations that verify these changes.

\noindent{\bf Setup:} We consider the training set of CIFAR-10 as $\D$ and choose $\Df$ to be a random subset whose size is $10\%$ of $|\D|$. We also compute a set of adversarial examples (using Algorithm~\ref{alg:advset}) corresponding to $\Df$, which we call $\Da$. Fig.~\ref{fig:fine_tune_10} shows the fine-tuning of a trained ResNet-18 model for 20 epochs. In the leftmost sub-figure, the curve presented as \texttt{Orig} represents the test accuracy of the model when it is fine-tuned on $\D$. The curve named \texttt{Adv} is fine-tuned on $\D \cup \Da$, which has a similar test accuracy to \texttt{Orig}.

In the second sub-figure, \texttt{Orig} shows the test accuracy of the model when it is fine-tuned on $\Df$ (two copies of $\Df$ to keep the sample count similar), while \texttt{Adv} represents fine-tuning on $\Df \cup \Da$. As the figure shows, \texttt{Adv} has a small degradation in test accuracy compared to \texttt{Orig}. 

The rightmost sub-figure shows the case where \texttt{Orig} is fine-tuning of the model on $\Df$, and \texttt{Adv} is fine-tuning on only $\Da$. Although the degradation in test accuracy increases for this case, surprisingly we see that the model still remains noticeably accurate despite being fine-tuned on a set of samples that are all mislabeled. % See \S~\ref{sec:ablation-finetune} for more details.

\noindent{\bf Results:} As Figure~\ref{fig:fine_tune_10} shows, the test accuracy of the model does not deteriorate, even when it is being fine-tuned on only $\Da$ (the dataset with wrong labels). 
%\vspace{-15pt}

\begin{tcolorbox}
\noindent{\bf Key Observation 2:} \textit{Fine-tuning a model on the adversarial examples does not lead to catastrophic forgetting!}
\end{tcolorbox}
%\vspace{-15pt}

\section{Methods}
\label{sec:amun}

We utilize our novel observation about the effect of fine-tuning on adversarial examples (see \S~\ref{sec:adv-finetune}) to achieve the intuition we had about the retrained models (see \S~\ref{sec:amun_motivation}). We utilize the existing flaws of the trained model in learning the correct distribution, that appear as adversarial examples in the vicinity of the samples in $\Df$, to decrease its confidence on those samples while maintaining the performance of the model. 

Formally, Adversarial Machine UNlearning (\amun{}) uses Algorithm~\ref{alg:advset} to find an adversarial example for any sample in $(x,y) \in \Df$. This algorithm uses a given untargeted adversarial algorithm $\mathcal{A}_\mathcal{F}$, that finds the solution to~\Cref{def:attack}, for finding an adversarial example $x_{adv}$. To make sure $\epsilon$ is as small as possible, Algorithm~\ref{alg:advset} starts with a small $\epsilon$ and runs the attack $\mathcal{A}_\mathcal{F}$; if an adversarial algorithm is not found within that radius, it runs $\mathcal{A}_\mathcal{F}$ with a larger $\epsilon$. It continues to perform $\mathcal{A}_\mathcal{F}$ with incrementally increased $\epsilon$ values until it finds an adversarial example; it then adds it to $\Da$. The algorithm stops once it finds adversarial examples for all the samples in $\Df$. 

The reason behind minimizing the distance of $\epsilon$ for each sample is to localize the changes to the decision boundary of the model as much as possible; this prevents changing the model's behavior on other parts of the input space. For our experiments, we use PGD-50~\cite{madry2017towards} with $\ell_2$ norm bound as $\mathcal{A}_\mathcal{F}$. We set the step size of the gradient ascent in the attack to $0.1\times\epsilon$, which changes with the $\epsilon$ value. % More details regarding the implementations of \amun{} and prior unlearning methods and tuning their hyper-parameters can be found in Appendix~\ref{apx:impl_details}. %Also, in Appendix~\ref{apx:weak_attack}, we will show how using weaker attacks, such as Fast Gradient Sign Method (FGSM)~\cite{goodfellow2014explaining}, might lead to lower performance of \amun{}. 

\begin{algorithm}[H]
\caption{Build Adversarial Set ($\mathcal{F}, \mathcal{A}, \Df, \epsilon_{init}$)}
\label{alg:advset}
\begin{algorithmic}[1]
\STATE {\bfseries Input:} Model $F$, Attack algorithm $A$, Forget set $\Df$, and Initial $\epsilon$ for adversarial attack
\STATE {\bfseries Output:} $\Da$: Adversarial set for $\Df$ 
\STATE $\Da = \{\}$
    \FOR{$(x,y)$ {\bfseries in} $\Df$}
        \STATE $\epsilon = \epsilon_{init}$
        \WHILE{TRUE}
            \STATE $x_{adv} = \mathcal{A}(x, \epsilon)$
            \STATE $y_{adv} = \mathcal{F}(x_{adv})$ 
            \IF{$y_{adv} \, != \, y$}
                \STATE Break
            \ENDIF
            \STATE $\epsilon = 2 \epsilon$
        \ENDWHILE
        \STATE Add $(x_{adv}, y_{adv})$ to $\Da$ 
    \ENDFOR

\STATE {\bfseries Return} $\Da$

\end{algorithmic}

\end{algorithm}

\vspace{-4pt}

Once Algorithm~\ref{alg:advset} constructs $\Da$, \amun{} utilizes that to augment the dataset on which it performs the fine-tuning. If $\Dr$ is available, \amun{} fine-tunes the model on $\Dr \cup \Df \cup \Da$  and when $\Dr$ is not accessible, it performs the fine-tuning on $\Df \cup \Da$. Also, in the setting where the size of the $\Df$ is very large, we noticed some improvement when using only $\Dr \cup \Da$ and $\Da$, for those settings, repsectively.

\section{Theoretical Results}

Although most prior SOTA methods in approximate unlearning are not accompanied by theoretical guarantees, we prove a theorem that derives an upper-bound on the $2$-norm of the difference of the parameters of the unlearned model and the retrained model (which are gold-standard for unlearning). To prove this theorem, we make assumptions that are common in the certified unlearning literature. Our derived upper-bound implies enhanced effectiveness of our method when:

\begin{enumerate}
    \item The distance between the forget sample and its corresponding adversarial example becomes smaller.
    \item The Lipschitz constant of the model becomes smaller.
    \item The quality of the adversarial example becomes stronger (causes a larger loss for the correct label).
    \item The adversarial example transfers better to the retrained model.
    \item The retrained model generalizes better to the (clean) unseen samples.
\end{enumerate}

Hence, the proved theorem also justifies our earlier intuitions about the need for good quality adversarial examples that are as close as possible to the original samples (which is the goal of Algorithm 1), and also justifies that by fine-tuning the model on these adversarial examples, we can derive an upper-bound on the distance between the retrained model and the unlearned one. We believe that the presented empirical results, along with the provided theorem, will motivate further theoretical studies in future work.

\begin{theorem}
\label{theorem:amun}
    Let $\mathcal{D} = \{(x_i, y_i)\}_{i=\{1, \dots, N\}}$ be a dataset of $N$ samples and without loss of generality let $(x_n, y_n)$ (henceforth represented as $(x,y)$ for brevity) be the sample that needs to be forgotten and $(x^\prime, y^\prime)$ be its corresponding adversarial example such that $\|x-x^\prime\|_2 = \delta$. Let $\hat{\mathcal{R}}(w)$ represent the (unnormalized) empirical loss on $\mathcal{D}^\prime = \mathcal{D} \cup \{(x^\prime, y^\prime)\}$ for a model $f$ that is parameterized with $w$. We assume that $f$ is $L$-Lipschitz with respect to the inputs and $\hat{\mathcal{R}}$ is $\beta$-smooth and convex with respect to the parameters. Let $\theta_o$ represent the parameters corresponding to the model originally trained on $\mathcal{D}$ and $\theta_u$ be the parameters derived when the model is trained on $\mathcal{D} - \{(x,y)\}$. We also assume that both the original and retrained models achieve near-$0$ loss on their training sets. After performing fine-tuning on $\mathcal{D}^\prime$ using one step of gradient descent with a learning rate of $\frac{1}{\beta}$ to derive parameters $\theta^\prime$, we get the following upper-bound for the distance of the unlearned model and the model retrained on $\mathcal{D} - \{(x,y)\}$ (gold standard of unlearning):

    \begin{align*}
        \|\theta^\prime - \theta_u\|_2^2 \leq \|\theta_o - \theta_u\|_2^2 + \frac{2}{\beta} (L \delta - C),
    \end{align*}

    \noindent where $C$ is a term that depends on:

    \begin{itemize}
        \item The quality of adversarial example in increasing the loss for the correct label on the original model: $\ell(f_{\theta_o}(x^\prime), y)$
        \item Transferability of the adversarial example to the retrained model to decrease its loss for the wrong label: $\ell(f_{\theta_u}(x^\prime), y^\prime)$
        \item Early stopping and using appropriate learning rate during fine-tuning to avoid overfitting to the adversarial example: $\ell(f_{\theta^\prime}(x^\prime), y^\prime)$
        \item The generalization of the retrained model to the unseen samples: $\ell(f_{\theta_u}(x), y) $

    \end{itemize}

    More specifically, $C = \ell(f_{\theta_o}(x^\prime), y) + \ell(f_{\theta^\prime}(x^\prime), y^\prime) -\ell(f_{\theta_u}(x), y)  - \ell(f_{\theta_u}(x^\prime), y^\prime) $.

\end{theorem}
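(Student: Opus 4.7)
The plan is to build the bound in two stages: a single-step descent analysis that reduces the parameter-distance bound to an empirical risk gap on $\mathcal{D}' = \mathcal{D} \cup \{(x',y')\}$, and then an accounting argument that rewrites that risk gap in terms of the four losses that make up $C$. Starting from $\theta' = \theta_o - \tfrac{1}{\beta}\nabla\hat{\mathcal{R}}(\theta_o)$, I would expand
\[
\|\theta' - \theta_u\|_2^2 = \|\theta_o - \theta_u\|_2^2 - \tfrac{2}{\beta}\langle \nabla\hat{\mathcal{R}}(\theta_o), \theta_o - \theta_u\rangle + \tfrac{1}{\beta^2}\|\nabla\hat{\mathcal{R}}(\theta_o)\|_2^2.
\]
The two non-trivial terms are then handled by the standard facts for convex, $\beta$-smooth functions: convexity gives $\langle \nabla\hat{\mathcal{R}}(\theta_o), \theta_o - \theta_u\rangle \geq \hat{\mathcal{R}}(\theta_o) - \hat{\mathcal{R}}(\theta_u)$, and applying the descent lemma to the single step with learning rate $1/\beta$ gives $\tfrac{1}{2\beta}\|\nabla\hat{\mathcal{R}}(\theta_o)\|_2^2 \leq \hat{\mathcal{R}}(\theta_o) - \hat{\mathcal{R}}(\theta')$. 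Substituting collapses everything cleanly into
\[
\|\theta' - \theta_u\|_2^2 \leq \|\theta_o - \theta_u\|_2^2 + \tfrac{2}{\beta}\bigl(\hat{\mathcal{R}}(\theta_u) - \hat{\mathcal{R}}(\theta')\bigr),
\]
which isolates the entire problem in bounding the risk gap by $L\delta - C$.

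Next I would expand both $\hat{\mathcal{R}}(\theta_u)$ and $\hat{\mathcal{R}}(\theta')$ as sums over the $N+1$ points of $\mathcal{D}'$ and use two structural simplifications. First, since $\theta_u$ is retrained on $\mathcal{D}\setminus\{(x,y)\}$ and attains near-zero loss on its training set, all summands $\ell(f_{\theta_u}(x_i), y_i)$ with $i \neq n$ vanish, leaving $\hat{\mathcal{R}}(\theta_u) = \ell(f_{\theta_u}(x),y) + \ell(f_{\theta_u}(x'),y')$. Second, on the $\hat{\mathcal{R}}(\theta')$ side, all summands are nonnegative, so I can lower-bound $\hat{\mathcal{R}}(\theta')$ by keeping only $\ell(f_{\theta'}(x), y) + \ell(f_{\theta'}(x'), y')$ and dropping the rest, which only strengthens the inequality we are aiming for. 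After subtracting and matching terms against the definition $C = \ell(f_{\theta_o}(x'), y) + \ell(f_{\theta'}(x'), y') - \ell(f_{\theta_u}(x), y) - \ell(f_{\theta_u}(x'), y')$, the two $\theta_u$-losses and $\ell(f_{\theta'}(x'),y')$ cancel exactly, and the desired inequality $\hat{\mathcal{R}}(\theta_u) - \hat{\mathcal{R}}(\theta') \leq L\delta - C$ reduces to the single residual claim $\ell(f_{\theta_o}(x'), y) \leq L\delta + \ell(f_{\theta'}(x), y)$.

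This residual is precisely where the Lipschitz hypothesis enters. Using that $f$ is $L$-Lipschitz in its input (together with the usual $1$-Lipschitzness of the loss in its first argument for the standard bounded cross-entropy / softmax setting), the map $x \mapsto \ell(f_{\theta_o}(x), y)$ is $L$-Lipschitz, so $\ell(f_{\theta_o}(x'), y) \leq \ell(f_{\theta_o}(x), y) + L\|x - x'\|_2 = \ell(f_{\theta_o}(x), y) + L\delta$. Invoking the near-zero training loss of $\theta_o$ on its training point $(x,y)$ to drop the first summand, and using $\ell(f_{\theta'}(x), y) \geq 0$ on the right, closes the residual. Combining with the Step 1 inequality gives the stated bound.

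The main obstacle is not any single inequality — each is textbook — but the bookkeeping in the second step: correctly matching which per-sample losses vanish on the $\theta_u$ side, which are safely discarded by non-negativity on the $\theta'$ side, and which align with the specific signed combination in $C$, so that the surviving residual is \emph{exactly} the one controlled by the Lipschitz step. A secondary subtlety is being explicit that the assumption "$f$ is $L$-Lipschitz" is being lifted to an $L$-Lipschitz bound on the composed loss $x \mapsto \ell(f_\theta(x), y)$; this is immediate for the standard losses used in the paper's experiments, but worth stating so that the Lipschitz constant $L$ in the final bound is unambiguous.
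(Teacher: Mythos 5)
Your proposal follows the same high-level skeleton as the paper's proof: expand $\|\theta' - \theta_u\|_2^2$ after the single gradient step, invoke smoothness and convexity of $\hat{\mathcal{R}}$ to reduce the problem to bounding $\hat{\mathcal{R}}(\theta_u) - \hat{\mathcal{R}}(\theta')$, then simplify the per-sample sums using the near-zero-loss assumption, and finish with a Lipschitz argument. The first half (through the risk-gap reduction) is identical to the paper's. Where you differ is in the accounting and residual step, and your variants are actually cleaner in two places. First, for the sum $\hat{\mathcal{R}}(\theta')$ over the $n-1$ retained samples, the paper \emph{asserts} these terms vanish because $\theta'$ is one gradient step from $\theta_o$; you instead discard them by non-negativity, which strengthens the inequality in the right direction and avoids an assertion that is not literally an assumption of the theorem. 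Second, for the residual $\ell(f_{\theta_o}(x'),y) \le L\delta + \ell(f_{\theta'}(x),y)$, the paper adds and subtracts $\ell(f_{\theta_o}(x'),y)$, invokes first-order convexity of the \emph{per-sample} loss $\theta \mapsto \ell(f_\theta(x),y)$, and then uses $\nabla\ell(f_{\theta_o}(x),y)=0$ from the near-zero-loss assumption; you instead apply the Lipschitz bound directly, drop $\ell(f_{\theta_o}(x),y)\approx 0$ by the same near-zero-loss assumption, and use $\ell(f_{\theta'}(x),y)\ge 0$. Your route buys a small gain in rigor: convexity of the aggregate $\hat{\mathcal{R}}$ (which is what is assumed) does not entail convexity of each summand, so the paper's convexity-in-parameters step silently uses an assumption the theorem never states, whereas yours does not. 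Both routes implicitly require the same fact you flag explicitly, namely that the Lipschitzness of $f$ in the input lifts to an $L$-Lipschitz bound on $x\mapsto\ell(f_\theta(x),y)$, so this is a fair clarification rather than a divergence.
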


%\subsection{Evaluation Setup}
%\label{sec:evaluation}

\section{Empirical Results}
\label{sec:amun_evaluation}

\label{sec:amun_experiments}

We wish to answer the following questions: (1) does \amun{} lead to effective unlearning of any random subset of the samples when evaluated by a SOTA MIA?; (2) does the choice of $\Da$ matter in \amun{}, or can it be replaced with a dataset that contains different labels or different samples that are within the same distance to the corresponding samples in $\Df$?; (3) is \amun{} effective on adversarially robust models?; (4) does the choice of attack method matter in Algorithm~\ref{alg:advset} used by \amun{} and does transferred attack work as well?; and (5) how does \amun{} compare to other unlearning methods when used for performing multiple unlearning requests on the same model?

As a quick summary, our results show that: (1) \amun{} effectively leads to unlearning the samples in $\Df$: after unlearning $10\%$ of the samples of CIFAR-10 from a trained ResNet-18, RMIA cannot do better than random guessing (\S~\ref{sec:amun_results}); (2) If we replace $\Da$ with any of the aforementioned substitutes, the model's accuracy significantly deteriorates, especially when there is no access to $\Dr$; (3) \amun{} is as effective for unlearning on models that are adversarially robust; (4) using weaker attack methods, such as FGSM, in \amun{} hurts the effectiveness by not finding the adversarial examples that are very close to the samples in $\Df$. However, they still outperform prior methods. The transferred adversarial examples are effective as well; and (5) \amun{} outperforms other unlearning methods when handling multiple unlearning requests. We elaborate on the results for (1) here. For details on the rest, please see our paper. For more details please see our published version.

\subsection{Baseline Methods}
\label{sec:amun_baselines}

We compare \amun{} with \texttt{FT}~\cite{warnecke2021machine}, \texttt{RL}~\cite{golatkar2020eternal}, \texttt{GA}~\cite{thudi2022unrolling}, \texttt{BS}~\cite{chen2023boundary}, $l_1$\texttt{-Sparse}~\cite{liu2024model}, and \texttt{SalUn}~\cite{fan2023salun}. We also combine the weight saliency idea for masking the model parameters to limit the changes to the parameters during fine-tuning with \amun{} and present its results as \amun{}$_{+SalUn}$ (see Appendix~\ref{apx:amun_salun} for more details). We use the same hyper-parameter tuning reported by prior works. For further details, see Appendix~\ref{apx:impl_details}.

\subsection{Evaluation Metrics}
\label{sec:amun_metrics}

The metic used by recent works in unlearning to evaluate the unlearning methods~\cite{liu2024model,fan2023salun} considers the models retrained on $\Dr$ as the goal standard for comparison. They compute the following four values for both the retrained models and the models unlearned using approximate methods:

\begin{itemize}
%\vspace{-4mm}
\itemsep0em
\item \textit{Unlearn Accuracy:} Their accuracy on $\Df$.
\item \textit{Retain Accuracy:} Their accuracy on $\Dr$.
\item \textit{Test Accuracy:} Their accuracy on $\Dt$.
\item \textit{MIA score}: Scores returned by membership inference attacks on $\Df$ %. by~\cite{yeom2018privacy,song2019privacy} on $\Df$.
\end{itemize}
%\vspace{-4mm}

Once these four values are computed, the absolute value of the difference of each of them with the corresponding value for $\mathcal{F}_R$ (the retrained models) is computed. Finally, the average of the four differences (called the {\em Average Gap}) is used as the metric to compare the unlearning methods. 

The MIAs used in the recent unlearning methods by~\cite{liu2024model,fan2023salun} are based on the methods introduced by~\cite{yeom2018privacy,song2019privacy}.
Although these MIAs have been useful for basic comparisons, recent SOTA MIAs significantly outperform their earlier counterparts, albeit with an increase in complexity and computation cost. To perform a comprehensive comparison of the effectiveness of the unlearning methods, we utilized a SOTA MIA called RMIA~\cite{zarifzadeh2024low}, in addition to using the MIAs from prior works. In RMIA, the area under the ROC curve (AUC) of the MIA scores for predicting the training samples from the unseen samples is reported. Recall that in machine unlearning, the samples are split to three sets: $\Dr$, $\Df$, and $\Dt$. For an unlearning method to be effective, as discussed in \S~\ref{sec:amun_motivation}, we expect the AUC of RMIA for distinguishing the samples in $\Df$ from the ones in $\Dt$ to be the same as random guessing ($50\%$ assuming balanced data). As shown in Table~\ref{tab:rmia}, this expectation holds for the models retrained on $\Dr$. 

We report the results of our comparisons for both the MIAs from prior unlearning literature and the new SOTA MIA. We will present the former one as \texttt{MIS}
, and the latter one as \texttt{FT AUC} (the AUC of predicting $\Df$ from $\Dt$).

\subsection{Unlearning Settings} 
\label{sec:amun_settings}

Another important factor missing in the comparisons of the unlearning methods in prior works is the possibility of access to $\Dr$. So, for our experiments we consider two settings, one with access to $\Dr$ and one with access to only $\Df$. We adapt each of the unlearning methods to both of these settings, and perform the comparisons in each of these settings separately. The prior unlearning methods that do not adapt to the setting where there is no access to $\Dr$ \cite{warnecke2021machine,liu2024model} are excluded for the presented results in that setting.

Therefore, we perform different sets of experiments to evaluate the unlearning methods in both settings, and hope this becomes the norm in future works in machine unlearning. In each of these two settings, we evaluate unlearning of $10\%$ or $50\%$ of the samples randomly chosen from $\D$. For all the experiments we train three models on $\D$. For each size of $\D$, we use three random subsets and for each subset, we try three different runs of the unlearning methods. This leads to a total of $27$ runs of each unlearning method using different initial models and subsets of $\D$ to unlearn.

\subsection{Effectiveness of \amun{}}
\label{sec:amun_results}

In this subsection we report the results on the comparisons of \amun{} to other unlearning methods (see \S~\ref{sec:amun_baselines}). We consider the unlearning settings discussed in \S~\ref{sec:amun_settings}, and the evaluation metrics discussed in \S~\ref{sec:amun_metrics}. We use ResNet-18 models trained on CIFAR-10 for this experiment. 

\noindent{\bf Results:} Table~\ref{tab:rmia} shows the results of evaluation using RMIA when the unlearning methods {\em have access to $\Dr$}. Table~\ref{tab:rmia_forgetonly} shows these results when there is {\em no access to $\Dr$}. As the results show, \amun{} {\em clearly outperforms prior unlearning methods in all settings}. This becomes even more clear when there is no access to $\Dr$. Note that, for the models retrained on $\Dr$, the AUC score of RMIA for predicting $\Dr$ from $\Dt$ (which can be considered as the worst case for \texttt{FT AUC} score) are $64.17$ and $69.05$ for unlearning $10\%$ and $50\%$ accordingly. 

We also present the results when MIS is used as the evaluation metric in Tables~\ref{tab:mia} and~\ref{tab:mia_forgetonly} in Appendix~\ref{apx:svc_mia}, which similarly shows \amun{}'s dominance in different unlearning settings. Moreover, we evaluate the combination of \amun{} and \texttt{SalUn} (see Appendix~\ref{apx:amun_salun} for details) and present its results as \amun{}$_{SalUn}$ in these tables. \amun{}$_{SalUn}$ slightly improves the results of \amun{} in the setting where there is no access to $\Dr$, by filtering the parameters that are more relevant to $\Df$ during fine-tuning.

\begin{figure}[t!]
\centering
\includegraphics[width=.98\columnwidth]{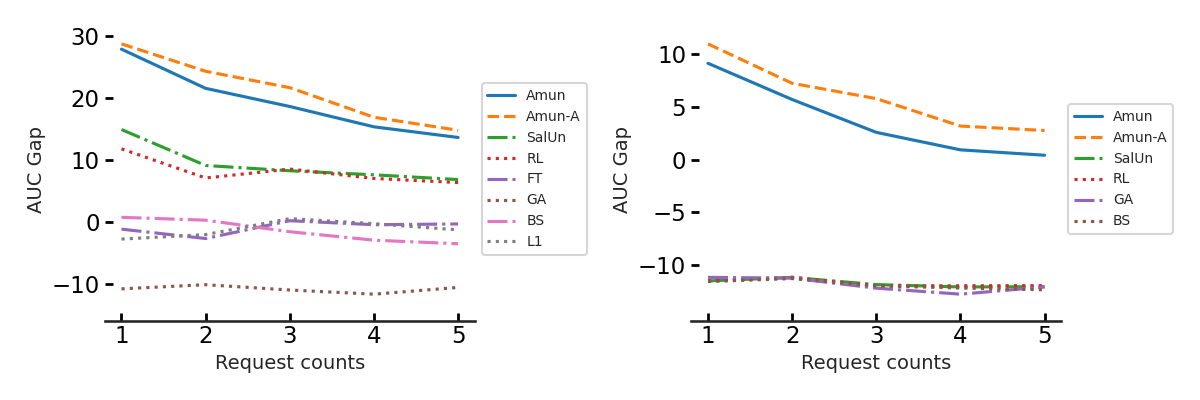}
\caption{{\bf Multiple unlearning requests.} This figure shows the evaluation of unlearning methods when they are used for unlearning for five times and each time on $2\%$ of the training data. We train a ResNet-18 model on CIFAR-10 when $\Dr$ is available (left) and when it is not (right). After each step of the unlearning, we use the MIA scores generated by RMIA to derive the area under the ROC curve (AUC) for $\Dr$ vs. $\Df$ and $\Df$ vs. $\Dt$. The values on the y-axis shows the difference of these two AUC scores. A high value for this gap means the samples in $\Df$ are far more similar to $\Dt$ rather than $\Dr$ and shows a more effective unlearning.} 
\vspace{-5mm}
\label{fig:adaptive}
\end{figure}

\begin{table*}[th!]
\begin{center}
\begin{small}
\begin{sc}
\resizebox{\textwidth}{!}{
\begin{tabular}{@{} l  c c  c  c c | c c c c c @{}}
 \toprule

 % \midrule   
 % & \multicolumn{3}{@{}c}{\textbf{Orig}} & \multicolumn{3}{@{}c}{\textbf{Clip}} \\\addlinespace[0.3em]

& \multicolumn{5}{@{}c}{\textbf{Random Forget ($10\%$)}} & \multicolumn{5}{@{}c}{\textbf{Random Forget ($50\%$)}} \\\addlinespace[0.3em]

 \multicolumn{1}{c}{\scriptsize \textbf{}} & 
 \multicolumn{1}{c}{\scriptsize Unlearn Acc} & 
 \multicolumn{1}{c}{\scriptsize Retain Acc} & 
 \multicolumn{1}{c}{\scriptsize Test Acc} & 
 \multicolumn{1}{c}{\scriptsize FT AUC} & 
 \multicolumn{1}{c}{\scriptsize Avg. Gap} & 
 \multicolumn{1}{c}{\scriptsize Unlearn Acc} &
 \multicolumn{1}{c}{\scriptsize Retain Acc} & 
 \multicolumn{1}{c}{\scriptsize Test Acc} & 
 \multicolumn{1}{c}{\scriptsize  FT AUC} & 
 \multicolumn{1}{c}{\scriptsize Avg. Gap} 
 % &  $N=3$ & $N=9$ & {\footnotesize Improvement}  
 \\\addlinespace[0.3em]

 \cmidrule(r){2-6}
 \cmidrule(r){7-11}
 % \cmidrule(r){5-7}

 % & \multicolumn{4}{@{}c}{Accuracy on $\Dt$}   \\ \addlinespace[0.4em]%[0.5ex] 
 % ResNet18 (noBN) & $28.0 $ & $43.4 $ & $44.9 $ & $91.3 $ & $107.7 $ & $108.9 $

 % \\\addlinespace[0.3em]

 %  Retrain & $50.00$ {\tiny $\pm 0.42$} & $0.00$  & 
 
%  $50.01$ {\tiny $\pm 0.12$} & $0.00$
 
%  \\\addlinespace[0.3em]

 Retrain & $94.49$ {\tiny $\pm 0.20$} & $100.0 $ {\tiny $\pm 0.00$} & $94.33$ {\tiny $\pm 0.18$} & $50.00$ {\tiny $\pm 0.42$} & $0.00$ 
 & 
 
 $92.09$ {\tiny $\pm 0.37$}  & $100.0$ {\tiny $\pm 0.00$} & $91.85$ {\tiny $\pm 0.33$} & $50.01$ {\tiny $\pm 0.12$} & $0.00$
 \\\addlinespace[0.3em]
  \cmidrule(r){1-11}

 FT  & $95.16$  { \tiny $\pm 0.29$ }  & $96.64$  { \tiny $\pm 0.25$ }  & $92.21$  { \tiny $\pm 0.27$ }  & $52.08$  { \tiny $\pm 0.34$ }  & $2.06$  { \tiny $\pm 0.10$ } 
 & 
 
 $94.24$  { \tiny $\pm 0.30$ }  & $95.82$  { \tiny $\pm 0.31$ }  & $91.21$  { \tiny $\pm 0.33$ }  & $51.74$  { \tiny $\pm 0.36$ }  & $2.17$  { \tiny $\pm 0.13$ } 
 \\\addlinespace[0.3em]
 
 RL & $95.54$  { \tiny $\pm 0.14$ }  & $97.47$  { \tiny $\pm 0.08$ }  & $92.17$  { \tiny $\pm 0.10$ }  & $51.33$  { \tiny $\pm 0.63$ }  & $1.74$  { \tiny $\pm 0.18$ }  
 & 
 
 $94.83$  { \tiny $\pm 0.44$ }  & $99.79$  { \tiny $\pm 0.04$ }  & $90.08$  { \tiny $\pm 0.16$ }  & $50.78$  { \tiny $\pm 0.14$ }  & $1.38$  { \tiny $\pm 0.09$ } 
 \\\addlinespace[0.3em]

  GA & $98.94$  { \tiny $\pm 1.39$ }  & $99.22$  { \tiny $\pm 1.31$ }  & $93.39$  { \tiny $\pm 1.18$ }  & $60.96$  { \tiny $\pm 2.93$ }  & $4.28$  { \tiny $\pm 0.47$ }  
  & 
 
 $100.00$  { \tiny $\pm 0.00$ }  & $100.00$  { \tiny $\pm 0.00$ }  & $94.65$  { \tiny $\pm 0.07$ }  & $63.39$  { \tiny $\pm 0.26$ }  & $4.62$  { \tiny $\pm 0.05$ } 
 \\\addlinespace[0.3em]

 BS & $99.14$  { \tiny $\pm 0.31$ }  & $99.89$  { \tiny $\pm 0.06$ }  & $93.04$  { \tiny $\pm 0.14$ }  & $57.85$  { \tiny $\pm 1.12$ }  & $3.48$  { \tiny $\pm 0.32$ } 
 & 
  
  $55.24$  { \tiny $\pm 5.11$ }  & $55.67$  { \tiny $\pm 4.90$ }  & $50.16$  { \tiny $\pm 5.28$ }  & $55.19$  { \tiny $\pm 0.42$ }  & $32.01$  { \tiny $\pm 3.86$ } 
 \\\addlinespace[0.3em]

  $l_1$-Sparse & $94.29$  { \tiny $\pm 0.34$ }  & $95.63$  { \tiny $\pm 0.16$ }  & $91.55$  { \tiny $\pm 0.17$ }  & $51.21$  { \tiny $\pm 0.32$ }  & $2.16$  { \tiny $\pm 0.06$ } 
  & 
  
  $98.00$  { \tiny $\pm 0.17$ }  & $98.71$  { \tiny $\pm 0.13$ }  & $92.79$  { \tiny $\pm 0.10$ }  & $54.44$  { \tiny $\pm 0.47$ }  & $2.67$  { \tiny $\pm 0.11$ } 
 \\\addlinespace[0.3em]

  SalUn & $96.25$  { \tiny $\pm 0.21$ }  & $98.14$  { \tiny $\pm 0.16$ }  & $93.06$  { \tiny $\pm 0.18$ }  & $50.88$  { \tiny $\pm 0.54$ }  & $1.44$  { \tiny $\pm 0.12$ } 
  & 

  $96.68$  { \tiny $\pm 0.35$ }  & $99.89$  { \tiny $\pm 0.01$ }  & $91.97$  { \tiny $\pm 0.18$ }  & $50.86$  { \tiny $\pm 0.18$ }  & $1.36$  { \tiny $\pm 0.04$ } 
 \\\addlinespace[0.5em]

 \textbf{Amun}  & $95.45$  { \tiny $\pm 0.19$ }  & $99.57$  { \tiny $\pm 0.00$ }  & $93.45$  { \tiny $\pm 0.22$ }  & $50.18$  { \tiny $\pm 0.36$ }  & $\bf 0.62$  { \tiny $\pm 0.05$ } 
 & 
 
  $93.50$  { \tiny $\pm 0.09$ }  & $99.71$  { \tiny $\pm 0.01$ }  & $92.39$  { \tiny $\pm 0.04$ }  & $49.99$  { \tiny $\pm 0.18$ }  & $\bf 0.33$  { \tiny $\pm 0.03$ } 
 \\\addlinespace[0.3em]

 \textbf{Amun}$_{+SalUn}$ & $95.02$  { \tiny $\pm 0.18$ }  & $99.58$  { \tiny $\pm 0.04$ }  & $93.29$  { \tiny $\pm 0.04$ }  & $50.72$  { \tiny $\pm 0.79$ }  & $\underline{0.68}$  { \tiny $\pm 0.18$ } 
 & 
 
 $93.56$  { \tiny $\pm 0.07$ }  & $99.72$  { \tiny $\pm 0.02$ }  & $92.52$  { \tiny $\pm 0.20$ }  & $49.81$  { \tiny $\pm 0.40$ }  & $\underline{0.36}$  { \tiny $\pm 0.07$ }
 \\\addlinespace[0.3em]

 % Amun_{+SalUn} & & & & & & 
 
 % $97.93$ & $99.54$ & $92.94$ & $6.59$ & $4.41$
 % \\\addlinespace[0.3em]

 % Amun-Ra_{+SalUn} & & & & & & 
 
 % & & & &
 % \\\addlinespace[0.3em]
 
 % \cmidrule(r){1-4}                        
 \bottomrule
\end{tabular}
}
\end{sc}
\end{small}
\end{center}
\caption{\footnotesize {\bf Unlearning with access to $\Dr$.} Comparing different unlearning methods in unlearning $10\%$ and $50\%$ of $\D$. Avg. Gap (see \S~\ref{sec:amun_metrics}) is used for evaluation (lower is better). The lowest value is shown in bold while the second best is specified with underscore. As the results show, \amun{} outperforms all other methods by achieving lowest Avg. Gap and \amun{}$_{SalUn}$ achieves comparable results.\vspace{-1mm}}
\label{tab:rmia}

\end{table*}

\begin{table*}[th!]
\begin{center}
\begin{small}
\begin{sc}
\resizebox{\textwidth}{!}{
\begin{tabular}{@{} l  c c  c  c c | c c c c c @{}}
 \toprule

 % \midrule   
 % & \multicolumn{3}{@{}c}{\textbf{Orig}} & \multicolumn{3}{@{}c}{\textbf{Clip}} \\\addlinespace[0.3em]

& \multicolumn{5}{@{}c}{\textbf{Random Forget ($10\%$)}} & \multicolumn{5}{@{}c}{\textbf{Random Forget ($50\%$)}} \\\addlinespace[0.3em]

 \multicolumn{1}{c}{\scriptsize \textbf{}} & 
 \multicolumn{1}{c}{\scriptsize Unlearn Acc} & 
 \multicolumn{1}{c}{\scriptsize Retain Acc} & 
 \multicolumn{1}{c}{\scriptsize Test Acc} & 
 \multicolumn{1}{c}{\scriptsize FT AUC} & 
 \multicolumn{1}{c}{\scriptsize Avg. Gap} & 
 \multicolumn{1}{c}{\scriptsize Unlearn Acc} &
 \multicolumn{1}{c}{\scriptsize Retain Acc} & 
 \multicolumn{1}{c}{\scriptsize Test Acc} & 
 \multicolumn{1}{c}{\scriptsize  FT AUC} & 
 \multicolumn{1}{c}{\scriptsize Avg. Gap} 
 % &  $N=3$ & $N=9$ & {\footnotesize Improvement}  
 \\\addlinespace[0.3em]

 \cmidrule(r){2-6}
 \cmidrule(r){7-11}
 % \cmidrule(r){5-7}

 % & \multicolumn{4}{@{}c}{Accuracy on $\Dt$}   \\ \addlinespace[0.4em]%[0.5ex] 
 % ResNet18 (noBN) & $28.0 $ & $43.4 $ & $44.9 $ & $91.3 $ & $107.7 $ & $108.9 $

 % \\\addlinespace[0.3em]

%  Retrain & $50.00$ {\tiny $\pm 0.42$} & $0.00$  & 
 
%  $50.01$ {\tiny $\pm 0.12$} & $0.00$

 Retrain & $94.49$ {\tiny $\pm 0.20$} & $100.0 $ {\tiny $\pm 0.00$} & $94.33$ {\tiny $\pm 0.18$} & $50.00$ {\tiny $\pm 0.42$} & $0.00$  & 
 
 $92.09$ {\tiny $\pm 0.37$}  & $100.0$ {\tiny $\pm 0.00$} & $91.85$ {\tiny $\pm 0.33$} & $50.01$ {\tiny $\pm 0.12$} & $0.00$
 
 \\\addlinespace[0.3em]
  \cmidrule(r){1-11}

 RL & $100.00$  { \tiny $\pm 0.00$ }  & $100.00$  { \tiny $\pm 0.00$ }  & $94.45$  { \tiny $\pm 0.09$ }  & $61.85$  { \tiny $\pm 0.25$ }  & $4.31$  { \tiny $\pm 0.06$ }  
 & 
 
 $100.00$  { \tiny $\pm 0.00$ }  & $100.00$  { \tiny $\pm 0.00$ }  & $94.57$  { \tiny $\pm 0.14$ }  & $61.99$  { \tiny $\pm 0.10$ }  & $4.29$  { \tiny $\pm 0.03$ }  
 \\\addlinespace[0.3em]

  GA & $4.77$  { \tiny $\pm 3.20$ }  & $5.07$  { \tiny $\pm 3.54$ }  & $5.09$  { \tiny $\pm 3.38$ }  & $49.78$  { \tiny $\pm 0.34$ }  & $68.53$  { \tiny $\pm 2.45$ } 
  &
 
  $100.00$  { \tiny $\pm 0.00$ }  & $100.00$  { \tiny $\pm 0.00$ }  & $92.65$  { \tiny $\pm 0.09$ }  & $63.41$  { \tiny $\pm 0.24$ }  & $5.13$  { \tiny $\pm 0.04$ } 
 \\\addlinespace[0.3em]

 BS & $100.00$  { \tiny $\pm 0.00$ }  & $100.00$  { \tiny $\pm 0.00$ }  & $94.48$  { \tiny $\pm 0.04$ }  & $61.41$  { \tiny $\pm 0.29$ }  & $4.20$  { \tiny $\pm 0.07$ } 
 & 
  
 $100.00$  { \tiny $\pm 0.00$ }  & $100.00$  { \tiny $\pm 0.00$ }  & $94.58$  { \tiny $\pm 0.08$ }  & $62.43$  { \tiny $\pm 0.14$ }  & $4.40$  { \tiny $\pm 0.05$ } 
 \\\addlinespace[0.3em]

  SalUn  & $100.00$  { \tiny $\pm 0.00$ }  & $100.00$  { \tiny $\pm 0.00$ }  & $94.47$  { \tiny $\pm 0.10$ }  & $61.09$  { \tiny $\pm 0.40$ }  & $4.11$  { \tiny $\pm 0.09$ } 
  & 

   $100.00$  { \tiny $\pm 0.00$ }  & $100.00$  { \tiny $\pm 0.00$ }  & $94.59$  { \tiny $\pm 0.12$ }  & $62.45$  { \tiny $\pm 0.37$ }  & $4.40$  { \tiny $\pm 0.07$ } 
 \\\addlinespace[0.5em]

 \textbf{Amun} & $94.28$  { \tiny $\pm 0.37$ }  & $97.47$  { \tiny $\pm 0.10$ }  & $91.67$  { \tiny $\pm 0.04$ }  & $52.24$  { \tiny $\pm 0.23$ }  & $\underline{1.94}$  { \tiny $\pm 0.13$ } 
 & 
 
 $92.77$  { \tiny $\pm 0.52$ }  & $95.66$  { \tiny $\pm 0.25$ }  & $89.43$  { \tiny $\pm 0.19$ }  & $52.60$  { \tiny $\pm 0.22$ }  & $\underline{2.51}$  { \tiny $\pm 0.09$ } 
 \\\addlinespace[0.3em]

 \textbf{Amun}$_{+SalUn}$ & $94.19$  { \tiny $\pm 0.38$ }  & $97.71$  { \tiny $\pm 0.06$ }  & $91.79$  { \tiny $\pm 0.12$ }  & $51.93$  { \tiny $\pm 0.12$ }  & $\bf 1.77$  { \tiny $\pm 0.06$ }
 & 
 
 $91.90$  { \tiny $\pm 0.63$ }  & $96.59$  { \tiny $\pm 0.31$ }  & $89.98$  { \tiny $\pm 0.44$ }  & $52.32$  { \tiny $\pm 0.56$ }  & $\bf 2.00$  { \tiny $\pm 0.17$ }
 \\\addlinespace[0.3em]

 \bottomrule
\end{tabular}
}
\end{sc}
\end{small}
\end{center}
\caption{\footnotesize {\bf Unlearning with access to only $\Df$.}  Comparing different unlearning methods in unlearning $10\%$ and $50\%$ of $\D$. Avg. Gap (see \S~\ref{sec:amun_metrics}) is used for evaluation (lower is better) when only $\Df$ is available during unlearning. As the results show, \amun{}$_{SalUn}$ significantly outperforms all other methods, and \amun{} achieves comparable results. \vspace{-3mm}}
\label{tab:rmia_forgetonly}

\end{table*}

\subsection{Ablation Studies}
\label{sec:amun_ablation}

In this subsection, we first elaborate on the effect of fine-tuning a model on its adversarial examples and compare it to the cases where either the samples or labels of this dataset change (\S~\ref{sec:ablation-finetune}). We then discuss \amun{}'s efficacy on models that are already robust to adversarial examples (\S~\ref{sec:ablation-robust}). We present other ablation studies on using weaker, but faster, adversarial attacks in Algorithm~\ref{alg:advset} (Appendix~\ref{apx:weak_attack}). In Appendix~\ref{apx:transfer_attack}, we utilize transferred adversarial examples for unlearning, as this can expedite handling the unlearning from a newly trained model for which adversarial examples on similar architectures are available.

\subsubsection{Fine-tuning on Adversarial Examples}
\label{sec:ablation-finetune}

We want to verify the importance of $\Da$ (created by Algorithm~\ref{alg:advset}) in preserving the model's test accuracy. To this end, we build multiple other sets to be used instead of $\Da$ when fine-tuning. Let us assume that $\mathcal{A}_\mathcal{F}(x,y) = (x_{adv},y_{adv})$. Then, these other sets are:

\begin{itemize}
\vspace{-3mm}
\itemsep0em
    \item $\D_{AdvL}$: $\{(x, y_{adv})\}_{\forall (x,y) \in \Df}$
    \item $\D_{RL}$ : $\{(x, y^\prime)$, s.t. $y^\prime \neq y, y_{adv}\}_{\forall (x,y) \in \Df}$
    \item $\Da_{RL}$: $\{(x_{adv}, y^\prime) \text{, s.t.} y^\prime \neq y, y_{adv}\}_{\forall (x,y) \in \Df}$
    \item $\Da_{RS}$: $\{(x^\prime, y_{adv})$, s.t. $x^\prime \sim \mathrm{Uniform(X_\delta)} $, where $X_\delta = \{\forall \hat{x}: \|x_\delta - x\|_2 = \delta\}\}_{\forall (x,y) \in \Df}$
\end{itemize}
\vspace{-3mm}

In this experiment, we evaluate the effect of fine-tuning on test accuracy of a ResNet-18 model that is trained on CIFAR-10, when $\Da$ is substituted with other datasets that vary in the choice of samples or their labels. We assume that $\Df$ contains $10\%$ of the samples in $\D$ and $\Da$ is the set of corresponding adversarial examples constructed using Algorithm~\ref{alg:advset}. 

\noindent{\bf Results:} In Fig.~\ref{fig:fine_tune_10}, \texttt{Adv}, from the left sub-figure to the right sub-figure, shows the results when $\D \cup \Da$, $\Df \cup \Da$, and $\Da$ is used for fine-tuning the model, respectively. \texttt{Orig}, \texttt{Adv-RS}, \texttt{Adv-RL}, \texttt{Orig-RL}, and \texttt{Orig-AdvL} show the results when $\Da$ for each of these sub-figures is replaced by $\Df$, $\Da_{RS}$, $\Da_{RL}$, $\D_{RL}$, and $\D_{AdvL}$, respectively. As the figure shows, the specific use of adversarial examples with the mispredicted labels matters in keeping the model's test accuracy, especially as we move from the leftmost sub-figure (having access to $\Dr$) to the right one (only using $\Da$ or its substitutes). This is due to the fact that the samples in $\Da$, in contrast to the other constructed datasets, belong to the natural distribution learned by the trained model. Therefore, even if we only fine-tune the ResNet-18 model on $\Da$, we still do not lose much in terms of model's accuracy on $\Dt$. This is a surprising observation, as $\Da$ contains a set of samples with wrong predictions! Fig.~\ref{fig:fine_tune_50} in Appendix~\ref{apx:ablation-finetune} shows similar results when size of $\Df$ is $50\%$ of $|\D|$.

\subsection{Adversarially Robust Models}
\label{sec:ablation-robust}

We evaluate the effectiveness of \amun{} when the trained model is adversarially robust. %There is a variety of research on designing more robust models. 
One of the most effective methods in designing robust models is adversarial training which targets smoothing the model's prediction function around the training samples~\cite{salman2019provably}. This has been shown to provably enhance the adversarial robustness of the model~\cite{cohen2019certified}. However, this method is very costly and impractical for larger models. There is a separate line of work that try to achieve this smoothness by controlling the Lipschitz constant of the models~\cite{szegedy2013intriguing}. The method proposed by~\cite{boroojeny2024spectrum} is much faster than adversarial training and their results show a significant improvement in the robust accuracy. Since evaluations with RMIA require training 128 reference models, we use their clipping method to evaluate the effectiveness of \amun{} for unlearning $10\%$ and $50\%$ of the samples from robust ResNet-18 models trained on CIFAR-10. 

\noindent{\bf Results:} As shown in Table~\ref{tab:rmia-clipped-remFalse}, even when there is no access to $\Dr$, \amun{} still results in effective unlearning for these models; RMIA does not do better than random guessing for predicting $\Df$ from $\Dt$. Table~\ref{tab:rmia-clipped-remTrue} in Appendix~\ref{apx:ablation_robust} shows similar results for the case with access to $\Dr$. As Fig.~\ref{fig:retrain_conf} (right) shows, in the robust models, more than $97\%$ of the adversarial examples are further away from their corresponding training samples, compared to this distance for the original models. However, this does not interfere with the performance of \amun{} because these robust models are smoother and tend to be more regularized. This regularization, which prevents them from overfitting to the training samples is in fact a contributing factor to the improved generalization bounds for these models~\cite{bartlett2017spectrally}. This in itself contributes to enhanced resilience against MIAs. As seen in Tables~\ref{tab:rmia-clipped-remFalse} and~\ref{tab:rmia-clipped-remTrue}, even for the clipped models retrained on $\Dr$, the AUC score of RMIA for predicting $\Dr$ from $\Df$ (\texttt{FR AUC}) is very low, which shows that these smoother models .

\begin{table}[th!]
\begin{center}
\begin{small}
\begin{sc}
\resizebox{0.76\columnwidth}{!}{
\begin{tabular}{@{} l  c c c | c c c @{}}
 \toprule

& \multicolumn{3}{@{}c}{\textbf{Random Forget ($10\%$)}} & \multicolumn{3}{@{}c}{\textbf{Random Forget ($50\%$)}} \\\addlinespace[0.3em]

 \multicolumn{1}{c}{\scriptsize \textbf{}} & 
 \multicolumn{1}{c}{\scriptsize FT AUC} & 
 \multicolumn{1}{c}{\scriptsize FR AUC} & 
 \multicolumn{1}{c}{\scriptsize Test Acc} & 
 \multicolumn{1}{c}{\scriptsize FT AUC} &
 \multicolumn{1}{c}{\scriptsize FR AUC} & 
 \multicolumn{1}{c}{\scriptsize Test Acc} 
 \\\addlinespace[0.3em]

 \cmidrule(r){2-4}
 \cmidrule(r){5-7}

 Retrain & $49.95$ {\tiny $\pm 0.24$} & $54.08 $ {\tiny $\pm 0.16$} & $89.01$ {\tiny $\pm 0.21$} & 
 
 $50.19$ {\tiny $\pm 0.15$}  & $55.61$ {\tiny $\pm 0.05$} & $85.76$ {\tiny $\pm 0.41$}
 
 \\\addlinespace[0.3em]
  \cmidrule(r){1-7}

 \textbf{Amun} & $49.55$ {\tiny $\pm 0.13$} & $54.01$ {\tiny $\pm 0.23$} & $87.55$ {\tiny $\pm 0.44$} & 
 
 $49.64$ {\tiny $\pm 0.31$} & $53.23$ {\tiny $\pm 0.21$} & $87.39$ {\tiny $\pm 0.61$}
 \\\addlinespace[0.3em]
               
 \bottomrule
\end{tabular}
}
\end{sc}
\end{small}
\end{center}
\caption{\footnotesize {\bf Unlearning on adversarially robust models.} Evaluating the effectiveness of \amun{} in unlearning $10\%$ and $50\%$ of the training samples when the models are adversarially robust and there is no access to $\Dr$. For this experiment we use models with controlled Lipschitz constant which makes them provably and empirically more robust to adversarial examples. \vspace{-3mm}}
\label{tab:rmia-clipped-remFalse}

\end{table}

\subsection{Continuous Unlearning}
\label{sec:adaptive}

We evaluate the performance of the unlearning methods when they are used to perform multiple consecutive unlearning from a trained model. This is a desirable capability for unlearning methods because in real-world applications there might be multiple unlearning requests and it is preferred to minimize the number of times that a model needs to be retrained from scratch. 
The setting we envision is as follows: models are updated at each request for unlearning. For \amun{}, this means that $\Da$ is computed on an updated model after each set of unlearning requests (shown as \amun{}\texttt{-A}). 
In addition to comparing \amun{}\texttt{-A} to the other unlearning methods, we also compare it to a version (shown as \amun{}) that computes all the adversarial examples on the original model so it can handle the unlearning requests faster upon receiving them i.e., $\Da$ is not computed on an updated model after each request; the set of requests are batched and $\Da$ is computed on the entire batch. For this experiment, we use a ResNet-18 model trained on training set of CIFAR-10 ($50$K samples). Our goal is to unlearn $10\%$ of the training samples ($5$K), but this time in $5$ consecutive sets of size $2\%$ ($1$K) each. We then evaluate the effectiveness of unlearning at each step using RMIA.

\noindent{\bf Results:} Fig.~\ref{fig:adaptive} shows an overview of the results for both settings of unlearning (with or without access to $\Dr$). This figure shows the effectiveness of unlearning by depicting how the samples in $\Df$ are more similar to the test samples ($\Dt$) rather than the remaining samples ($\Dr$). The value on the y-axis shows the difference of the area under the ROC curve (AUC) for predicting $\Dr$ from $\Df$ and $\Df$ from $\Dt$. For the plots of each of these values separately, see Appendix~\ref{apx:adaptive}. %The figures shows the results for both of the unlearning settings (with or without access to $\Dr$). 
\amun{}\texttt{-A} performs better than all the other unlearning methods for all the steps of unlearning. Although \amun{} also outperforms all the prior unlearning methods, it slightly under-performs compared to \amun{}\texttt{-A}. This is expected, as the model's decision boundary slightly changes after each unlearning request and the adversarial examples generated for the original model might not be as effective as those ones generated for the new model. Note that for this experiment, we did not perform hyper-parameter tuning for any of the unlearning methods, and used the same ones derived for unlearning $10\%$ of the dataset presented in \S~\ref{sec:amun_results}. For further discussion of the results see Appendix~\ref{apx:adaptive}.

\section{Future Work}
\label{sec:proposed-diffusion}

% \amun{} utilizes our new observation on how fine-tuning the trained models on adversarial examples that correspond to a subset of the training data does not lead to significant deterioration of model's accuracy. Instead, it decreases the prediction confidence values on the the corresponding training samples. By evaluating \amun{} using SOTA MIAs, we show that it outperforms other existing method, especially when unlearning methods do not have access to the remaining samples. It also performs well for handling multiple unlearning requests. This work also raises some questions for future work:  (1) Since SOTA MIA methods fail to detect the unlearned samples, can this method be used to provide privacy guarantees for all the training samples?; (2) Can the same ideas be extended to generative models or Large Language Models?; (3) Can we derive theoretical bounds on the utility loss due to fine-tuning on adversarial examples? 

\cite{gandikota2023erasing} frame concept removal as a targeted fine-tuning problem: Stable Diffusion is optimized only on prompts that describe the unwanted concept, together with a KL-style regularizer that keeps all other generations close to the original model. They makes two inferences, one on samples conditioned on the unlearning concept and one unconditioned. It then uses the negative of the difference in the added noise to perform an optimization on the parameters in the reverse direction of the noise relevant to that concept. Because no extra real images are needed, ESD can delete copyrighted styles or unsafe subjects in a few hundred gradient steps while maintaining overall image quality. 
\cite{jiangunlearning} propose SGU that treats continuous-time diffusion as a stochastic differential equation and adds an unlearning drift that gradually diverts trajectories away from the undesirable data manifold. The process is flexible—compatible with both SGM and DDPM backbones—and significantly lowers the likelihood of regenerating the target content while keeping FID drops small.
\cite{maharanatowards} aim for safety-critical deployment, this work combines adversarial prompt mining with region-specific weight masking to localise where a sensitive concept lives in U-Net layers. Layer-wise low-rank edits then suppress those activations; extensive experiments show higher erasure fidelity and lower collateral damage than previous baselines across multiple concept categories.

\cite{fuchi2024erasing} fine-tune only the text encoder of a text-to-image pipeline using a handful of real images that illustrate the forbidden concept. By steering the encoder’s embeddings toward semantically distant regions, the approach implicitly redirects the generative process to benign latent concepts, achieving concept removal without degrading unrelated generations. \cite{alberti2025data} consider the problem of removing individual training images rather than concepts to propose SISS, a weighted loss that pushes the score estimates away from the forbidden data while preserving fidelity on the rest of the training distribution.~\cite{fan2023salun} propose using an idea similar to random labeling used for unlearning in classification models, to align the unlearning concept with misaligned images.

Inspired by the these prior work and their benefits and shortcoming, and utilizing the main motivations behind \amun{}, we plan to investigate the use of adversarial perturbations in the latent features at each diffusion step in the forward path, such that they lead to generating an image from a different concept in its backward path (image generation). More specifically, consider the diffusion process given by:

\vspace*{-5mm}
{\small \begin{align}
    \hat{\epsilon}_{\btheta}(\mathbf x_t | c) = (1-w) {\epsilon}_{\btheta}(\mathbf x_t | \emptyset ) + w {\epsilon}_{\btheta}(\mathbf x_t | c ),
    \label{eq: condition_diffusion}
\end{align}}%
where $\hat{\epsilon}(\mathbf x_t | c)$ represents the noise estimation derived by utilizing the conditional diffusion model given a concept $c$. $ w \in [0,1]$ is a hyper-parameter guiding the generation of the noise accoding to the concept $c$ or unconditional generation ${\epsilon}(\mathbf x_t | \emptyset )$. The inference stage initiates with Gaussian noise $z_T \sim \mathcal{N}(0, 1)$, and the backward denoised process is used to generate the data at $t = 0$ ($\hat{\epsilon}_{\btheta}(\mathbf x_T | c)$ to obtain $z_{T-1}$ and so forth)~\cite{fan2023salun}. For training this model, authentic data from concept $c$ is used to learn the parameters using an MSE loss. For unlearning,~\cite{fan2023salun} use images that do not belong to concept $c$, to fine-tune the parameters and align concept $c$ with other randomly chosen images. Instead, in our proposed methods, we use fine-tuning the trained model on the images generated using adversarial perturbations to drift the generation away from the forbidden concept, without compromising the generation quality for other topics, as shown in the results of section~\ref{sec:adv-finetune} and Figure~\ref{fig:fine_tune_10}. Also, as observed in the results for effectiveness of \amun{} in the setting where $\Dr$ is not available (see section~\ref{sec:amun_results}), we expect having access to only samples from the unlearning concept would be enough for this unlearning process (one of the main benefits of some previously introduced methods~\cite{gandikota2023erasing}). Also, this method could be beneficial for the problem of removing individual training, as in the work by~\cite{alberti2025data}, by focusing on the adversarial samples for the target images only. As the results for \amun{} shows, unlearning a subset of the samples from one class does not lead to under-perfomance for the remaining samples from that class. 
%\input{contents/LLM_unlearning}

% tilted reweighting
\newpage
\chapter{Class Unlearning}
\label{sec:trw_intro}

The setting of class unlearning is different from unlearning random training samples because the unlearned model has to be depleted of any sign of the unlearned class. Failing to consider this difference, we show that all the existing methods are susceptible to our newly designed variant of membership inference attacks (MIAs), which is based on the \emph{nearest neighbor} of the target class (MIA-NN). This vulnerability has remained unnoticed because previously used evaluations and metrics are mainly designed or inspired by prior work on unlearning random subsets of data, and without paying attention to the properties of a model that is retrained from scratch on all the data other than the target class.

We then propose a new training objective that enhances the robustness to MIA-NN, without computational overhead over finetuning-based unlearning methods. When a class is marked for removal, this method first reassigns its predicted probability mass proportionally to the remaining classes, and then tilts this distribution according to inter-class similarities to better approximate the distribution of the models retrained from scratch on the remaining classes. Our proposed method not only exhibits strong robustness against MIA-NN, it also does not fail the evaluations using prior evaluation metrics.

To evaluate our proposed approaches, we conduct comprehensive experiments on MNIST, CIFAR-10, CIFAR-100, and Tiny-ImageNet, using ten state-of-the-art unlearning methods. In addition to common evaluations in prior work MIA~\cite{hayes2024inexact}, we show robustness against stronger MIA (U-LiRA~\cite{hayes2024inexact}). Additionally, we present the results for robustness against \textbf{MIA-NN}, which we specifically designed to reveal residual leakage of existing methods missed by standard evaluations. 
Our results show that simple output-space interventions effectively approximates the distribution of retrained models on samples from the forget class without significant computational overhead. Our contributions can be summarized as follows:

\begin{itemize}
    
    \item \textbf{Membership inference attack via nearest neighbors.}
    We propose a new membership inference attack, MIA-NN, that utilizes the probabilities assigned to the nearest-neighbor class of the forget class to detect whether its samples have been used for training the model. In contrast to prior work, which primarily relies on simple accuracy drop or binary membership prediction used for random forgetting, our metrics offer deeper insights into how and where leakage persists. MIA-NN reveals vulnerabilities in existing baselines under stronger adversarial scrutiny, setting a new benchmark for effective and privacy-preserving unlearning. Notably, MIA-NN does not assume any access to the training data, which is a more restrictive and realistic setting for adversarial attacks.

    \item \textbf{Lightweight unlearning loss modification for output reweighting.} 
    We propose Tilted ReWeighting (TRW), a simple yet effective modification to the objective of fine-tuning that removes the influence of a forgotten class by approximating the distribution of scratch-retrained models. Unlike prior methods, which fail to replicate the fine-grained behavior of the retrained models, TRW is more robust against MIA-NN and other existing attacks. 
    We accomplish this by applying a post-hoc probability reassignment that adjusts the model’s decision boundaries for retained classes. 
    The proposed modification achieves superior performance to existing methods while remaining computationally efficient, making it suitable for deployment in real-world systems that demand rapid unlearning.

\end{itemize}
\section{Methods}
\label{sec:trw_methods}

To build a precise understanding of unlearning objectives and challenges, we first formalize the problem in~\cref{sec:defn_unlearn}.  In~\cref{subsec:trw_motivation}, we analyze how a retrained model behaves on forgotten samples—an aspect largely overlooked in prior work—and use this insight to design a stronger variant of MIA in section~\ref{subsec:guiding_mia} that reveals vulnerabilities in existing unlearning methods. Motivated by this observation, in section~\ref{sec:our_method} we introduce a unlearning method that mitigates this shortcoming and enhances the effectiveness of unlearning.

\subsection{Problem Definition: Class Unlearning}
\label{sec:defn_unlearn}

Let $\mathcal{D} = \{(\mathbf{x}_i, y_i)\}_{i=1}^N$ be the full training set with the set of labels $\mathcal{Y}$ containing $K$ different classes. and let $y_f \in \mathcal{Y}$ denote the class to forget. The set of samples to forget is defined as $\mathcal{D}_f = {(\mathbf{x}_i, y_i) \subset \mathcal{D} \mid_{y_i = y_f}}$, and the retained set is $\mathcal{D}_r = \mathcal{D} \setminus \mathcal{D}_f$.
The goal of class unlearning is to produce a model that behaves as if trained only on $\mathcal{D}_r$, i.e., with no influence from $\mathcal{D}_f$.

\subsection{Motivation}
\label{subsec:trw_motivation}

Models retrained on $\mathcal{D}_r$ often assign skewed predictions to $\mathcal{D}_f$ based on semantic similarity. For example, see~\cref{fig:confusion mat 2} that shows the prediction on a ResNet18 model trained on CIFAR-10. The \emph{Retrain} model (a model trained on $D_r$) that has never seen \texttt{automobile} samples tends to misclassify them as \texttt{truck}, which is visually and semantically similar. This behavior is not specific to model architecture but emerges from underlying data-level class similarities. This behavior has been overlooked by prior evaluation methods in class unlearning literature which focus only on the logit or probability for only the forget class.

\begin{figure}[t]
\centering
\includegraphics[width=0.95\linewidth]{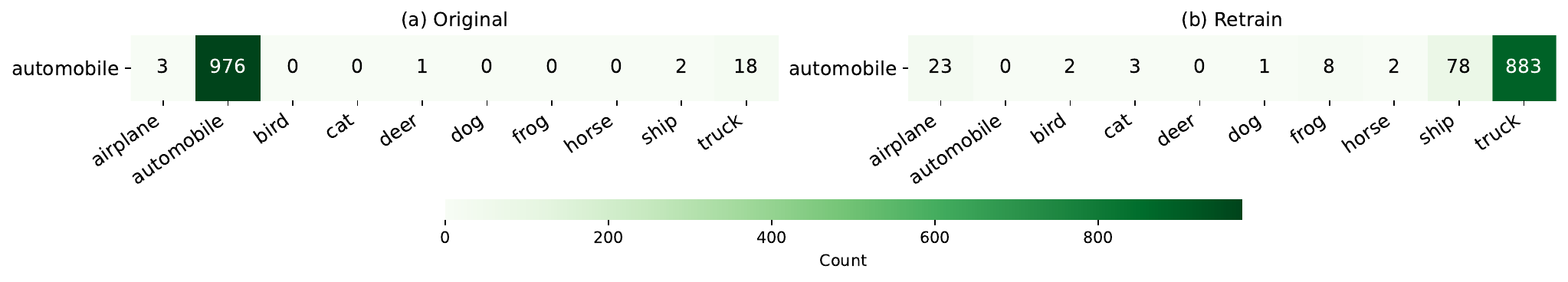}
\caption{The predicted labels and their corresponding counts for samples belonging to the \texttt{automobile} class for the original model (left) and the retrained model (right). Retrain model's predictions on the forget class is skewed toward similar classes.}
\label{fig:confusion mat 2}
\end{figure}

\begin{tcolorbox}
\noindent \textbf{Key Insight:} {\em The Retrain models exhibit structured misclassifications for forgotten classes—typically to semantically close retained classes. Approximate methods that do not replicate this behavior are susceptible to stronger privacy attacks.}
\end{tcolorbox}

\subsection{MIA on the Nearest Neighbor (MIA-NN)}
\label{subsec:guiding_mia}
Motivated by the insight in section~\ref{subsec:trw_motivation}, we propose \emph{Membership Inference Attack via Nearest Neighbor} \textbf{MIA-NN}, that utilizes the probabilities assigned to the class closest to the forget class. We denote by $\mathcal{M}_o$ the \emph{target} model, by $\mathcal{M}_u$ the \emph{unlearned} model, and by $\mathcal{M}_t$ the \emph{retrained} model. Suppose we have $n$ Retrain models: $\mathcal{M}_{t}^{[n]} = \{ \mathcal{M}_t^{(1)}, \mathcal{M}_t^{(2)}, \dots, \mathcal{M}_t^{(n)} \}.$
Let $\mathcal{R} = \{ r_1, r_2, \dots, r_{K-1} \}$ be the set of remaining classes, and let $y_f$ denote the forget class.  
For each class $r_i$, the test set can be split to the following disjoint subsets:
$D_{r_i\text{-test}}$ (i.e., test samples belonging to class $r_i$), $D_{f\text{-test}}$ (i.e., test samples belonging to the forget class), and $D_{r_{\hat{i}}\text{-test}}$ where $r_{\hat{i}}=r_j \in \mathcal{R}, j\neq i$ (i.e., the remaining of the test samples). 

For a model $\mathcal{M}$ and class $r_i$, let $z_{\mathcal{M}}(x, r_i)$ denote the logit value corresponding to class $r_i$ when sample $x$ is given as input. Now, for each remaining class $r_i$, and for each Retrain model $\mathcal{M}_t^{(j)}$, we construct the training data
\[
\mathcal{T}_{i}^{(j)} =
\left \{ \left( z_{\mathcal{M}_t^{(j)}}(x, r_i), 1\right) \mid x \in D_{r_i\text{-test}} \right \}
\cup
\left \{ \left (z_{\mathcal{M}_t^{(j)}}(x, r_i), 0 \right) \mid x \in D_{r_{\hat{i}}\text{-test}} \right \}.
\]

A binary classifier (e.g., SVM) $h_{i}^{(j)} : \mathbb{R} \to \{0,1\}$ is then trained to discriminate $D_{r_i\text{-test}}$ vs. $D_{r_{\hat{i}}\text{-test}}$. The accuracy of this classifier on the forget-class test data is defined as
\[
\text{Acc}_{i}^{(j)} = \frac{1}{|D_{f\text{-test}}|} \sum_{x \in D_{f\text{-test}}}
\mathbf{1}\!\left( h_{i}^{(j)}(z_{\mathcal{M}_t^{(j)}}(x, r_i)) = 1 \right).
\]

For each class $r_i$, we compute the mean accuracy across all the Retrain models to derive: $\bar{\text{Acc}}_{i} = \frac{1}{n} \sum_{j=1}^{n} \text{Acc}_{i}^{(j)}$. Finally, we select the \emph{nearest neighbor} class as:
\[
r_n \,\coloneq \, \arg\max_{r_i \in \mathcal{R}} \bar{\text{Acc}}_{i}.
\]

%Using the set of accuracy values for class $r_n$, $\mathcal{A}_{r_n} = \{ \text{Acc}_{r_n}^{(1)}, \dots, \text{Acc}_{r_n}^{(n)} \}$, we form a distribution of accuracy values by which we evaluate the unlearned models. More specifically, 

To evaluate the unlearned model $\mathcal{M}_u$, we compute
\[
\text{Acc}_{r_n}^{\mathcal{M}_u} = \frac{1}{|D_{f\text{-test}}|} \sum_{x \in D_{f\text{-test}}}
\mathbf{1}\!\left( h_{r_n}^{\mathcal{M}_u}(z_{\mathcal{M}_u}(x, r_n)) = 1 \right),
\]
where $h_{r_n}^{\mathcal{M}_u}$ is trained the same way as the previous classifiers but using logits from $\mathcal{M}_u$ on $D_{r_{n}\text{-test}}$ and $D_{r_{\hat{n}}\text{-test}}$. We can then consider the gap between $\text{Acc}_{r_n}^{\mathcal{M}_u}$ and $\bar{\text{Acc}}_{r_n}$ as a measure of unlearning effectiveness for model $\mathcal{M}_u$.

Table~\ref{tab:miann_all} shows the the value of $\bar{\text{Acc}}_{i}$ (the \texttt{Retrain} column) when forgetting a certain class from a ResNet18 model trained on MNIST, CIFAR-10 and CIFAR-100. As the results show, there is a large gap between $\bar{\text{Acc}}_{i}$ and  $\text{Acc}_{r_n}^{\mathcal{M}_u}$, especially for the more recent unlearning methods that try to optimize for regular MIA score (see Table~\ref{tab:main_results}). This reveals a major shortcoming of the current evaluation methods for unlearning and the need for complementary methods such as MIA-NN that perform a more comprehensive analysis of the behavior of unlearned models rather than only focusing on the logit value for the forget class.

\begin{table}[htbp]
  \centering
  \resizebox{\textwidth}{!}{%
  \begin{tabular}{l|l|llllllllllll}
    \toprule
    Dataset   & Retrain & FT   & RL   & GA   & SalUn & BU   & l1   & SVD  & SCRUB & SCAR & l2ul & TRW \\
    \midrule
    MNIST (8 $\rightarrow$ 3)  & 73.9 & 58.2 & 45.6 & 18.5 & 6.1 & 9.6 & 4.7 & 48.3 & 7.7 & 42.8 & 31.1 & \textbf{70.5} \\
    CIFAR-10 (auto$\rightarrow$ truck)  & 89.6 & 75.7 & 34.3 & 18.1 & 5.2 & 19.8 & 9.1  & 50.0 & 8.8  & 51.9 & 20.4 & \textbf{82.1} \\
    CIFAR-100 (beaver $\rightarrow$ shrew)  & 75.0 & 54.0 & 8.9  & 13.5 & 4.0 & 12.5 & 3.9  & 39.6 & 6.7  & 43.1 & 15.8 & \textbf{69.5} \\
    \bottomrule
  \end{tabular}}
  \vspace{2mm}
  \caption{MIA-NN accuracy on the samples from forgotten classes across datasets. Higher values indicate better unlearning (harder to infer membership) based on neighboring classes. The gap with the Retrain models reveals under-performance in many of the SOTA unlearning methods that have been evaluated using only regular MIAs.}
  \label{tab:miann_all}
\end{table}
%\vspace{-4mm}

\begin{tcolorbox}
\noindent \textbf{Main Takeaway:} {\em Existing SOTA methods leak membership under variants of MIAs that probe the fine-grained behavior of the model, such as MIA-NN.}
\end{tcolorbox}

\subsection{Tilted ReWeighting (TRW)}
\label{sec:our_method}
%\label{sec:methods}

As mentioned in prior sections, the susceptibility of existing unlearning methods to MIA-NN arises from the fact that they fail to mimic the fine-grained behavior of the Retrain models on samples from the forget class. A basic solution to enforce this similarity is to utilize the probabilities that the original model assigns to other classes when predicting on the forget samples. That provides us with an estimate of how the probabilities should redistribute when the forget label is enforced to be zero. More specifically, let $p(y \mid \mathbf{x})$ be the original model’s output distribution.
%, where \( \theta \) are the parameters of the pre-trained model.
We perform a rescaling to remove the forget class $y_{f}$ and derive the reweighted distribution on the remaining classes:

\begin{align}
\label{equ:reweighted}
    \tilde p(y\mid x)\;=\;\frac{p(y\mid x)}{1-p(y_f\mid x)} \quad (y\neq y_f),\qquad \tilde p(y_f\mid x)=0,
\end{align}

Using $\tilde p$ as the target distribution when fine-tuning on the samples of the forget class, enforces zero probability for the forget label and rescales the probability for other labels to sum up to $1$. However, this assumes that the probabilities of other classes would increase proportionally in the absence of the forget class. As figures~\ref{fig:score_c1} and~\ref{fig:score_c6} in~\Cref{apx:tilt} show, this assumption does not hold and the Retrain models will have a much higher bias when predicting on the forget samples. This systematic bias arises from the fact that the forget class has different levels of similarity to other classes, and a model that has never seen the forget class, would assign higher probabilities to more similar classes.

To capture the systematic bias toward more similar classes, while adding minimal additional constraint to the our target distribution, we impose a first-moment constraint given a set of similarity scores between the forget class and the remaining classes. More specifically, let $s_y,\,y\neq y_f$ show the similarity between class $y$ and $y_f$. Then the set of probability functions over the remaining classes that satisfy this constraint is defined as follows:

\[
\mathcal A\ :=\ \Big\{q\in\Delta^{K-1}:\ \sum_{y\neq f} q(y\mid x)\,s_y=c\Big\}.
\]

Now we need a candidate $q^*(y|x) \in \mathcal{A}$, such that it remains close to the original distribution $p(y|x)$ (and consequently to $\tilde p$). In Proposition~\ref{lem:iproj-tilt} we show that our desired distribution will be a \textit{tilted} version of $\tilde p(y|x)$ using the score values and has the following form:

\begin{align}
\label{equ:tilted}
    q^*(y\mid x)
    \;=\;
    \frac{\tilde p(y\mid x)\,\exp(\beta\,s_y)}
    {\sum_{j\neq y_f}\tilde p(j\mid x)\,\exp(\beta\,s_j)},
    \qquad
    q^*(y_f\mid x)=0.
\end{align}

Note that $\beta\in\mathbb{R}$ controls the strength of the tilt. We do not know the exact value of $c$ a-priori to set it in the constraints, but the hyperparameter $\beta$ helps us achieve various levels of $c$. 
When $\beta=0$, the tilted distribution $q^*$ reduces to the plain renormalized $\tilde p$, and larger $\beta>0$ redistributes more of the forget mass to classes similar to $y_f$.

The intuition behind tilted reweighting is shown in Figure~\ref{fig:decision_boundary}. The original model has learned decision boundaries to separate class $B$ from the other two classes (first figure from the left). For a model that has been retrained on class $A$ and $C$, the samples from class $B$ will mostly fall within class $A$ and will be assigned to class $A$ with a much higher certainty (second figure). Many of the samples of class $B$ in the original model have a higher probability of being assigned to class $C$ than to class $A$, so a basic rescaling of the conditional probabilities will lead to a decision boundary different from that of the retrained model (third figure). However, by using a tilting function that uses the inverse of the Euclidean distances among the class centroids as the similarity measure, we can recover the decision boundary of the retrained model (code available in the supplementary material).

\begin{figure}[t]
\centering
\includegraphics[width=0.99\linewidth]{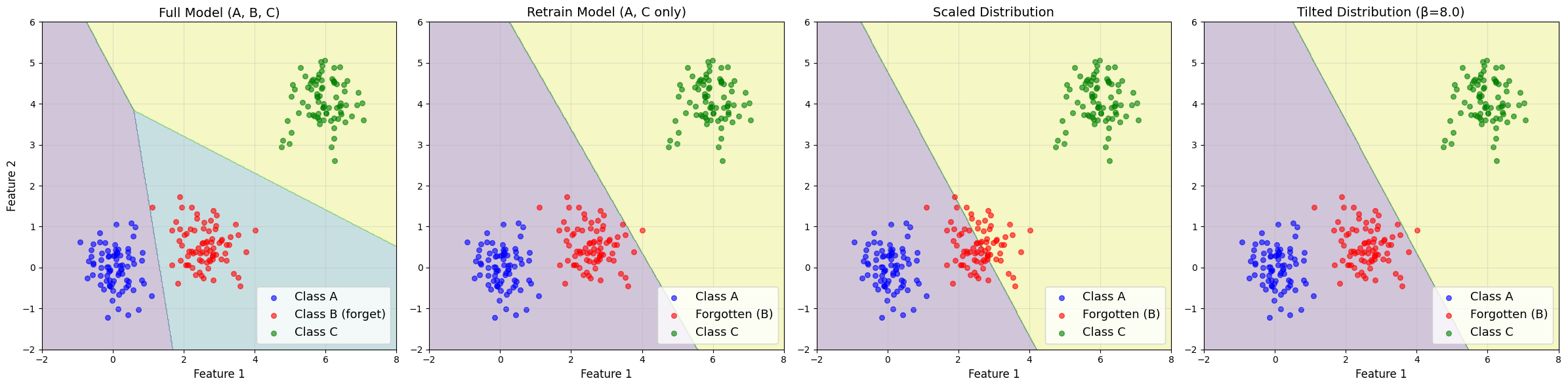}
\caption{The first figure (from the left) shows the decision boundaries when class B exists. In the Retrain model, $p(y_A|x)/p(y_C|x)$ would mostly increase for $x\in B$ due to the higher similarity of class A and class B (second figure). Third figure shows the decision boundary for a basic rescaling of original model's distribution, while the fourth one shows the tilted distribution, which correctly predicts the decision boundary in the Retrain model.}
\label{fig:decision_boundary}
\end{figure}

\noindent\textbf{Theoretical results.}
In Proposition~\ref{lem:iproj-tilt}, we show that~\eqref{equ:tilted} is equivalent to an \emph{information projection} of the original model’s distribution onto the retained simplex under an additional linear constraint on expected similarity (i.e., $\mathcal{A}$).
In other words, $q^{*}$ is the maximum-entropy distribution on the remaining classes that (i) remains close to the baseline $p$ and (ii) while satisfying the bias according to class similarities $s_y$ (see~\Cref{apx:proofs} for the proof). 

\begin{proposition}
\label{lem:iproj-tilt}
Let $p(\cdot\mid x)\in\Delta^{K}$ be the distribution of the target model for input $x$,
and let $S=\{s_y\}_{y\neq f}\subset\mathbb{R}$ be fixed similarity scores with $y_f$.
Given $c\in\mathbb{R}$ in the convex hull of $S$, the information projection of $p$ onto the probability simplex of retrained classes (i.e., $q(\cdot \mid x) \in \Delta^{K-1}$) with linear constraint  $\sum_{y\neq f} q(y)\,s_y=c$, has form of~\eqref{equ:tilted}, where $\beta$ is some unique scalar such that $\sum_{y} q^\star(y\mid x)\,s_y=c$.
\end{proposition}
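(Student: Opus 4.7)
The plan is to cast the statement as a standard information projection (I-projection) onto a linearly constrained subset of the simplex, and then verify that the KKT form of the minimizer is exactly the tilted expression in~\eqref{equ:tilted}. Concretely, I would minimize
\[
\mathrm{KL}(q\|p)\;=\;\sum_{y\neq f} q(y)\log\tfrac{q(y)}{p(y)}
\]
over $q\in\Delta^{K}$ subject to (i) the support constraint $q(y_f)=0$, (ii) the normalization $\sum_{y\neq f}q(y)=1$, and (iii) the moment constraint $\sum_{y\neq f}q(y)\,s_y=c$. Feasibility follows from the assumption that $c$ lies in the convex hull of $\{s_y\}_{y\neq f}$, so the constraint set is nonempty, convex, and closed; strict convexity of $\mathrm{KL}(\cdot\|p)$ on the relative interior of this set then gives a unique minimizer $q^{\star}$.

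Next I would introduce Lagrange multipliers $\lambda\in\mathbb{R}$ for the normalization and $\beta\in\mathbb{R}$ for the moment constraint, write the Lagrangian
\[
L(q,\lambda,\beta)=\sum_{y\neq f}q(y)\log\tfrac{q(y)}{p(y)}-\lambda\!\Big(\!\sum_{y\neq f}q(y)-1\!\Big)-\beta\!\Big(\!\sum_{y\neq f}q(y)s_y-c\!\Big),
\]
and set $\partial L/\partial q(y)=0$ to obtain $q^{\star}(y)\propto p(y)\exp(\beta s_y)$ for $y\neq f$. Normalizing over $y\neq f$ gives the claimed form, and replacing $p(y)$ by $\tilde p(y)=p(y)/(1-p(y_f))$ leaves the ratio unchanged because the factor $(1-p(y_f))^{-1}$ cancels in the normalization; equivalently, $\mathrm{KL}(q\|p)=\mathrm{KL}(q\|\tilde p)-\log(1-p(y_f))$ on the constraint set, so projecting onto $p$ or onto $\tilde p$ yields the same minimizer~\eqref{equ:tilted}.

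Finally I would establish existence and uniqueness of the dual parameter $\beta$. Define $F(\beta)\,:=\,\mathbb{E}_{q^{\star}(\cdot;\beta)}[s]=\sum_{y\neq f} q^{\star}(y;\beta)\,s_y$. A direct calculation shows $F'(\beta)=\mathrm{Var}_{q^{\star}(\cdot;\beta)}(s)\ge 0$, with strict inequality unless all $s_y$ coincide (the degenerate case, in which the moment constraint is either vacuous or infeasible for $c$ outside the single value). Hence $F$ is continuous and strictly increasing, with $\lim_{\beta\to-\infty}F(\beta)=\min_{y\neq f}s_y$ and $\lim_{\beta\to+\infty}F(\beta)=\max_{y\neq f}s_y$; by the intermediate value theorem, for any $c$ in the (open) convex hull there exists a unique $\beta$ with $F(\beta)=c$, matching the assertion of the proposition.

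The main obstacle is not the Lagrangian derivation itself but the careful treatment of boundary cases: showing that the minimizer lies in the relative interior (so KKT applies without inequality multipliers for $q(y)\ge 0$) and handling endpoints of the convex hull of $S$, where $\beta$ may escape to $\pm\infty$ and $q^{\star}$ concentrates on $\arg\max$ or $\arg\min$ of $s_y$. Both can be addressed by restricting to the relative interior of the constraint polytope and invoking standard exponential-family/maximum-entropy duality, after which the stated uniqueness of $\beta$ holds.
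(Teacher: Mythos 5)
Your proposal is correct and follows essentially the same route as the paper: reduce $\mathrm{KL}(q\|p)$ to $\mathrm{KL}(q\|\tilde p)$ up to the constant $-\log(1-p(y_f))$, apply Lagrange multipliers to obtain the exponential tilt, and show $\beta\mapsto\mathbb{E}_{q_\beta}[s]$ is strictly increasing with the right limits so the intermediate value theorem yields a unique $\beta$. The only difference is that you spend a bit more care on the relative-interior / boundary discussion, which the paper relegates to a remark; the mathematical content is the same.
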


\begin{remark}
If $c=\max s_y$ or $c=\min s_y$, the solution concentrates all mass on the maximizing or minimizing classes (corresponding to $\beta^\star=\pm\infty$).
Besides, adding a constant shift to all $s_y$ does not change $q^\star$. We set $\beta=10$ in all experiments to bias the distribution toward more similar classes (see section~\ref{sec:ablation} for an ablation study on the value of $\beta$). 
\end{remark}

\noindent\textbf{Score function.} For the similarity scores we use cosine similarity of the weight vectors corresponding to the logits of each class (see section~\ref{apx:score} for details). We have evaluated other similarity scores based on the distances in the embedding space derived from the target model, but they do not reach the same performance.We leave designing improved similarity functions that better approximate Retrain model behavior to future work.

\noindent\textbf{Updated loss function.}
 Now that we have an approximate distribution for how the Retrain model behaves on samples from the forget class, we can utilize it in our loss function for fine-tuning the model. More specifically, we fine-tune the model to minimize the following cross-entropy loss on samples from the forget class:
\vspace{-4pt}

\begin{equation}
\mathcal{L}_{\mathrm{forget}}(\mathbf{x}) = -\sum_{y=1}^{K} q^*(y \mid \mathbf{x})
\log p(y \mid \mathbf{x}).
\end{equation}

Therefore, our new objective, Tilted ReWeighting (TRW) loss, can be formulated as:

\begin{equation}
\label{eq:forget-objective}
    \min_{W} 
    \ \underbrace{\sum_{(x,y)\in \mathcal{D}_{\mathrm{retain}}} \big[ -\log p(y\mid x) \big]}_{\text{supervised loss on retained-class samples}}
    \quad + \underbrace{\sum_{x\in \mathcal{D}_{f}} \big[ \mathcal{L}_{\mathrm{forget}}(x) \big]}_{\text{reweight term on forget samples}}
    %+ \frac{\lambda}{2} \|W\|_F^2.
\end{equation}

Note that the tilted reweighting loss is a rather general prescription which could be utilized in many of the prior unlearning methods that perform fine-tuning on the target model. For example, some of the prior methods in unlearning focus on sparsification of the parameters that get updated during fine-tuning~\cite{jia2023model,fan2023salun}. Although we perform a thorough comparison to these methods, we leave further evaluations on the combination of these methods with TRW to future work.

\section{Evaluation Setup}
\label{sec:trw_exp}

% In this section, we detail the experimental setup used to evaluate our proposed unlearning methods, the datasets and model architectures, strong baselines, and provide training configurations for all experiments.

In section~\ref{subsec:trw_datasets}, we elaborate on the datasets and model architectures used in our experiments. In section~\ref{sec:baseline main} the baseline methods are explained, and in section~\ref{subsec:trw_metrics} we provide the details on the evaluations metrics used for comparison.
\vspace{-5pt}

\subsection{Datasets and Models}
\label{subsec:trw_datasets}
% To evaluate the effectiveness of our method, we use three widely used real world image datasets and one dataset with controlled noise and spurious correlations, which allows us to evaluate whether our method can effectively unlearn a target class without being influenced by spurious correlations. 
\noindent\textbf{Datasets.} To evaluate the effectiveness of our method, we use four image datasets: ~\textbf{\textsc{MNIST}}~\cite{deng2012mnist}, \textbf{\textsc{CiFar}-10, \textsc{CiFar}-100}~\cite{krizhevsky2009learning}, and \textbf{\textsc{Tiny-ImageNet}}~\cite{le2015tiny}. For single-class forgetting, we report results averaged over different target classes for each dataset. For \textbf{MNIST} and \textbf{CIFAR-10}, the results are averaged across all 10 classes. For \textbf{CIFAR-100}, we compute the average over unlearning experiments conducted on the following $10$ randomly selected classes: \textit{apple}, \textit{aquarium fish}, \textit{baby}, \textit{bear}, \textit{beaver}, \textit{bed}, \textit{bee}, \textit{beetle}, \textit{bicycle}, and \textit{bottle}. 
We use average results on multiple Retrain models as an ideal reference, and assess unlearning methods by how closely they match its performance.

\noindent\textbf{Models.} For \textsc{MNIST} and \textsc{CIFAR}, we use \textsc{ResNet18}~\cite{he2016deep} and \textsc{VGG19}~\cite{simonyan2014very} as the original model and for \textsc{Tiny-ImageNet} we use the pretrained \textsc{ResNet18}. Full training and hyperparameter configurations are detailed in~\cref{sec:exp setting}.
% . To evaluate the robustness of our unlearning framework to spurious features, we introduce synthetic noise into the ImageNet dataset using an extra index matrix. For each image, we create an extra index matrix. For training set, with probability $p$, the matrix is filled with the index $i$ corresponding to the image's true class label, with probability $(1-p)$, the matrix is filled with a random index $j$ from the set of all class labels (1 to 1,000). For the test set, the noise matrix is filled with the true class index with probability $0.9$ and a random index with probability $0.1$.

\vspace{-5pt}
\subsection{Baseline Methods}
\label{sec:baseline main}

We compare our method against ten of state-of-the-art machine unlearning baselines, including fine-tuning (FT)~\cite{warnecke2021machine}, random labeling (RL)~\cite{golatkar2020eternal}, gradient ascent (GA)~\cite{thudi2022unrolling}, sparse retraining (l1)~\cite{jia2023model}, boundary unlearning (BU)~\cite{chen2023boundary}, saliency-guided unlearning (SalUn)~\cite{fan2023salun}, SVD-based feature suppression (SVD)~\cite{kodge2024deep}, SCRUB~\cite{kurmanji2023towards}, L2UL~\cite{cha2024learning} and SCAR~\cite{bonato2024retain}. The Retrain models, are considered as the gold standard. Detailed descriptions of all baselines and their hyperparameter configurations are detailed in~\cref{sec:baselines} and~\cref{sec:exp setting}, accordingly.
\vspace{-5pt}

\subsection{Evaluation Metrics}
\label{subsec:trw_metrics}
In section~\ref{sec:main results}, following the evaluation metrics used in prior work, we evaluate the unlearned models using three metrics: the accuracy on the remaining set $\text{ACC}_r$, the accuracy on the forgetting set $\text{ACC}_f$, and the Membership Inference Attack (MIA) score~\cite{shokri2017membership}. We applied the MIA score used by~\cite{kodge2024deep}. %(see section~\ref{apx:exp_mia} for a short short description). 

Ideally, the MIA score of the unlearned model is expected to match that of the retraining model. 
To perform a comprehensive comparison of the effectiveness of the unlearning methods, we utilized a SOTA MIA called U-LiRA~\cite{hayes2024inexact} in addition to using the MIAs from prior works. It extends the per-example Likelihood Ratio Attack (LiRA) to the unlearning setting by constructing shadow model distributions that incorporate both the training and unlearning procedures, enabling a more fine-grained, sample-specific membership inference analysis.

%In RMIA, the area under the ROC curve (AUC) of the MIA scores for predicting the training samples from the unseen samples is reported. Recall that in our setting data is split into four sets: training samples from other classes, training samples from the unlearning class, unseen samples from other classes, and the unseen samples from the unlearning class. 

\section{Results}
In this section, we present a comprehensive empirical evaluation of our proposed unlearning methods across multiple datasets, architectures, and various evaluation metrics. We also report the results of a version of TRW that applies to fine-tuning of only $2$ randomly chosen layers of the model, averaged over multiple trials. In section~\ref{sec:main results} we present a thorough comparison using evaluation metrics used in prior unlearning work. In section~\ref{sec:ulira} we present the results using a SOTA MIA method that is stronger than prior MIA methods used in unlearning evaluations and can complement the results of MIA-NN for a more comprehensive evaluation. We also evaluate the computation time of our method in~\cref{sec:time analysis}. The ablation study on the value of $\beta$ is presented in section~\ref{sec:ablation}, and the results for unlearning multiple classes can be found in section~\ref{sec:multi-class}.

\subsection{Comparison to Existing Methods}
\label{sec:main results}

\Cref{tab:main_results} presents the results for single-class forgetting across CIFAR-10 using VGG19 and ResNet18. Similar results for MNIST and CIFAR-100 are shown in section~\ref{sec:mnist_cifar} and evaluations on Tiny-ImageNet dataset for ResNet18 can be found in section~\ref{sec:imagenet}.
Table~\ref{tab:main_results} shows that our method consistently achieves perfect forgetting, with \(\mathrm{ACC}_f = 0\) across all datasets and architectures. Importantly, the retained class accuracy (\(\mathrm{ACC}_r\)) remains competitive with or superior to baseline unlearning methods, indicating minimal interference with the retained knowledge. Additionally, the MIA scores are either comparable to or better than prior work, suggesting that our method does not compromise membership privacy. These results demonstrate that while our approach is resilient to MIA-NN (see Table~\ref{tab:miann_all}), it remains competitive to SOTA unlearning methods in common evaluation metrics used in prior work.
\begin{table}[htbp]
\small
\centering
\resizebox{\textwidth}{!}
{\begin{tabular}{@{}clc c cc c c@{}}
\toprule
\multirow{2}{*}{\textbf{Data}} & \multirow{2}{*}{\textbf{Method}} & \multicolumn{3}{c|}{\textbf{VGG19}~\cite{simonyan2014very}} & \multicolumn{3}{c}{\textbf{ResNet18}~\cite{he2016deep}} \\ 
% \cline{3-8}
\addlinespace
& & $ACC_r$ (\textuparrow) & $ACC_f$ (\textdownarrow) & $MIA$ (\textuparrow) & $ACC_r$ (\textuparrow) & $ACC_f$ (\textdownarrow) & $MIA$ (\textuparrow) \\
% \hline
\midrule
\multirow{8}{*}{\rotatebox{90}{\textsc{CIFAR}-10}} 
    & Original &92.68 $\pm$ 0.05 &92.14$\pm$ 6.80  &0  & 94.74 $\pm$ 0.09 & 94.42 $\pm$ 5.45 & 0.02 $\pm$ 0.02\\
    & Retrain &93.45 $\pm$ 0.10  &0  & 100 $\pm$ 0& 94.83 $\pm$ 0.13 & 0 & 100 $\pm$ 0 \\
    & FT~\cite{warnecke2021machine} &89.33 $\pm$ 1.86  &0 & 96.94  $\pm$ 0.91 &85.60  $\pm$ 2.35& 0 & 96.53 $\pm$ 1.16\\
    & RL~\cite{golatkar2020eternal} &85.57 $\pm$ 1.29 &0 &94.07 $\pm$ 0.90 &84.74 $\pm$ 4.25 & 0& 94.99 $\pm$ 1.82 \\
    & GA~\cite{thudi2022unrolling} &87.26 $\pm$ 0.30 &14.4 $\pm$ 1.59 &97.10 $\pm$ 0.87 & 90.25 $\pm$ 0.28 &14.12 $\pm$ 2.17 &96.70 $\pm$ 0.10\\
    & l1~\cite{jia2023model} &90.12 $\pm$ 2.18  &0.3 $\pm$ 0.03 &92.61 $\pm$ 7.23 &93.21 $\pm$ 0.13 &0.9 $\pm$ 0.05 & 100 $\pm$ 0 \\
    & BU~\cite{chen2023boundary} &87.32$\pm$ 3.08 &0 &85.94 $\pm$ 3.71 &87.68 $\pm$ 2.23&0 &85.91 $\pm$ 3.97\\
    & SalUn~\cite{foster2024fast} &89.76 $\pm$ 1.00 &0 &97.35 $\pm$ 0.65&92.11 $\pm$ 0.65& 0 & 96.33 $\pm$ 2.37\\
    & SVD~\cite{kodge2024deep} &91.34 $\pm$ 0.45 &0 &98.10 $\pm$ 1.90 &94.17 $\pm$ 0.57 &0 & 97.20 $\pm$ 3.77\\
    & SCRUB~\cite{kurmanji2023towards} &89.84 $\pm$ 1.16 &0 &84.28 $\pm$ 1.91 & 91.07 $\pm$ 0.79& 0&  85.01 $\pm$ 1.02\\
    & SCAR~\cite{bonato2024retain} &92.59 $\pm$ 1.80 &0 &98.06 $\pm$ 2.11 &93.57 $\pm$ 0.03 &0 &95.87 $\pm$ 3.58 \\
    & l2ul~\cite{cha2024learning} &89.15 $\pm$ 0.75 & 0 &96.82 $\pm$ 1.45 &87.86 $\pm$ 1.79 &0 & 94.62 $\pm$ 0.15\\
    & \textbf{TRW} &91.58 $\pm$ 0.23 &0&99.26 $\pm$ 0.74 &94.28 $\pm$ 0.47 &0&97.65 $\pm$ 2.06\\
    % & \textbf{RWFT-2-O} & \textbf{92.30 $\pm$ 0.78}&\textbf{0}&\textbf{99.35 $\pm$ 0.65}&94.10 $\pm$ 1.03 &0&96.80 $\pm$ 2.80\\
    % & \textbf{RW-A} &{92.55 $\pm$ 0.10}  &{0}  &{98.16 $\pm$ 1.95} &\textbf{94.61 $\pm$ 1.25} &\textbf{0} & \textbf{97.45 $\pm$ 0.31}  \\
    & \textbf{TRW-2R} & 91.91 $\pm$ 0.63 &0& 99.52 $\pm$ 0.74&94.39 $\pm$ 2.05 &0.32 $\pm$ 0.32&96.35 $\pm$ 0.65\\
% \hline
\addlinespace
\bottomrule
\end{tabular}}
\vspace{2mm}
\caption{%\textbf{Single-class forgetting.} 
TRW and TRW-2R achieve competitive or superior results, on common unlearning evaluation metrics, when compared to other baselines.  
}
\label{tab:main_results}
\end{table}
%\vspace{-4mm}  

\subsection{Stronger MIA evaluation}
\label{sec:ulira}

We evaluate our method under a stronger MIA using the U-LiRA framework~\cite{hayes2024inexact}, designed to assess whether unlearned models retain residual information about the forgotten class. U-LiRA simulates an adversary with access to shadow models to perform inference attacks on the forget samples. 

Following the U-LiRA protocol, we train three shadow ResNet18 models on CIFAR-10. Each is unlearned with consistent hyperparameters to generate shadow unlearned models. Separately, we retrain three additional models with the forget class excluded to serve as shadow Retrain models. The attacker is trained to distinguish whether a given prediction comes from an unlearned or Retrain model based on class-conditional statistics, and the learned decision boundary is applied in a leave-one-out manner. A perfect unlearning method should yield $50\%$ accuracy—indicating the accuracy of a random classifier. Based on the results reported in Table~\ref{tab:ulira-results},
our methods achieve the lowest U-LiRA accuracy, demonstrating strong resilience to adaptive attacks.

\begin{table}[htbp]
\centering
\resizebox{\textwidth}{!}{\begin{tabular}{l|c c c c c c c c c c c c }
\toprule
Metric &TRW &TRW-2R & {FT} & {RL} & {GA} & {SalUn} & {SVD} & {l1} & {BU} & {SCRUB} & {SCAR} & {l2ul} \\
\midrule
U-LiRA Accuracy (\%) & 71.12 &\textbf{67.72} &72.57 & 96.79& 84.47 &97.50  & 79.30 &98.32  & 81.08 & 71.91 & 92.38  & 85.67\\
\bottomrule
\end{tabular}}
\vspace{5pt}
\caption{U-LiRA Membership Inference Attack accuracy on CIFAR10 with ResNet18. Lower is better ($50\%$ indicates ideal unlearning). TRW-2R and TRW achieve the best performance.}
\label{tab:ulira-results}
\end{table}
%\vspace{-5mm}

\section{Limitations}
Although we demonstrate the effectiveness of our approach through various empirical studies, the evidence presented is entirely empirical. Further theoretical work could provide deeper insights into the effectiveness of the tilted reweighting objective and the design of appropriate scoring functions to better approximate the distribution of retrained models. The scoring function used in this work is sample-independent; evaluating finer-grained scores that account for individual samples could be an interesting direction for future research. Moreover, adding higher-order constraints could be studied for capturing the systematic bias introduced in the retrained models. While MIA-NN reveals shortcomings in prior work, it remains heuristic. But we believe it can serve as a great complement to other membership attacks and evaluation metrics for future work in class unlearning. Further improvement in this method could be another interesting avenue for exploration.

\newpage
\chapter{Future Directions}
\label{sec:LLM-unlearning}

Large Language Models (LLMs) increasingly rely on large-scale corpora, often scraped from the internet and blending high-utility public information with sensitive or proprietary material. The societal and regulatory need to support the ``right to be forgotten'' has made \emph{machine unlearning} a central problem: removing the influence of specific training data without retraining entire models from scratch. Despite recent progress, existing approaches suffer from three major limitations. First, forgetting is inefficient: naively retraining or attempting to scrub a large set of text is computationally prohibitive and increases the risk of hurting model's performance. Second, unlearning a whole \emph{concept} (e.g., Harry Potter), rather than a given forget set (Harry Potter books), from generative models pose new challenges, as concepts are distributed across high-dimensional generative pathways. Third, theoretical guarantees are scarce, leaving unlearning procedures without clear assurances of how close they come to true retraining. This proposal integrates advances in multiple domains across machine learning to achieve the following objectives: (i) develop PN/PS-guided methods for identifying core-sets of causally decisive forget samples, (ii) designing unlearning methods for removing a whole concept from generative models, and (iii) characterizing the geometry of the target concept to ensure unlearning without harm to knowledge of the model about other concepts.

These subproblems can be completed and evaluated separately. For example, for core-set construction, any forget set can be used as input, and the derived subset could be used in existing unlearning method to verify an improvement in the quality of unlearning various concepts. Therefore, we plan to complete these projects in the given order. In the following, we will provide more details on the methodology that we will be using for each of these directions. We also make some connections to the existing literature to motivate the proposed methods.

\section{Proposed Method}

In this section we elaborate on the proposed methods for each thrust. First thrust investigates the construction of a core-set in an efficient way. The second one utilizes the solution to the first thrust to ease a given \emph{concept}. Finally, we investigate the characterization of the geometry of the concept in order to better understand the effects of unlearning and how to preserve the model's utility on overlapping concepts.

%\subsection{Proposed Research}

\vspace{1mm}\noindent{\bf Task A: Causal Core-Set Construction.} We pose the following research question: {\em Given a set of samples to forget ($D_F$) how can we fine a core set $C \subseteq D_F$ that, when used for unlearning, suppresses knowledge of $D_F$?} For each sample $x \in D_F$, we localize causally relevant model activation $Z_x$ using attribution patching methods. By performing \emph{activation patching} ($\doop(Z_x{=}1)$) and \emph{ablation} ($\doop(Z_x{=}0)$), we estimate probabilities of necessity and sufficiency: 
\begin{align*}
\PN(x)\triangleq
\PP \big(Y(Z_x{=}0){=}0 \,\big|\, Z_x{=}1,\, Y{=}1\big),\quad
\PS(x)\triangleq
\PP \big(Y(Z_x{=}1){=}1 \,\big|\, Z_x{=}0,\, Y{=}0\big).
\end{align*}

By building a set of prompts using $D_F$ (e.g., paraphrases, question-answer pairs, etc.), and aggregating PN/PS across this set, we derive scores for each forget sample. We then formulate selection as maximizing a submodular coverage function $F(C)$ under a budget constraint, which admits a greedy algorithm with $(1-1/e)$ approximation. This ensures that forgetting focuses on causally decisive samples, improving efficiency and reducing side effects on unrelated tasks.

\vspace{1mm}\noindent{\bf Task B: Concept Unlearning.}
We pose the following research question: {\em Given a general concept (e.g., Harry Potter) and a seed forget set (e.g., passages from the Harry Potter books), how can we effectively remove the knowledge about that concept?} 
Given a finite region \( \mathcal{U} \) in the embedding space of the model corresponding to the target concept, we are interested in finding a family of subsets \( \mathcal{S}=\{S_1,\dots,S_n\}\) with \(S_i\subseteq\mathcal{U}\) containing embeddings corresponding to some possible samples, such that \( \mathcal{U} \subseteq  \bigcup_{S\in\mathcal{S}} S\). Then, the problem reduces to finding
a \emph{cover-set} \( \mathcal{C}\subseteq\mathcal{S} \), such that:
$\bigcup_{S\in\mathcal{C}} S \;=\; \mathcal{U}$. The goal is to find a \emph{minimal cover-set} which can be given to unlearning methods as input to erase the concept.

Our approach leverages \emph{causal activation tracing} to identify the set of neurons and layers most responsible for generating concept-relevant outputs. Starting from a seed forget set, we generate a wide distribution of synthetic examples by prompting the model with randomly selected partial prompts and inciting the concept-related activations. This bootstrapped corpus expands coverage of the target concept beyond the initial seed while maintaining causal relevance. To constrain the size of this synthetic corpus, we introduce filtering mechanisms such as (i) embedding distance thresholds from the seed forget set, ensuring only semantically proximate samples are retained, and (ii) perplexity-based screening to remove low-quality generations. Once this expanded corpus is constructed, we apply our PN/PS-guided core-set algorithm to select a minimal, causally decisive subset of forget examples. This ensures efficient unlearning while capturing the diverse contexts in which the concept manifests. %Evaluation will involve prompt-based probes (e.g., masked completion tasks) and membership inference attacks to confirm erasure of the target concept without degrading unrelated capabilities.

\vspace{1mm}\noindent{\bf Task C: Structural Characterization of Concepts.} Once the quality of the computed cover-sets are established, we proceed to studying the geometry of the constituting contents in the representation space. A better understanding of how a concept is presented in embedding space and how it can be distinguished from the representation of model's knowledge from different but related concepts, could lead to a more effective removal of knowledge about arbitrary concepts. Prior work has suggested the idea of removing the components of layer-wise transformation that align with other concepts to prevent the model from unlearning concepts unrelated to the forgetting concept~\cite{lizzo2024unlearn}. Orthogonalizing the additional low-rank transformations (e.g., LoRA layers) to facilitate handling multiple unlearning requests has been also proposed by prior work~\cite{gaolarge}. We believe the tools that we have developed for modifying the spectrum of affine layers and orthogonalizing them with respect to one another (see section~\ref{sec:intro}) will become effective in achieving precise manipulation of model's knowledge about the unlearning concepts.

\section{Related Work}

Here, we explore prior work on each of the discussed steps in our proposed method to utilize the state-of-the-art for each task and build on them to design our approach.

\subsection{Attribution Mapping}

Some prior work consider analyzing the attiribution neuron activations to various responses or facts which can help to changing the responses or negate the learned facts. \cite{krause2020gedi} GeDi pre-trains small class-conditional language models that act as token-level discriminators. At each decoding step, Bayes-rule re-weighting of the base model’s logits by the discriminator’s probabilities effectively excites activations aligned with the target attribute and suppresses the opposite class. The result is fast, attribute-controlled generation (e.g., positive vs. negative sentiment) without modifying the main model’s parameters.~\cite{meng2022locating} introduces causal tracing to pinpoint the minimal sequence of hidden-state locations (mostly mid-layer MLP neurons) that mediate the recall of a specific fact (e.g., “The capital of Italy is Rome”). By ablating or patching each layer while keeping the rest intact, the authors isolate the layers whose activations causally drive the correct prediction; those activations are then edited with a rank-one update (ROME) to overwrite or correct the fact without degrading unrelated knowledge. 

\cite{dai2021knowledge} propose a knowledge-attribution metric that attributes a factual answer to individual feed-forward neurons. For each fact, the method measures how much muting a neuron lowers the log-probability of the correct token, thereby surfacing a sparse set of “knowledge neurons.” Activating or damping these neurons can inject, erase, or flip the stored fact, showing that neuron-level paths encapsulate distinct knowledge items.
\cite{zhang2023towards} systematically evaluate activation patching—copying hidden states from a “clean” run into a “corrupted” run—to localize where in the network a property (syntax, world fact, sentiment) is processed. The paper analyses evaluation metrics and corruption methods, demonstrating that careful patch design identifies a coherent activation path spanning specific heads and MLP blocks, and that these paths differ across tasks. \cite{ortu2024competition} extend causal-tracing to compare factual vs. counter-factual prompts, revealing a two-phase path: early MLPs enrich subject tokens with attributes, while later attention heads route that information to the final position. By analyzing divergence points between factual and counterfactual activations, the authors map a contiguous, interpretable neuron path for factual retrieval.
\cite{stolfo2023mechanistic} use causal mediation analysis to decompose the model’s answer into contributions from individual layers and neurons when solving arithmetic word problems. Mediation reveals a chain of activations—starting with numeral recognition, passing through intermediate symbolic computation neurons, and culminating in answer synthesis—thereby identifying the neuron-level path that implements multi-step reasoning.

%(attribution patching neel nanda) Presents attribution patching, a gradient-efficient variant of activation patching that computes the causal effect of every neuron-layer pair in only three passes. This scalability enables full-model heat-maps of “where a topic lives” and uncovers long, multi-layer paths that jointly transmit topic-specific information—facilitating rapid path discovery in large production-scale LLMs.

\subsection{Topic-wise Attribution}

Some prior work extend this attributions for finding relevant paths to a topic, style, or task. \cite{dathathri2019plug} introduce a new method thatcouples a frozen generative model (e.g., GPT-2) with a lightweight attribute classifier. During generation, gradients from the classifier are back-propagated into the current hidden state; a small, iterative update nudges activations toward directions that increase the classifier’s score while preserving fluency. Because only the forward/backward pass is used—no fine-tuning—the method can “excite” topic, sentiment, or stylistic neurons on the fly and produce controlled text without retraining.~\cite{turner2023activation} show that many control signals can be expressed as fixed activation vectors obtained from contrasting two prompts. Simply adding the difference vector to the hidden state at a chosen layer and position produces consistent steering (e.g., turning a neutral continuation into a Shakespearean one). The approach needs no gradient steps, runs in one forward pass, and demonstrates that selective excitation of latent directions suffices for reliable style or content transfer.
\cite{qiu2024spectral} propose a method that decomposes hidden-state matrices with singular-value analysis to isolate spectral components tied to truthfulness or bias. By amplifying or dampening those components at inference time, the model self-corrects hallucinations or reduces biased language, indicating that specific singular vectors correspond to semantically coherent neuron subspaces whose targeted excitation or suppression modulates generated content. 

The method introduced by~\cite{soo2025steering} build on Contrastive Activation Addition and sparse-autoencoder probes to first learn a dictionary of interpretable “features.” It then selects the subset whose linear combination best matches a user prompt and injects that composite vector once into the transformer stack. Compared with raw activation addition, the feature basis delivers finer topical control and better grammatical coherence, illustrating that steering at the feature rather than individual-neuron level yields higher-quality text. 
\cite{chen2025knowledge} introduce a method that identifies sparse-autoencoder “features” responsible for a sensitive fact, then subtracts or attenuates only those feature activations across layers, erasing the fact while leaving unrelated knowledge intact. The paper demonstrates higher delete-success and lower collateral damage than neuron-level edits, reinforcing the view that carefully targeted excitation (or suppression) of semantically unified activation clusters affords precise control over generated content.
The method introduced by\cite{meng2022mass} extends ROME~\cite{meng2022locating} to thousands of facts simultaneously. It clusters subject–relation keys that share nearby activation directions and applies batched low-rank edits, effectively dampening or deleting large topic blocks (e.g., entire biographical sections) with minimal perplexity increase. The paper shows that large-scale suppression is feasible when the responsible neuron paths are identified and edited in bulk.

\subsection{Geometry of Concepts}

Prior work has studied the linear representations in LLMs~\cite{wang2023concept,jiang2024origins}. Recent formulations of the linear-representation hypothesis established that large language models (LLMs) encode certain binary, contrastive concepts, such as male $\iff$ female, as specific directions in representation space~\cite{park2023linear}. Those formulations rely on counterfactual word pairs (e.g., “king” versus “queen”). More recent studies~\cite{park2024geometry} extend this geometrical study of the representations to concepts lacking a natural opposite (e.g., is animal) or to multi-way categories. The authors show that non-contrastive features are represented not merely as a direction but as a vector with fixed magnitude and extend this finding to show that a categorical concept corresponds to the convex polytope—typically a simplex—spanned by its member vectors. Answering this broader question that how general concepts are geometrically realized within LLM vector spaces, will lead to easier manipulation of the model's knowledge about each concept.

\subsection{Applications to Unlearning}

These attribution methods have already been used by some prior work to directly unlearn a topic or suppress harmful content. 
\cite{yu2023unlearning} aimed at demographic bias, this paper partitions training gradients into bias-carrying and bias-neutral subspaces. By projecting away the bias gradients, it reduces the corresponding neuron activations at inference time, which in turn suppresses biased topical content (e.g., gender stereotypes) without extensive retraining or loss in standard benchmarks.
\cite{guo2024mechanistic} combined circuit-finding tools with iterative unlearning: after tracing the attention/MLP circuit that carries a capability (e.g., instructions for illicit behaviour) it repeatedly masks and fine-tunes only that circuit until the unwanted generations disappear. The approach yields stronger and more persistent suppression than gradient-based editing, highlighting the benefit of first mapping and then weakening the activation path.

\cite{lizzo2024unlearn} introduce UNLEARN, a parameter-efficient framework for selectively erasing or injecting knowledge in large language models without costly full retraining. UNLEARN first performs subspace identification by sequentially training low-rank “task matrices” layer-by-layer to capture the weight subspace that drives a target task; it then applies a modified Gram–Schmidt subspace discrimination to orthogonalize that subspace against those of semantically similar tasks, and finally subtracts the purified matrix from the frozen base weights. Overall, the study demonstrates that low-rank, discriminated subspace edits provide an efficient, scalable route to both knowledge removal and targeted augmentation in state-of-the-art LLMs.~\cite{gaolarge} proposed a framework for continual unlearning in large language models that eliminates the need for a retained (clean) dataset. They use an Orthogonal Low-Rank Adapter (LoRA), whose orthogonality regularizer disentangles successive unlearning requests so they do not interfere with one another. They also use an out-of-distribution detector trained by a contrastive entropy loss so that at inference time they can assign a soft weight that determines how much of the unlearning LoRA to load to derive a balance and forgetting with utility.

%\subsection{Structural Unlearning}

\clearpage

\backmatter

\printbibliography[heading=bibintoc,title={References}]

@article{bartlett2017spectrally,
  title={Spectrally-normalized margin bounds for neural networks},
  author={Bartlett, Peter L and Foster, Dylan J and Telgarsky, Matus J},
  journal={Advances in neural information processing systems},
  volume={30},
  year={2017}
}

@inproceedings{dwork2006differential,
  title={Differential privacy},
  author={Dwork, Cynthia},
  booktitle={International colloquium on automata, languages, and programming},
  pages={1--12},
  year={2006},
  organization={Springer}
}

@inproceedings{abadi2016deep,
  title={Deep learning with differential privacy},
  author={Abadi, Martin and Chu, Andy and Goodfellow, Ian and McMahan, H Brendan and Mironov, Ilya and Talwar, Kunal and Zhang, Li},
  booktitle={Proceedings of the 2016 ACM SIGSAC conference on computer and communications security},
  pages={308--318},
  year={2016}
}

@inproceedings{he2016deep,
  title={Deep residual learning for image recognition},
  author={He, Kaiming and Zhang, Xiangyu and Ren, Shaoqing and Sun, Jian},
  booktitle={Proceedings of the IEEE conference on computer vision and pattern recognition},
  pages={770--778},
  year={2016}
}

@article{alex2009learning,
  title={Learning multiple layers of features from tiny images},
  author={Alex, Krizhevsky},
  journal={https://www. cs. toronto. edu/kriz/learning-features-2009-TR. pdf},
  year={2009}
}

@article{liang2023towards,
  title={Towards desirable decision boundary by Moderate-Margin Adversarial Training},
  author={Liang, Xiaoyu and Qian, Yaguan and Huang, Jianchang and Ling, Xiang and Wang, Bin and Wu, Chunming and Swaileh, Wassim},
  journal={Pattern Recognition Letters},
  volume={173},
  pages={30--37},
  year={2023},
  publisher={Elsevier}
}

@article{zhang2024negative,
  title={Negative preference optimization: From catastrophic collapse to effective unlearning},
  author={Zhang, Ruiqi and Lin, Licong and Bai, Yu and Mei, Song},
  journal={arXiv preprint arXiv:2404.05868},
  year={2024}
}

@article{liu2016delving,
  title={Delving into transferable adversarial examples and black-box attacks},
  author={Liu, Yanpei and Chen, Xinyun and Liu, Chang and Song, Dawn},
  journal={arXiv preprint arXiv:1611.02770},
  year={2016}
}

@article{papernot2016transferability,
  title={Transferability in machine learning: from phenomena to black-box attacks using adversarial samples},
  author={Papernot, Nicolas and McDaniel, Patrick and Goodfellow, Ian},
  journal={arXiv preprint arXiv:1605.07277},
  year={2016}
}

@article{chen2023rethinking,
  title={Rethinking model ensemble in transfer-based adversarial attacks},
  author={Chen, Huanran and Zhang, Yichi and Dong, Yinpeng and Yang, Xiao and Su, Hang and Zhu, Jun},
  journal={arXiv preprint arXiv:2303.09105},
  year={2023}
}

@article{zhao2021success,
  title={On success and simplicity: A second look at transferable targeted attacks},
  author={Zhao, Zhengyu and Liu, Zhuoran and Larson, Martha},
  journal={Advances in Neural Information Processing Systems},
  volume={34},
  pages={6115--6128},
  year={2021}
}

@inproceedings{zhang2022improving,
  title={Improving adversarial transferability via neuron attribution-based attacks},
  author={Zhang, Jianping and Wu, Weibin and Huang, Jen-tse and Huang, Yizhan and Wang, Wenxuan and Su, Yuxin and Lyu, Michael R},
  booktitle={Proceedings of the IEEE/CVF Conference on Computer Vision and Pattern Recognition},
  pages={14993--15002},
  year={2022}
}

@article{zeng2021adversarial,
  title={Adversarial unlearning of backdoors via implicit hypergradient},
  author={Zeng, Yi and Chen, Si and Park, Won and Mao, Z Morley and Jin, Ming and Jia, Ruoxi},
  journal={arXiv preprint arXiv:2110.03735},
  year={2021}
}

@article{setlur2022adversarial,
  title={Adversarial unlearning: Reducing confidence along adversarial directions},
  author={Setlur, Amrith and Eysenbach, Benjamin and Smith, Virginia and Levine, Sergey},
  journal={Advances in Neural Information Processing Systems},
  volume={35},
  pages={18556--18570},
  year={2022}
}

@article{zhang2024defensive,
  title={Defensive Unlearning with Adversarial Training for Robust Concept Erasure in Diffusion Models},
  author={Zhang, Yimeng and Chen, Xin and Jia, Jinghan and Zhang, Yihua and Fan, Chongyu and Liu, Jiancheng and Hong, Mingyi and Ding, Ke and Liu, Sijia},
  journal={arXiv preprint arXiv:2405.15234},
  year={2024}
}

@article{jung2024attack,
  title={Attack and Reset for Unlearning: Exploiting Adversarial Noise toward Machine Unlearning through Parameter Re-initialization},
  author={Jung, Yoonhwa and Cho, Ikhyun and Hsu, Shun-Hsiang and Hockenmaier, Julia},
  journal={arXiv preprint arXiv:2401.08998},
  year={2024}
}

@article{lucki2024adversarial,
  title={An adversarial perspective on machine unlearning for ai safety},
  author={{\L}ucki, Jakub and Wei, Boyi and Huang, Yangsibo and Henderson, Peter and Tram{\`e}r, Florian and Rando, Javier},
  journal={arXiv preprint arXiv:2409.18025},
  year={2024}
}

@article{di2024adversarial,
  title={Adversarial Machine Unlearning},
  author={Di, Zonglin and Yu, Sixie and Vorobeychik, Yevgeniy and Liu, Yang},
  journal={arXiv preprint arXiv:2406.07687},
  year={2024}
}

@article{virmaux2018lipschitz,
  title={Lipschitz regularity of deep neural networks: analysis and efficient estimation},
  author={Virmaux, Aladin and Scaman, Kevin},
  journal={Advances in Neural Information Processing Systems},
  volume={31},
  year={2018}
}

@inproceedings{carlini2022membership,
  title={Membership inference attacks from first principles},
  author={Carlini, Nicholas and Chien, Steve and Nasr, Milad and Song, Shuang and Terzis, Andreas and Tramer, Florian},
  booktitle={2022 IEEE Symposium on Security and Privacy (SP)},
  pages={1897--1914},
  year={2022},
  organization={IEEE}
}

@article{guo2019certified,
  title={Certified data removal from machine learning models},
  author={Guo, Chuan and Goldstein, Tom and Hannun, Awni and Van Der Maaten, Laurens},
  journal={arXiv preprint arXiv:1911.03030},
  year={2019}
}

@article{vatter2023evolution,
  title={The evolution of distributed systems for graph neural networks and their origin in graph processing and deep learning: A survey},
  author={Vatter, Jana and Mayer, Ruben and Jacobsen, Hans-Arno},
  journal={ACM Computing Surveys},
  volume={56},
  number={1},
  pages={1--37},
  year={2023},
  publisher={ACM New York, NY, USA}
}

@inproceedings{zarifzadeh2024low,
  title={Low-Cost High-Power Membership Inference Attacks},
  author={Zarifzadeh, Sajjad and Liu, Philippe and Shokri, Reza},
  booktitle={Forty-first International Conference on Machine Learning},
  year={2024}
}

@inproceedings{shokri2017membership,
  title={Membership inference attacks against machine learning models},
  author={Shokri, Reza and Stronati, Marco and Song, Congzheng and Shmatikov, Vitaly},
  booktitle={2017 IEEE symposium on security and privacy (SP)},
  pages={3--18},
  year={2017},
  organization={IEEE}
}

@article{hu2022membership,
  title={Membership inference attacks on machine learning: A survey},
  author={Hu, Hongsheng and Salcic, Zoran and Sun, Lichao and Dobbie, Gillian and Yu, Philip S and Zhang, Xuyun},
  journal={ACM Computing Surveys (CSUR)},
  volume={54},
  number={11s},
  pages={1--37},
  year={2022},
  publisher={ACM New York, NY}
}

@inproceedings{fan2023salun,
  title={SalUn: Empowering Machine Unlearning via Gradient-Based Weight Saliency in Both Image Classification and Generation},
  author={Fan, Chongyu and Liu, Jiancheng and Zhang, Yihua and Wei, Dennis and Wong, Eric and Liu, Sijia},
  booktitle={International Conference on Learning Representations},
  year={2024}
}

@article{liu2024model,
  title={Model sparsity can simplify machine unlearning},
  author={Liu, Jiancheng and Ram, Parikshit and Yao, Yuguang and Liu, Gaowen and Liu, Yang and SHARMA, PRANAY and Liu, Sijia and others},
  journal={Advances in Neural Information Processing Systems},
  volume={36},
  year={2024}
}

@article{warnecke2021machine,
  title={Machine unlearning of features and labels},
  author={Warnecke, Alexander and Pirch, Lukas and Wressnegger, Christian and Rieck, Konrad},
  journal={arXiv preprint arXiv:2108.11577},
  year={2021}
}

@inproceedings{golatkar2020eternal,
  title={Eternal sunshine of the spotless net: Selective forgetting in deep networks},
  author={Golatkar, Aditya and Achille, Alessandro and Soatto, Stefano},
  booktitle={Proceedings of the IEEE/CVF Conference on Computer Vision and Pattern Recognition},
  pages={9304--9312},
  year={2020}
}

@inproceedings{thudi2022unrolling,
  title={Unrolling sgd: Understanding factors influencing machine unlearning},
  author={Thudi, Anvith and Deza, Gabriel and Chandrasekaran, Varun and Papernot, Nicolas},
  booktitle={2022 IEEE 7th European Symposium on Security and Privacy (EuroS\&P)},
  pages={303--319},
  year={2022},
  organization={IEEE}
}

@inproceedings{chen2023boundary,
  title={Boundary unlearning: Rapid forgetting of deep networks via shifting the decision boundary},
  author={Chen, Min and Gao, Weizhuo and Liu, Gaoyang and Peng, Kai and Wang, Chen},
  booktitle={Proceedings of the IEEE/CVF Conference on Computer Vision and Pattern Recognition},
  pages={7766--7775},
  year={2023}
}

@inproceedings{bourtoule2021machine,
  title={Machine unlearning},
  author={Bourtoule, Lucas and Chandrasekaran, Varun and Choquette-Choo, Christopher A and Jia, Hengrui and Travers, Adelin and Zhang, Baiwu and Lie, David and Papernot, Nicolas},
  booktitle={2021 IEEE Symposium on Security and Privacy (SP)},
  pages={141--159},
  year={2021},
  organization={IEEE}
}

@inproceedings{cao2015towards,
  title={Towards making systems forget with machine unlearning},
  author={Cao, Yinzhi and Yang, Junfeng},
  booktitle={2015 IEEE symposium on security and privacy},
  pages={463--480},
  year={2015},
  organization={IEEE}
}

@inproceedings{neel2021descent,
  title={Descent-to-delete: Gradient-based methods for machine unlearning},
  author={Neel, Seth and Roth, Aaron and Sharifi-Malvajerdi, Saeed},
  booktitle={Algorithmic Learning Theory},
  pages={931--962},
  year={2021},
  organization={PMLR}
}

@inproceedings{ullah2021machine,
  title={Machine unlearning via algorithmic stability},
  author={Ullah, Enayat and Mai, Tung and Rao, Anup and Rossi, Ryan A and Arora, Raman},
  booktitle={Conference on Learning Theory},
  pages={4126--4142},
  year={2021},
  organization={PMLR}
}

@article{sekhari2021remember,
  title={Remember what you want to forget: Algorithms for machine unlearning},
  author={Sekhari, Ayush and Acharya, Jayadev and Kamath, Gautam and Suresh, Ananda Theertha},
  journal={Advances in Neural Information Processing Systems},
  volume={34},
  pages={18075--18086},
  year={2021}
}

@article{gupta2021adaptive,
  title={Adaptive machine unlearning},
  author={Gupta, Varun and Jung, Christopher and Neel, Seth and Roth, Aaron and Sharifi-Malvajerdi, Saeed and Waites, Chris},
  journal={Advances in Neural Information Processing Systems},
  volume={34},
  pages={16319--16330},
  year={2021}
}

@inproceedings{izzo2021approximate,
  title={Approximate data deletion from machine learning models},
  author={Izzo, Zachary and Smart, Mary Anne and Chaudhuri, Kamalika and Zou, James},
  booktitle={International Conference on Artificial Intelligence and Statistics},
  pages={2008--2016},
  year={2021},
  organization={PMLR}
}

@inproceedings{chien2022efficient,
  title={Efficient model updates for approximate unlearning of graph-structured data},
  author={Chien, Eli and Pan, Chao and Milenkovic, Olgica},
  booktitle={The Eleventh International Conference on Learning Representations},
  year={2022}
}

@inproceedings{chen2022graph,
  title={Graph unlearning},
  author={Chen, Min and Zhang, Zhikun and Wang, Tianhao and Backes, Michael and Humbert, Mathias and Zhang, Yang},
  booktitle={Proceedings of the 2022 ACM SIGSAC conference on computer and communications security},
  pages={499--513},
  year={2022}
}

@inproceedings{chen2022recommendation,
  title={Recommendation unlearning},
  author={Chen, Chong and Sun, Fei and Zhang, Min and Ding, Bolin},
  booktitle={Proceedings of the ACM Web Conference 2022},
  pages={2768--2777},
  year={2022}
}

@inproceedings{song2019privacy,
  title={Privacy risks of securing machine learning models against adversarial examples},
  author={Song, Liwei and Shokri, Reza and Mittal, Prateek},
  booktitle={Proceedings of the 2019 ACM SIGSAC conference on computer and communications security},
  pages={241--257},
  year={2019}
}

@inproceedings{yeom2018privacy,
  title={Privacy risk in machine learning: Analyzing the connection to overfitting},
  author={Yeom, Samuel and Giacomelli, Irene and Fredrikson, Matt and Jha, Somesh},
  booktitle={2018 IEEE 31st computer security foundations symposium (CSF)},
  pages={268--282},
  year={2018},
  organization={IEEE}
}

@article{szegedy2013intriguing,
  title={Intriguing properties of neural networks},
  author={Szegedy, C},
  journal={arXiv preprint arXiv:1312.6199},
  year={2013}
}

@article{goodfellow2014explaining,
  title={Explaining and harnessing adversarial examples},
  author={Goodfellow, Ian J and Shlens, Jonathon and Szegedy, Christian},
  journal={arXiv preprint arXiv:1412.6572},
  year={2014}
}

@article{madry2017towards,
  title={Towards deep learning models resistant to adversarial attacks},
  author={Madry, Aleksander},
  journal={arXiv preprint arXiv:1706.06083},
  year={2017}
}

@article{wong2020fast,
  title={Fast is better than free: Revisiting adversarial training},
  author={Wong, Eric and Rice, Leslie and Kolter, J Zico},
  journal={arXiv preprint arXiv:2001.03994},
  year={2020}
}

@inproceedings{cohen2019certified,
  title={Certified adversarial robustness via randomized smoothing},
  author={Cohen, Jeremy and Rosenfeld, Elan and Kolter, Zico},
  booktitle={international conference on machine learning},
  pages={1310--1320},
  year={2019},
  organization={PMLR}
}

@article{salman2019provably,
  title={Provably robust deep learning via adversarially trained smoothed classifiers},
  author={Salman, Hadi and Li, Jerry and Razenshteyn, Ilya and Zhang, Pengchuan and Zhang, Huan and Bubeck, Sebastien and Yang, Greg},
  journal={Advances in neural information processing systems},
  volume={32},
  year={2019}
}

@inproceedings{boroojeny2024spectrum,
  title={Spectrum Extraction and Clipping for Implicitly Linear Layers},
  author={Boroojeny, Ali Ebrahimpour and Telgarsky, Matus and Sundaram, Hari},
  booktitle={International Conference on Artificial Intelligence and Statistics},
  pages={2971--2979},
  year={2024},
  organization={PMLR}
}

@article{ebrahimpour2024lotos,
  title={LOTOS: Layer-wise Orthogonalization for Training Robust Ensembles},
  author={Ebrahimpour-Boroojeny, Ali and Sundaram, Hari and Chandrasekaran, Varun},
  journal={arXiv preprint arXiv:2410.05136},
  year={2024}
}

@inproceedings{papernot2017practical,
  title={Practical black-box attacks against machine learning},
  author={Papernot, Nicolas and McDaniel, Patrick and Goodfellow, Ian and Jha, Somesh and Celik, Z Berkay and Swami, Ananthram},
  booktitle={Proceedings of the 2017 ACM on Asia conference on computer and communications security},
  pages={506--519},
  year={2017}
}

@article{xiao2018generating,
  title={Generating adversarial examples with adversarial networks},
  author={Xiao, Chaowei and Li, Bo and Zhu, Jun-Yan and He, Warren and Liu, Mingyan and Song, Dawn},
  journal={arXiv preprint arXiv:1801.02610},
  year={2018}
}

@inproceedings{pang2019improving,
  title={Improving adversarial robustness via promoting ensemble diversity},
  author={Pang, Tianyu and Xu, Kun and Du, Chao and Chen, Ning and Zhu, Jun},
  booktitle={International Conference on Machine Learning},
  pages={4970--4979},
  year={2019},
  organization={PMLR}
}

@article{kariyappa2019improving,
  title={Improving adversarial robustness of ensembles with diversity training},
  author={Kariyappa, Sanjay and Qureshi, Moinuddin K},
  journal={arXiv preprint arXiv:1901.09981},
  year={2019}
}

@article{yang2020dverge,
  title={DVERGE: diversifying vulnerabilities for enhanced robust generation of ensembles},
  author={Yang, Huanrui and Zhang, Jingyang and Dong, Hongliang and Inkawhich, Nathan and Gardner, Andrew and Touchet, Andrew and Wilkes, Wesley and Berry, Heath and Li, Hai},
  journal={Advances in Neural Information Processing Systems},
  volume={33},
  pages={5505--5515},
  year={2020}
}

@article{yang2021trs,
  title={Trs: Transferability reduced ensemble via promoting gradient diversity and model smoothness},
  author={Yang, Zhuolin and Li, Linyi and Xu, Xiaojun and Zuo, Shiliang and Chen, Qian and Zhou, Pan and Rubinstein, Benjamin and Zhang, Ce and Li, Bo},
  journal={Advances in Neural Information Processing Systems},
  volume={34},
  pages={17642--17655},
  year={2021}
}

@article{miyato2018spectral,
  title={Spectral normalization for generative adversarial networks},
  author={Miyato, Takeru and Kataoka, Toshiki and Koyama, Masanori and Yoshida, Yuichi},
  journal={arXiv preprint arXiv:1802.05957},
  year={2018}
}

@inproceedings{papernot2016limitations,
  title={The limitations of deep learning in adversarial settings},
  author={Papernot, Nicolas and McDaniel, Patrick and Jha, Somesh and Fredrikson, Matt and Celik, Z Berkay and Swami, Ananthram},
  booktitle={2016 IEEE European symposium on security and privacy (EuroS\&P)},
  pages={372--387},
  year={2016},
  organization={IEEE}
}

@incollection{kurakin2018adversarial,
  title={Adversarial examples in the physical world},
  author={Kurakin, Alexey and Goodfellow, Ian J and Bengio, Samy},
  booktitle={Artificial intelligence safety and security},
  pages={99--112},
  year={2018},
  publisher={Chapman and Hall/CRC}
}

@article{lecun1998mnist,
  title={The MNIST database of handwritten digits},
  author={LeCun, Yann},
  journal={http://yann. lecun. com/exdb/mnist/},
  year={1998}
}

@article{sedghi2018singular,
  title={The singular values of convolutional layers},
  author={Sedghi, Hanie and Gupta, Vineet and Long, Philip M},
  journal={arXiv preprint arXiv:1805.10408},
  year={2018}
}

@article{gouk2021regularisation,
  title={Regularisation of neural networks by enforcing lipschitz continuity},
  author={Gouk, Henry and Frank, Eibe and Pfahringer, Bernhard and Cree, Michael J},
  journal={Machine Learning},
  volume={110},
  pages={393--416},
  year={2021},
  publisher={Springer}
}

@article{neyshabur2017exploring,
  title={Exploring generalization in deep learning},
  author={Neyshabur, Behnam and Bhojanapalli, Srinadh and McAllester, David and Srebro, Nati},
  journal={Advances in neural information processing systems},
  volume={30},
  year={2017}
}

@inproceedings{sitawarin2023defending,
  title={Defending Against Transfer Attacks From Public Models},
  author={Sitawarin, Chawin and Chang, Jaewon and Huang, David and Altoyan, Wesson and Wagner, David},
  booktitle={The Twelfth International Conference on Learning Representations},
  year={2023}
}

@article{farnia2018generalizable,
  title={Generalizable adversarial training via spectral normalization},
  author={Farnia, Farzan and Zhang, Jesse M and Tse, David},
  journal={arXiv preprint arXiv:1811.07457},
  year={2018}
}

@inproceedings{dong2019evading,
  title={Evading defenses to transferable adversarial examples by translation-invariant attacks},
  author={Dong, Yinpeng and Pang, Tianyu and Su, Hang and Zhu, Jun},
  booktitle={Proceedings of the IEEE/CVF conference on computer vision and pattern recognition},
  pages={4312--4321},
  year={2019}
}

@inproceedings{yu2018deep,
  title={Deep layer aggregation},
  author={Yu, Fisher and Wang, Dequan and Shelhamer, Evan and Darrell, Trevor},
  booktitle={Proceedings of the IEEE conference on computer vision and pattern recognition},
  pages={2403--2412},
  year={2018}
}

@article{simonyan2014very,
  title={Very deep convolutional networks for large-scale image recognition},
  author={Simonyan, Karen and Zisserman, Andrew},
  journal={arXiv preprint arXiv:1409.1556},
  year={2014}
}

@inproceedings{delattre2023efficient,
  title={Efficient bound of Lipschitz constant for convolutional layers by gram iteration},
  author={Delattre, Blaise and Barth{\'e}lemy, Quentin and Araujo, Alexandre and Allauzen, Alexandre},
  booktitle={International Conference on Machine Learning},
  pages={7513--7532},
  year={2023},
  organization={PMLR}
}

@article{senderovich2022towards,
  title={Towards practical control of singular values of convolutional layers},
  author={Senderovich, Alexandra and Bulatova, Ekaterina and Obukhov, Anton and Rakhuba, Maxim},
  journal={Advances in Neural Information Processing Systems},
  volume={35},
  pages={10918--10930},
  year={2022}
}

@article{krizhevsky2009learning,
  title={Learning multiple layers of features from tiny images},
  author={Krizhevsky, Alex and Hinton, Geoffrey and others},
  year={2009},
  publisher={Toronto, ON, Canada}
}

@article{xie2019intriguing,
  title={Intriguing properties of adversarial training at scale},
  author={Xie, Cihang and Yuille, Alan},
  journal={arXiv preprint arXiv:1906.03787},
  year={2019}
}

@inproceedings{benz2021revisiting,
  title={Revisiting batch normalization for improving corruption robustness},
  author={Benz, Philipp and Zhang, Chaoning and Karjauv, Adil and Kweon, In So},
  booktitle={Proceedings of the IEEE/CVF winter conference on applications of computer vision},
  pages={494--503},
  year={2021}
}

@article{trockman2021orthogonalizing,
  title={Orthogonalizing convolutional layers with the cayley transform},
  author={Trockman, Asher and Kolter, J Zico},
  journal={arXiv preprint arXiv:2104.07167},
  year={2021}
}

@inproceedings{singla2021skew,
  title={Skew orthogonal convolutions},
  author={Singla, Sahil and Feizi, Soheil},
  booktitle={International Conference on Machine Learning},
  pages={9756--9766},
  year={2021},
  organization={PMLR}
}

@article{xu2022lot,
  title={Lot: Layer-wise orthogonal training on improving l2 certified robustness},
  author={Xu, Xiaojun and Li, Linyi and Li, Bo},
  journal={Advances in Neural Information Processing Systems},
  volume={35},
  pages={18904--18915},
  year={2022}
}

@article{singla2021improved,
  title={Improved deterministic l2 robustness on CIFAR-10 and CIFAR-100},
  author={Singla, Sahil and Singla, Surbhi and Feizi, Soheil},
  journal={arXiv preprint arXiv:2108.04062},
  year={2021}
}

@inproceedings{prach2022almost,
  title={Almost-orthogonal layers for efficient general-purpose lipschitz networks},
  author={Prach, Bernd and Lampert, Christoph H},
  booktitle={European Conference on Computer Vision},
  pages={350--365},
  year={2022},
  organization={Springer}
}

@article{hu2023recipe,
  title={A recipe for improved certifiable robustness: Capacity and data},
  author={Hu, Kai and Leino, Klas and Wang, Zifan and Fredrikson, Matt},
  journal={arXiv preprint arXiv:2310.02513},
  year={2023}
}

@inproceedings{zhang2022building,
  title={Building robust ensembles via margin boosting},
  author={Zhang, Dinghuai and Zhang, Hongyang and Courville, Aaron and Bengio, Yoshua and Ravikumar, Pradeep and Suggala, Arun Sai},
  booktitle={International Conference on Machine Learning},
  pages={26669--26692},
  year={2022},
  organization={PMLR}
}

@article{huang2023fasten,
  title={FASTEN: Fast Ensemble Learning for Improved Adversarial Robustness},
  author={Huang, Lifeng and Huang, Qiong and Qiu, Peichao and Wei, Shuxin and Gao, Chengying},
  journal={IEEE Transactions on Information Forensics and Security},
  year={2023},
  publisher={IEEE}
}

@article{ebrahimpour2025amun,
  title={AMUN: Adversarial Machine UNlearning},
  author={Ebrahimpour-Boroojeny, Ali and Sundaram, Hari and Chandrasekaran, Varun},
  journal={arXiv preprint arXiv:2503.00917},
  year={2025}
}

@article{lefkimmiatis2013hessian,
  title={Hessian Schatten-norm regularization for linear inverse problems},
  author={Lefkimmiatis, Stamatios and Ward, John Paul and Unser, Michael},
  journal={IEEE transactions on image processing},
  volume={22},
  number={5},
  pages={1873--1888},
  year={2013},
  publisher={IEEE}
}

@inproceedings{cisse2017parseval,
  title={Parseval networks: Improving robustness to adversarial examples},
  author={Cisse, Moustapha and Bojanowski, Piotr and Grave, Edouard and Dauphin, Yann and Usunier, Nicolas},
  booktitle={International Conference on Machine Learning},
  pages={854--863},
  year={2017},
  organization={PMLR}
}

@article{galloway2019batch,
  title={Batch normalization is a cause of adversarial vulnerability},
  author={Galloway, Angus and Golubeva, Anna and Tanay, Thomas and Moussa, Medhat and Taylor, Graham W},
  journal={arXiv preprint arXiv:1905.02161},
  year={2019}
}

@inproceedings{yu2021constructing,
  title={Constructing orthogonal convolutions in an explicit manner},
  author={Yu, Tan and Li, Jun and Cai, Yunfeng and Li, Ping},
  booktitle={International Conference on Learning Representations},
  year={2021}
}

@article{kingma2014adam,
  title={Adam: A method for stochastic optimization},
  author={Kingma, Diederik P and Ba, Jimmy},
  journal={arXiv preprint arXiv:1412.6980},
  year={2014}
}

@inproceedings{carlini2017towards,
  title={Towards evaluating the robustness of neural networks},
  author={Carlini, Nicholas and Wagner, David},
  booktitle={2017 ieee symposium on security and privacy (sp)},
  pages={39--57},
  year={2017},
  organization={Ieee}
}

@article{achour2022existence,
  title={Existence, stability and scalability of orthogonal convolutional neural networks},
  author={Achour, El Mehdi and Malgouyres, Fran{\c{c}}ois and Mamalet, Franck},
  journal={The Journal of Machine Learning Research},
  volume={23},
  number={1},
  pages={15743--15798},
  year={2022},
  publisher={JMLRORG}
}

@inproceedings{ebrahimpourtraining,
  title={Training Robust Ensembles Requires Rethinking Lipschitz Continuity},
  author={Ebrahimpour-Boroojeny, Ali and Sundaram, Hari and Chandrasekaran, Varun},
  booktitle={The Thirteenth International Conference on Learning Representations}
}

@article{meng2022locating,
  title={Locating and editing factual associations in gpt},
  author={Meng, Kevin and Bau, David and Andonian, Alex and Belinkov, Yonatan},
  journal={Advances in neural information processing systems},
  volume={35},
  pages={17359--17372},
  year={2022}
}

@article{dai2021knowledge,
  title={Knowledge neurons in pretrained transformers},
  author={Dai, Damai and Dong, Li and Hao, Yaru and Sui, Zhifang and Chang, Baobao and Wei, Furu},
  journal={arXiv preprint arXiv:2104.08696},
  year={2021}
}

@article{zhang2023towards,
  title={Towards best practices of activation patching in language models: Metrics and methods},
  author={Zhang, Fred and Nanda, Neel},
  journal={arXiv preprint arXiv:2309.16042},
  year={2023}
}

@article{ortu2024competition,
  title={Competition of mechanisms: Tracing how language models handle facts and counterfactuals},
  author={Ortu, Francesco and Jin, Zhijing and Doimo, Diego and Sachan, Mrinmaya and Cazzaniga, Alberto and Sch{\"o}lkopf, Bernhard},
  journal={arXiv preprint arXiv:2402.11655},
  year={2024}
}

@article{stolfo2023mechanistic,
  title={A mechanistic interpretation of arithmetic reasoning in language models using causal mediation analysis},
  author={Stolfo, Alessandro and Belinkov, Yonatan and Sachan, Mrinmaya},
  journal={arXiv preprint arXiv:2305.15054},
  year={2023}
}

@article{meng2022mass,
  title={Mass-editing memory in a transformer},
  author={Meng, Kevin and Sharma, Arnab Sen and Andonian, Alex and Belinkov, Yonatan and Bau, David},
  journal={arXiv preprint arXiv:2210.07229},
  year={2022}
}

@article{guo2024mechanistic,
  title={Mechanistic unlearning: Robust knowledge unlearning and editing via mechanistic localization},
  author={Guo, Phillip and Syed, Aaquib and Sheshadri, Abhay and Ewart, Aidan and Dziugaite, Gintare Karolina},
  journal={arXiv preprint arXiv:2410.12949},
  year={2024}
}

@inproceedings{yu2023unlearning,
  title={Unlearning bias in language models by partitioning gradients},
  author={Yu, Charles and Jeoung, Sullam and Kasi, Anish and Yu, Pengfei and Ji, Heng},
  booktitle={Findings of the Association for Computational Linguistics: ACL 2023},
  pages={6032--6048},
  year={2023}
}

@article{dathathri2019plug,
  title={Plug and play language models: A simple approach to controlled text generation},
  author={Dathathri, Sumanth and Madotto, Andrea and Lan, Janice and Hung, Jane and Frank, Eric and Molino, Piero and Yosinski, Jason and Liu, Rosanne},
  journal={arXiv preprint arXiv:1912.02164},
  year={2019}
}

@article{turner2023activation,
  title={Activation addition: Steering language models without optimization},
  author={Turner, Alexander Matt and Thiergart, Lisa and Leech, Gavin and Udell, David and Vazquez, Juan J and Mini, Ulisse and MacDiarmid, Monte},
  journal={arXiv e-prints},
  pages={arXiv--2308},
  year={2023}
}

@article{qiu2024spectral,
  title={Spectral editing of activations for large language model alignment},
  author={Qiu, Yifu and Zhao, Zheng and Ziser, Yftah and Korhonen, Anna and Ponti, Edoardo Maria and Cohen, Shay},
  journal={Advances in Neural Information Processing Systems},
  volume={37},
  pages={56958--56987},
  year={2024}
}

@article{soo2025steering,
  title={Steering Large Language Models with Feature Guided Activation Additions},
  author={Soo, Samuel and Teng, Wesley and Balaganesh, Chandrasekaran},
  journal={arXiv preprint arXiv:2501.09929},
  year={2025}
}

@article{krause2020gedi,
  title={Gedi: Generative discriminator guided sequence generation},
  author={Krause, Ben and Gotmare, Akhilesh Deepak and McCann, Bryan and Keskar, Nitish Shirish and Joty, Shafiq and Socher, Richard and Rajani, Nazneen Fatema},
  journal={arXiv preprint arXiv:2009.06367},
  year={2020}
}

@article{chen2025knowledge,
  title={The Knowledge Microscope: Features as Better Analytical Lenses than Neurons},
  author={Chen, Yuheng and Cao, Pengfei and Liu, Kang and Zhao, Jun},
  journal={arXiv preprint arXiv:2502.12483},
  year={2025}
}

@inproceedings{gandikota2023erasing,
  title={Erasing concepts from diffusion models},
  author={Gandikota, Rohit and Materzynska, Joanna and Fiotto-Kaufman, Jaden and Bau, David},
  booktitle={Proceedings of the IEEE/CVF International Conference on Computer Vision},
  pages={2426--2436},
  year={2023}
}

@article{alberti2025data,
  title={Data unlearning in diffusion models},
  author={Alberti, Silas and Hasanaliyev, Kenan and Shah, Manav and Ermon, Stefano},
  journal={arXiv preprint arXiv:2503.01034},
  year={2025}
}

@article{jiangunlearning,
  title={UNLEARNING IS BETTER THAN UNSEEN: UNLEARNING SCORE-BASED GENERATIVE MODEL},
  author={Jiang, Wan and Wang, He and Zhang, Xin and Guo, Dan and Ding, Yu and Diao, Yunfeng and Hong, Richang}
}

@article{maharanatowards,
  title={Towards Robust Concept Erasure in Diffusion Models: Unlearning Identity, Nudity and Artistic Styles},
  author={Maharana, Umakanta and Sharma, Aakash Sen and Sinha, Yash and Mali, Ankur and Kankanhalli, Mohan and Mandal, Murari}
}

@article{fuchi2024erasing,
  title={Erasing concepts from text-to-image diffusion models with few-shot unlearning},
  author={Fuchi, Masane and Takagi, Tomohiro},
  journal={arXiv preprint arXiv:2405.07288},
  volume={2},
  year={2024}
}

@article{jiang2024origins,
  title={On the origins of linear representations in large language models},
  author={Jiang, Yibo and Rajendran, Goutham and Ravikumar, Pradeep and Aragam, Bryon and Veitch, Victor},
  journal={arXiv preprint arXiv:2403.03867},
  year={2024}
}

@article{wang2023concept,
  title={Concept algebra for (score-based) text-controlled generative models},
  author={Wang, Zihao and Gui, Lin and Negrea, Jeffrey and Veitch, Victor},
  journal={Advances in Neural Information Processing Systems},
  volume={36},
  pages={35331--35349},
  year={2023}
}

@article{park2023linear,
  title={The linear representation hypothesis and the geometry of large language models},
  author={Park, Kiho and Choe, Yo Joong and Veitch, Victor},
  journal={arXiv preprint arXiv:2311.03658},
  year={2023}
}

@article{park2024geometry,
  title={The geometry of categorical and hierarchical concepts in large language models},
  author={Park, Kiho and Choe, Yo Joong and Jiang, Yibo and Veitch, Victor},
  journal={arXiv preprint arXiv:2406.01506},
  year={2024}
}

@inproceedings{gaolarge,
  title={On Large Language Model Continual Unlearning},
  author={Gao, Chongyang and Wang, Lixu and Ding, Kaize and Weng, Chenkai and Wang, Xiao and Zhu, Qi},
  booktitle={The Thirteenth International Conference on Learning Representations}
}

@article{lizzo2024unlearn,
  title={Unlearn efficient removal of knowledge in large language models},
  author={Lizzo, Tyler and Heck, Larry},
  journal={arXiv preprint arXiv:2408.04140},
  year={2024}
}

@article{ginart2019making,
  title={Making ai forget you: Data deletion in machine learning},
  author={Ginart, Antonio and Guan, Melody and Valiant, Gregory and Zou, James Y},
  journal={Advances in neural information processing systems},
  volume={32},
  year={2019}
}

@inproceedings{zhang2019theoretically,
  title={Theoretically principled trade-off between robustness and accuracy},
  author={Zhang, Hongyang and Yu, Yaodong and Jiao, Jiantao and Xing, Eric and El Ghaoui, Laurent and Jordan, Michael},
  booktitle={International conference on machine learning},
  pages={7472--7482},
  year={2019},
  organization={PMLR}
}

@inproceedings{shen2024camu,
  title={CaMU: Disentangling Causal Effects in Deep Model Unlearning},
  author={Shen, Shaofei and Zhang, Chenhao and Bialkowski, Alina and Chen, Weitong and Xu, Miao},
  booktitle={Proceedings of the 2024 SIAM International Conference on Data Mining (SDM)},
  pages={779--787},
  year={2024},
  organization={SIAM}
}

@inproceedings{chundawat2023can,
  title={Can bad teaching induce forgetting? unlearning in deep networks using an incompetent teacher},
  author={Chundawat, Vikram S and Tarun, Ayush K and Mandal, Murari and Kankanhalli, Mohan},
  booktitle={Proceedings of the AAAI Conference on Artificial Intelligence},
  volume={37},
  number={6},
  pages={7210--7217},
  year={2023}
}

@article{tarun2023fast,
  title={Fast yet effective machine unlearning},
  author={Tarun, Ayush K and Chundawat, Vikram S and Mandal, Murari and Kankanhalli, Mohan},
  journal={IEEE Transactions on Neural Networks and Learning Systems},
  year={2023},
  publisher={IEEE}
}

@article{hayes2024inexact,
  title={Inexact unlearning needs more careful evaluations to avoid a false sense of privacy},
  author={Hayes, Jamie and Shumailov, Ilia and Triantafillou, Eleni and Khalifa, Amr and Papernot, Nicolas},
  journal={arXiv preprint arXiv:2403.01218},
  year={2024}
}

@article{kodge2024deep,
  title={Deep unlearning: Fast and efficient gradient-free class forgetting},
  author={Kodge, Sangamesh and Saha, Gobinda and Roy, Kaushik},
  journal={Transactions on Machine Learning Research},
  year={2024}
}

@article{jia2023model,
  title={Model sparsity can simplify machine unlearning},
  author={Jia, Jinghan and Liu, Jiancheng and Ram, Parikshit and Yao, Yuguang and Liu, Gaowen and Liu, Yang and Sharma, Pranay and Liu, Sijia},
  journal={Advances in Neural Information Processing Systems},
  volume={36},
  pages={51584--51605},
  year={2023}
}

@article{chang2024zero,
  title={Zero-shot Class Unlearning via Layer-wise Relevance Analysis and Neuronal Path Perturbation},
  author={Chang, Wenhan and Zhu, Tianqing and Wu, Yufeng and Zhou, Wanlei},
  journal={arXiv preprint arXiv:2410.23693},
  year={2024}
}

@article{deng2012mnist,
  title={The mnist database of handwritten digit images for machine learning research [best of the web]},
  author={Deng, Li},
  journal={IEEE signal processing magazine},
  volume={29},
  number={6},
  pages={141--142},
  year={2012},
  publisher={IEEE}
}

@article{le2015tiny,
  title={Tiny imagenet visual recognition challenge},
  author={Le, Yann and Yang, Xuan},
  journal={CS 231N},
  volume={7},
  number={7},
  pages={3},
  year={2015}
}

@inproceedings{foster2024fast,
  title={Fast machine unlearning without retraining through selective synaptic dampening},
  author={Foster, Jack and Schoepf, Stefan and Brintrup, Alexandra},
  booktitle={Proceedings of the AAAI conference on artificial intelligence},
  volume={38},
  number={11},
  pages={12043--12051},
  year={2024}
}

@book{jain1989fundamentals,
  title={Fundamentals of digital image processing},
  author={Jain, Anil K},
  year={1989},
  publisher={Prentice-Hall, Inc.}
}

@book{saad2011numerical,
  title={Numerical methods for large eigenvalue problems: revised edition},
  author={Saad, Yousef},
  year={2011},
  publisher={SIAM}
}

@inproceedings{ioffe2015batch,
  title={Batch normalization: Accelerating deep network training by reducing internal covariate shift},
  author={Ioffe, Sergey and Szegedy, Christian},
  booktitle={International conference on machine learning},
  pages={448--456},
  year={2015},
  organization={PMLR}
}

@article{kurmanji2023towards,
  title={Towards unbounded machine unlearning},
  author={Kurmanji, Meghdad and Triantafillou, Peter and Hayes, Jamie and Triantafillou, Eleni},
  journal={Advances in neural information processing systems},
  volume={36},
  pages={1957--1987},
  year={2023}
}

@inproceedings{cha2024learning,
  title={Learning to unlearn: Instance-wise unlearning for pre-trained classifiers},
  author={Cha, Sungmin and Cho, Sungjun and Hwang, Dasol and Lee, Honglak and Moon, Taesup and Lee, Moontae},
  booktitle={Proceedings of the AAAI conference on artificial intelligence},
  volume={38},
  number={10},
  pages={11186--11194},
  year={2024}
}

@inproceedings{bonato2024retain,
  title={Is retain set all you need in machine unlearning? restoring performance of unlearned models with out-of-distribution images},
  author={Bonato, Jacopo and Cotogni, Marco and Sabetta, Luigi},
  booktitle={European Conference on Computer Vision},
  pages={1--19},
  year={2024},
  organization={Springer}
}

@inproceedings{ebrahimpournot,
  title={Not All Wrong is Bad: Using Adversarial Examples for Unlearning},
  author={Ebrahimpour-Boroojeny, Ali and Sundaram, Hari and Chandrasekaran, Varun},
  booktitle={Forty-second International Conference on Machine Learning}
}

@article{ebrahimpour2025necessity,
  title={On the Necessity of Output Distribution Reweighting for Effective Class Unlearning},
  author={Ebrahimpour-Boroojeny, Ali and Wang, Yian and Sundaram, Hari},
  journal={arXiv preprint arXiv:2506.20893},
  year={2025}
}

% the below lines are only needed if bibliography precedes appendices
% uses https://tex.stackexchange.com/a/440212 to continue page numbering
\clearpage
\setcounter{counterforappendices}{\value{page}}
\mainmatter
\setcounter{page}{\value{counterforappendices}}
\appendix
\chapter{Controlling the Spectrum}

\section{Omitted Proofs}
\label{apx-sec-omitted-proofs}

% \subsection{Proofs of~\cref{sec-extraction}}
\subsection{Proofs of Section~\ref{sec-extraction}}

We start by proving Proposition~\ref{lemma-shifted-subspace}:

\begin{proof}
 Let $g(X) = f(X) - f(0) = M_W X + b - b = M_W X$. Notice that $\nabla_X \frac{1}{2} \|g(X)\|^2 = M_W ^T M_W X = A_W X$. Let define $A^\prime = A_W + \mu I$. Putting steps $5$ and $6$ of the algorithm together, we have $(Q, R) = \mathrm{QR}(A_W X + \mu X) = \mathrm{QR}(A^\prime X)$. Therefore, the algorithm above can be seen as the same as the subspace iteration for matrix $A^\prime$. Hence, the diagonal values of $R$ will converge to the eigenvalues of $A^\prime$, and columns of $Q$ converge to the corresponding eigenvectors. If $\lambda$ is an eigenvalue of $A^\prime = A_W + \mu I$, then $\lambda - \mu$ is an eigenvalue of $A_W$. Therefore, by subtracting $\mu$ from the diagonal elements of $R$, we get the eigenvalues of $A_W$, which are squared of singular values of $M_W$. The eigenvectors of $A^\prime$ are the same as the eigenvectors of $A_W$ and, therefore, the same as the right singular vectors of $M_W$. 

\end{proof}

% \subsection{Proofs of~\cref{sec-limitations}}
\subsection{Proofs of Section~\ref{sec-limitations}}

For proving~\Cref{theorem-limitations}, we first present and prove Lemma~\ref{pro-eigval} which proves a similar result for convolutional layers with only $1$ input and output channel.

\begin{lemma}
Given a convolutional layer with single input and output channel and circular padding applied to an input whose length of vectorized form is $n$, if the vectorized form of the kernel is given by $\textbf{f} = [f_0, f_1, \dots, f_{k-1}]$, the singular values are:
%\vspace{-3 pt}
\begin{align}
    \mathcal{S}(\omega) = \left\{\sqrt{c_0 + 2\sum_i^{k-1} c_i \Re(\omega^{j\times i})},\,\, j=0,1,2, \dots, n-1 \right\}
    %\label{equ-eigval}
\end{align}

\noindent where $c_i$'s are defined as:
%\vspace{0 pt}
\begin{align*}
    c_0 &:= f_0^2 + f_1^2 + \dots + f_{k-1}^2,\\
    c_1 &:= f_0f_1 + f_1f_2 + \dots + f_{k-2}f_{k-1},\\
    &\vdots\\
    c_{k-1} &:= f_0f_{k-1}.
\end{align*}

\label{pro-eigval}
\end{lemma}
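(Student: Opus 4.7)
My plan is to exploit the circulant structure that circular padding imposes on the convolution's matrix form. Specifically, with circular padding the 1D convolution $y = f \ast x$ (viewed as a linear map $\R^n \to \R^n$) is represented by an $n\times n$ circulant matrix $M$ whose first column is the zero-padded kernel $[f_0,\dots,f_{k-1},0,\dots,0]^\top$ (up to the usual cyclic shift convention). So the first step is to write out $M$ explicitly as a circulant and record its generating sequence.

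Next, I would invoke the classical diagonalization of circulant matrices: $M = F^* \mathrm{diag}(\hat f)\, F$, where $F$ is the (unitary) DFT matrix and $\hat f \in \mathbb{C}^n$ is the DFT of the generating sequence, i.e.\ $\hat f_j = \sum_{l=0}^{k-1} f_l\, \omega^{jl}$ with $\omega = \exp(2\pi i / n)$. Because $F$ is unitary, the singular values of $M$ are exactly $\sigma_j = |\hat f_j|$ for $j \in [n-1]$. This reduces the problem to evaluating $|\hat f_j|^2$ in closed form.

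The main computation is then a direct expansion:
\begin{align*}
|\hat f_j|^2 \;=\; \hat f_j \, \overline{\hat f_j}
\;=\; \sum_{l=0}^{k-1}\sum_{m=0}^{k-1} f_l f_m\, \omega^{j(l-m)}.
\end{align*}
I would then group terms by the value of $i := l-m$. The diagonal $i=0$ contributes $\sum_l f_l^2 = c_0$. For each $i \in \{1,\dots,k-1\}$, the pair $(i,-i)$ contributes $c_i\,\omega^{ji} + c_i\,\omega^{-ji} = 2 c_i \Re(\omega^{ji})$, where $c_i = \sum_{l=0}^{k-1-i} f_l f_{l+i}$ matches the definition in the statement. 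Collecting terms yields $|\hat f_j|^2 = c_0 + 2 \sum_{i=1}^{k-1} c_i \Re(\omega^{ji})$, and taking square roots gives the claimed formula for $\mathcal{S}(\omega)$.

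I do not anticipate a deep obstacle here; the whole argument is standard once one identifies the operator as circulant. The only care required is bookkeeping: making sure the generating sequence of the circulant matches the chosen convention for circular convolution (which direction one cycles, and where zero-padding sits), and verifying that the index range in the definition of $c_i$ lines up with the terms produced after grouping. A minor subtlety is the case $n < 2k-1$, where the zero-padded kernel wraps around itself; but since $|\hat f_j|^2$ is computed purely from the DFT of the length-$n$ generating sequence, the formula holds verbatim regardless (the $c_i$ definitions do not depend on $n$). With these conventions pinned down, the proof is essentially a one-line circulant diagonalization followed by the reindexing above.
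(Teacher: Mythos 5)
Your proof is correct and arrives at the same $c_i$ autocorrelation coefficients, but it takes a slightly different route than the paper's. The paper forms the symmetric circulant $A^\top A$, explicitly identifies its first row as the length-$n$ autocorrelation sequence $[c_0, c_1, \dots, c_{k-1}, 0, \dots, 0, c_{k-1}, \dots, c_1]$, and then reads the eigenvalues of $A^\top A$ off from the circulant eigenvalue formula before taking square roots; that requires noting that circulants are closed under transposition and multiplication, and writing out the product's first row by hand. You instead diagonalize the original circulant $A = F^* \,\mathrm{diag}(\hat f)\, F$ directly, use unitarity of the DFT to conclude $\sigma_j = |\hat f_j|$, and obtain the autocorrelation by expanding $|\hat f_j|^2 = \sum_{l,m} f_l f_m \,\omega^{j(l-m)}$ and grouping by the lag $i = l-m$. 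The two computations are algebraically equivalent (the Wiener--Khinchin identity in miniature: the power spectrum is the DFT of the autocorrelation), but your version sidesteps the $A^\top A$ bookkeeping and, as you note, remains literally valid even when $n < 2k-1$ so that the paper's explicit first-row expression would need wraparound corrections. The paper's version makes the circulant structure of $A^\top A$ visible, which the paper does reuse later (in the multi-channel extension, Theorem~\ref{theorem-limitations}, where $A = \sum_l M_l^\top M_l$ is argued circulant by exactly this closure property), so that choice is not arbitrary.
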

%\vspace{-10 pt}
\begin{proof}
The above convolutional operator is equivalent to a circulant matrix $A$ with $[f_m, f_{m+1}, \dots, f_{k-1}, 0, \dots, 0, f_0, f_1, \dots, f_{m-1}]^T$ as its first row, where
$m=\floor{\frac{k}{2}}$~\cite{jain1989fundamentals,sedghi2018singular}. Singular values of $A$ are the eigenvalues of $M=A^TA$. Circulant matrices are closed with respect to
multiplication, and therefore $M$ is a symmetric circulant matrix. It is easy to check that the first row of $M$ is $r = [c_0, c_1, \dots, c_{k-1}, 0, \dots, 0, c_{k-1},\dots, c_1]^T$. 

Now, we know the eigenvalues of a circulant matrix with first row $v = [a_0, a_1, \dots, a_{n-1}]^T$ are given by the set $\Lambda = \{ v\Omega_j, j=0,1,\dots,n-1 \}$, where $\Omega_j = [\omega^{j\times 0}, \omega^{j\times 1}, \dots, \omega^{j\times n-1}]$. So eigenvalues of $M$ are given by the set below:

\begin{align*}
    \{r\Omega_j,\, j=0,1,\dots, n-1\} &= \{c_0 + c_1\omega^{j\times 1} + c_2\omega^{j \times 2} + \dots + c_{k-1}\omega^{j\times (k-1)} \\
    &+ c_{k-1}\omega^{j\times (n-k+1)} + c_{k-2}\omega^{j\times (n-k+2)} + \dots + c_1\omega^{j\times (n-1)}, 
    \, j=0,1,\dots, n-1 \}.
\end{align*}

Note that $\omega^{j\times i}$ has the same real part as $\omega^{j\times (n-i)}$, but its imaginary part is mirrored with respect to the real axis (the sign is flipped). Therefore, $\omega^{j\times i} + \omega^{j\times (n-i)} = 2\Re(\omega^{j\times i})$. Using this equality the summation representing the eigenvalues of $M$ can be simplified to:

\begin{align*}
    \{c_0 + 2\sum_i^{k-1} c_i \Re(\omega^{j\times i}),\, j=0,1,\dots,n-1 \},
\end{align*}

and therefore the singular values can be derived by taking the square roots of these eigenvalues.
\end{proof}

Now using Lemma~\ref{pro-eigval}, we can easily prove~\Cref{theorem-limitations}.

\begin{proof}
The convolutional layer with $m$ output channels and one input channel can be represented by a matrix $M = [M_1, M_2, \dots, M_m]^T$, where each $M_i$ is an $n\times n$ circulant matrix representing the $i-$th channel. Therefore, $A = M^T M$ (or $MM^T$ when there are multiple input channels) can be written as $\sum_{l=1}^m M_l^T M_l$. So, for circulant matrix $A$ we have $c_i= \sum_{l=1}^m c_i^{(l)}$, and the proof can be completed by using Lemma~\ref{pro-eigval}. 
\end{proof}

Next, we focus on Corollary~\ref{cor_eig_2}, which uses the results of~\Cref{theorem-limitations} to give an easy-to-compute lower and upper bounds for the spectral norm of the convolutional layers with either one input or output channel. When the filter values are all positive, these bounds become equalities and provide and easy way to compute the exact spectral norm.

\begin{corollary}
Consider a convolutional layer with $1$ input channel and $m$ output channels or $1$ output channel and $m$ input channels and circular padding. If the vectorized form of the kernel of the $l$-th channel is given by $\textbf{f}^\textbf{(l)} = [f_0^{(l)}, f_1^{(l)}, \dots, f_{k-1}^{(l)}]$, and the largest singular value of the layer is $\sigma_1$, then:
% \vspace{-3pt}
\begin{align}
   \sqrt{\sum_{l=1}^{m} \left(\sum_{i=0}^{k-1} f_i^{(l)}\right)^2} \leq \sigma_1 \leq \sqrt{\sum_{l=1}^{m} \left(\sum_{i=0}^{k-1} |f_i^{(l)}| \right)^2},
\end{align}

\noindent and therefore the equalities hold if all $f_i^{(l)}$s are non-negative.

\label{cor_eig_2}
\end{corollary}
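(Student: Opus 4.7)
The plan is to read off the bounds directly from the closed-form expression for singular values provided by Theorem~\ref{theorem-limitations}. Recall that each singular value has the form $\sqrt{\sum_{l=1}^m \mathcal{S}_j^{(l)^2}(\omega)}$ with $\mathcal{S}_j^{(l)^2}(\omega) = c_0^{(l)} + 2\sum_{i=1}^{k-1} c_i^{(l)} \Re(\omega^{ji})$, where $\omega = \exp(2\pi i/n)$ so that $\Re(\omega^{ji}) \in [-1,1]$. The largest singular value $\sigma_1$ is the maximum of these quantities over $j \in [n-1]$, so the lower bound follows by evaluating at a well-chosen $j$ and the upper bound by taking $\Re(\omega^{ji})$ to its worst case $+1$ together with triangle inequality on $c_i^{(l)}$.

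For the lower bound, I would choose $j=0$, so that $\omega^{0}=1$ and $\Re(\omega^{0 \cdot i}) = 1$ for all $i$. This gives $\mathcal{S}_0^{(l)^2} = c_0^{(l)} + 2\sum_{i=1}^{k-1} c_i^{(l)}$. The key algebraic identity to verify is
\[
c_0^{(l)} + 2\sum_{i=1}^{k-1} c_i^{(l)} \;=\; \Bigl(\sum_{i=0}^{k-1} f_i^{(l)}\Bigr)^{2},
\]
which follows by expanding the right-hand side as $\sum_i (f_i^{(l)})^2 + 2\sum_{j<j'} f_j^{(l)} f_{j'}^{(l)}$ and noting that unrolling the definition of $c_i^{(l)}$ gives $\sum_{i=1}^{k-1} c_i^{(l)} = \sum_{0\le j<j'\le k-1} f_j^{(l)} f_{j'}^{(l)}$, since each pair $(j,j')$ is indexed exactly once by $i = j'-j$. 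Since $\sigma_1^2 \geq \mathcal{S}_0^2$, this proves the lower bound.

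For the upper bound, I would apply $|\Re(\omega^{ji})| \le 1$ and the triangle inequality to get
\[
\mathcal{S}_j^{(l)^2}(\omega) \;\le\; c_0^{(l)} + 2\sum_{i=1}^{k-1} |c_i^{(l)}|.
\]
Using $|c_i^{(l)}| \le \sum_{j=0}^{k-1-i} |f_j^{(l)} f_{j+i}^{(l)}|$ and the same re-indexing as above, the right-hand side is bounded by $\bigl(\sum_{i=0}^{k-1} |f_i^{(l)}|\bigr)^2$. Summing over $l$ and taking the supremum over $j$ gives the stated upper bound on $\sigma_1^2$. The equality case when all $f_i^{(l)} \ge 0$ is immediate: the two bounds coincide since $|f_i^{(l)}| = f_i^{(l)}$.

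I do not anticipate a substantial obstacle here since the closed form from Theorem~\ref{theorem-limitations} does most of the work; the only step requiring care is the combinatorial identity relating $\sum_{i} c_i^{(l)}$ to the square of the sum of the filter coefficients, which is just a re-indexing of the off-diagonal terms in the expansion of $(\sum_i f_i^{(l)})^2$. I would present that identity as a short lemma (or inline computation) to keep the two bounds symmetric in presentation.
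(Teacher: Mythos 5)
Your proof is correct and follows essentially the same route as the paper: evaluate at $\omega=1$ (equivalently $j=0$) for the lower bound via the identity $c_0^{(l)} + 2\sum_i c_i^{(l)} = \bigl(\sum_i f_i^{(l)}\bigr)^2$, and bound $\Re(\omega^{ji})\le 1$ plus the triangle inequality on the $c_i^{(l)}$ for the upper bound. The only difference is presentational—the paper factors the argument through a separate single-channel corollary before summing over $l$, whereas you inline the channel sum directly.
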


To make the proof more clear, we first present a similar corollary for Lemma~\ref{pro-eigval}, which proves similar results for convolutions with only $1$ input and output channel.

\begin{corollary}
Consider a convolutional layer with single input and output channels and circular padding. If the vectorized form of the kernel is given by $\textbf{f} = [f_0, f_1, \dots, f_{k-1}]$, and the largest singular value of the layer is $\sigma_1$, then:
%\vspace{-3pt}
\begin{align}
    \sum_{i=0}^{k-1} f_i \leq \sigma_1 \leq \sum_{i=0}^{k-1} |f_i|,
\end{align}

\noindent and therefore, the equalities hold if all $f_i$s are non-negative.

\label{cor_eig}
\end{corollary}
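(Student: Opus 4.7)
The plan is to apply Lemma~\ref{pro-eigval} directly and evaluate its closed-form spectrum at two carefully chosen indices $j$ to pin down the extremes. For the lower bound, I would substitute $j=0$, so that $\omega^{0\cdot i}=1$ and hence $\Re(\omega^{0\cdot i})=1$ for every $i\in\{1,\dots,k-1\}$. The quantity under the square root then collapses to $c_0 + 2\sum_{i=1}^{k-1} c_i$, and plugging in the definitions $c_0 = \sum_i f_i^2$ and $c_i = \sum_{l=0}^{k-1-i} f_l f_{l+i}$ reassembles the expansion of a perfect square, giving $(\sum_{i=0}^{k-1} f_i)^2$. Consequently the singular value at $j=0$ equals $|\sum_i f_i|$, and since $\sigma_1$ is the maximum over $j$, we obtain $\sigma_1 \ge |\sum_i f_i| \ge \sum_i f_i$.

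For the upper bound, I would use the universal estimate $|\Re(\omega^{j\cdot i})| \le 1$ that holds for every root of unity, which yields $\sigma_j^2 \le c_0 + 2\sum_{i=1}^{k-1} |c_i|$ for every $j$. Applying the triangle inequality inside each $c_i$ gives $|c_i| \le \sum_{l=0}^{k-1-i} |f_l||f_{l+i}|$, and the resulting bound telescopes into $(\sum_i |f_i|)^2$ by the same perfect-square identity used above, just with $|f_i|$ in place of $f_i$. Taking square roots and maximizing over $j$ produces $\sigma_1 \le \sum_i |f_i|$.

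The non-negative case follows immediately: when $f_i \ge 0$ for all $i$, we have $|f_i|=f_i$, so the lower and upper bounds coincide and force $\sigma_1 = \sum_i f_i$. I do not anticipate any real obstacle here; the only step requiring a moment of care is recognizing that $c_0 + 2\sum_{i=1}^{k-1} c_i$ is precisely the expansion of $(\sum_i f_i)^2$ when grouped by the index difference $i=|l-l'|$, but this is a routine bookkeeping exercise.
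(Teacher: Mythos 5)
Your argument is correct and mirrors the paper's proof almost exactly: evaluate the closed-form spectrum from Lemma~\ref{pro-eigval} at $j=0$ (equivalently $\omega=1$) to recognize $c_0+2\sum_{i\ge 1}c_i=(\sum_i f_i)^2$, giving the lower bound, and for the upper bound combine $|\Re(\omega^{ji})|\le 1$ with the termwise triangle inequality $|c_i|\le\sum_l |f_l||f_{l+i}|$ to collapse the bound to $(\sum_i |f_i|)^2$. The only cosmetic difference is ordering (you start with the lower bound, the paper with the upper), and your observation that $\sigma_1\ge|\sum_i f_i|$ is actually slightly sharper than what the corollary states.
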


%\vspace{-13pt}
\begin{proof}
From Lemma~\ref{pro-eigval}, we can write:

\begin{align*}
    \mathcal{S}(\omega) &= \sqrt{c_0 + 2\sum_i^{k-1} c_i \Re(\omega^i)}
    = \sqrt{|c_0 + 2\sum_i^{k-1} c_i \Re(\omega^i)|}\\
    &\leq \sqrt{|c_0| + 2\sum_i^{k-1} |c_i \Re(\omega^i)|}
    \leq \sqrt{|c_0| + 2\sum_i^{k-1} |c_i|},
\end{align*}

\noindent where the last inequality is due to the fact that $\Re(\omega^i) \leq 1$. Now, for $c_i$s we have:
% \vspace{-1pt}
\begin{align*}
    |c_0| &= f_0^2 + f_1^2 + \dots + f_{k-1}^2,\\
    2|c_1| &\leq 2\left( |f_0||f_1| + |f_1||f_2| + \dots + |f_{k-2}||f_{k-1}|\right),\\
    &\vdots\\
    2|c_{k-1}| &\leq 2\left( |f_0||f_{k-1}|\right).
\end{align*}

Note that the summation of the right-hand sides of the above inequalities is equal to $(\sum_{i=0}^{k-1} |f_i|)^2$ Therefore:
%\vspace{-1pt}
\begin{align*}
    \mathcal{S}(\omega) \leq \sum_{i=0}^{k-1} |f_i| \implies \sigma_1 \leq \sum_{i=0}^{k-1} |f_i|.
\end{align*}

Since $1$ is always one of the $n-$th roots of unity, if all the $f_i$s are non-negative, all the above inequalities hold when $\omega = 1$ is considered.    
Now, note that when $\omega = 1$:

% \vspace{-3pt}
\begin{align*}
    \mathcal{S}^2(1) &= c_0 + 2\sum_i^{k-1} c_i \\
    &=f_0^2 + f_1^2 + \dots + f_{k-1}^2 \,\,\,\,\text{($c_0$)}\\
    &\quad + 2f_0f_1 + 2f_1f_2 + \dots + 2f_{k-2}f_{k-1}\,\,\,\,\text{($c_1$)}\\
    &\quad \,\,\,\,\vdots\\
    &\quad + 2f_0f_{k-1}\,\,\,\,\text{($c_{k-1}$)}\\
    &= (\sum_{i=0}^{k-1} f_i)^2
\end{align*}
%$c_0 + 2\sum_i^{k-1} c_i = (\sum_{i=0}^{k-1} f_i)^2$, which is what we get for $\omega=1$.
Therefore, $\sum_{i=0}^{k-1} f_i$ is a singular value of the convolution layer, which gives a lower bound for the largest one and this completes the proof.

\end{proof}

Now we give the proof for Corollary~\ref{cor_eig_2}. 

\begin{proof}
    From~\Cref{theorem-limitations}, we can write:

\begin{align*}
    \mathcal{S}(\omega) = \sqrt{\sum_{l=1}^m \mathcal{S}^{(l)^2}(\omega)} \leq \sqrt{\sum_{l=1}^m (\sum_{i=0}^{k-1} |f_i^{(l)}|)^2},
\end{align*}

    \noindent where the inequality is a result of using Corollary~\ref{cor_eig} for each $\mathcal{S}^{(l)}(\omega)$. 
    For showing the left side of inequality, we set $\omega = 1$ ($1$ is one of the $n$-th roots of unity), and again use~\Cref{theorem-limitations} to get:
    
\begin{align*}
    \mathcal{S}(1) &= \sqrt{\sum_{l=1}^m \mathcal{S}^{(l)^2}(1)} = 
    \sqrt{c_0^{(l)} + 2\sum_i^{k-1} c_i^{(l)} \Re(1)}  
    = \sqrt{\sum_{l=1}^m (\sum_{i=0}^{k-1} f_i^{(l)})^2},
\end{align*}

\noindent where the last equality was shown in the proof of Corollary~\ref{cor_eig}. This shows the left side of the inequality in (4) is one of the singular values of the layer, and hence is a lower bound for the spectral norm (the largest singular value of the layer).
    
\end{proof}

\section{Methods (Cont.)}
\label{apx-B}

\subsection{Modifying the Whole Spectrum}
\label{sec-modification}

If we use PowerQR to extract singular values and right singular vectors $S$ and $V$ from $M_W = USV^T$, then
given new singular values $S^\prime$, we can modify the spectrum of our function to generate a linear operator $f^\prime: x \rightarrow M_W^\prime x + b$, where $M_W^\prime = US^\prime V^T$, without requiring the exact computation of $U$ or $M_W$:
% \vspace{-1 pt}
\begin{align*}
    f_W(V S^{-1} S^\prime V^T x) - f(0) = USV^T V S^{-1} S^\prime V^T x 
    = US^\prime V^T x = f^\prime (x) - b.
\end{align*}

% \vspace{-3 pt}
Although $f^\prime$ is a linear operator with the desired spectrum, it is not necessarily in the desired form. For example, if $f_W$ is a convolutional layer with $W$ as its kernel, $f_W(V S S^\prime V^T x)$ will not be in the form of a convolutional layer. To find a linear operator of the same form as $f_W$, we have to find new parameters $W^\prime$ that give us $f^\prime$. For this, we can form a convex objective function and use SGD to find the parameters $W^\prime$ via regression:
% \vspace{-3 pt}
\begin{align}
    \min_{W^\prime} \mathbb{E}_x \left\|f_{W^\prime}(x) - f_W(V S^{-1} S^\prime V^T x)\right\|_\mathrm{F}^2,
    \label{equ_modify}
\end{align}

% \vspace{-3 pt}
\noindent where $x$ is randomly sampled from the input domain. Note that we do not need many different vectors $x$ to be sampled for solving~\ref{equ_modify}. If the rank of the linear operator $f_W(.)$ is $n$, then sampling as few as $n$ random data points would be enough to find the optimizer $f_{W^\prime}$. This procedure can be seen as two successive projections, one to the space of linear operators with the desired spectrum and the other one to the space of operators with the same form as $f_W$ (e.g., convolutional layers). This method can be very slow because the matrix $V$ can be very large. One might reduce the computational cost by working with the top singular values and singular vectors and deriving a low-ranked operator. This presented method is not practical for controlling the spectral norm of large models during training, similar to the other algorithms that need the whole spectrum for controlling the spectral norm~\cite{sedghi2018singular,senderovich2022towards}.

%include using the order of multiplication given by $(V (S S^\prime)) (V^T x)$ and using the fast multiplication with the diagonal matrix by extracting the diagonal elements and performing broadcasted elementwise multiplication. 

Another important point to mention here is that depending on the class of linear operators we are working with, the objective~\ref{equ_modify} might not reach zero at its minimizer. For example, consider a 2d convolutional layer whose kernel is $1\times 1$ with a value of $c$. Applying this kernel to any input of size $n\times n$ scales the value of the input by the $c$. The equivalent matrix form of this linear transformation is an $n\times n$ identity matrix scaled by $c$. We know that this matrix has a rank of $n$, and all the singular values are equal to $1$. Therefore, if the new spectrum, $S^\prime$, does not represent a full-rank transformation, or if its singular values are not all equal, we cannot attain a value of $0$ in optimization~\ref{equ_modify}. As we showed in~\Cref{sec-limitations}, this problem is more general for the convolutional layers, as they are restricted in the spectrums they can represent.

\subsection{Clipping Batch Norm}
\label{apx-alg-bn}

% \subsection{Batch Normalization Layers}
% \label{sec-bn}

Batch Normalization~\cite{ioffe2015batch} has proved to successfully stabilize and accelerate the training of deep neural networks and is thus by now standard in many architectures that contain convolutional layers. However, the adverse effect of these layers on the adversarial robustness of models has been noted in previous research~\cite{xie2019intriguing,benz2021revisiting,galloway2019batch}. As we showed in~\Cref{fig:resnet18-spectral-norm}b, not controlling the spectral norm of the batch normalization layers might forfeit the benefits of merely controlling the spectral norm of convolutional layers. We also revealed the compensating behavior of these layers as the spectral norm of convolutional layers are clipped to smaller values in~\Cref{fig:resnet18-spectral-norm}a. The presented results in~\Cref{tab:bn-cifar} confirm this adverse effect of batch norm layers by showing a boost in the adversarial robustness of the models when these layers are removed from the network; however, as the results show, removing these layers from the model will incur a noticeable loss to the performance of the model on the test set and hinders the optimization. As pointed out in~\Cref{sec-generalization}, removing these layers from more complex modes such as DLA might completely hinder the optimization of the model. Therefore, it is crucial to find a better way to mitigate the adverse effect of these layers on adversarial robustness while benefiting from the presence of these layers in improving the optimization and generalization of the models. 

Batch normalization layers perform the following computation on the output of their preceding convolution layer:

\begin{align*}
    y = \frac{x-\E(x)}{\sqrt{\mathrm{Var}(x) + \epsilon}} \ast \gamma + \beta.
    \label{def-bn}
\end{align*}

As pointed out by~\cite{gouk2021regularisation}, by considering this layer as a linear transformation on $x-\E(x)$, the transformation matrix can be represented as a diagonal matrix with $\gamma_i/\sqrt{\mathrm{Var}(x_i)+ \epsilon}$ values, and therefore its largest singular value is equal to $\max_i \left(|\gamma_i|/\sqrt{\mathrm{Var}(x_i)+ \epsilon} \right)$. So, by changing the magnitude of $\gamma_i$ values we can clip the spectral norm of the batch norm layer. This approach has been followed in prior work~\cite{gouk2021regularisation,senderovich2022towards,delattre2023efficient}. In~\Cref{exp-bn-clip}, we show the results for the application of this clipping method and point out its disadvantages when used in practice. We will also show that the capability of our presented clipping method to work on the concatenation of convolutional and batch norm layers provides us with a better alternative. The full exploration of its potential, however, is left for future work.

% \begin{figure}
% \centering
% \begin{subfigure}{.7\textwidth}
%   \centering
%   \includegraphics[width=.99\linewidth]{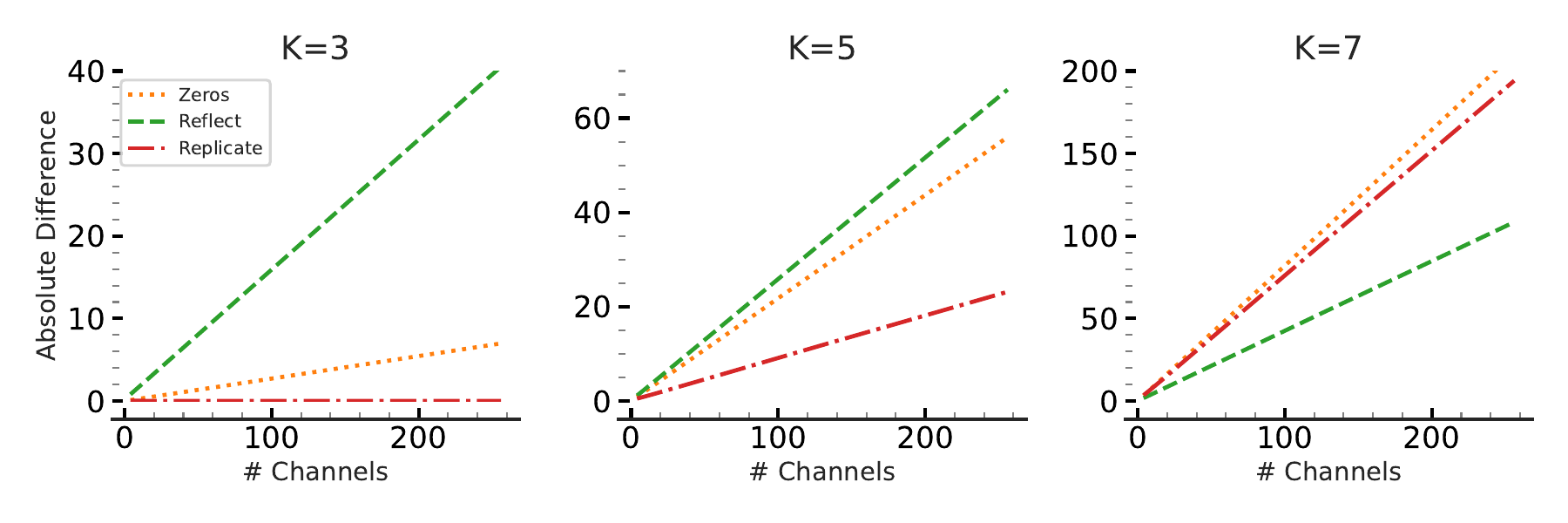}
%   \caption{}
%   \label{fig-circ-error}
% \end{subfigure}%
% \hfill
% \begin{subfigure}{.3\textwidth}
%   \centering
%   \includegraphics[width=.95\linewidth]{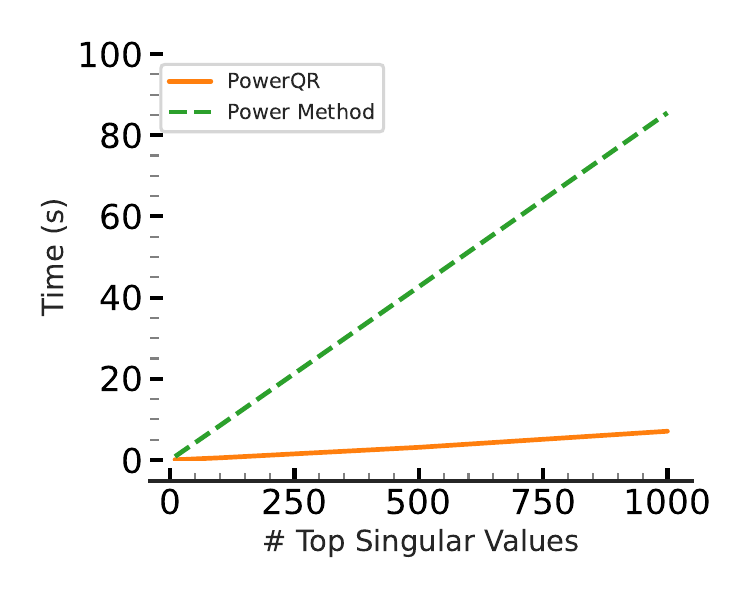}
%   \caption{}
%   \label{fig-QR-power-time}
% \end{subfigure}
\begin{figure}


\centering
  \includegraphics[width=.68\linewidth]{figures/circ_compare_normal_st1.pdf}
\hfill
  \includegraphics[width=.28\linewidth]{figures/time_vs_power.pdf}
  %\label{fig-circ-error}
  % \label{fig-QR-power-time}
\caption{\textbf{a.} The absolute difference in the spectral norm of convolutional layers with different padding types and their circulant approximates for various kernel sizes ($3$, $5$, and $7$) and numbers of channels. The values are computed by averaging over $100$ convolutional filters drawn from a normal distribution for each setting. \textbf{b.} Comparison of the run-time of PowerQR (\cref{alg:powerqr}) to that of the pipeline used by~\cite{virmaux2018lipschitz} for computing the top-$k$ singular values. We considered a 2d-convolutional layer with $3\times 3$ filters and $32$ input/output channels. The convolution is applied to a $32\times 32$ image.}
\label{fig:fig-time-fft-small-and-fig-QR-power-time}
\end{figure}

\section{Experiments (Cont.)}
\label{apx-exp}

In this section, we elaborate on the experiments and results pointed out in section~\ref{sec-experiment-clipping} and present some additional experiments to show the advantages of our proposed methods. The models used for the experiments are ResNet-18, as it was used by prior work~\cite{gouk2021regularisation,senderovich2022towards,delattre2023efficient}, which has a small modification compared to the original model; for the residual connections in the model, they simply divide the output of the layer by $2$ so that if both of the layers are $1$-Lipschitz, the output of the module will remain $1$-Lipschitz. Similarly, for the optimization of these models, we use SGD optimizer and a simple scheduler that decays the learning rate by $0.1$ every $40$ steps. We train each of the models for $200$ epochs. In addition to ResNet-18 models, we perform the comparisons on DLA, as it is presented in a publicly available GitHub repository~\footnote[1]{https://github.com/kuangliu/pytorch-cifar} as a model that achieves better results on CIFAR-10. We use the same training procedure for these models as the ones used for the ResNet-18 model. For generating adversarial examples and evaluating the models, we use a publicly available repository~\footnote[2]{https://github.com/AI-secure/Transferability-Reduced-Smooth-Ensemble/tree/main}, along with the strongest default values for the attacks ($c = \epsilon=0.02$ and $c = \epsilon=0.1$ for CW and PGD attack on CIFAR-10 and MNIST, respectively), and also add their training procedure for MNIST to evaluate both ResNet-18 and DLA models on an additional dataset. We also use a simple model which consists of a single convolutional layer and a dense layer on the MNIST dataset for some of our experiments regarding the correctness of clipping methods~(\Cref{fig:simple-clip-compare}) and showing the compensation phenomenon~(\Cref{fig:resnet18-spectral-norm}a).

\subsection{Error in Circulant Approximation}
\label{apx-exp-powerqr}

Our introduced method can be used to compute the exact spectrum of any implicitly linear layer, including different types of convolutional layers. However, the method introduced by~\cite{sedghi2018singular} only computes the spectrum of the convolutional layers when they have circular padding and no strides. This method was extended to convolutional layers with circular padding with strides other than $1$ by~\cite{senderovich2022towards}. These methods, as shown in~\Cref{subsec-comp-others}, are very slow and not practical for large models. These methods are also very memory consuming compared to other methods. More recently,~\cite{delattre2023efficient} introduced an algorithm that uses Gram iteration to derive a fast converging upper-bound for the convolutional layers with circular padding with a stride of $1$. As shown in~\Cref{fig:resnet-clip-compare}, this method is still much slower than our method for clipping and less accurate than our method and also its prior work on circulant convolutional layers. 
In~\Cref{fig:fig-time-fft-small-and-fig-QR-power-time}a we show that the approximation error using these method can be large on random convolutional layers. We use convolutional layers with different standard padding types with various filter sizes and choose the filter values from a normal distribution. Then we use the circulant approximation to approximate the spectral norm of these convolutional layers, and present the absolute difference with the true spectral norm. The presented values are averaged over $100$ trials for each setting. As the figure shows, the approximation error can be large, especially as the number of channels or the kernel size increases.

\subsection{Extracting Multiple Singular Values}
\label{apx-multi}

We also compared the run-time of the PowerQR with $100$ steps for extracting the top-$k$ eigenspace to the run-time of $k$ successive runs of the power method (as suggested in~\cite{virmaux2018lipschitz}). We did not include the deflation step (required for their proposed method to work), which makes the running time of their method even worse. Figure~\ref{fig:fig-time-fft-small-and-fig-QR-power-time}b shows that our implicit implementation of subspace iteration is dramatically more efficient.
\chapter{More on Machine Unlearning}
\label{apx:amun}

\section{Proofs}
\label{apx:proof}

Here we provide the proof of Theorem~\ref{theorem:amun}:

\begin{proof}
    As we perform the unlearning by fine-tuning and performing a gradient descent update to $\theta_o$, we have:
        $\theta^\prime =  \theta_o - \frac{1}{\beta} \nabla \hat{\mathcal{R}} (\theta_o)$.
    Therefore, we can write:

    \begin{align*}
        \|\theta^\prime - \theta_u\|_2^2  &= \|\theta_o - \frac{1}{\beta} \nabla \hat{\mathcal{R}} (\theta_o) - \theta_u\|_2^2 \\
        &=  
        \|\theta_o - \theta_u\|_2^2 - \frac{2}{\beta} \langle \nabla \hat{\mathcal{R}} (\theta_o), \theta_o - \theta_u \rangle + \frac{1}{\beta^2} \|\nabla \hat{\mathcal{R}} (\theta_o)\|_2^2 \\
        &\leq 
        \|\theta_o - \theta_u\|_2^2 + \frac{2}{\beta} (\hat{\mathcal{R}} (\theta_u) - \hat{\mathcal{R}} (\theta_o)) + \frac{2}{\beta} (\hat{\mathcal{R}} (\theta_o) - \hat{\mathcal{R}} (\theta^\prime)) \\
        &= 
        \|\theta_o - \theta_u\|_2^2 + \frac{2}{\beta} (\hat{\mathcal{R}} (\theta_u) - \hat{\mathcal{R}} (\theta^\prime)),
    \end{align*}

    \noindent where the inequality is derived by using the smoothness property ($\|\nabla \hat{\mathcal{R}} (\theta_o)\|_2^2 \leq 2\beta(\hat{\mathcal{R}} (\theta_o) - \hat{\mathcal{R}} (\theta^\prime))$) and the convexity assumption which leads to the inequality: $\hat{\mathcal{R}} (\theta_o)) \geq \hat{\mathcal{R}} (\theta_u) + \langle \nabla \hat{\mathcal{R}} (\theta_o), \theta_o - \theta_u \rangle$.

    Next, we derive an upper-bound for $\hat{\mathcal{R}} (\theta_u) - \hat{\mathcal{R}} (\theta^\prime)$ to replace in the above inequality. By the definition of unnormalized empirical loss on $\mathcal{D}^\prime$:

    \begin{align*}
        &\hat{\mathcal{R}} (\theta_u) - \hat{\mathcal{R}} (\theta^\prime) \\
        &= 
        \sum_{i=1}^{n-1} \ell(f_{\theta_u}(x_i), y_i) + \ell(f_{\theta_u}(x), y)  +  \ell(f_{\theta_u}(x^\prime), y^\prime)
        - \sum_{i=1}^{n-1} \ell(f_{\theta^\prime}(x_i), y_i) - \ell(f_{\theta^\prime}(x), y)  -  \ell(f_{\theta^\prime}(x^\prime), y^\prime) \\
        &=
        \ell(f_{\theta_u}(x), y)  + \ell(f_{\theta_u}(x^\prime), y^\prime)
        - \ell(f_{\theta^\prime}(x), y)  -  \ell(f_{\theta^\prime}(x^\prime), y^\prime),
    \end{align*}

    \noindent where the last equality was derived by the assumption that models are trained until they achieve near-$0$ loss on their corresponding dataset. Therefore, $\sum_{i=1}^{n-1} \ell(f_{\theta_u}(x_i), y_i) = \sum_{i=1}^{n-1} \ell(f_{\theta^\prime}(x_i), y_i) = 0$ since the retrained model has been trained on the remaining samples and the unlearned model has been derived by a single step of gradient descent on the original model, that had been trained on $\mathcal{D}$.

    To further simplify the derived terms above and reaching at our desired inequality, we focus on the term $ - \ell(f_{\theta^\prime}(x), y)$. By adding and decreasing the term $\ell(f_{\theta_o}(x^\prime), y)$ we get:

    \begin{align*}
        - \ell(f_{\theta^\prime}(x), y) &= - \ell(f_{\theta_o}(x^\prime), y) + \ell(f_{\theta_o}(x^\prime), y) - \ell(f_{\theta^\prime}(x), y)  \\
        &\leq 
        - \ell(f_{\theta_o}(x^\prime), y) + \ell(f_{\theta_o}(x^\prime), y)
        - \ell(f_{\theta_o}(x), y) - \langle \nabla \ell(f_{\theta_o}(x), y), \theta^\prime - \theta_o \rangle \\
        &= - \ell(f_{\theta_o}(x^\prime), y) + \ell(f_{\theta_o}(x^\prime), y)
        - \ell(f_{\theta_o}(x), y) \\
        &\leq
        - \ell(f_{\theta_o}(x^\prime), y) + L \delta,
    \end{align*}

    \noindent where the first inequality uses the convexity of the the loss function with respect to the parameters and the third derivations is due to the assumption that the original model achieves a zero loss on its training samples, including $(x,y)$ (hence, $\nabla \ell(f_{\theta_o}(x), y) =0$). The final inequality is due to the Lipschitzness assumption of model $f$ with respect to the inputs.
    
\end{proof}

\section{Implementation Details}
\label{apx:impl_details}

For all the experiments we train three models on $\D$. For each size of $\D$ ($10\%$ or $50\%$), we use three random subsets and for each subset, we try three different runs of each of the unlearning methods. This leads to a total of $27$ runs of each unlearning method using different initial models and subsets of $\D$ to unlearn. Hyper-parameter tuning of each of the methods is done on a separate random subset of the same size from $\D$, and then the average performance is computed for the other random subsets used as $\Df$. 
For tuning the hyper-parameters of the models, we followed the same range suggested by their authors and what has been used in the prior works for comparisons. Similar to prior works~\cite{liu2024model,fan2023salun}, we performed 10 epochs for each of the unlearning methods, and searched for best learning rate and number of steps for a learning rate scheduler. More specifically, for each unlearning method, we performed a grid search on learning rates within the range of $[10^{-6}, 10^{-1}]$ with an optional scheduler that scales the learning rate by $0.1$ for every $1$ or $5$ steps. For \texttt{SalUn}, whether it is used on its own or in combination with \amun{}, we searched for the masking ratios in the range $[0.1,0.9]$.

The original models are ResNet-18 models trained for 200 epochs with a learning rate initialized at 0.1 and using a scheduler that scales the learning rate by 0.1 every 40 epochs. The retrained models are trained using the same hyper-parameters as the original models. 
For evaluation using RMIA, we trained $128$ separate models such that each sample is included in half of these models. As suggested by the authors, we used Soft-Margin Taylor expansion of Softmax (SM-Taylor-Softmax) with a temperature of $2$ for deriving the confidence values in attacks of RMIA. We used the suggested threshold of $2$ for comparing the ratios in computing the final scores ($\gamma$ value).
For controlling the Lipschitz constant of the ResNet-18 models in~\S~\ref{sec:ablation-robust}, we used the default setting provided by the authors for clipping the spectral norm of all the convolutional and fully-connected layers of the model to $1$. For RMIA evaluations, we trained 128 of these models separately such that each sample appears in exactly half of these models.

\section{\amun{} + SalUn}
\label{apx:amun_salun}

The main idea behind SalUn is to limit the fine-tuning of the model, during unlearning, to only a subset of the parameters of the model, while keeping the rest of them fixed. \cite{fan2023salun} show that this technique helps to preserve the accuracy of the model when fine-tuning the model on $\Df$ with randomly-chosen wrong labels. More specifically, they compute a mask using the following equation:

% \vspace*{-5mm}
 {\begin{equation*}
      \mathbf m_{\mathrm{S}} =  {\bf 1} \left ( \left |  \nabla_{\thetafull} \ell (\thetafull; \Df) \left . \right | \right  | \geq  \gamma \right ),
     \label{eq: sal_map_hard}
 \end{equation*}%
 which, basically, computes the gradient of the loss function for the current parameters with respect to $\Df$, and uses threshold $\gamma$ to filter the ones that matter more to the samples in $\Df$. Note that,  $ {\bf 1}$ is an element-wise indicator function. Then, during fine-tuning of the model on $\Df$ with random labels they use $\mathbf m_{\mathrm{S}}$ to detect the parameters of $\thetafull$ that get updated.

 In our experiments, we try combining this idea with \amun{} for updating a subset of the parameters that might be more relevant to the samples in $\Df$. We refer to this combination as \amun{}$_{SalUn}$ in Tables~\ref{tab:rmia} and~\ref{tab:rmia_forgetonly} in \S~\ref{sec:amun_results} and Tables~\ref{tab:mia} and~\ref{tab:mia_forgetonly} in Appendix~\ref{apx:svc_mia}. As the results show, \amun{}$_{SalUn}$ constantly outperforms \texttt{SalUn} and for the cases that $\Dr$ is not available it also outperforms \amun{}. In the setting where $\Dr$ is accessible, it performs comparable to \amun{}. This is probably due to the fact that when $\Dr$ is not available and \amun{} has access to only the samples $\Df \cup \Da$, SalUn acts as a regularization for not allowing all the parameters of the model that might not be relevant to $\Df$ be updated. In the setting where $\Dr$ is available, involving it in fine-tuning will be a sufficient regularization that preserves models' utility while unlearning $\Df$.

\begin{figure}[t!]
\centering
\begin{subfigure}
    \centering
    \includegraphics[width=.32\linewidth]{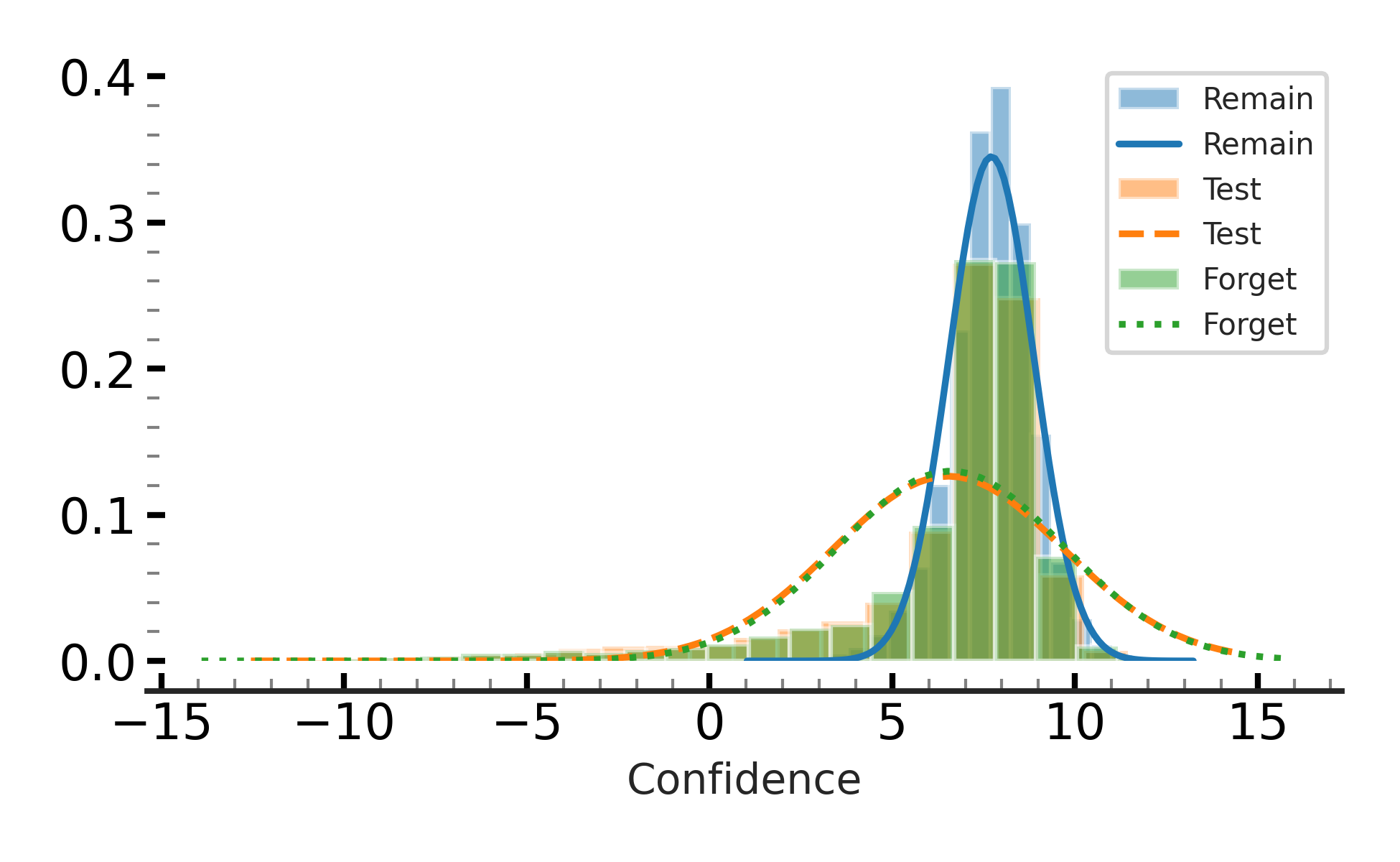}
    %\caption{}
\end{subfigure}
\begin{subfigure}
    \centering
    \includegraphics[width=.32\linewidth]{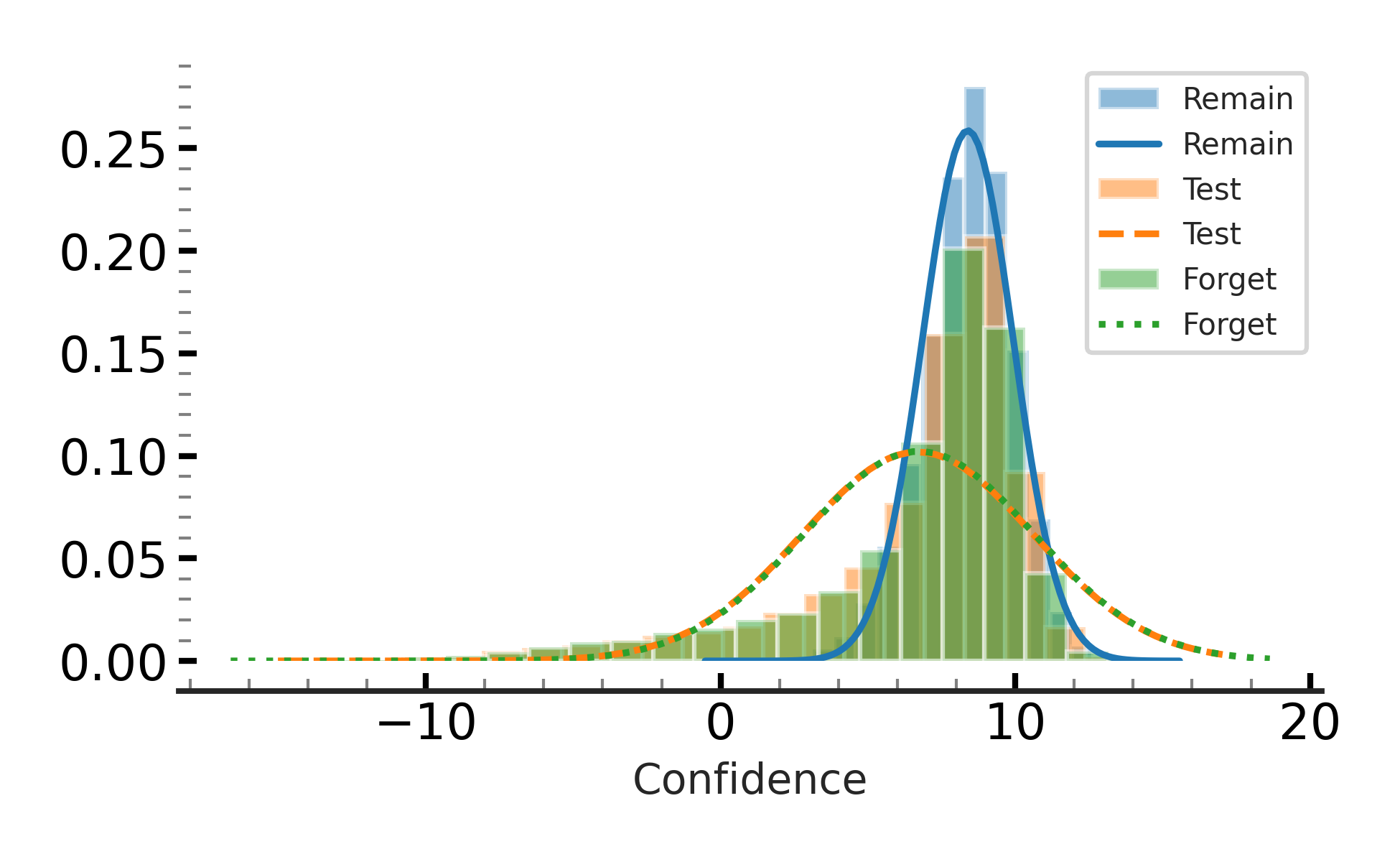}  
    %\caption{}
\end{subfigure}
\begin{subfigure}
    \centering
    \includegraphics[width=.32\linewidth]{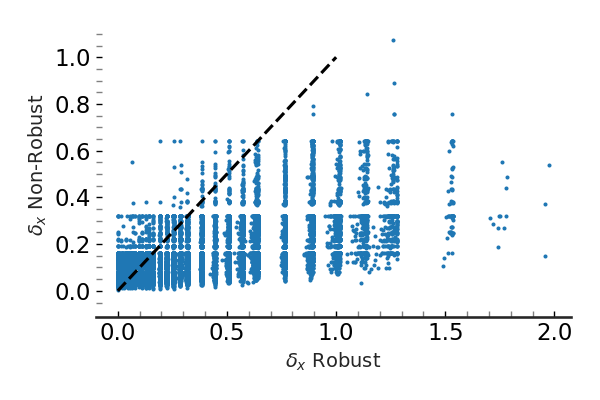}  
    %\caption{}
\end{subfigure}

\caption{
(left) These two plots show the histogram of confidence values of the retrained model on its predictions for the remaining set (Remain), test set (Test), and forget set (Forget) during the training, when the size of the forget set is $\%10$ (1st plot) and $\%50$ (2nd plot) of the training set. It also shows the Gaussian distributions fitted to each histogram. As the plots show the models perform similarly on the forget set and test set because to the retrained model they are unseen data from the same distribution. (right) This plot compares the $\delta_x$ value in definition~\ref{def:attack} for adversarial examples generated on the original ResNet-18 models (x-axis) and clipped ResNet-18 models (y-axis). The dashed line shows $x=y$ line and more than $97\%$ of the values fall bellow this line. } 
%\vspace{-5mm}
\label{fig:retrain_conf}
\end{figure}

\textbf{}
\section{Ablation Study (cont.)}
\label{apx:ablation_all}

In this section, we further discuss the ablation studies that were mentioned in \S~\ref{sec:ablation}. We also present other ablation studies on using transferred adversarial examples (Appendix~\ref{apx:transfer_attack}) and weaker adversarial attacks (Appendix~\ref{apx:weak_attack}) in \amun{}.

\subsection{Empirical Behavior of Retrained Models}
\label{apx:confidence}

As discussed in \S~\ref{sec:amun_motivation}, assuming the $\Dt$ and $\D$ come from the same distributions, we expect the prediction confidences of the models retrained on $\Dr$ to be similar on $\Df$ and $\Dt$, because both of these sets are considered unseen samples that belong the the same data distribution.  Figure~\ref{fig:retrain_conf} (left) shows the confidence scores for a ResNet-18 model that has been retrained on $\D - \Df$, where $\D$ is the training set of CIFAR-10 and the size of $\Df$ (randomly chosen from $\D$) is $10\%$ and $50\%$ of the size of $\D$ for the left and right sub-figures, respectively. To derive the confidence values, we use the following scaling on the logit values:

\begin{align*}
    \phi(f(x)_y) = \mathrm{log} \left( \frac{f(x)_y}{1-f(x)_y} \right),
\end{align*}

\noindent where $f(x)_y$ is the predicted probability for the correct class. This scaling has been used by~\cite{carlini2022membership} to transform the the prediction probabilities such that they can be better approximated with a normal distribution, which are indeed used by some of the SOTA MIA methods for predicting training samples from the test samples~\cite{carlini2022membership}. Figure~\ref{fig:retrain_conf} (left) shows these fitted normal distribution as well, which perfectly match for $\Df$ and $\Dt$.

\subsubsection{Confidence Values in Unlearned Models}
\label{apx:confidence_after}

In this section, we investigate the confidence values of the model, before and after using \amun{} for unlearning a subset of $10\%$ or $25\%$ of the training samples. For the original model (before unlearning), we expect the distribution of confidence values of samples in $\Df$ to be similar to those of the samples in $\Dr$ because they were both used as the training data and the model has used them similarly during training. However, this distribution is different for the test samples ($\Dt$), as the model has not seen them during the training phase. After unlearning, as discussed in section~\ref{sec:amun_motivation}, we expect the distribution of confidence values for $\Df$ and $\Dt$ to become more similar so that MIAs cannot distinguish them from each other. As Figure~\ref{fig:unlearn_conf} shows, for both unlearning $10\%$ (two leftmost subplots) and $50\%$ (two rightmost subplots), we observe the same behavior. Fur the original models (1st and 3rd subplot), the distribution for $\Df$ and $\Dr$ mathces exactly, but after using \amun{} (2nd and 4th subplot) the distribution for $\Df$ shifts toward that of $\Dt$.

\begin{figure}[t!]
\centering
\begin{subfigure}
    \centering
    \includegraphics[width=.23\linewidth]{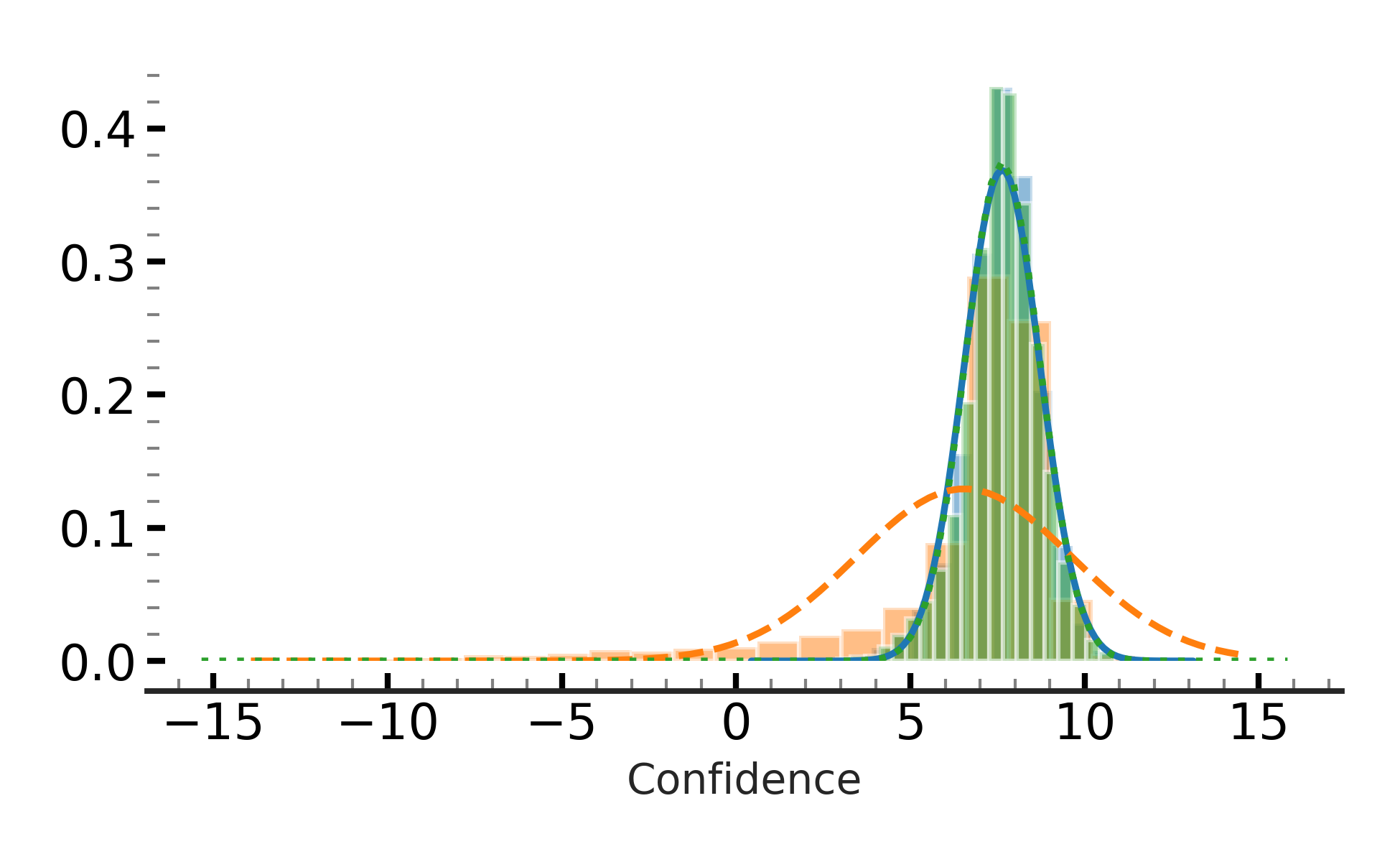}
    \includegraphics[width=.23\linewidth]{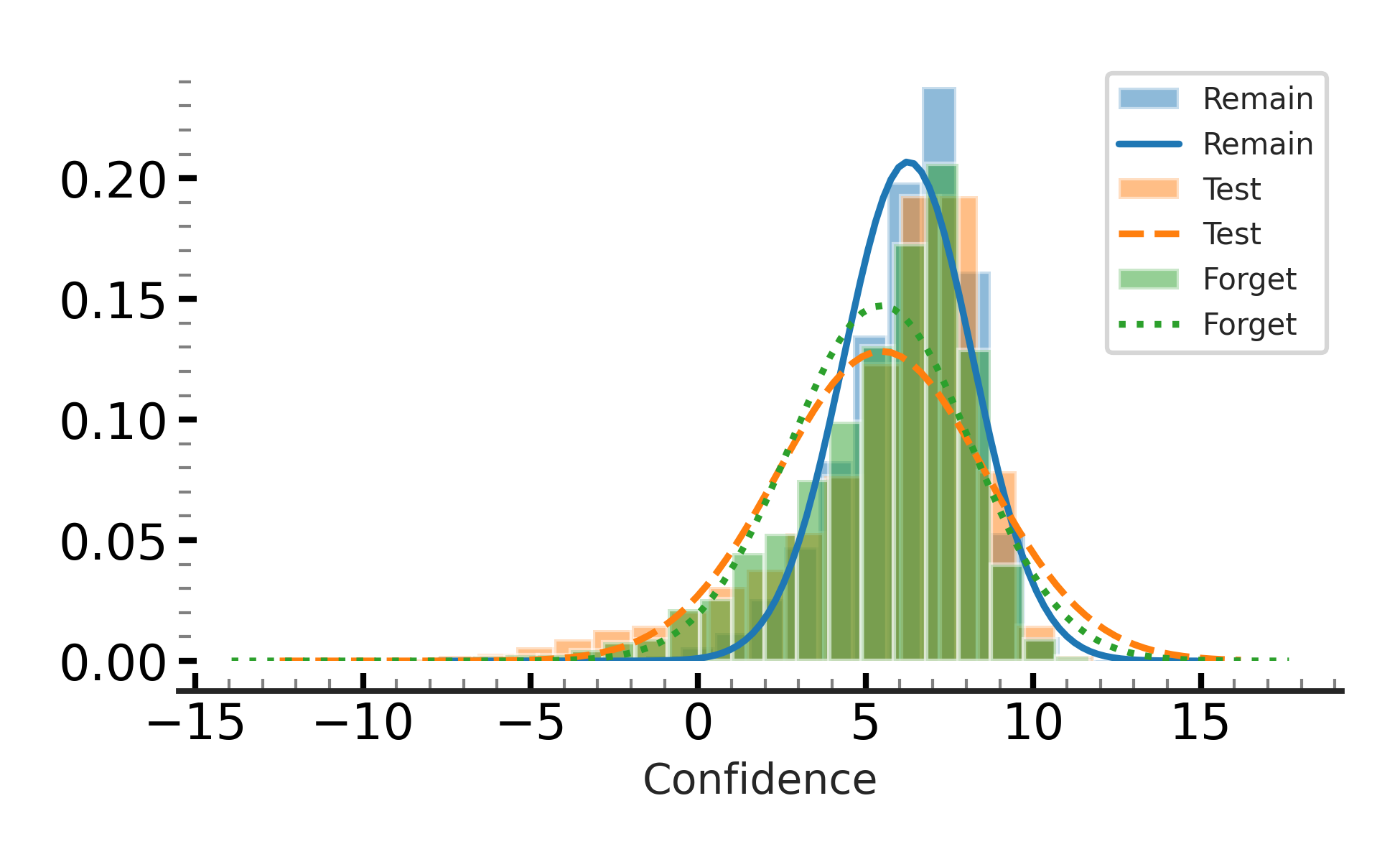}  
    %\caption{A}
\end{subfigure}
~ 
\begin{subfigure}
    \centering
    \includegraphics[width=.23\linewidth]{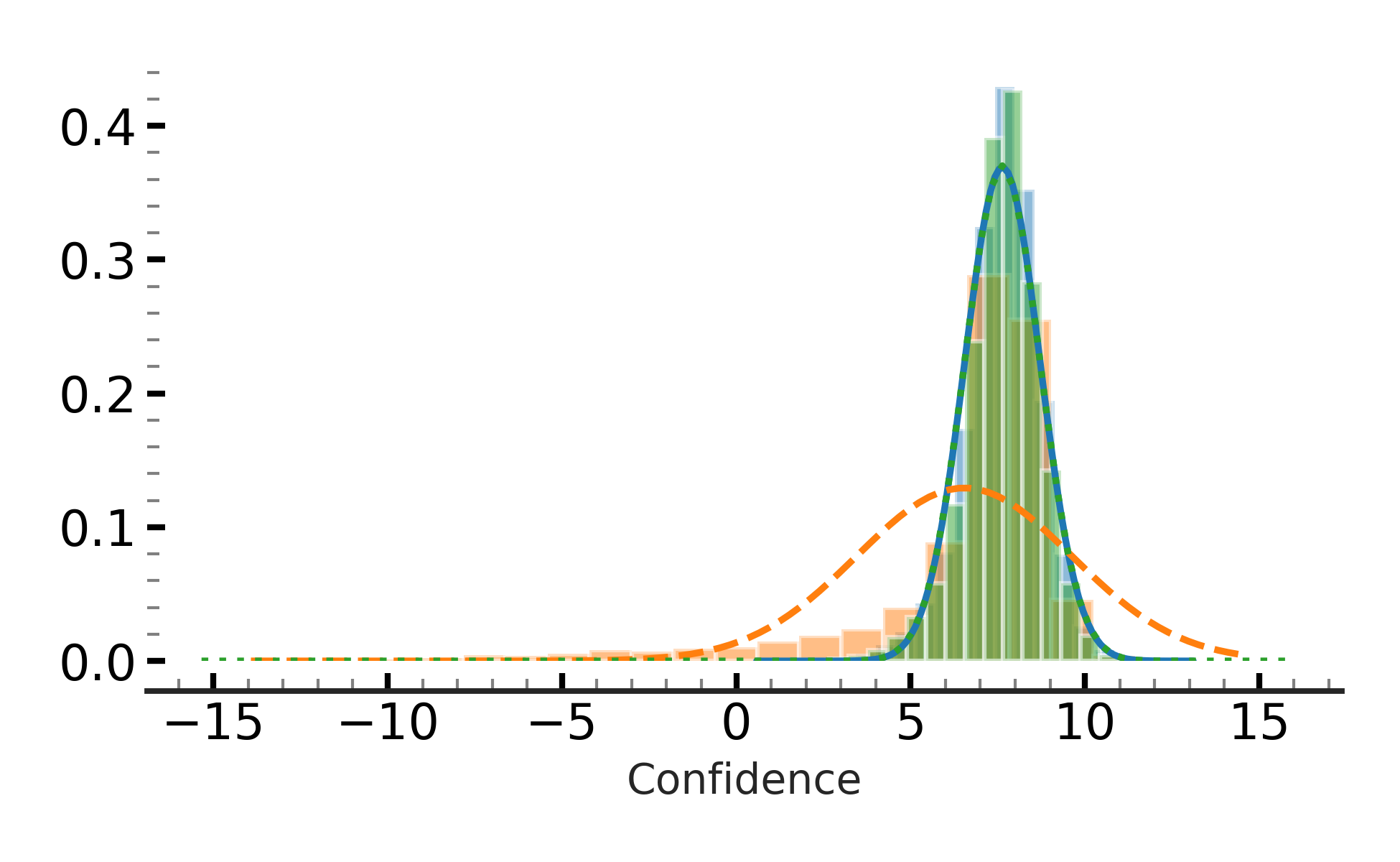}  
    \includegraphics[width=.23\linewidth]{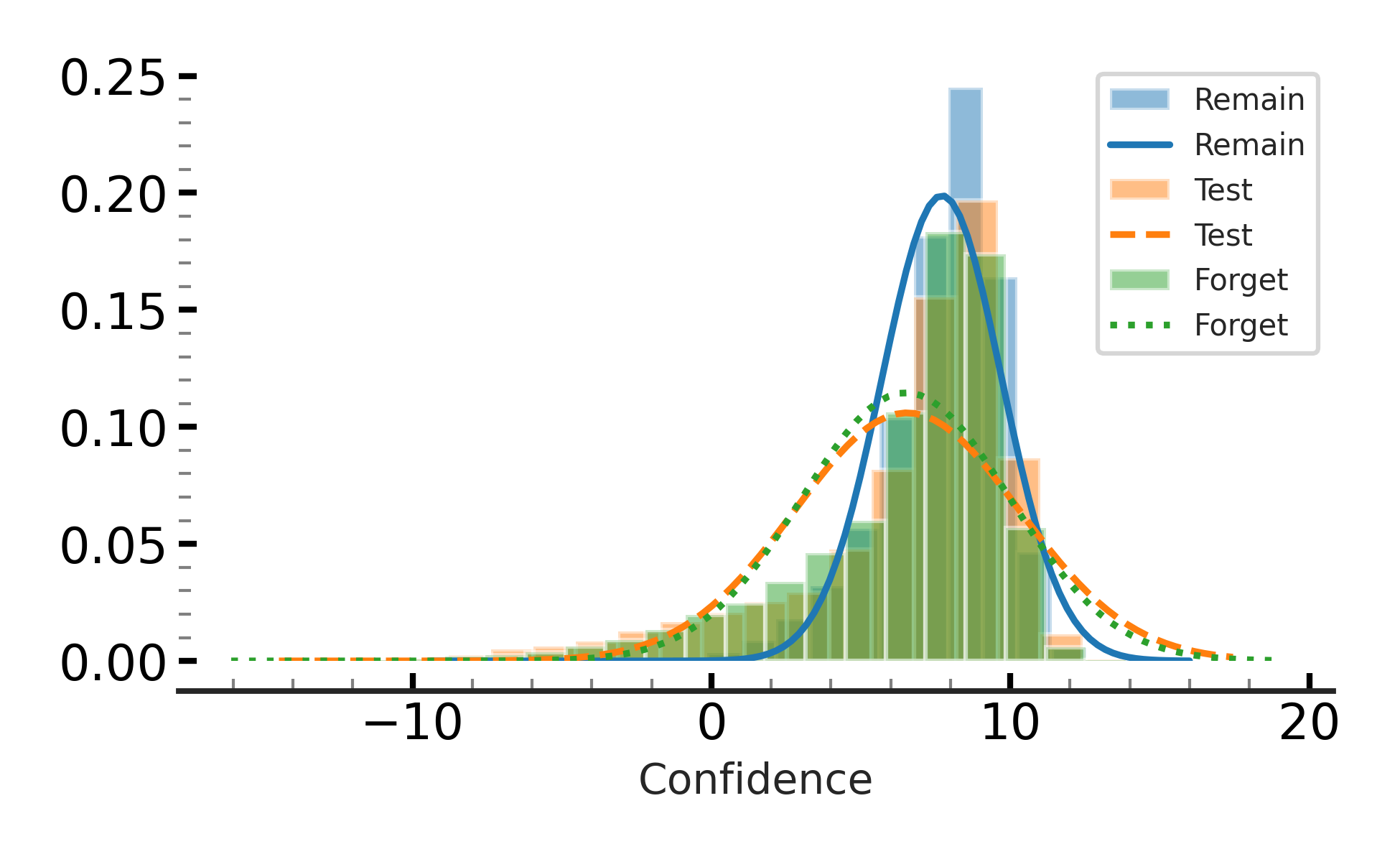}  
    %\caption{B}
\end{subfigure}

\caption{The two left-most subplots show the confidence values before and after unlearning (using \amun{}) of $10\%$ of the training samples. The two right-most subplots show these confidence values for unlearning $50\%$ of the training samples. In both cases, the confidence values of samples in $\Df$ are similar to those of $\Dr$ and their fitted Gaussian distribution matches as expected. After using \amun{} for unlearning the samples in $\Df$, the confidence values on this set gets more similar to the test (unseen) samples.} 
%\vspace{-5mm}
\label{fig:unlearn_conf}
\end{figure}

\subsection{Adversarially Robust Models (cont.)}
\label{apx:ablation_robust}

As discussed in \S~\ref{sec:ablation-robust}, we also evaluatee the effectiveness of \amun{} when the trained model is adversarially robust. For this experiment, we used the ResNet-18 models with $1$-Lipschitz convolutional and fully-connected layers, which are shown to be significantly more robust than the original ResNet-18 models. In Table~\ref{tab:rmia-clipped-remFalse}, we showed the results for unlearning $10\%$ and $50\%$ of the samples from the robust ResNet-18 models trained on CIFAR-10, in the case where $\Dr$ is not accessible. In Table~\ref{tab:rmia-clipped-remTrue}, we showed the corresponding results when the unlearning methods have access to $\Dr$. As the results show, similar to the results discussed in \S~\ref{sec:ablation-robust}, \amun{} effectively unlearns $\Df$ for either of the sizes of the this set.

\begin{table}[th!]
\begin{center}
\begin{small}
\begin{sc}
\resizebox{0.66\columnwidth}{!}{
\begin{tabular}{@{} l  c c c | c c c @{}}
 \toprule

 % \midrule   
 % & \multicolumn{3}{@{}c}{\textbf{Orig}} & \multicolumn{3}{@{}c}{\textbf{Clip}} \\\addlinespace[0.3em]

& \multicolumn{3}{@{}c}{\textbf{Random Forget ($10\%$)}} & \multicolumn{3}{@{}c}{\textbf{Random Forget ($50\%$)}} \\\addlinespace[0.3em]

 \multicolumn{1}{c}{\scriptsize \textbf{}} & 
 \multicolumn{1}{c}{\scriptsize FT AUC} & 
 \multicolumn{1}{c}{\scriptsize FR AUC} & 
 %\multicolumn{1}{c}{\scriptsize RMIA Gap} & 
 %\multicolumn{1}{c}{\scriptsize Low FPR} & 
 \multicolumn{1}{c}{\scriptsize Test Acc} & 
 \multicolumn{1}{c}{\scriptsize FT AUC} &
 \multicolumn{1}{c}{\scriptsize FR AUC} & 
 %\multicolumn{1}{c}{\scriptsize RMIA Gap} & 
 %\multicolumn{1}{c}{\scriptsize Low FPR} & 
 \multicolumn{1}{c}{\scriptsize Test Acc} 
 % &  $N=3$ & $N=9$ & {\footnotesize Improvement}  
 \\\addlinespace[0.3em]

 \cmidrule(r){2-4}
 \cmidrule(r){5-7}

 Retrain & $49.95$ {\tiny $\pm 0.24$} & $54.08 $ {\tiny $\pm 0.16$} & $89.01$ {\tiny $\pm 0.21$} & 
 
 $50.19$ {\tiny $\pm 0.15$}  & $55.61$ {\tiny $\pm 0.05$} & $85.76$ {\tiny $\pm 0.41$}
 
 \\\addlinespace[0.3em]
  \cmidrule(r){1-7}

 \textbf{Amun} & $49.12$ {\tiny $\pm 0.19$} & $53.60$ {\tiny $\pm 0.31$} & $86.94$ {\tiny $\pm 0.56$} & 
 
 $49.41$ {\tiny $\pm 0.25$} & $54.22$ {\tiny $\pm 0.16$} & $87.38$ {\tiny $\pm 0.39$}
 \\\addlinespace[0.3em]

\bottomrule
\end{tabular}
}
\end{sc}
\end{small}
\end{center}
\caption{\footnotesize {\bf Unlearning on adversarially robust models.} Evaluating the effectiveness of \amun{} in unlearning $10\%$ and $50\%$ of the training samples when the models are adversarially robust and we have access to $\Dr$. For this experiment we use models with controlled Lipschitz constant which makes them provably and empirically more robust to adversarial examples. }
\label{tab:rmia-clipped-remTrue}

\end{table}

We also evaluated \amun{} for unlearning in models that are adversarially trained. we performed our analysis on ResNet-18 models trained using TRADES loss~\cite{zhang2019theoretically} on CIFAR-10. We performed the experiments for unlearning 10\% of the dataset in both cases where $\Dr$ is accessible and not. As the results in Table~\ref{tab:trades_tinynet_remTrue} show, in both settings \amun{} is effective in unlearning the forget samples and achieving a low gap with the retrained models. This gap is obviously smaller when there is access to $\Dr$.

\begin{table*}[th!]
\begin{center}
\begin{small}
\begin{sc}
%\resizebox{\textwidth}{!}{
\begin{tabular}{@{} l  c c  c  c c @{}}
 \toprule

 % \midrule   
 % & \multicolumn{3}{@{}c}{\textbf{Orig}} & \multicolumn{3}{@{}c}{\textbf{Clip}} \\\addlinespace[0.3em]

%& \multicolumn{5}{@{}c}{\textbf{Random Forget ($10\%$)}} \\\addlinespace[0.3em]

 \multicolumn{1}{c}{\scriptsize \textbf{}} & 
 \multicolumn{1}{c}{\scriptsize Unlearn Acc} & 
 \multicolumn{1}{c}{\scriptsize Retain Acc} & 
 \multicolumn{1}{c}{\scriptsize Test Acc} & 
 \multicolumn{1}{c}{\scriptsize FT AUC} & 
 \multicolumn{1}{c}{\scriptsize Avg. Gap} 
 % &  $N=3$ & $N=9$ & {\footnotesize Improvement}  
 \\\addlinespace[0.3em]

 \cmidrule(r){2-6}
% \cmidrule(r){7-11}
 % \cmidrule(r){5-7}

 % & \multicolumn{4}{@{}c}{Accuracy on $\Dt$}   \\ \addlinespace[0.4em]%[0.5ex] 
 % ResNet18 (noBN) & $28.0 $ & $43.4 $ & $44.9 $ & $91.3 $ & $107.7 $ & $108.9 $

 % \\\addlinespace[0.3em]

 %  Retrain & $50.00$ {\tiny $\pm 0.42$} & $0.00$  & 
 
%  $50.01$ {\tiny $\pm 0.12$} & $0.00$
 
%  \\\addlinespace[0.3em]

 Retrain & $82.33$ {\tiny $\pm 0.39$} & $94.22 $ {\tiny $\pm 0.21$} & $81.72$ {\tiny $\pm 0.36$} & $50.04$ {\tiny $\pm 0.34$} & $0.00$ 
 \\\addlinespace[0.3em]
%  \cmidrule(r){1-11}

 \textbf{Amun}$_{\, \mathrm{With} \, \Dr}$  & $82.65$  { \tiny $\pm 0.62$ }  & $94.33$  { \tiny $\pm 0.84$ }  & $84.99$  { \tiny $\pm 0.91$ }  & $47.18$  { \tiny $\pm 0.50$ }  & $1.02$  { \tiny $\pm 0.18$ } 
 \\\addlinespace[0.3em]

 \textbf{Amun}$_{\, \mathrm{No} \, \Dr}$  & $81.38$  { \tiny $\pm 0.10$ }  & $87.45$  { \tiny $\pm 0.54$ }  & $79.74$  { \tiny $\pm 0.31$ }  & $54.61$  { \tiny $\pm 0.23$ }  & $3.57$  { \tiny $\pm 0.24$ } 
 \\\addlinespace[0.3em]

 % \cmidrule(r){1-4}                        
 \bottomrule
\end{tabular}
%}
\end{sc}
\end{small}
\end{center}
\caption{\footnotesize {\bf Unlearning with access to $\Dr$.} Evaluating \amun{} when applied to ResNet-18 models trained using adversarial training. TRADES loss is used to train the models, and the unlearning is done on $10\%$ of CIFAR-10 Dataset ($\D$). Avg. Gap is used for evaluation (lower is better). The result has been reported in two cases: with and without access to $\Dr$. As the results show, \amun{} is effective in both cases, with slight degradation in the more difficult setting of no access to $\Dr$.}
\label{tab:trades_tinynet_remTrue}

\end{table*}

\subsection{Fine-tuning on Adversarial Examples (cont.)}
\label{apx:ablation-finetune}

As explained in \S~\ref{sec:ablation-finetune}, we evaluate the effect of fine-tuning on test accuracy of a ResNet-18 model that is trained on CIFAR-10, when $\Da$ is substituted with other datasets that vary in the choice of samples or their labels (see \S~\ref{sec:ablation-finetune} for details). In Figure~\ref{fig:fine_tune_10} we presented the results when $\Df$ contains $10\%$ of the samples in $\D$. We also present the results for the case where  $\Df$ contains $50\%$ of the samples in $\D$ in Figure~\ref{fig:fine_tune_50}. As the figure shows, even for the case where we fine-tune the trained models on only $\Da$ which contains the adversarial examples corresponding to $50\%$ of the samples in $\D$ (right-most sub-figure), there is no significant loss in models' accuracy. This is due to the fact that the samples in $\Da$, in contrast to the other constructed datasets, belong to the natural distribution learned by the trained model. To generate the results in both Figures~\ref{fig:fine_tune_10} and~\ref{fig:fine_tune_50}, we fine-tuned the trained ResNet-18 models on all the datasets (see \S~\ref{sec:ablation-finetune} for details) for 20 epochs. We used a learning rate of $0.01$ with a scheduler that scales the learning rate by $0.1$ every 5 epochs.

\begin{figure*}[t!]
\centering
\includegraphics[width=.98\linewidth]{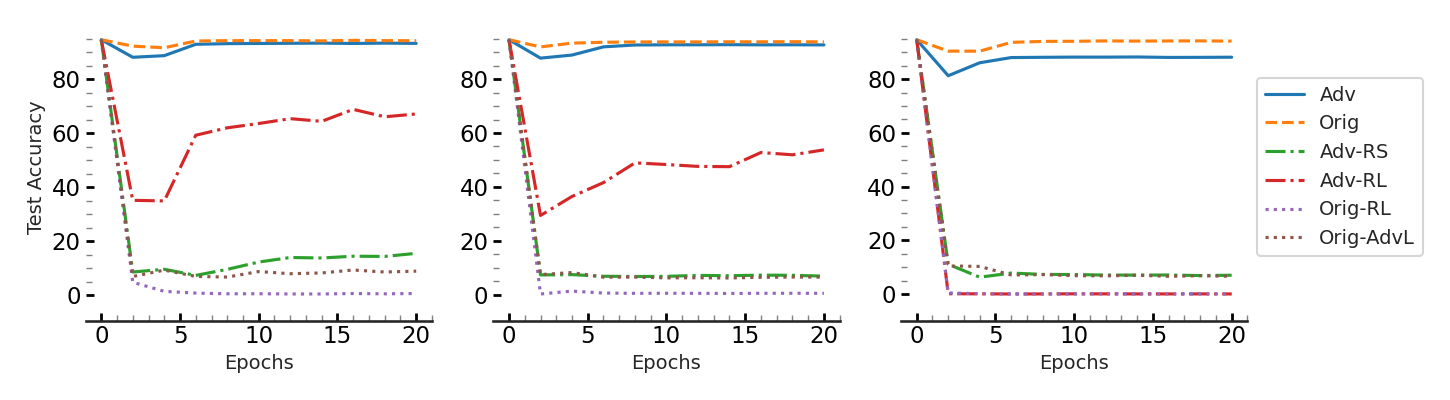}
\caption{This figure shows the effect of fine-tuning on test accuracy of a ResNet-18 model that is trained on CIFAR-10, when the dataset for fine-tuning changes (see \S~\ref{sec:ablation} for details). Let $\Df$ contain $50\%$ of the samples in $\D$ and $\Da$ be the set of adversarial examples constructed using Algorithm~\ref{alg:advset}. \texttt{Adv}, from the left sub-figure to right one, shows the results when $\D \cup \Da$, $\Df \cup \Da$, and $\Da$ is used for fine-tuning the model, respectively. \texttt{Orig}, \texttt{Adv-RS}, \texttt{Adv-RL}, \texttt{Orig-RL}, and \texttt{Orig-AdvL} shows the results when $\Da$ for each of these sub-figures is replace by $\Df$, $\Da_{RS}$, $\Da_{RL}$, $\D_{RL}$, and $\D_{AdvL}$, accordingly. As the figure shows, the specific use of adversarial examples with the mis-predicted labels matters in keeping the model's test accuracy because $\Da$, in contrast to the other constructed datasets belong to the natural distribution learned by the trained model.} 
%\vspace{-5mm}
\label{fig:fine_tune_50}
\end{figure*}

\subsection{Transferred Adversarial Examples}
\label{apx:transfer_attack}

One of the intriguing properties of adversarial attacks is their transferability to other models~\cite{papernot2016transferability,liu2016delving}; Adversarial examples generated on a trained model (source model) mostly transfer successfully to other models (target models). This success rate of the transferred adversarial examples increases if the source model and target model have the same architecture~\cite{papernot2016transferability}. There are other studies that can be used to increase the success rate of this type of attack~\cite{zhao2021success,zhang2022improving,chen2023rethinking,ebrahimpour2024lotos}. In this section, we are interested to see if using the the adversarial examples generated using Algorithm~\ref{alg:advset} for a given model trained on some dataset $\D$ can be used as the $\Da$ dataset for unlearning a portion of $\D$ from a separately trained model. The advantage of using adversarial examples generated for another model is saving the computation cost for other trained models. For this purpose, we train three ResNet-18 models separately on CIFAR-10, we generate the adversarial examples for each of these models using Algorithm~\ref{alg:advset}. We use \amun{} for unlearning $10\%$ and $50\%$ of CIFAR-10 from either of these models, but instead of their adversarial samples, we use the ones derived from the other models. The results in Table~\ref{tab:transfer} shows that using transferred adversarial examples leads to lower performance, specially for the case where there is no access to $\Dr$. All the values for test accuracy are also lower compared to using adversarial examples from the model itself because these adversarial examples from the other models do not all belong to the natural distribution of the model and they do not even always transfer to the other models. Still the results are comparable to the prior SOTA methods in unlearning, and even in the case of no access to $\Dr$ outperforms all prior methods.

\begin{table*}[th!]
\begin{center}
\begin{small}
\begin{sc}
\resizebox{0.98\textwidth}{!}{
\begin{tabular}{@{} l  c c c c c c | c c c c c c @{}}
 \toprule

 % \midrule   
 % & \multicolumn{3}{@{}c}{\textbf{Orig}} & \multicolumn{3}{@{}c}{\textbf{Clip}} \\\addlinespace[0.3em]

& \multicolumn{6}{@{}c}{\textbf{With access to $\Dr$}} & \multicolumn{6}{@{}c}{\textbf{No access to $\Dr$}} \\\addlinespace[0.3em]

& \multicolumn{3}{@{}c}{\textbf{Random Forget ($10\%$)}} & \multicolumn{3}{@{}c}{\textbf{Random Forget ($50\%$)}} 
& \multicolumn{3}{@{}c}{\textbf{Random Forget ($10\%$)}} & \multicolumn{3}{@{}c}{\textbf{Random Forget ($50\%$)}} \\\addlinespace[0.3em]

 \multicolumn{1}{c}{\scriptsize \textbf{}} & 
 \multicolumn{1}{c}{\scriptsize Test Acc} & 
 \multicolumn{1}{c}{\scriptsize FT AUC} & 
 \multicolumn{1}{c}{\scriptsize Avg. Gap} & 
 \multicolumn{1}{c}{\scriptsize Test AUC} &
 \multicolumn{1}{c}{\scriptsize FT AUC} & 
 \multicolumn{1}{c}{\scriptsize Avg. Gap} &

  \multicolumn{1}{c}{\scriptsize Test Acc} & 
 \multicolumn{1}{c}{\scriptsize FT AUC} & 
 \multicolumn{1}{c}{\scriptsize Avg. Gap} & 
 \multicolumn{1}{c}{\scriptsize Test AUC} &
 \multicolumn{1}{c}{\scriptsize FT AUC} & 
 \multicolumn{1}{c}{\scriptsize Avg. Gap} 
 % &  $N=3$ & $N=9$ & {\footnotesize Improvement}  
 \\\addlinespace[0.3em]

 \cmidrule(r){2-4}
 \cmidrule(r){5-7}
 \cmidrule(r){8-10}
 \cmidrule(r){11-13}

 % & \multicolumn{4}{@{}c}{Accuracy on the test set}   \\ \addlinespace[0.4em]%[0.5ex] 
 % ResNet18 (noBN) & $28.0 $ & $43.4 $ & $44.9 $ & $91.3 $ & $107.7 $ & $108.9 $

 % \\\addlinespace[0.3em]

 \textbf{Self} & $93.45$ {\tiny $\pm 0.22$} & $50.18 $ {\tiny $\pm 0.36$} & $0.62$ { \tiny $\pm 0.05$ } & 
 
 $92.39$ {\tiny $\pm 0.04$}  & $49.99$ {\tiny $\pm 0.18$} & $0.33$ { \tiny $\pm 0.03$ } &

 $91.67$ {\tiny $\pm 0.04$}  & $52.24$ {\tiny $\pm 0.23$} & $1.94$ { \tiny $\pm 0.13$ } &

 $89.43$ {\tiny $\pm 0.19$}  & $52.60$ {\tiny $\pm 0.22$} & $2.51$ { \tiny $\pm 0.09$ } 
 
 \\\addlinespace[0.3em]
  %\cmidrule(r){1-7}

 \textbf{Others} & $92.64$  { \tiny $\pm 0.09$ }  & $48.70$  { \tiny $\pm 0.59$ }  & $1.57$  { \tiny $\pm 0.12$ }  & 
 
$91.49$  { \tiny $\pm 0.03$ }  & $47.36$  { \tiny $\pm 0.63$ }  & $1.15$  { \tiny $\pm 0.23$ }  &

 $90.56$  { \tiny $\pm 0.28$ }  & $48.29$  { \tiny $\pm 0.22$ }  & $3.07$  { \tiny $\pm 0.15$ }  &
 
$83.61$  { \tiny $\pm 0.45$ }  & $51.11$  { \tiny $\pm 0.04$ }  & $6.70$  { \tiny $\pm 0.33$ }

 \\\addlinespace[0.3em]

 % Amun_{+SalUn} & & & & & & 
 
 % $97.93$ & $99.54$ & $92.94$ & $6.59$ & $4.41$
 % \\\addlinespace[0.3em]

 % Amun-Ra_{+SalUn} & & & & & & 
 
 % & & & &
 % \\\addlinespace[0.3em]
 
 % \cmidrule(r){1-4}                        
 \bottomrule
\end{tabular}
}
\end{sc}
\end{small}
\end{center}
\caption{\footnotesize {\bf Transferred adversarial examples.} Comparing the effectiveness of unlearning when instead of using adversarial examples of the model, we use adversarial examples generated using Algorithm~\ref{alg:advset} on separately trained models with the same architecture. As the results show, relying on transferred adversarial examples in \amun{} leads to worse results, specially for test accuracy because the adversarial examples do not necessary belong to the natural distribution learned by the model. However, even by using these transferred adversarial examples \amun{} outperforms prior SOTA unlearning methods, specially when there is no access to $\Dr$.}
\label{tab:transfer}

\end{table*}

\begin{figure*}[t!]
\centering
\includegraphics[width=.98\linewidth]{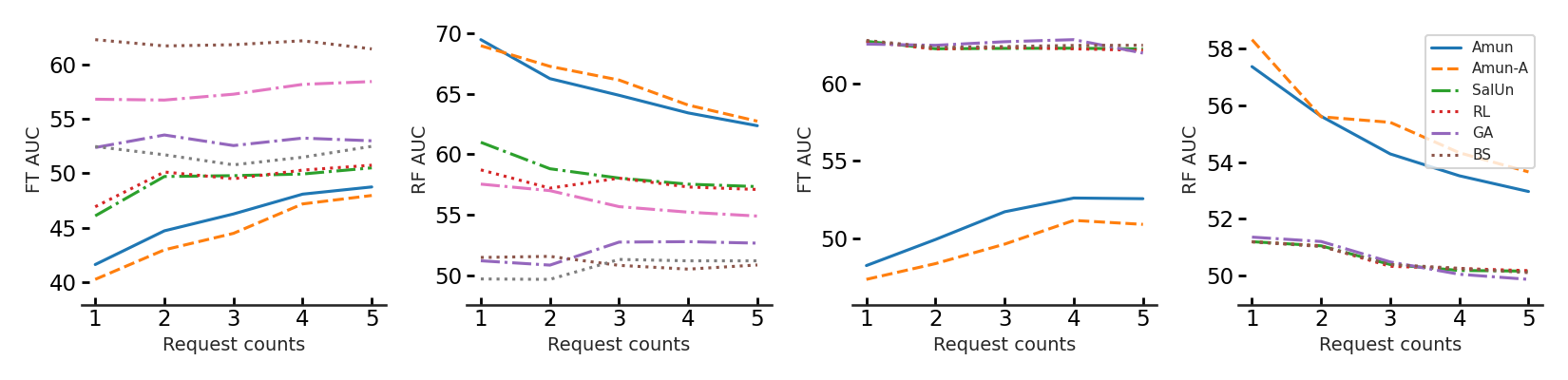}
\caption{This figure shows both \texttt{FT AUC} and \texttt{RF AUC} components of the plots presented in Figure~\ref{fig:adaptive}. The two left-most sub-figures show these values along the number of unlearning requests for the case where there is access to $\Dr$ and the two right-most ones show these values when there is no access to $\Dr$. } 
%\vspace{-5mm}
\label{fig:adaptive_FT_RF}
\end{figure*}

\subsection{Weak Attacks}
\label{apx:weak_attack}

In this section we evaluate the effectiveness of using weaker attacks in Algorithm~\ref{alg:advset}. For this purpose, we perform the unlearning on a ResNet-18 model trained on CIFAR-10 in all unlearning settings mentioned in \S~\ref{sec:amun_evaluation}, and compare the results with the default choice of PGD-50 in \amun{}. The weaker attack that we use is a variation of FFGSM~\cite{wong2020fast}, which itself is a variant of FGSM~\cite{goodfellow2014explaining}. FGSM takes steps toward the gradient sign at a given sample to find adversarial samples. FFGSM takes a small step toward a random direction first, and then proceeds with FGSM. To adapt these method to the format of Algorithm~\ref{alg:advset} we start with FGSM attack; we find the gradient sign and start to move toward that direction in steps of size $\epsilon$ until we find an adversarial example. If the adversarial example is not found after a few iteration of the \texttt{While} loop, we restart the value of $\epsilon$ and add a small random perturbation before the next round of FGSM attack and the \texttt{While} loop. We continue this procedure to find an adversarial sample. After deriving a new set of adversarial examples using this methods, we performed a separate round hyper-parameter tuning for unlearning with the new attack to have a fair comparison. It is notable to mention that this leads to a much faster attack because we only compute the gradient once for each round round of FGSM (at the beginning or after each addition of random perturbation and restarting FGSM). Table~\ref{tab:fgsm} shows the comparison of the results with the original version of \amun{} that uses PGD-50. As the results show, using this weaker attack leads to worse results; however, they still outperform prior SOTA methods in unlearning, specially in the setting where there is no access to $\Dr$ and the size of $\Df$ is $50\%$ of $\D$.

For each image in CIFAR-10, Figure~\ref{fig:fgsm_ratio} shows $\delta_x$ (see Definition~\ref{def:attack}) for the adversarial examples that Algorithm~\ref{alg:advset} finds using PGD-50 ($x$-axis) and FFGSM ($y$-axis). The dashed line shows the $x=y$ line for the reference. As the figure shows $\delta_x$ is much smaller for PGD-50. This value is smaller for FFGSM for less than $4\%$ of the images, but still even for those images, the value of $\delta_x$ for PGD-50 is very small, compared to the range of values that are required for FFGSM in many cases. This, we believe, is the main reason behind worse performance when using FFGSM. However, still note that the adversarial examples that are found using FFGSM belong to the natural distribution of the trained model and therefore fine-tuning the model on these samples does not lead to noticable deterioration of the test accuracy, while achieving reasonable \texttt{FT AUC} score. Indeed this larger distance of the adversarial examples with the original samples in $\Df$, leads to better performance of \amun{} when it does not include $\Df$ when fine-tuning the model, because the difference in the predicted logits compared to the $\delta_x$ leads to under-estimation of the local Lipschitz constant and therefore, the model is able to fit perfectly to both the original samples and its corresponding adversarial sample without changing much. This consequently leads to a larger value of \texttt{FT AUC} score.

\begin{table*}[th!]
\begin{center}
\begin{small}
\begin{sc}
\resizebox{0.98\textwidth}{!}{
\begin{tabular}{@{} l  c c c c c c | c c c c c c @{}}
 \toprule

 % \midrule   
 % & \multicolumn{3}{@{}c}{\textbf{Orig}} & \multicolumn{3}{@{}c}{\textbf{Clip}} \\\addlinespace[0.3em]

& \multicolumn{6}{@{}c}{\textbf{With access to $\Dr$}} & \multicolumn{6}{@{}c}{\textbf{No access to $\Dr$}} \\\addlinespace[0.3em]

& \multicolumn{3}{@{}c}{\textbf{Random Forget ($10\%$)}} & \multicolumn{3}{@{}c}{\textbf{Random Forget ($50\%$)}} 
& \multicolumn{3}{@{}c}{\textbf{Random Forget ($10\%$)}} & \multicolumn{3}{@{}c}{\textbf{Random Forget ($50\%$)}} \\\addlinespace[0.3em]

 \multicolumn{1}{c}{\scriptsize \textbf{}} & 
 \multicolumn{1}{c}{\scriptsize Test Acc} & 
 \multicolumn{1}{c}{\scriptsize FT AUC} & 
 \multicolumn{1}{c}{\scriptsize Avg. Gap} & 
 \multicolumn{1}{c}{\scriptsize Test AUC} &
 \multicolumn{1}{c}{\scriptsize FT AUC} & 
 \multicolumn{1}{c}{\scriptsize Avg. Gap} &

  \multicolumn{1}{c}{\scriptsize Test Acc} & 
 \multicolumn{1}{c}{\scriptsize FT AUC} & 
 \multicolumn{1}{c}{\scriptsize Avg. Gap} & 
 \multicolumn{1}{c}{\scriptsize Test AUC} &
 \multicolumn{1}{c}{\scriptsize FT AUC} & 
 \multicolumn{1}{c}{\scriptsize Avg. Gap} 
 % &  $N=3$ & $N=9$ & {\footnotesize Improvement}  
 \\\addlinespace[0.3em]

 \cmidrule(r){2-4}
 \cmidrule(r){5-7}
 \cmidrule(r){8-10}
 \cmidrule(r){11-13}

 % & \multicolumn{4}{@{}c}{Accuracy on the test set}   \\ \addlinespace[0.4em]%[0.5ex] 
 % ResNet18 (noBN) & $28.0 $ & $43.4 $ & $44.9 $ & $91.3 $ & $107.7 $ & $108.9 $

 % \\\addlinespace[0.3em]

 \textbf{PGD-50} & $93.45$ {\tiny $\pm 0.22$} & $50.18 $ {\tiny $\pm 0.36$} & $0.62$ { \tiny $\pm 0.05$ } & 
 
 $92.39$ {\tiny $\pm 0.04$}  & $49.99$ {\tiny $\pm 0.18$} & $0.33$ { \tiny $\pm 0.03$ } &

 $91.67$ {\tiny $\pm 0.04$}  & $52.24$ {\tiny $\pm 0.23$} & $1.94$ { \tiny $\pm 0.13$ } &

 $89.43$ {\tiny $\pm 0.19$}  & $52.60$ {\tiny $\pm 0.22$} & $2.51$ { \tiny $\pm 0.09$ } 
 
 \\\addlinespace[0.3em]
  %\cmidrule(r){1-7}

 \textbf{FGSM} & $93.87$  { \tiny $\pm 0.16$ }  & $50.64$  { \tiny $\pm 0.51$ }  & $0.92$  { \tiny $\pm 0.25$ }  & 
 
 $89.41$  { \tiny $\pm 1.01$ }  & $50.93$  { \tiny $\pm 0.46$ }  & $1.81$  { \tiny $\pm 0.77$ } &

 $92.14$  { \tiny $\pm 0.28$ }  & $56.58$  { \tiny $\pm 1.05$ }  & $3.46$  { \tiny $\pm 0.36$ } &
 
 $90.12$  { \tiny $\pm 0.28$ }  & $54.54$  { \tiny $\pm 0.47$ }  & $3.29$  { \tiny $\pm 0.10$ }

 \\\addlinespace[0.3em]

 % Amun_{+SalUn} & & & & & & 
 
 % $97.93$ & $99.54$ & $92.94$ & $6.59$ & $4.41$
 % \\\addlinespace[0.3em]

 % Amun-Ra_{+SalUn} & & & & & & 
 
 % & & & &
 % \\\addlinespace[0.3em]
 
 % \cmidrule(r){1-4}                        
 \bottomrule
\end{tabular}
}
\end{sc}
\end{small}
\end{center}
\caption{\footnotesize {\bf Using weaker attacks.} Comparing the effectiveness of unlearning when PGD-10 in Algorithm~\ref{alg:advset} is replaced with a variant of FGSM attack, which is considered to be significantly weaker and leads to finding adversarial examples at a much higher distance to the original samples. We evaluate unlearning $10\%$ and $50\%$ of the training samples in CIFAR-10 from a trained ResNet-18 model. As the results show, in both settings of unlearning (with access to $\Dr$ and no access to $\Dr$), using the weaker attack does not perform as well as the original method. However, it still outperforms prior SOTA unlearning methods.}
\label{tab:fgsm}

\end{table*}

\section{Comparison Using Prior Evaluation Methods}
\label{apx:svc_mia}

In this section we perform similar comparisons to what we presented in section~\ref{sec:amun_results}, but instead of \texttt{FT AUC}, we use the same MIA used by prior SOTA methods in unlearning for evaluations. As mentioned in section~\ref{sec:amun_metrics}, we refer to the score derived by this MIA as \texttt{MIS}.

\begin{table*}[th!]
\begin{center}
\begin{small}
\begin{sc}
\resizebox{\textwidth}{!}{
\begin{tabular}{@{} l  c c  c  c c | c c c c c @{}}
 \toprule

 % \midrule   
 % & \multicolumn{3}{@{}c}{\textbf{Orig}} & \multicolumn{3}{@{}c}{\textbf{Clip}} \\\addlinespace[0.3em]

& \multicolumn{5}{@{}c}{\textbf{Random Forget ($10\%$)}} & \multicolumn{5}{@{}c}{\textbf{Random Forget ($50\%$)}} \\\addlinespace[0.3em]

 \multicolumn{1}{c}{\scriptsize \textbf{}} & 
 \multicolumn{1}{c}{\scriptsize Unlearn Acc} & 
 \multicolumn{1}{c}{\scriptsize Retain Acc} & 
 \multicolumn{1}{c}{\scriptsize Test Acc} & 
 \multicolumn{1}{c}{\scriptsize MIS} & 
 \multicolumn{1}{c}{\scriptsize Avg. Gap} & 
 \multicolumn{1}{c}{\scriptsize Unlearn Acc} &
 \multicolumn{1}{c}{\scriptsize Retain Acc} & 
 \multicolumn{1}{c}{\scriptsize Test Acc} & 
 \multicolumn{1}{c}{\scriptsize  MIS} & 
 \multicolumn{1}{c}{\scriptsize Avg. Gap} 
 % &  $N=3$ & $N=9$ & {\footnotesize Improvement}  
 \\\addlinespace[0.3em]

 \cmidrule(r){2-6}
 \cmidrule(r){7-11}
 % \cmidrule(r){5-7}

 % & \multicolumn{4}{@{}c}{Accuracy on the test set}   \\ \addlinespace[0.4em]%[0.5ex] 
 % ResNet18 (noBN) & $28.0 $ & $43.4 $ & $44.9 $ & $91.3 $ & $107.7 $ & $108.9 $

 % \\\addlinespace[0.3em]

 Retrain & $94.49$ {\tiny $\pm 0.20$} & $100.0 $ {\tiny $\pm 0.00$} & $94.33$ {\tiny $\pm 0.18$} & $12.53$ {\tiny $\pm 0.32$} & $0.00$  
 & 
 
 $92.09$ {\tiny $\pm 0.37$}  & $100.0$ {\tiny $\pm 0.00$} & $91.85$ {\tiny $\pm 0.33$} & $16.78$ {\tiny $\pm 0.37$} & $0.00$
 \\\addlinespace[0.3em]
  \cmidrule(r){1-11}

 FT  & $95.16$ { \tiny $\pm 0.29$ }  & $96.64$ { \tiny $\pm 0.25$ }  & $92.21$ { \tiny $\pm 0.27$ }  & $11.33$ { \tiny $\pm 0.35$ }  & $1.84$ { \tiny $\pm 0.10$ } 
 & 
 
 $94.24$ { \tiny $\pm 0.30$ }  & $95.22$ { \tiny $\pm 0.31$ }  & $91.21$ { \tiny $\pm 0.33$ }  & $12.10$ { \tiny $\pm 0.72$ }  & $3.06$ { \tiny $\pm 0.24$ } 
 \\\addlinespace[0.3em]
 
 RL & $99.22$ { \tiny $\pm 0.19$ }  & $99.99$ { \tiny $\pm 0.01$ }  & $94.10$ { \tiny $\pm 0.11$ }  & $10.94$ { \tiny $\pm 0.45$ }  & $1.64$ { \tiny $\pm 0.19$ } 
 & 
 
 $92.98$ { \tiny $\pm 1.07$ }  & $94.83$ { \tiny $\pm 1.04$ }  & $89.19$ { \tiny $\pm 0.74$ }  & $12.48$ { \tiny $\pm 0.90$ }  & $3.29$ { \tiny $\pm 0.04$ } 
 \\\addlinespace[0.3em]

  GA & $98.94$ { \tiny $\pm 1.39$ }  & $99.22$ { \tiny $\pm 1.31$ }  & $93.39$ { \tiny $\pm 1.18$ }  & $4.21$ { \tiny $\pm 5.25$ }  & $3.62$ { \tiny $\pm 1.04$ } 
  & 
 
 $99.94$ { \tiny $\pm 0.09$ }  & $99.95$ { \tiny $\pm 0.08$ }  & $94.36$ { \tiny $\pm 0.31$ }  & $0.62$ { \tiny $\pm 0.30$ }  & $6.64$ { \tiny $\pm 0.15$ } 
 \\\addlinespace[0.3em]

 BS & $99.14$ { \tiny $\pm 0.31$ }  & $99.89$ { \tiny $\pm 0.06$ }  & $93.04$ { \tiny $\pm 0.14$ }  & $5.50$ { \tiny $\pm 0.39$ }  & $3.27$ { \tiny $\pm 0.13$ } 
 & 
  
  $100.00$ { \tiny $\pm 0.00$ }  & $100.00$ { \tiny $\pm 0.00$ }  & $94.62$ { \tiny $\pm 0.08$ }  & $0.40$ { \tiny $\pm 0.05$ }  & $6.77$ { \tiny $\pm 0.03$ } 
 \\\addlinespace[0.3em]

  $l_1$-Sparse & $94.29$ { \tiny $\pm 0.34$ }  & $95.63$ { \tiny $\pm 0.16$ }  & $91.55$ { \tiny $\pm 0.17$ }  & $12.03$ { \tiny $\pm 1.92$ }  & $2.26$ { \tiny $\pm 0.26$ } 
  & 
  
  $92.63$ { \tiny $\pm 0.13$ }  & $95.02$ { \tiny $\pm 0.10$ }  & $89.56$ { \tiny $\pm 0.08$ }  & $12.03$ { \tiny $\pm 0.39$ }  & $3.14$ { \tiny $\pm 0.17$ } 
 \\\addlinespace[0.3em]

  SalUn & $99.25$ { \tiny $\pm 0.12$ }  & $99.99$ { \tiny $\pm 0.01$ }  & $94.11$ { \tiny $\pm 0.13$ }  & $11.29$ { \tiny $\pm 0.56$ }  & $1.56$ { \tiny $\pm 0.20$ } 
  & 

  $95.69$ { \tiny $\pm 0.80$ }  & $97.26$ { \tiny $\pm 0.79$ }  & $91.55$ { \tiny $\pm 0.59$ }  & $11.27$ { \tiny $\pm 0.94$ }  & $3.06$ { \tiny $\pm 0.12$ } 
 \\\addlinespace[0.5em]

 \textbf{Amun} & $95.45$ { \tiny $\pm 0.19$ }  & $99.57$ { \tiny $\pm 0.00$ }  & $93.45$ { \tiny $\pm 0.22$ }  & $12.55$ { \tiny $\pm 0.08$ }  & $\bf 0.59$ { \tiny $\pm 0.09$ } 
 & 
 
  $93.50$ { \tiny $\pm 0.09$ }  & $99.71$ { \tiny $\pm 0.01$ }  & $92.39$ { \tiny $\pm 0.04$ }  & $13.53$ { \tiny $\pm 0.19$ }  & $\bf 1.37$ { \tiny $\pm 0.07$ } 
 \\\addlinespace[0.3em]

 \textbf{Amun}$_{+SalUn}$ & $94.73$ { \tiny $\pm 0.07$ }  & $99.92$ { \tiny $\pm 0.01$ }  & $93.95$ { \tiny $\pm 0.18$ }  & $14.23$ { \tiny $\pm 0.40$ }  & $\underline{0.60}$ { \tiny $\pm 0.10$ } 
 & 
 
$93.56$ { \tiny $\pm 0.07$ }  & $99.72$ { \tiny $\pm 0.02$ }  & $92.52$ { \tiny $\pm 0.20$ }  & $13.33$ { \tiny $\pm 0.10$ }  & $\underline{1.47}$ { \tiny $\pm 0.01$ } 
 \\\addlinespace[0.3em]

 % Amun_{+SalUn} & & & & & & 
 
 % $97.93$ & $99.54$ & $92.94$ & $6.59$ & $4.41$
 % \\\addlinespace[0.3em]

 % Amun-Ra_{+SalUn} & & & & & & 
 
 % & & & &
 % \\\addlinespace[0.3em]
 
 % \cmidrule(r){1-4}                        
 \bottomrule
\end{tabular}
}
\end{sc}
\end{small}
\end{center}
\caption{\footnotesize {\bf Unlearning with access to $\Dr$.} Comparing different unlearning methods in unlearning $10\%$ and $50\%$ of $\D$. Avg. Gap (see \S~\ref{sec:amun_metrics}), with MIS as the MIA score, is used for evaluation (lower is better). The lowest value is shown in bold while the second best is specified with underscore. As the results show, \amun{} outperforms all other methods by achieving lowest Avg. Gap and \amun{}$_{SalUn}$ achieves comparable results.}
\label{tab:mia}

\end{table*}

\begin{table*}[th!]
\begin{center}
\begin{small}
\begin{sc}
\resizebox{\textwidth}{!}{
\begin{tabular}{@{} l  c c  c  c c | c c c c c @{}}
 \toprule

 % \midrule   
 % & \multicolumn{3}{@{}c}{\textbf{Orig}} & \multicolumn{3}{@{}c}{\textbf{Clip}} \\\addlinespace[0.3em]

& \multicolumn{5}{@{}c}{\textbf{Random Forget ($10\%$)}} & \multicolumn{5}{@{}c}{\textbf{Random Forget ($50\%$)}} \\\addlinespace[0.3em]

 \multicolumn{1}{c}{\scriptsize \textbf{}} & 
 \multicolumn{1}{c}{\scriptsize Unlearn Acc} & 
 \multicolumn{1}{c}{\scriptsize Retain Acc} & 
 \multicolumn{1}{c}{\scriptsize Test Acc} & 
 \multicolumn{1}{c}{\scriptsize MIA} & 
 \multicolumn{1}{c}{\scriptsize Avg. Gap} & 
 \multicolumn{1}{c}{\scriptsize Unlearn Acc} &
 \multicolumn{1}{c}{\scriptsize Retain Acc} & 
 \multicolumn{1}{c}{\scriptsize Test Acc} & 
 \multicolumn{1}{c}{\scriptsize  MIA} & 
 \multicolumn{1}{c}{\scriptsize Avg. Gap} 
 % &  $N=3$ & $N=9$ & {\footnotesize Improvement}  
 \\\addlinespace[0.3em]

 \cmidrule(r){2-6}
 \cmidrule(r){7-11}
 % \cmidrule(r){5-7}

 % & \multicolumn{4}{@{}c}{Accuracy on the test set}   \\ \addlinespace[0.4em]%[0.5ex] 
 % ResNet18 (noBN) & $28.0 $ & $43.4 $ & $44.9 $ & $91.3 $ & $107.7 $ & $108.9 $

 % \\\addlinespace[0.3em]

 Retrain & $94.49$ {\tiny $\pm 0.20$} & $100.0 $ {\tiny $\pm 0.00$} & $94.33$ {\tiny $\pm 0.18$} & $12.53$ {\tiny $\pm 0.32$} & $0.00$  & 
 
 $92.09$ {\tiny $\pm 0.37$}  & $100.0$ {\tiny $\pm 0.00$} & $91.85$ {\tiny $\pm 0.33$} & $16.78$ {\tiny $\pm 0.37$} & $0.00$
 
 \\\addlinespace[0.3em]
  \cmidrule(r){1-11}

 RL & $100.00$ { \tiny $\pm 0.00$ }  & $100.00$ { \tiny $\pm 0.00$ }  & $94.45$ { \tiny $\pm 0.09$ }  & $3.06$ { \tiny $\pm 0.63$ }  & $3.77$ { \tiny $\pm 0.13$ } 
 & 
 
 $100.00$ { \tiny $\pm 0.00$ }  & $100.00$ { \tiny $\pm 0.00$ }  & $94.54$ { \tiny $\pm 0.11$ }  & $0.40$ { \tiny $\pm 0.03$ }  & $6.75$ { \tiny $\pm 0.02$ } 
 \\\addlinespace[0.3em]

  GA & $4.77$ { \tiny $\pm 3.20$ }  & $5.07$ { \tiny $\pm 3.54$ }  & $5.09$ { \tiny $\pm 3.38$ }  & $32.63$ { \tiny $\pm 50.85$ }  & $76.58$ { \tiny $\pm 7.73$ } 
  & 
 
  $100.00$ { \tiny $\pm 0.00$ }  & $100.00$ { \tiny $\pm 0.00$ }  & $94.57$ { \tiny $\pm 0.06$ }  & $0.35$ { \tiny $\pm 0.10$ }  & $6.77$ { \tiny $\pm 0.04$ } 
 \\\addlinespace[0.3em]

 BS & $100.00$ { \tiny $\pm 0.00$ }  & $100.00$ { \tiny $\pm 0.00$ }  & $94.48$ { \tiny $\pm 0.04$ }  & $1.11$ { \tiny $\pm 0.30$ }  & $4.27$ { \tiny $\pm 0.07$ } 
 & 
  
 $100.00$ { \tiny $\pm 0.00$ }  & $100.00$ { \tiny $\pm 0.00$ }  & $94.59$ { \tiny $\pm 0.03$ }  & $0.38$ { \tiny $\pm 0.02$ }  & $6.76$ { \tiny $\pm 0.01$ }
 \\\addlinespace[0.3em]

  SalUn  & $100.00$ { \tiny $\pm 0.00$ }  & $100.00$ { \tiny $\pm 0.00$ }  & $94.47$ { \tiny $\pm 0.10$ }  & $2.39$ { \tiny $\pm 0.64$ }  & $3.95$ { \tiny $\pm 0.14$ }
  & 

   $100.00$ { \tiny $\pm 0.00$ }  & $100.00$ { \tiny $\pm 0.00$ }  & $94.57$ { \tiny $\pm 0.12$ }  & $0.33$ { \tiny $\pm 0.04$ }  & $6.77$ { \tiny $\pm 0.03$ } 
 \\\addlinespace[0.5em]

 \textbf{Amun} & $94.28$ { \tiny $\pm 0.37$ }  & $97.47$ { \tiny $\pm 0.10$ }  & $91.67$ { \tiny $\pm 0.04$ }  & $11.61$ { \tiny $\pm 0.60$ }  & $\underline{1.61}$ { \tiny $\pm 0.09$ } 
 & 
 
 $92.77$ { \tiny $\pm 0.52$ }  & $95.66$ { \tiny $\pm 0.25$ }  & $89.43$ { \tiny $\pm 0.19$ }  & $14.13$ { \tiny $\pm 0.67$ }  & $\underline{2.52}$ { \tiny $\pm 0.16$ } 
 \\\addlinespace[0.3em]

 \textbf{Amun}$_{+Salun}$ & $94.19$ { \tiny $\pm 0.38$ }  & $97.71$ { \tiny $\pm 0.06$ }  & $91.79$ { \tiny $\pm 0.12$ }  & $11.66$ { \tiny $\pm 0.16$ }  & $\bf 1.51$ { \tiny $\pm 0.02$ } 
 & 
 
 $91.90$ { \tiny $\pm 0.63$ }  & $96.59$ { \tiny $\pm 0.31$ }  & $89.98$ { \tiny $\pm 0.44$ }  & $13.07$ { \tiny $\pm 0.66$ }  & $\bf 2.35$ { \tiny $\pm 0.15$ } 
 \\\addlinespace[0.3em]

 \bottomrule
\end{tabular}
}
\end{sc}
\end{small}
\end{center}
\caption{\footnotesize {\bf Unlearning with access to only $\Df$.}  Comparing different unlearning methods in unlearning $10\%$ and $50\%$ of $\D$. Avg. Gap (see \S~\ref{sec:amun_metrics}) is used for evaluation (lower is better) when only $\Df$ is available during unlearning. As the results show, \amun{}$_{SalUn}$ significantly outperforms all other methods, and \amun{} achieves comparable results. }
\label{tab:mia_forgetonly}

\end{table*}

\section{Continuous Unlearning (cont.)}
\label{apx:adaptive}

In \S~\ref{sec:adaptive}, we showed \amun{}, whether with adaptive computation of $\Da$ or using the pre-computed ones, outperforms other unlearning methods when handling multiple unlearning requests. Another important observation on the presented results in Figure~\ref{fig:adaptive} is that {\em the effectiveness of unlearning decreases with the number of unlearning requests}. For the setting with access to $\Dr$, this decrease is due to the fact that the $\Df$ at each step has been a part of $\Dr$ at the previous steps; the model has been fine-tuned on this data in all the previous steps which has led to further improving confidence of the modes on predicting those samples. 
%Figure~\ref{} shows how this will lead to a higher accuracy on $\Df$ as we proceed through the unlearning requests. %This will contribute to a less effective unlearning. 
This result also matches the theoretical and experimental results in differential privacy literature as well~\cite{dwork2006differential,abadi2016deep}. 

This problem does not exist for the setting where there is no access to $\Dr$, but we still see a decrease in the unlearning effectiveness as we increase the number of unlearning requests. The reason behind this deterioration is that the model itself is becoming weaker. As Figure~\ref{fig:adaptive_FT_RF} shows, the accuracy on the model on both $\Dr$ and $\Dt$ gets worse as it proceeds with the unlearning request; this is because each unlearning step shows the model only $2\%$ ($1$K) of the samples and their corresponding adversarial examples for fine-tuning. So this deterioration is expected after a few unlearning requests. So when using \amun{} in this setting (no access to $\Dr$) in practice, it would be better to decrease the number of times that the unlearning request is performed, for example by performing a lazy-unlearning (waiting for a certain number of requests to accumulate) or at least using a sub-sample of $\Dr$ if that is an option.

\begin{figure*}[h!]
\centering
\includegraphics[width=0.28\textwidth]{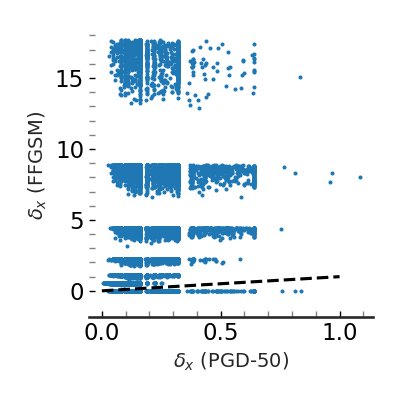}
  \caption{For each image in CIFAR-10 the $x$-axis shows the Euclidean distance of the corresponding adversarial example that is found by using PGD-50 in Algorithm~\ref{alg:advset}. $y$-axis shows this distance for the adversarial examples found by the variant of FFGSM in Algorithm~\ref{alg:advset}. The dashed line shows the $x=y$ line.}
  \label{fig:fgsm_ratio}
\end{figure*}

%%%%%%%%%%%%%%%%%%%%%%%%%%%%%%%%%%%%%%%%%%%%%%%%%%%%%%%%%%%%%%%%%%%%%%%%%%%%%%%
%%%%%%%%%%%%%%%%%%%%%%%%%%%%%%%%%%%%%%%%%%%%%%%%%%%%%%%%%%%%%%%%%%%%%%%%%%%%%%%

\chapter{Class Unlearning (Cont.)}

\section{Proofs}
\label{apx:proofs}

\begin{proof}[Proof of Proposition~\ref{lem:iproj-tilt}.]

We start with showing that it is enough to prove the proposition holds for $\tilde p$ (i.e., $q^*$ is the I-projection of $\tilde p$ onto the probability simplex of retrained classes with the given linear constraints). %Consider the affine constraint set
% \[
% \mathcal A\ :=\ \Big\{q\in\Delta^{K-1}:\ \sum_{y\neq f} q(y\mid x)\,s_y=c\Big\}.
% \]

Let $q\in\mathcal A$; then $q(f\mid x)=0$ and $q$ has support only on retained labels. Note that:
\[
\mathrm{KL}\!\big(q\,\|\,p\big)
=\sum_{y\neq f} q(y)\log\frac{q(y)}{p(y)}
=\sum_{y\neq f} q(y)\log\frac{q(y)}{\tilde p(y)(1-p(f))}
=\sum_{y\neq f} q(y)\log\frac{q(y)}{\tilde p(y)}\;-\;\log\big(1-p(f)\big).
\]
Now, the final term is independent of $q$, hence
\[
\arg\min_{q\in\mathcal A}\ \mathrm{KL}\!\big(q\,\|\,p\big)
=\arg\min_{q\in\mathcal A}\ \mathrm{KL}\!\big(q\,\|\,\tilde p\big).
\]

Therefore, the information projection from $p$ onto the set $\mathcal{A}$ is equivalent to the projection from $\tilde p$ onto this set. Now, the objective $\mathrm{KL}(q\|\tilde p)=\sum_y q(y)\log\!\big(q(y)/\tilde p(y)\big)$ is strictly convex on the simplex, so any feasible minimizer is unique. Now we introduce Lagrange multipliers $\alpha,\beta$ for the normalization and expectation constraints.
Stationarity with respect to $q(y)$ gives
\[
\log\frac{q(y)}{\tilde p(y)}+1+\alpha+\beta s_y=0,
\]
which rearranges to
\[
q(y)=\tilde p(y)\,\exp\{-1-\alpha-\beta s_y\}.
\]
Normalizing yields the exponential family
\[
q_\beta(y)=\frac{\tilde p(y)\,e^{\beta s_y}}{\sum_j \tilde p(j)\,e^{\beta s_j}}.
\]

It remains to pick $\beta$ so that $\sum_y q_\beta(y)\,s_y=c$.
First, we define $m(\beta)=\sum_y q_\beta(y)s_y$. So we need to find value of $\beta$ that gives us $m(\beta) = c$. We suppress the fixed $x$ to lighten notation and define:
%Write $S:=\{y:\tilde p(y)>0\}$ (we suppress the fixed $x$ to lighten notation) and define
\[
Z(\beta)\ :=\ \sum_{j\in S}\tilde p(j)\,e^{\beta s_j},\qquad
q_\beta(y)\ :=\ \frac{\tilde p(y)\,e^{\beta s_y}}{Z(\beta)},\qquad
\]
Now note that:
\[
Z'(\beta)=\sum_{j\in S}\tilde p(j)\,s_j\,e^{\beta s_j},\qquad
Z''(\beta)=\sum_{j\in S}\tilde p(j)\,s_j^2\,e^{\beta s_j}.
\]
So, we can write:
\[
m(\beta)=\sum_{y\in S} \frac{\tilde p(y)\,e^{\beta s_y}}{Z(\beta)}\,s_y
=\frac{Z'(\beta)}{Z(\beta)}.
\]
Therefore, since $Z(\beta)$ is $C^\infty$, $m(\beta)$ is $C^\infty$ as well, and by differentiating once more we get:
\begin{align*}
m'(\beta)
= \frac{Z''(\beta)}{Z(\beta)}-\Big(\frac{Z'(\beta)}{Z(\beta)}\Big)^2
= \sum_{y\in S} q_\beta(y)\,s_y^2 - \Big(\sum_{y\in S} q_\beta(y)\,s_y\Big)^2
= \mathrm{Var}_{q_\beta}(s)\ \ge\ 0.
\end{align*}
Hence $m$ is nondecreasing on $\mathbb{R}$. Moreover, if the scores are not all equal on $S$, then
$q_\beta(y)>0$ for all $y\in S$, and the variance $\mathrm{Var}_{q_\beta}(s)$ is strictly positive, so
\[
\forall\beta\in\mathbb{R}:\qquad m'(\beta)=\mathrm{Var}_{q_\beta}(s)>0,
\]
i.e., $m$ is \emph{strictly} increasing. Moreover, $\lim_{\beta\to-\infty}m(\beta)=\min s_y$ and $\lim_{\beta\to+\infty}m(\beta)=\max s_y$.
Thus, by the intermediate value theorem, for any feasible $c$ there exists a unique $\beta^\star$ with $m(\beta^\star)=c$.
The corresponding $q^\star=q_{\beta^\star}$ is the unique minimizer.
\end{proof}

\section{Methods (cont.)}

In~\Cref{apx:tilt} we will see further empirical observation about the necessity of using the tilted distribution to better approximate the distirubution of the Retrain model. In~\Cref{apx:score} we will elaborate on the scoring function used in our experiments.

\subsection{Effect of tilting}
\label{apx:tilt}

In this section, we further evaluate the insufficiency of the basic rescaling of the target model's probability distribution to approximating the distribution of the Retrain model. For this purpose, we plot the conditional distributions of each remaining classes, when unlearning the \texttt{automobile} (Figure~\ref{fig:score_c1}) and \texttt{frog} (Figure~\ref{fig:score_c6}) classes from a ResNet18 model trained on CIFAR-10. As shown in both figures, the rescaled conditional distributions are very different from those of the Retrain model. As mentioned earlier, in the Retrain model, the distributions are more skewed toward a few classes. This bias toward more similar classes are better captured using the tilted reweighting distribution, which utilizes the inter-class similarities to adjust the rescaled distribution by introducing the bias toward more similar classes that is expected from the Retrain model.

\begin{figure}[t]
\centering
\includegraphics[width=0.95\linewidth]{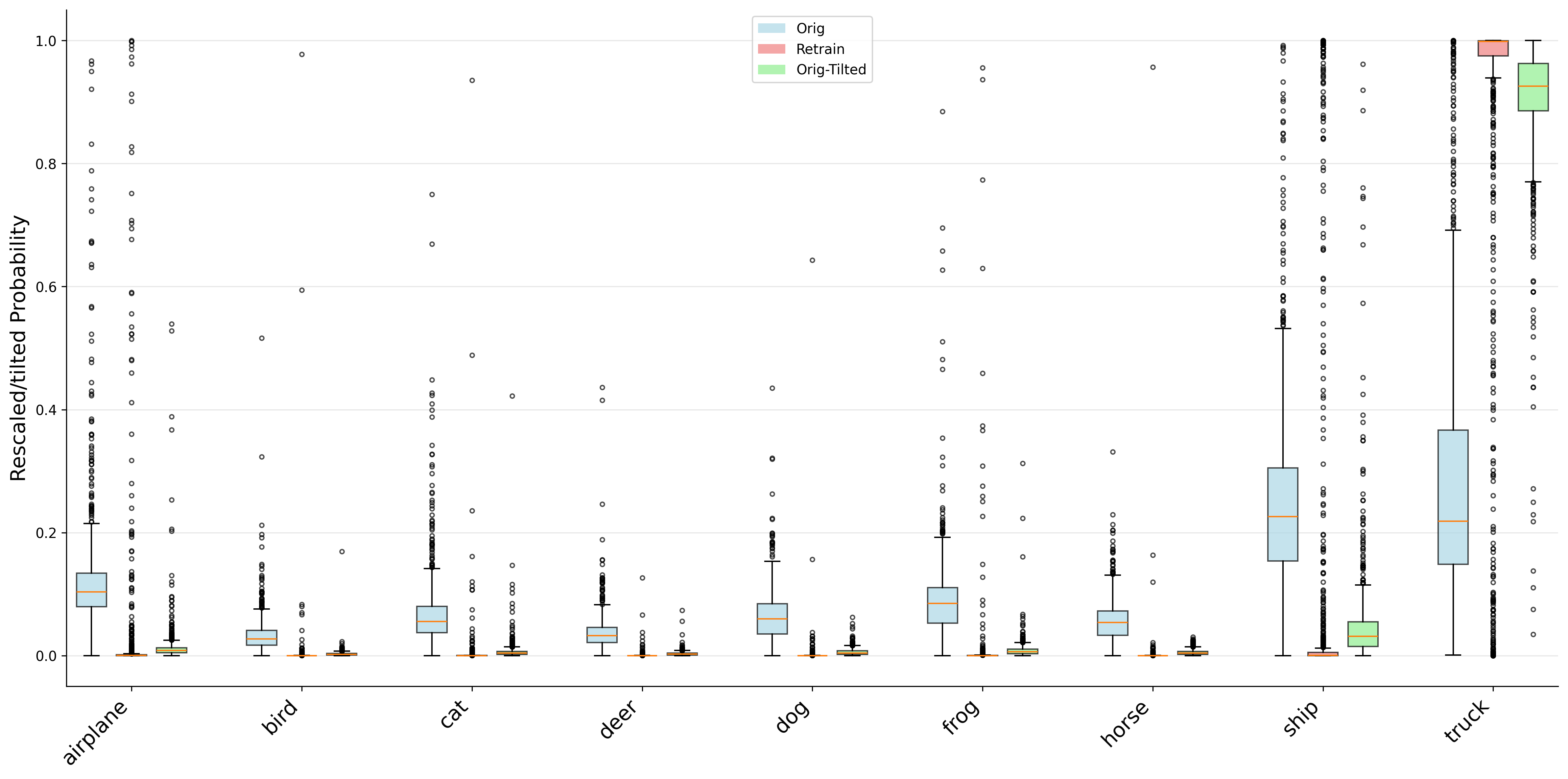}
\caption{Forgetting the \texttt{automobile} class in a ResNet-18 model trained on CIFAR-10. The reweighted probabilities of the original model according to~\eqref{equ:reweighted} (\texttt{Orig}) compared to the probabilities assigned to \texttt{automobile} in the Retrain model. As the figure shows, the Retrain model has a much higher bias toward \texttt{Truck} class, which is better captured when using the tilted reweighting accoriding to~\eqref{equ:tilted} (\texttt{Orig-Tilted}).}
\label{fig:score_c1}
\end{figure}

\subsection{Score function}
\label{apx:score}

Our goal is to assign, for each retained class $y\neq y_f$, a scalar similarity score $s_y$ that captures how likely a classifier trained \emph{without} the forget class $y_f$ would bias predictions of $x\in D_f$ toward class $y$. We use a geometry-based score derived from the classifier’s logit weights.
Let $w_y\in\mathbb{R}^d$ denote the column of the final linear layer (logit weights) for class $y\in\mathcal{Y}$ in the original model. To capture the dominant inter-class structure while reducing noise and redundancy, we perform PCA on the matrix
$W\!\!=\![\,w_1\ \cdots\ w_K\,]\in\mathbb{R}^{d\times K}$.
Let $U_{d'}\in\mathbb{R}^{d\times d'}$ be the top $d'$ principal directions ($d'\ll d$) and define the projected class embeddings
\[
\phi(w_y)\ \triangleq\ U_{d'}^\top w_y\ \in\ \mathbb{R}^{d'}.
\]
We then define cosine similarities between the forget class and each retained class:
\begin{equation}
\label{eq:cos-sim}
\tilde{s}_y\ \triangleq\ \cos\!\big(\phi(w_y),\ \phi(w_{y_f})\big)
\ =\
\frac{\langle \phi(w_y),\ \phi(w_{y_f})\rangle}{\|\phi(w_y)\|_2\,\|\phi(w_{y_f})\|_2},
\quad y\neq y_f.
\end{equation}

Finally, we use softmax to derive $s_y$ values in the form of probabilities from the values $\tilde s_y$. We use a small value for the temperature of softmax in our experiments (e.g., $0.01$) to make the similarity values distinct for more similar classes.

\begin{figure}[t]
\centering
\includegraphics[width=0.95\linewidth]{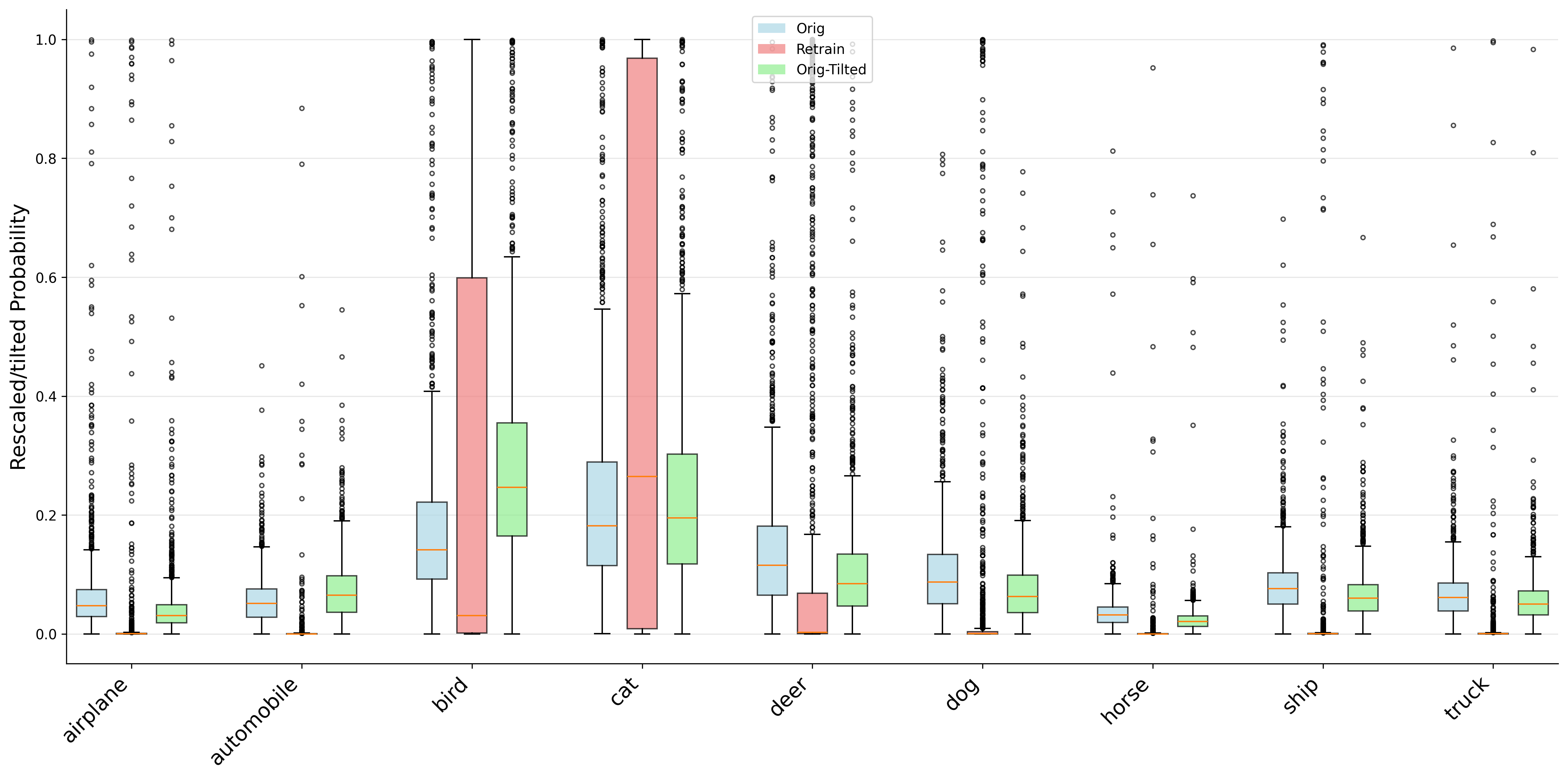}
\caption{Forgetting the \texttt{frog} class in a ResNet-18 model trained on CIFAR-10. The reweighted probabilities of the original model according to~\eqref{equ:reweighted} (\texttt{Orig}) compared to the probabilities assigned to \texttt{frog} in the Retrain model. As the figure shows, the Retrain model has a much higher bias toward a few classes, which are better captured when using the tilted reweighting accoriding to~\eqref{equ:tilted} (\texttt{Orig-Tilted}).}
\label{fig:score_c6}
\end{figure}

\section{Experiments (cont.)}

\subsection{Baseline Methods}
\label{sec:baselines}
In this section, we detail several baseline methods for machine unlearning.

\noindent\textbf{Retraining} refers to training a new model from scratch using only the retained dataset $\mathcal{D}_r$. This baseline serves as the ideal unlearning outcome and is commonly regarded as the "gold standard."
    
\noindent\textbf{Fine-tuning (FT)}~\cite{warnecke2021machine} fine-tunes the source model on the remaining dataset $\mathcal{D}_r$. In contrast to this standard fine-tuning, our approach performs targeted intervention by adjusting the model's output distribution to explicitly suppress the forgotten class and reallocate its probability mass proportionally to non-forgotten classes.

\noindent\textbf{Random Labeling (RL)}~\cite{golatkar2020eternal} 
fine-tunes the model with randomly relabeled forgetting data $\mathcal{D}_f$, which helps prevent the forgotten data from influencing the model’s predictions. 

\noindent\textbf{Gradient Ascent (GA)}~\cite{thudi2022unrolling} inverts SGD updates to erase the influence of specific training examples.

\noindent\textbf{l1-sparse (l1)}~\cite{jia2023model} proposes a sparsity-aware unlearning framework that prunes model weights to simplify the parameter space.

\noindent\textbf{Boundary Unlearning (BU)}~\cite{chen2023boundary} shifts the decision boundary of the original model to mimic the model’s decisions after retraining from scratch.

\noindent\textbf{SALUN~\cite{fan2023salun}} introduce the concept of weight saliency and perform unlearning by modifying the model weights rather than the entire model, improving effectiveness and efficiency.

\noindent\textbf{SVD Unlearning (SVD)}~\cite{kodge2024deep} performs class unlearning by identifying and suppressing class-discriminative features through singular value decomposition (SVD) of layer-wise activations.

\noindent\textbf{SCRUB}~\cite{kurmanji2023towards} proposes a novel teacher-student formulation, where the student model selectively inherit from a knowing-all teacher only when the knowledge does not pertain to the data to be deleted. 

\noindent\textbf{L2UL}~\cite{cha2024learning}  introduces an instance-wise unlearning framework that removes information using only the pre-trained model and the data points flagged for deletion.

\noindent\textbf{SCAR}~\cite{bonato2024retain} proposes a distillation-trick mechanism that transfers the original model’s knowledge to the unlearned model using out-of-distribution images, preserving test performance without any retain set.

\subsection{Experiment Settings}
\label{sec:exp setting}
We evaluate our method using two standard architectures: \textbf{VGG-19}~\cite{simonyan2014very} and \textbf{ResNet-18}~\cite{he2016deep}. All models are trained from scratch for \textbf{101 - 201 epochs} with a batch size of \textbf{128}. Optimization is performed using \textbf{stochastic gradient descent (SGD)} with a learning rate of \textbf{0.1}, \textbf{momentum of 0.9}, and \textbf{weight decay of $5 \times 10^{-4}$}. The learning rate follows a \texttt{torch.optim.lr\_scheduler.StepLR} schedule with \texttt{step\_size}$\,=40$ and \texttt{gamma}$\,=0.1$.  

For our unlearning procedure, we run updates for \textbf{10 epochs} with a learning rate of \textbf{0.001}. Baseline re‑training budgets mirror prior work: GA~\cite{thudi2022unrolling}, FT~\cite{warnecke2021machine},  l1~\cite{jia2023model}, and L2UL~\cite{cha2024learning} baselines are run for \textbf{20 epochs}, SVD~\cite{kodge2024deep}, and SCRUB~\cite{kurmanji2023towards} are run for \textbf{10 epochs}, while SalUn~\cite{fan2023salun} is run for \textbf{15 epochs} and SCAR~\cite{bonato2024retain} for \textbf{25 epochs}. All experiments are implemented in \textbf{PyTorch 3.11} and executed on four \textbf{NVIDIA A40} GPUs, and repeated with three different random seeds; we report the average results across those runs.
\section{Results on MNIST and CIFAR-100}
\label{sec:mnist_cifar}
\begin{table}[htbp]
\small
\centering
\resizebox{\textwidth}{!}
{\begin{tabular}{@{}clc c cc c c@{}}
\toprule
\multirow{2}{*}{\textbf{Data}} & \multirow{2}{*}{\textbf{Method}} & \multicolumn{3}{c|}{\textbf{VGG19}~\cite{simonyan2014very}} & \multicolumn{3}{c}{\textbf{ResNet18}~\cite{he2016deep}} \\ 
% \cline{3-8}
\addlinespace
& & $ACC_r$ (\textuparrow) & $ACC_f$ (\textdownarrow) & $MIA$ (\textuparrow) & $ACC_r$ (\textuparrow) & $ACC_f$ (\textdownarrow) & $MIA$ (\textuparrow) \\
% \hline
\midrule
\multirow{8}{*}{\rotatebox{90}{\textsc{MNIST}}} 
    & Original & 99.52 $\pm$ 0.01 & 99.57 $\pm$ 1.33  &  0 &99.65 $\pm$ 0.04   & 99.91 $\pm$ 1.05  & 0.23 $\pm$ 0.23  \\
    & Retraining &99.54 $\pm$ 0.03 &0  & 100 $\pm$ 0&99.64 $\pm$ 0.05 & 0  & 100 $\pm$ 0\\
    & FT~\cite{warnecke2021machine} &99.43 $\pm$ 0.72 &0 & 99.25 $\pm$ 0.11 &98.16 $\pm$ 0.65 &0 &95.78 $\pm$ 1.28\\
    & RL~\cite{golatkar2020eternal} &99.04 $\pm$ 0.19 &0 & 99.66 $\pm$ 0.21 &99.33 $\pm$ 0.25 &0 & 99.64 $\pm$ 0.15\\
    & GA~\cite{thudi2022unrolling} &97.69 $\pm$ 2.42   & 0 &96.86 $\pm$ 2.87 &98.26 $\pm$ 0.12  &14.94 $\pm$ 0.03 &85.19 $\pm$ 0.17  \\
    & l1~\cite{jia2023model} &94.07 $\pm$ 4.48 &0.01 $\pm$ 0.02 &92.55 $\pm$ 1.53 &93.47 $\pm$ 1.98 &0.04 $\pm$ 0.02 &97.51 $\pm$ 0.42\\
    & BU~\cite{chen2023boundary} &93.14 $\pm$ 8.19 &0 &95.40 $\pm$ 0.06 &94.12 $\pm$ 6.51 &0 &98.62 $\pm$ 0.15 \\
    & SalUn~\cite{foster2024fast} &99.23 $\pm$ 0.18 &0 &100 $\pm$ 0&99.43 $\pm$ 0.77 &0 & 100 $\pm$ 0\\
    & SVD~\cite{kodge2024deep} &99.16 $\pm$ 0.20 &0 &100 $\pm$ 0&99.37 $\pm$ 0.32 &0 &99.87 $\pm$ 0.13\\
    & SCRUB~\cite{kurmanji2023towards} &99.34 $\pm$ 0.03 &0 &98.73 $\pm$ 0.15 &99.45 $\pm$ 0.06 & 0 & 91.88 $\pm$ 0.34 \\
    & SCAR~\cite{bonato2024retain} &98.93 $\pm$ 0.10 & 0 &100 $\pm$ 0 &99.20 $\pm$ 0.24 & 0  & 97.82 $\pm$ 0.68\\
    & l2ul~\cite{cha2024learning} &99.15 $\pm$ 0.12 & 0 &100 $\pm$ 0 &95.75 $\pm$ 0.16 & 0 &93.83 $\pm$ 0.19 \\
    & \textbf{TRW} &\textbf{99.52 $\pm$ 0.07}  &\textbf{0}  & \textbf{100 $\pm$ 0} &99.49 $\pm$ 0.04&0  &100 $\pm$ 0 \\
    & \textbf{TRW-2R} & 99.48 $\pm$ 0.13&0& 99.78 $\pm$ 0.22&99.45 $\pm$ 0.08 &0.04 $\pm$ 0.06&99.98 $\pm$ 0.15\\
% \hline
\addlinespace
\multirow{8}{*}{\rotatebox{90}{\textsc{CIFAR}-100}} 
    & Original &69.87 $\pm$ 0.80 &70.72 $\pm$ 6.41 &0.45 $\pm$ 0.85 & 78.52 $\pm$ 0.58 & 78.93 $\pm$ 5.77 & 0.3 $\pm$ 0.5\\
    & Retraining &69.54 $\pm$ 0.92&0 &100 $\pm$ 0& 78.30 $\pm$ 0.84 & 0 & 100 $\pm$ 0\\
    & FT~\cite{warnecke2021machine} &65.26 $\pm$ 1.58 & 0& 87.41 $\pm$ 3.32 &73.37 $\pm$ 1.42 &0 &86.09 $\pm$ 0.46\\
    & RL~\cite{golatkar2020eternal} &58.83 $\pm$ 3.04 & 0&87.20 $\pm$ 3.11&70.97 $\pm$ 1.68 & 0 &88.65 $\pm$ 0.33\\
    & GA~\cite{thudi2022unrolling} &62.34 $\pm$ 0.77 &0 &83.58 $\pm$ 0.41 &72.05 $\pm$ 0.19 & 0 &85.11 $\pm$ 1.45\\
    & l1~\cite{jia2023model} &57.28 $\pm$ 1.47 &0 &83.89 $\pm$ 1.78 &72.30 $\pm$ 1.84 &0 &84.75 $\pm$ 2.06\\
    & BU~\cite{chen2023boundary} &59.27 $\pm$ 3.23 &0 &84.32 $\pm$ 2.45 &67.52 $\pm$ 1.86 &0 &81.62 $\pm$ 0.79 \\
    & SalUn~\cite{foster2024fast} &63.58$\pm$ 3.06 &0 & 99.97 $\pm$ 0.03&72.11 $\pm$ 1.37 &0 &98.65 $\pm$ 1.35\\
    & SVD~\cite{kodge2024deep} &68.53 $\pm$ 1.78 &0 &100 $\pm$ 0&75.86 $\pm$ 1.79 &0 &99.30 $\pm$ 0.08 \\
    & SCRUB~\cite{kurmanji2023towards} &58.91 $\pm$ 1.32 &0 &89.01 $\pm$ 1.34 &76.92 $\pm$ 1.06 &0 & 87.65 $\pm$ 2.18 \\
    & SCAR~\cite{bonato2024retain} &65.97 $\pm$ 1.88 &1.38 $\pm$ 1.04 & 100 $\pm$ 0 &75.42 $\pm$ 1.17 & 0.5 $\pm$ 0.43& 97.61 $\pm$ 0.81\\
    & l2ul~\cite{cha2024learning} &66.77 $\pm$ 0.84 & 0&99.52 $\pm$ 0.53 &71.56 $\pm$ 1.38 & 0 & 97.05 $\pm$ 1.97 \\
    & \textbf{TRW} &69.36 $\pm$ 0.61 &0&99.15 $\pm$ 0.75 &\textbf{78.05 $\pm$ 0.43}&\textbf{0}&\textbf{99.56 $\pm$ 1.75} \\
    & \textbf{TRW-2R} & 69.07$\pm$ 1.69&0& 99.00 $\pm$ 1.36 & 77.97 $\pm$ 0.78 &0.01 $\pm$ 0.07 & 98.87 $\pm$ 1.15\\
\bottomrule
\end{tabular}}
%\vspace{2mm}
\caption{\textbf{Single-class forgetting on MNIST and CIFAR-100} 
We bold the method with the highest retained accuracy (ACC\textsubscript{r}), membership attack robustness (MIA), and lowest forgotten class accuracy (ACC\textsubscript{f}). 
Our method (TRW, TRW-2R) consistently achieves perfect forgetting ($\mathrm{ACC}_f = 0$), while preserving high retained accuracy and strong MIA robustness across datasets and architectures.}
\label{tab:mnist_cifar_results}
\end{table}
To study performance on a higher-variance image domain, we further evaluate on \textbf{MNIST} and \textbf{CIFAR-100} with VGG and ResNet-18 backbones. As shown in~\cref{tab:mnist_cifar_results}, our methods deliver (near-)perfect forgetting with competitive retained accuracy and consistently stronger resistance to MIA attacks than the baselines.

\subsection{Results on Tiny-ImageNet-200}
\label{sec:imagenet}

\begin{table*}[htbp]
\centering
\begin{tabular}{l|ccc}
\hline
\multirow{2}{*}{\textbf{Method}} & \multicolumn{3}{c}{\textbf{ResNet-18}} \\ 
\cline{2-4}
& $ACC_r$ (\textuparrow) & $ACC_f$ (\textdownarrow) & $MIA$ (\textuparrow) \\
\hline
Original & 63.49  & 64.52  & 0 \\
Retrain & 63.32  & 0 & 100  \\
FT~\cite{warnecke2021machine} & 58.92 & 0 & 83.62 \\
RL~\cite{golatkar2020eternal} & 41.58 & 0 & 83.55 \\
SCRUB~\cite{kurmanji2023towards} &62.31 &0 &87.73 \\
% GA~\cite{thudi2022unrolling} & -- & -- & -- \\
% SalUn~\cite{foster2024fast} & -- & -- & 91.05 \\
% {RWFT-A} & {54.51} & {0} & {100} \\
{TRW} & {62.78} & {0} & {100} \\
\textbf{TRW-2R} & \textbf{62.84} & \textbf{0} & \textbf{100} \\
\hline
\end{tabular}
%\vspace{0.5em}
\caption{\textbf{Single-class forgetting on Tiny-ImageNet-200 with ResNet-18.} We report retained accuracy ($ACC_r$), forget accuracy ($ACC_f$), and membership inference attack accuracy (MIA).}
\label{tab:imagenet_forgetting}
\end{table*}

To evaluate unlearning on a more challenging benchmark, we also apply our method to the \textbf{Tiny-ImageNet-200} dataset using a ResNet-18 backbone. We average the results over 10 semantically diverse classes: \textit{goldfish}, \textit{European fire salamander}, \textit{bullfrog}, \textit{tailed frog}, \textit{American alligator}, \textit{boa constrictor}, \textit{trilobite}, \textit{scorpion}, \textit{black widow spider}, and \textit{tarantula}. Due to computational constraints, we compare against a selected subset of representative baselines. From~\cref{tab:imagenet_forgetting}, we observe that our methods achieve perfect forgetting with strong retained accuracy and MIA robustness.

\subsection{Running Time Analysis}
\label{sec:time analysis}
\begin{figure}[htbp]
    \centering
    \includegraphics[width=0.75\linewidth]{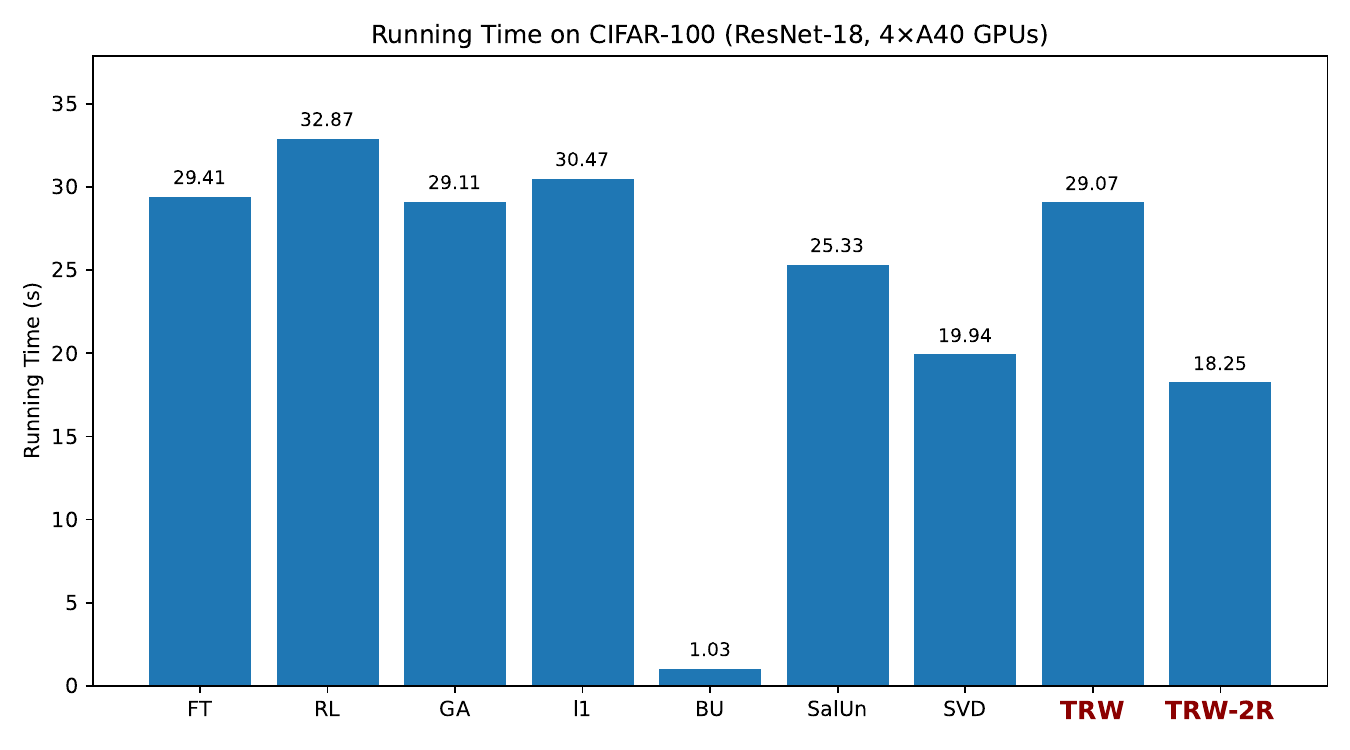}
    \caption{Running time comparison (in seconds per epoch) on CIFAR-100 with ResNet-18 using 4$\times$A40 GPUs. Our TRW-2R and TRW are among the fastest methods.}
    \label{fig:runtime}
\end{figure}

We compare the wall-clock training time of each unlearning method on the CIFAR-100 dataset using a ResNet-18 backbone, measured on four NVIDIA A40 GPUs. As shown in~\cref{fig:runtime}, our proposed \textbf{TRW-2R} method is among the fastest, completing in 18.25s, which is significantly faster than standard retraining-based baselines like FT and RL. 

Importantly, both of our methods — TRW, and TRW-2R — achieve optimal unlearning performance within just \textbf{10 epochs}. This makes them highly efficient and scalable in practice, offering strong privacy guarantees at a fraction of the computational cost of full retraining.

\subsection{Confusion Matrix analysis}
\begin{figure}
    \centering
    \includegraphics[width=\linewidth]{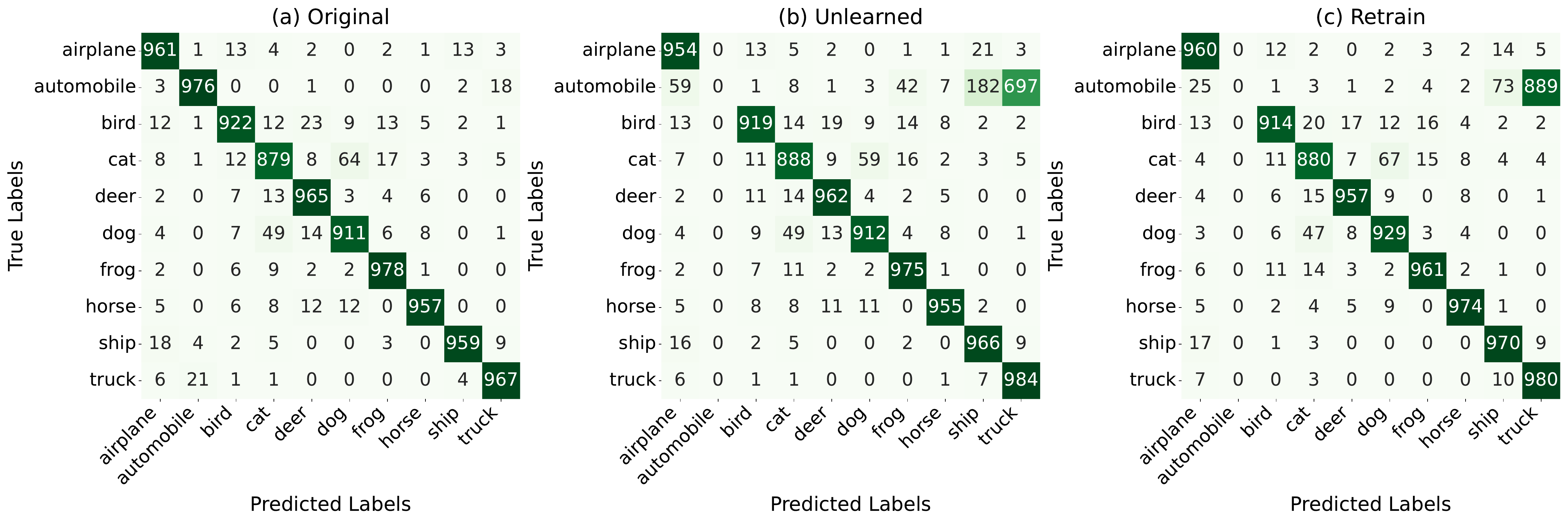}
    \caption{Confusion Matrix for original ResNet18 model, our unlearning model and retraining model forgetting automobile class, showing redistribution of automobile samples to other classes in proportion to the confusion in original model.}
    \label{fig:confusion_mat}
\end{figure}
Following~\cite{kodge2024deep}, we plot the confusion matrix showing the distribution of true labels and predicted labels for the original ResNet18 model, our unlearning model (on Automobile) and retraining model for CIFAR10 in~\cref{fig:confusion_mat}. Interestingly, we observe that in our unlearning model, a great number of the automobile samples are assigned to 'truck' class, with a small portion being assigned to the 'ship' class, indicating that they share some similar features. This result aligns with that in the retraining model. Furthermore, we see no performance drop in the remaining classes, indicating the robustness of our method.

\noindent \textbf{Forgetting Class Dependency:}
In addition to evaluating forgetting efficacy through classification accuracy, we further examine where the predictions of forgotten samples are redirected. ~\Cref{fig:willow_tree_reassign} summarizes the top classes that \textit{willow tree} samples were reassigned to after unlearning. Notably, a significant proportion of the reassigned predictions fall into semantically or visually similar categories, such as \textit{palm tree} (25\%), \textit{forest} (17\%), and \textit{oak tree} (14\%). This suggests that although the model effectively forgets the target class, it redistributes the predictions to classes with similar visual features. 
\begin{figure}[htbp]
    \centering
    \includegraphics[width=0.65\linewidth]{./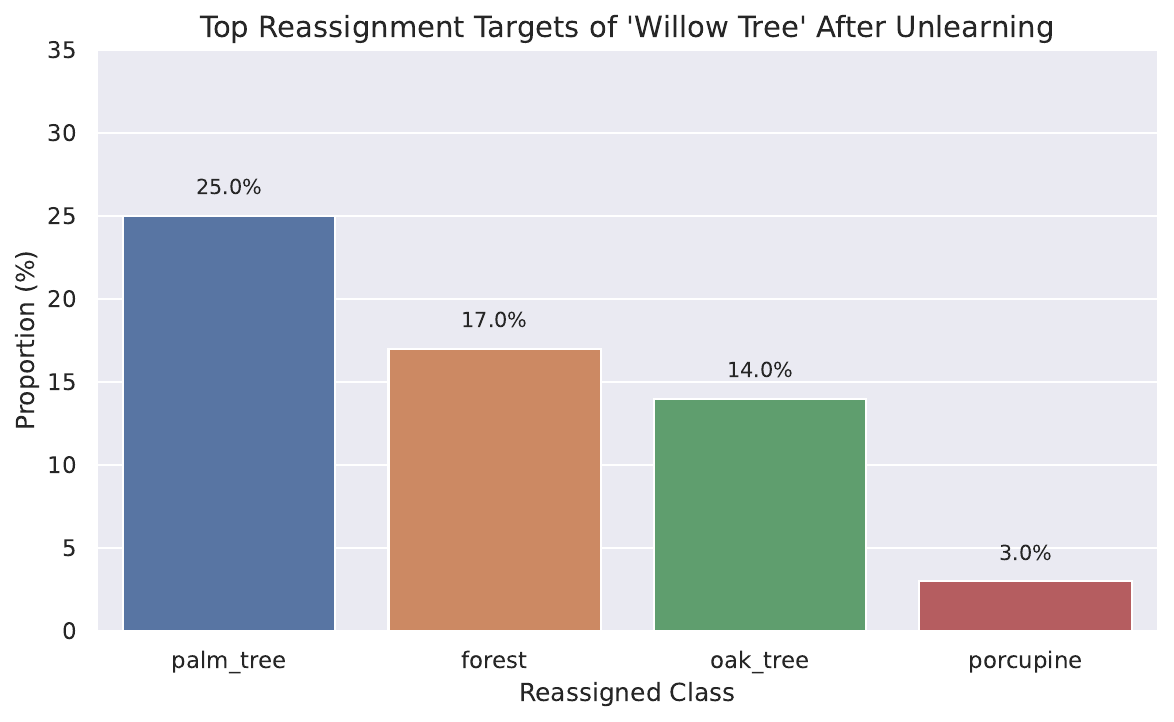}
    \caption{\textbf{Top reassignment targets of the forgotten class \textit{willow tree}.}
    The model tends to reassign samples to semantically similar classes such as \textit{palm tree}, \textit{forest}, and \textit{oak tree}.}
    \label{fig:willow_tree_reassign}
\end{figure}
\begin{table}[htbp]
\centering
\label{tab:top_acc_drop}
\begin{tabular}{l|c c c}
\toprule
\textbf{Class Name} & \textbf{Original Accuracy (\%)} & \textbf{Unlearned Accuracy (\%)} & \textbf{Accuracy Drop (\%)} \\
\midrule
maple tree  & 73.00 & 47.00 & 26.00 \\
lion         & 89.00 & 82.00 & 7.00 \\
otter        & 60.00 & 53.00 & 7.00 \\
boy          & 58.00 & 52.00 & 6.00 \\
dinosaur     & 76.00 & 71.00 & 5.00 \\
\bottomrule
\end{tabular}
%\vspace{5pt}
\caption{\textbf{Top classes with the largest accuracy drop after unlearning the willow tree class in CIFAR100.}}
\end{table}

In addition,~\cref{tab:top_acc_drop} highlights the top five classes most affected by the unlearning of the \textit{willow tree} class in CIFAR-100. Notably, \textit{maple tree} suffers the largest performance degradation, with a 26\% drop in accuracy. This is followed by moderate drops in \textit{lion}, \textit{otter}, \textit{boy}, and \textit{dinosaur}, which show decreases of 5--7\%. These affected classes are likely to share visual similarities with \textit{willow tree}, indicating that the unlearning process may have altered representations that were partially shared across conceptually related categories.

\subsection{Ablation Study}
\label{sec:ablation}

\begin{table}[htbp]
\centering
\begin{tabular}{l|llll}
\hline
$\beta$ & $ACC_r$ (\textuparrow) & $ACC_f$ (\textdownarrow) & $MIA$ (\textuparrow) & $MIA-NN$(\textuparrow) \\ \hline
0   & 94.35 & 0 & 99.94 & 47.2   \\
5   & 94.37 & 0 & 98.94 & 76.1   \\
10  & 94.28 & 0 & 97.65 & 82.1   \\
20  & 77.67 & 0 & 98.72 & 98.6   \\ \hline
\end{tabular}
%\vspace{5pt}
\caption{Effect of $\beta$ on retained accuracy ($ACC_r$), forgetting accuracy ($ACC_f$), and membership inference resistance ($MIA$ and $MIA\text{--}NN$). Increasing $\beta$ improves unlearning robustness but may harm retained performance. We select $\beta=10$ as it provides the best balance between maintaining accuracy on retained data and achieving strong resistance to membership inference attacks.}
\label{tab:beta}
\end{table}

We explore different choices of $\beta$ as it directly affects the trade-off between retained accuracy and unlearning robustness and the results are shown in Table~\ref{tab:beta}. We find that $\beta = 10$ provides the most balanced result across metrics. At this setting, the retained accuracy ($ACC_r$) remains very high (94.28), almost identical to $\beta = 0$ and $\beta = 5$, indicating that the model preserves strong predictive performance on the retained data. Meanwhile, the unlearning effectiveness improves significantly: the $MIA$ score decreases to 97.65, and the $MIA\text{--}NN$ score rises to 82.1. This demonstrates that the model is more resistant to membership inference attacks compared to smaller $\beta$ values, where adversaries can more easily detect forgotten samples. Although $\beta = 20$ achieves even stronger robustness ($MIA\text{--}NN = 98.6$), it severely degrades retained accuracy (dropping to 77.67). Therefore, we select $\beta = 10$ as it achieves the optimal balance between maintaining accuracy on retained data and providing strong attack robustness.

\subsection{Multiple Class Forgetting} 
\label{sec:multi-class}

\begin{table}[htbp]
\centering
\begin{tabular}{l|c|c|c}
\toprule
\textbf{Forget Classes} & \(\mathrm{ACC}_r \uparrow\) & \(\mathrm{ACC}_f \downarrow\) & \(\mathrm{MIA} \uparrow\) \\
\midrule
1 class  & 77.08 (77.24) & 0 (77.03) & 100 (0.40) \\
5 classes & 76.95 (77.10) & 0 (75.8)  & 100 (0.96) \\
10 classes & 74.84 (76.66) & 0 (77.04) & 99.42 (1.23) \\
\bottomrule
\end{tabular}
%\vspace{5pt}
\caption{
\textbf{Multi-class forgetting performance on CIFAR-100 using ResNet18.} 
For each setting, we report the accuracy on the retained classes (ACC\textsubscript{r}~$\uparrow$), the accuracy on the forgotten classes (ACC\textsubscript{f}~$\downarrow$), and the MIA attack success rate (MIA~$\uparrow$). 
Values in parentheses denote the corresponding metrics from the original (unforgotten) model. 
\textbf{Our method maintains high retained accuracy while fully forgetting target classes and preserving robustness under strong MIAs, demonstrating stable and scalable forgetting performance across varying numbers of classes.}
%\textbf{Multi-class forgetting performance on CIFAR-100 using ResNet18.} For each setting, we report the accuracy on the retained classes (\(\mathrm{ACC}_r \uparrow\)), the accuracy on the forgotten classes (\(\mathrm{ACC}_f \downarrow\)), and the MIA attack success rate (\(\mathrm{MIA} \uparrow\)). Values in parentheses denote the corresponding accuracy metrics from the original (unforgotten) model.
}
\label{tab:cifar100_multiclass}
\end{table}

We evaluated our approach on multi-class forgetting tasks on CIFAR-100 using ResNet18, gradually increasing the number of randomly chosen forgotten classes from 1 to 10. Our method achieves stable and effective forgetting across different numbers of target classes. As shown in Table~\ref{tab:cifar100_multiclass}, the retained accuracy (ACC\textsubscript{r}) remains consistent with the original model, while the forgotten class accuracy (ACC\textsubscript{f}) drops to zero in all cases. Additionally, the MIA attack success rate remains near-perfect, demonstrating that the unlearned model is indistinguishable from retrained baselines even under strong adversarial probing. These results indicate the robustness of our forgetting strategy in multi-class scenarios.

\end{document}